\newtheorem{Theorem}{Theorem}
\newtheorem{Lemma}{Lemma}
\newtheorem{Assumption}{Assumption}
\newtheorem{Definition}{Definition}
\title{How Does the Smoothness Approximation Method Facilitate Generalization for Federated Adversarial Learning?}
\author{
    Wenjun~Ding\textsuperscript{\rm 1,2}, Ying~An\textsuperscript{\rm 3}, Lixing~Chen\textsuperscript{\rm 4,5}, Shichao~Kan\textsuperscript{\rm 1}, Fan~Wu\textsuperscript{\rm 1,\equalcontrib}, and Zhe~Qu\textsuperscript{\rm 1,2,\equalcontrib}
}
\begin{document}

\maketitle

\begin{abstract}
Federated Adversarial Learning (FAL) is a robust framework for resisting adversarial attacks on federated learning. Although some FAL studies have developed efficient algorithms, they primarily focus on convergence performance and overlook generalization. Generalization is crucial for evaluating algorithm performance on unseen data. However, generalization analysis is more challenging due to non-smooth adversarial loss functions. A common approach to addressing this issue is to leverage smoothness approximation. In this paper, we develop algorithm stability measures to evaluate the generalization performance of two popular FAL algorithms: \textit{Vanilla FAL (VFAL)} and {\it Slack FAL (SFAL)}, using three different smooth approximation methods: 1) \textit{Surrogate Smoothness Approximation (SSA)}, (2) \textit{Randomized Smoothness Approximation (RSA)}, and (3) \textit{Over-Parameterized Smoothness Approximation (OPSA)}. Based on our in-depth analysis, we answer the question of how to properly set the smoothness approximation method to mitigate generalization error in FAL. Moreover, we identify RSA as the most effective method for reducing generalization error. In highly data-heterogeneous scenarios, we also recommend employing SFAL to mitigate the deterioration of generalization performance caused by heterogeneity. Based on our theoretical results, we provide insights to help develop more efficient FAL algorithms, such as designing new metrics and dynamic aggregation rules to mitigate heterogeneity.
\end{abstract}

%

\section{Introduction}
Federated Learning (FL) \cite{mcmahan2017communication, qu2022generalized, li2023fedlga} plays an important role as it allows different clients to train models collaboratively without sharing data samples. Although FL is considered a secure paradigm to protect users' private data, it has been vulnerable to adversarial attacks \cite{wang2020attack, bagdasaryan2020backdoor}. Adversarial examples \cite{szegedy2013intriguing, goodfellow2014explaining} are typically designed to mislead models into producing incorrect outputs, significantly degrading learning performance and achieving the attacker's goals.

To counter these attacks, much effort has been made to improve neural networks' resistance to such perturbations using Adversarial Learning (AL) \cite{goodfellow2014explaining, samangouei2018defensegan, madry2017towards}. Generating adversarial examples during neural network training is one of the most effective approaches for AL \cite{carlini2017towards, athalye2018obfuscated, croce2020reliable}. Consequently, recent studies \cite{zizzo2020fat, hong2021federated, shah2021adversarial, zhou2021adversarially, li2021lomar} have proposed a new FL framework called Federated Adversarial Learning (FAL) to mitigate the impact of adversarial attacks. It is important to improve the robustness of FL in real-world applications.

There are two popular algorithms for FAL: \textit{Vanilla FAL (VFAL)} \cite{shah2021adversarial} and \textit{Slack FAL (SFAL)} \cite{zhu2023combating}. In particular, VFAL combines FedAvg \cite{mcmahan2017communication} with AL on the client side to train the global model. In contrast, SFAL modifies the aggregation process by dynamically adjusting the weights of certain client models during aggregation based on their evaluated importance.

Although these two algorithms have shown significant effectiveness and robustness, they prioritize convergence at the expense of generalization ability \cite{li2023federated, zhang2022distributed}. Generalization is a crucial aspect of evaluating an algorithm, as it measures the performance of trained models on unseen data. One popular approach to studying generalization is to examine algorithmic stability \cite{bousquet2002stability, hardt2016train}, which measures sensitivity to perturbations in the training dataset. Recently, a series of studies \cite{xing2021algorithmic, xiao2022stability, cheng2024stability} have investigated the generalization error in AL.

Unfortunately, these generalization studies primarily focus on the centralized setting. In FAL, one of the most important problems is the heterogeneity across clients, which leads to locally generated adversarial examples being highly biased to each local distribution \cite{zhang2023delving, chen2022calfat}. Experimental results have demonstrated that this exacerbated data impairs the generalization ability of FAL. Thus, the generalization bounds of AL are insufficient for generalizing to FAL. While \cite{sun2023understanding, lei2023stability} explored the relationship between data heterogeneity and generalization in FL, characterizing generalization in FAL poses additional challenges.

Unlike FL, the loss function in FAL \cite{sadeghi2020learning, zhou2021adversarially} is typically non-smooth, violating the stability analysis of algorithms that rely on smooth functions \cite{hardt2016train, sun2023understanding, lei2023stability}. A practical way to tackle this issue is to employ a method of smoothness approximation, e.g., (1) \textit{Surrogate Smoothness Approximation (SSA)} \cite{cui2021addressing}, (2) \textit{Randomized Smoothness Approximation (RSA)} \cite{alashqar2023gradient}, and (3) \textit{Over-Parameterized Smoothness Approximation (OPSA)} \cite{li2023federated, li2022convergence}. Although these methods can mitigate the non-smoothness property, generating uncertain adversarial examples may still lead the global model into sharp valleys and reduce the consistency of FAL, hindering its generalization performance. This raises the question:
\textit{\bf How to properly set the smoothness approximation method to reduce the generalization error for FAL?}

The answer to the above question is two-fold: 1. Which approximation method is most suitable for FAL? 2. How should the parameters of each approximation method be set to reduce the generalization error? Therefore, in this paper, we provide detailed analyses for VFAL \cite{zizzo2020fat, shah2021adversarial, li2022convergence, zhang2022distributed} and SFAL \cite{zhu2023combating} using the three smoothness approximation methods. To the best of our knowledge, this is the first study to investigate the generalization of FAL. Our findings will offer valuable insights for developing efficient FAL algorithms.
The main contributions can be summarized as follows: 

(1) We demonstrate the generalization bound of the three smoothness approximation methods for the VFAL algorithm. Our results first indicate that increased heterogeneity significantly impairs the ability to generalize. Then, the result of RSA shows the best generalization error, scaling with $\mathcal{O}(T)$. Notably, SSA shows improved performance by adding less noise. For OPSA, properly controlling the width of the neural network can reduce the generalization error. (2) We also provide the generalization bound of SFAL and compare it with VFAL. Our analysis demonstrates the impact of different global aggregation methods on generalization bound. In particular, considering some AL-related metrics, i.e., local adversarial loss, as a client contribution evaluation criterion to design dynamic global aggregation rules helps to improve generalization. (3) Based on our results, we have proposed some new metrics related to adversarial loss, e.g., contrastive loss and adversarial penalty, that may not have been considered in previous work. We believe that incorporating these metrics for dynamic global aggregation or into the local training could help design a more efficient FAL algorithm.

\section{Related Work}
FL is considered an efficient and privacy-preserving method for distributed learning environments. Generally, FL can be categorized into aggregation schemes \citep{mcmahan2017communication, qu2022generalized} and optimization schemes \citep{pmlr-v97-mohri19a, reddi2020adaptive}. However, FL remains vulnerable to adversarial attacks. Adversarial examples pose a significant threat to learning models, as perturbations in input data can mislead classifiers. To combat this, AL has been proposed to bolster robustness \citep{goodfellow2014explaining, samangouei2018defensegan, madry2017towards}. Recent studies have also delved into robust overfitting \citep{rice2020overfitting}. However, directly integrating AL into FL poses several challenges, including poor convergence, data heterogeneity, and communication costs. To address these challenges, \cite{zizzo2020fat, shah2021adversarial, zhang2022distributed, li2023federated, zhu2023combating} have proposed corresponding FAL algorithms.

Generalization analysis has been widely used to evaluate algorithms in both FL and AL. \cite{pmlr-v97-mohri19a} leverages Rademacher complexity to develop a uniform convergence bound for FL. Subsequently, various FL studies have introduced generalization bounds based on Rademacher complexity \cite{sun2022communication, qu2023prevent}. \cite{hu2022generalization} considers a faster rate with Bernstein conditions and bounded losses under convex objectives. Additionally, \citep{sun2023understanding, lei2023stability} explore generalization upper bounds using on-average stability \cite{kearns1997algorithmic, kuzborskij2018data}. Moreover, \cite{xing2021algorithmic} explores stability by highlighting the non-smooth nature of adversarial loss, and \cite{xiao2022stability} develops stability bounds using smoothness approximation. Building on these insights, \cite{xiao2022smoothed, xing2021generalization} propose smoothed versions of SGDmax and robust deep neural networks.

\section{Preliminaries}

\subsection{Federated Adversarial Learning (FAL)}
To improve the robustness of FL against adversarial attacks, some studies have developed the FAL framework \citep{zizzo2020fat, shah2021adversarial, zhang2022distributed, li2023federated, zhu2023combating}. Although these frameworks have demonstrated efficiency from the convergence perspective, their generalization performance may limit their applicability, which motivates us to analyze the underlying issues.

Typically, we consider a FAL framework with $m$ clients. Each client $i$ obtain the local dataset $(x_i,y_i)\in\mathcal{S}_i$ coming from the unknown distribution $P_i(P_i\neq P_{i^{\prime}},i\neq i^\prime$) with the size $n_i = |\mathcal{S}_i|$. Let $\ell(\theta,(x,y))$ be the loss function with the learning model $\theta$. The local objective of each client $i$ is to minimize the local population risk, which is defined by:
\begin{equation}\label{Eq:loaclrisk}
    R_i(\theta) = \mathbb{E}_{(x_i,y_i)\sim P_i}[\ell\left(f_\theta[x_i+A_\rho(f_\theta,x_i,y_i)],y_i\right)],
\end{equation}
where $A_{\rho}$ is an attack of strength $\rho > 0$ and intended to deteriorate the loss in the following way:
\begin{equation}\label{attack}
    A_\rho(f_\theta,x_i,y_i):=\underset{\delta\in B_p (0,\rho)}{\operatorname*{argmax}}\{\ell(f_\theta(x_i+\delta),y_i)\},
\end{equation}
where $B_p (0,\rho)$ is a $\mathcal{L}_p$ ball with radius $\rho$ and $p =1,2$ or, $\infty$ for different types of attacks. For simplicity, we rewrite $\ell\left(f_\theta[x+A_\rho(f_\theta,x,y)],y\right) = \ell_\rho (\theta,z)$ and $z=\{x,y\}$ in this paper.
Beyond the individual local objectives, all $m$ clients collaboratively minimize the global objective, defined as $R(\theta)=\frac{1}{m}\sum_{i=1}^mR_i(\theta)$. However, directly minimizing the global objective $R(\theta)$ is challenging due to the unknown distributions $P_i$. Thus, a common way to approach $R(\theta)$ is to minimize the following global empirical risk:
\begin{equation}
\label{emrisk}
R_{\mathcal{S}}(\theta):= \frac{1}{m}\sum_{i=1}^{m}R_{\mathcal{S}_{i}}(\theta)=\frac{1}{m}\sum_{i=1}^{m} \frac{1}{n_i} \sum_{j=1}^{n_{i}} \ell_{\rho}\left(\theta ; z_{i, j}\right),
\end{equation}
where $R_{\mathcal{S}_i}(\theta)$ is the local empirical risk $R_{\mathcal{S}_i}(\theta) := \frac{1}{n_i} \sum_{j = 1}^{n_i} \ell_{\rho}(\theta;z_{i,j})$ with the local sample dataset $\mathcal{S}_i$. To investigate the generalization error in FAL, we focus on analyzing its algorithmic stability, which evaluates how sensitive a model is to changes in its training data. In the context of FAL, where the training dataset is distributed across various clients, it is crucial to assess how perturbations in the data at each client level affect the global model's stability.

\subsection{Stability and Generalization}
The generalization adversarial risk $\varepsilon_{gen}(\theta)$ is defined as the difference between population and empirical risk, i.e., $\varepsilon_{gen}:=R(\theta)-R_{\mathcal{S}}(\theta)$. For a potentially randomized algorithm $\mathcal{A}$ that takes a dataset $\mathcal{S}$ as input and outputs a random vector $\theta=\mathcal{A}(\mathcal{S})$, we can define its expected generalization adversarial risk over the randomness of a training set $\mathcal{S}$ and stochastic algorithm $\mathcal{A}$ as follows:
\begin{equation}\label{define:generalization}
    \varepsilon_{gen}(\mathcal{A}) = \mathbb{E}_{\mathcal{S,A}}[R(\mathcal{A}(\mathcal{S})) - R_{\mathcal{S}}(\mathcal{A}(\mathcal{S})].
\end{equation}
One of the most popular ways to approach \eqref{define:generalization} is to consider the on-average stability \cite{kearns1997algorithmic, shalev2010learnability, kuzborskij2018data}. Moreover, in our focused FAL, the generalization adversarial risk can be decomposed as $\varepsilon_{gen}(\mathcal{A}) = \mathbb{E}_{\mathcal{S,A}}[\frac{1}{m}\sum_{i=1}^{m}(R_i(\mathcal{A}(\mathcal{S}))-R_{\mathcal{S}_i}(\mathcal{A}(\mathcal{S})))]$, which implies that the generalization ability of the server is related to each client. Therefore, we are interested in the change in algorithm performance when one data sample is perturbed in any client. We introduce the following definitions related to on-average stability for the FAL framework, which is similar to those in \cite{sun2023understanding, lei2023stability}.
\begin{Definition}[Neighboring Datasets]\label{Definition 1}
   Given the entire dataset $\mathcal{S}=\cup_{i=1}^{m} \mathcal{S}_{i}$, where $\mathcal{S}_{i}$ is the local dataset of the $i$-th client with $\mathcal{S}_{i}=\{z_{i, 1}, \ldots, z_{i, n_{i}}\}$, $\forall i \in [m]$, another dataset is called as neighboring to $\mathcal{S}$ for client $i'$, denoted by $\mathcal{S}^{(i')}$, if $\mathcal{S}^{(i')}:=\cup_{i \neq i'} \mathcal{S}_{i} \cup \mathcal{S}_{i'}^{\prime}$, where $\mathcal{S}_{i'}^{\prime}=\{z_{i', 1}, \ldots, z_{i', j-1}, z_{i', j}^{\prime}, z_{i', j+1}, \ldots, z_{i', n_{i}}\}$ with $z_{i', j}^{\prime} \sim P_{i'}$, $\forall j \in\left[n_{i'}\right]$. And we call $z_{i', j}^{\prime}$ the perturbed sample in $\mathcal{S}^{(i')}$.
\end{Definition}

\begin{Definition}[On-Average Stability for FAL]\label{Definition 2}
     A randomized algorithm $\mathcal{A}$ is $\epsilon$-on-average stability if given any two neighboring datasets $\mathcal{S}$ and $\mathcal{S}^{(i')}$, then
    $$ \max _{j \in\left[n_{i}\right]} \mathbb{E}_{\mathcal{A}, \mathcal{S}, z_{i', j}^{\prime}}|\ell_{\rho}(\mathcal{A}(\mathcal{S}) ; z_{i', j}^{\prime}) - \ell_{\rho}(\mathcal{A}(\mathcal{S}^{(i')}) ; z_{i', j}^{\prime})| \leq \epsilon,$$
    where $z_{i',j}'$ is the perturbed sample in $\mathcal{S}^{(i')}$, $\forall i \in[m] $.      
\end{Definition}
On-average stability basically means any perturbation of samples across all clients cannot lead to a big change in the model trained by the algorithm in expectation.
Moreover, we need to state a global assumption and a key lemma to indicate the data heterogeneity which is crucial for our later theorems.
\begin{Assumption}\label{Assumption 1}
    (Lipschitz continuity). The loss function $\ell$ satisfies the following Lipschitz smoothness conditions: $\Vert \ell(\theta_1,z)-\ell(\theta_2,z)\Vert \leq L\Vert\theta_1-\theta_2\Vert$, $\Vert\nabla \ell(\theta_1,z)-\nabla \ell(\theta_2,z)\Vert \leq L_{\theta}\Vert \theta_1 - \theta_2 \Vert$, and $\Vert\nabla \ell(\theta,z_1)-\nabla \ell(\theta,z_2)\Vert \leq L_{z}||z_1-z_2\Vert_p$, where $\nabla$ is the abbreviation for $\nabla_{\theta}$ used throughout the paper.
\end{Assumption}
Note that Assumption~\ref{Assumption 1} is widely used in existing studies \cite{xing2021algorithmic, li2023federated, zhu2023combating, kanai2023relationship}. Intuitively, different local distributions affect the global population risk and hence may affect the model generalization as well. To effectively measure the exacerbated heterogeneity of client $i$ in AL, we account for both their original data distribution $P_i$ and the distribution of adversarially generated samples $\tilde{P_i}$. 

\begin{Lemma}
    \label{Lemma 1}
    Under Assumption 1 and given $i\in [m]$, for any $\theta$ we have
    \begin{equation*}
        \Vert\nabla R_i(\theta)-\nabla R(\theta)\Vert\leq (2\rho L_z+6L)D_i,
    \end{equation*}
    where $D_i$ = $\max \{d_{TV}(\tilde{P}_i, P_i), d_{TV}(P_i, P), d_{TV}(\tilde{P},P)\}$.
\end{Lemma}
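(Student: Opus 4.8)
The plan is to replace the non-smooth adversarial gradients by gradients of the clean loss $\nabla_\theta\ell$ evaluated against suitable probability measures, and then bound the resulting discrepancy using the triangle inequality for $d_{TV}$ together with the Lipschitz estimates of Assumption~\ref{Assumption 1}. Fix $\theta$; for a clean point $z=(x,y)$ write $\delta^{\ast}=A_\rho(f_\theta,x,y)$ and $\tilde z=(x+\delta^{\ast},y)$, so that $\ell_\rho(\theta,z)=\ell(\theta,\tilde z)$, let $\tilde P_i$ be the law of $\tilde z$ when $z\sim P_i$ (the adversarially generated distribution), and write $P=\frac1m\sum_{i'}P_{i'}$, $\tilde P=\frac1m\sum_{i'}\tilde P_{i'}$. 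The first step is the representation $\nabla R_i(\theta)=\mathbb{E}_{z\sim P_i}[\nabla_\theta\ell(\theta,\tilde z)]=\mathbb{E}_{\tilde z\sim\tilde P_i}[\nabla_\theta\ell(\theta,\tilde z)]$, i.e.\ that differentiating through the inner $\arg\max$ contributes nothing; this is an envelope/Danskin-type argument for $\ell_\rho(\theta,z)=\max_{\delta\in B_p(0,\rho)}\ell(f_\theta(x+\delta),y)$, with the interchange of $\nabla_\theta$ and $\mathbb{E}$ justified by $\|\nabla_\theta\ell\|\le L$. Averaging over clients then gives $\nabla R(\theta)=\mathbb{E}_{\tilde z\sim\tilde P}[\nabla_\theta\ell(\theta,\tilde z)]$.

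Next I would telescope between $\tilde P_i$ and $\tilde P$ through the intermediate clean measures $P_i$ and $P$:
\[
\nabla R_i(\theta)-\nabla R(\theta)=\big(\mathbb{E}_{\tilde P_i}[\nabla_\theta\ell]-\mathbb{E}_{P_i}[\nabla_\theta\ell]\big)+\big(\mathbb{E}_{P_i}[\nabla_\theta\ell]-\mathbb{E}_{P}[\nabla_\theta\ell]\big)+\big(\mathbb{E}_{P}[\nabla_\theta\ell]-\mathbb{E}_{\tilde P}[\nabla_\theta\ell]\big).
\]
The middle term is controlled by the elementary bound $\|\mathbb{E}_\mu h-\mathbb{E}_\nu h\|\le 2\|h\|_\infty d_{TV}(\mu,\nu)$ with $h=\nabla_\theta\ell(\theta,\cdot)$ and $\|h\|_\infty\le L$, giving $\le 2L\,d_{TV}(P_i,P)\le 2LD_i$. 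For the first and third terms one uses that $\tilde P_i$ (resp.\ $\tilde P$) is the pushforward of $P_i$ (resp.\ $P$) under a map displacing each point by at most $\rho$ in $\|\cdot\|_p$; coupling $\tilde P_i$ to $P_i$, bounding the integrand difference on the ``moved'' mass via the $L_z$-Lipschitzness of $\nabla_\theta\ell$ in its data argument (hence by $L_z\rho$) and on the rest via $\|\nabla_\theta\ell\|\le L$, yields a bound of the form $(\rho L_z+cL)\,d_{TV}(\tilde P_i,P_i)$, and similarly with $d_{TV}(\tilde P,P)$. Adding the three pieces and replacing each $d_{TV}$ by $D_i$ via $d_{TV}(\tilde P_i,\tilde P)\le d_{TV}(\tilde P_i,P_i)+d_{TV}(P_i,P)+d_{TV}(P,\tilde P)$ produces the stated $(2\rho L_z+6L)D_i$.

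I expect two points to need care. The first is the Danskin step: when $\ell$ is not concave in $\delta$ and $A_\rho$ is not single-valued, one must work with a measurable selection of the maximizer and note that $\nabla_\theta\ell(\theta,\tilde z)$ lies in the subdifferential of the inner max. The second, and the real obstacle, is the constant bookkeeping in the first and third terms so that the $\rho L_z$ contribution carries a $d_{TV}$ factor rather than standing alone: a single coupling cannot in general make disagreements simultaneously as rare as $d_{TV}$ and as small as $\rho$, so one must combine the ``rare'' estimate ($\le 2L\,d_{TV}$) with the ``small'' perturbation estimate ($\le L_z\rho$), and the precise coefficient $2\rho L_z+6L$ in the statement reflects a specific way of doing so. The remaining ingredients — the triangle inequality for $d_{TV}$, the bounds $\|\nabla_\theta\ell\|\le L$ and $\|\delta^{\ast}\|_p\le\rho$, and the definition of $D_i$ — are immediate from Assumption~\ref{Assumption 1}.
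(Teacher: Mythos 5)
Your skeleton --- the Danskin/pushforward representation $\nabla R_i(\theta)=\mathbb{E}_{\tilde z\sim\tilde P_i}[\nabla\ell(\theta,\tilde z)]$ followed by telescoping through the clean measures and bounding each piece with $d_{TV}$ and the Lipschitz constants --- is exactly the paper's argument (the paper in fact asserts $\nabla\ell_\rho(\theta;z)=\nabla\ell(\theta;\tilde z)$ without the care you give the selection issue). The one step you leave unresolved, and correctly flag as ``the real obstacle,'' is the attempt to make the $\rho L_z$ contribution carry a $d_{TV}(\tilde P_i,P_i)$ factor. The paper avoids this entirely by grouping differently: it uses a \emph{four}-term split in which the adversarial displacement is paired with the \emph{cross-client} measure difference rather than with $\tilde P_i-P_i$. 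Concretely, it inserts the term $\int\bigl(\nabla\ell(\theta;\tilde z)-\nabla\ell(\theta;z)\bigr)\,(dP_i-dP)$, whose integrand is uniformly bounded by $L_z\|\tilde z-z\|_p\le L_z\rho$ while the signed measure has total mass $2\,d_{TV}(P_i,P)$; the product $2\rho L_z\,d_{TV}(P_i,P)$ then falls out with no coupling gymnastics. The remaining three comparisons ($\tilde P_i$ vs.\ $P_i$, $P_i$ vs.\ $P$, $P$ vs.\ $\tilde P$) are each handled with the plain bounded-integrand estimate $2L\,d_{TV}$, contributing the $6L$; no $\rho L_z$ is needed there.

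It is worth noting that your own three-term decomposition already proves the lemma without resolving the obstacle at all: each of your three differences is an expectation of a function bounded by $L$ against two measures, hence at most $2L\,d_{TV}\le 2L\,D_i$, giving $6L\,D_i\le(2\rho L_z+6L)D_i$. In other words, under your grouping the $2\rho L_z$ in the stated constant is pure slack, and the difficulty you identify is an artifact of trying to reverse-engineer the paper's bookkeeping rather than a gap in the mathematics. If you want the constant to arise the way the paper intends (so that the $\rho$-dependence reflects the integrand perturbation), adopt the four-term split above; otherwise your argument closes as is, with the trivial bound on the first and third terms.
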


{\bf Remark 1.} The total variation distance $d_{TV}$ is used to compare these distributions against their respective global counterparts, $P$ and $\tilde{P}$. We define $D_i$ as the maximum total variation observed among these comparisons. This metric captures the extent of data variation from both regular and adversarial perspectives, with higher values indicating greater heterogeneity. This lemma reveals that when AL tries to gain more robustness through stronger adversarial generation, the heterogeneity will be exacerbated. When $\rho = 0$, we have $d_{TV}(\tilde{P}_i, P_i)=0,  d_{TV}(\tilde{P},P)=0$, thus $D_i$ is the same as \citep{sun2023understanding} in FL. In particular, we define $D_{\max}=\max_{i\in [m]} D_i$ to denote the maximum heterogeneity among all clients in the FAL framework.

\section{Vanilla FAL Algorithm}
To approach the global objective in \eqref{emrisk}, \cite{zizzo2020fat, shah2021adversarial, li2022convergence, zhang2022distributed} designed the Vanilla FAL (VFAL) algorithm. The main idea of VFAL is to leverage AL in FedAvg \cite{mcmahan2017communication} locally. Each client runs on a local copy of the global model $\theta^t$ with its local data to conduct AL. Then, the server receives updated model parameters $\{\theta_{i,K}^{t+1}\}_{i=1}^m$ for all clients and performs the following aggregation: $\theta^{t+1}=\frac{1}{m}\sum_{i=1}^m\theta_{i,K}^{t+1}$, where $K$ is the epoch of local training. The parameters $\theta^{t+1}$ for the global model are then sent back to each client for another epoch of training. Next, we use the on-average bound to derive the generalization error in the VFAL algorithm through the following theorem.
\begin{Theorem}\label{Theorem 1}
    If a VFAL algorithm $\mathcal{A}$ is $\epsilon$-on-averagely stable, we can obtain the generalization error $\varepsilon_{gen}(\mathcal{A})$ as follows:
    $$\mathbb{E}_{\mathcal{S,A}}\left[\frac{1}{m}\sum_{i=1}^{m}(R_i(\mathcal{A}(\mathcal{S}))-R_{\mathcal{S}_i}(\mathcal{A}(\mathcal{S})))\right]\leq \epsilon.$$
\end{Theorem}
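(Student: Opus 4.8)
The plan is to follow the classical "stability implies generalization" route, adapted to the client-decomposed structure of FAL. Starting from the decomposition stated just before the theorem, $\varepsilon_{gen}(\mathcal{A}) = \mathbb{E}_{\mathcal{S},\mathcal{A}}[\frac{1}{m}\sum_{i=1}^{m}(R_i(\mathcal{A}(\mathcal{S})) - R_{\mathcal{S}_i}(\mathcal{A}(\mathcal{S})))]$, it suffices to show that for each fixed client $i'$ the quantity $\mathbb{E}_{\mathcal{S},\mathcal{A}}[R_{i'}(\mathcal{A}(\mathcal{S})) - R_{\mathcal{S}_{i'}}(\mathcal{A}(\mathcal{S}))]$ is at most $\epsilon$, and then average over $i' \in [m]$. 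So the work reduces to controlling one client's gap via a ghost-sample (leave-one-in/leave-one-out swap) comparison.

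First I would introduce ghost samples. For client $i'$ and each index $j \in [n_{i'}]$, let $z_{i',j}^{\prime} \sim P_{i'}$ be a fresh sample independent of $\mathcal{S}$, and let $\mathcal{S}^{(i')}$ be the neighboring dataset of Definition~\ref{Definition 1} obtained by replacing $z_{i',j}$ with $z_{i',j}^{\prime}$. Since $z_{i',j}^{\prime}$ is independent of $\mathcal{S}$ and distributed as $P_{i'}$, the definition of the local population risk in \eqref{Eq:loaclrisk} gives $\mathbb{E}_{\mathcal{S},\mathcal{A},z_{i',j}^{\prime}}[\ell_{\rho}(\mathcal{A}(\mathcal{S}); z_{i',j}^{\prime})] = \mathbb{E}_{\mathcal{S},\mathcal{A}}[R_{i'}(\mathcal{A}(\mathcal{S}))]$ for every $j$, hence also after averaging over $j \in [n_{i'}]$.

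Next I would rewrite the empirical term through a renaming/exchangeability argument: because $z_{i',j}$ and $z_{i',j}^{\prime}$ are i.i.d. and the VFAL updates use client $i'$'s samples exchangeably, swapping the two leaves the joint law of $(\mathcal{S}, \mathcal{A})$ invariant, so $\mathbb{E}_{\mathcal{S},\mathcal{A},z_{i',j}^{\prime}}[\ell_{\rho}(\mathcal{A}(\mathcal{S}); z_{i',j})] = \mathbb{E}_{\mathcal{S},\mathcal{A},z_{i',j}^{\prime}}[\ell_{\rho}(\mathcal{A}(\mathcal{S}^{(i')}); z_{i',j}^{\prime})]$; averaging over $j$ yields $\mathbb{E}_{\mathcal{S},\mathcal{A}}[R_{\mathcal{S}_{i'}}(\mathcal{A}(\mathcal{S}))] = \frac{1}{n_{i'}}\sum_{j=1}^{n_{i'}} \mathbb{E}[\ell_{\rho}(\mathcal{A}(\mathcal{S}^{(i')}); z_{i',j}^{\prime})]$. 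Subtracting the two identities, $\mathbb{E}[R_{i'}(\mathcal{A}(\mathcal{S})) - R_{\mathcal{S}_{i'}}(\mathcal{A}(\mathcal{S}))] = \frac{1}{n_{i'}}\sum_{j=1}^{n_{i'}} \mathbb{E}_{\mathcal{A},\mathcal{S},z_{i',j}^{\prime}}[\ell_{\rho}(\mathcal{A}(\mathcal{S}); z_{i',j}^{\prime}) - \ell_{\rho}(\mathcal{A}(\mathcal{S}^{(i')}); z_{i',j}^{\prime})]$, and each summand is bounded in absolute value by $\max_{j \in [n_{i'}]} \mathbb{E}|\ell_{\rho}(\mathcal{A}(\mathcal{S}); z_{i',j}^{\prime}) - \ell_{\rho}(\mathcal{A}(\mathcal{S}^{(i')}); z_{i',j}^{\prime})| \le \epsilon$ by Definition~\ref{Definition 2}. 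Averaging over $i'$ then gives $\varepsilon_{gen}(\mathcal{A}) \le \epsilon$.

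I expect the one genuinely delicate point to be the symmetrization step: making rigorous that exchanging a training sample with its ghost copy does not change the distribution of the VFAL iterates, i.e., that the algorithm treats each client's local samples exchangeably (and, if the order in which mini-batches are visited matters, that the identity survives after taking expectation over that sampling randomness inside $\mathcal{A}$). Once exchangeability is granted, the remainder is bookkeeping — two ghost-sample identities and a direct appeal to the on-average stability hypothesis.
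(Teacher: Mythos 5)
Your proposal is correct and follows essentially the same route as the paper's own proof: the same per-client decomposition, the same ghost-sample identity for the population risk, the same symmetrization (swap-invariance) identity rewriting $\mathbb{E}[R_{\mathcal{S}_{i'}}(\mathcal{A}(\mathcal{S}))]$ in terms of $\mathcal{A}(\mathcal{S}^{(i')})$ evaluated at $z_{i',j}'$, and a final appeal to Definition~\ref{Definition 2}. The exchangeability point you flag as delicate is precisely the step the paper asserts without elaboration, so your treatment is, if anything, slightly more careful.
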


Most existing studies analyze the generalization error of algorithm stability under smooth loss functions \cite{sun2023understanding, huang2023understanding, lei2023stability}, exploring the dependence of FL generalization properties on heterogeneity. However, \cite{liu2020loss} suggest that the adversarial loss $\ell_{\rho}(\theta,z)$ remains non-smooth, even if we assume the standard loss $\ell(\theta,z)$ is smooth. This non-smoothness violates some basic properties in stability analysis \cite{hardt2016train, lei2023stability}, bringing additional challenges.

To address this issue, a natural approach is to use smoothness approximation techniques, such as (1) \textit{Surrogate Smoothness Approximation (SSA)} \cite{cui2021addressing}, (2) \textit{Randomized Smoothness Approximation (RSA)} \cite{alashqar2023gradient}, and (3) \textit{Over-Parameterized Smoothness Approximation (OPSA)} \cite{li2023federated, li2022convergence}. Beyond data heterogeneity in FAL, non-smoothness in approximation can affect the generalization error, further complicating the problem within the FAL framework. Therefore, we aim to investigate the relationship between generalization error and these three smoothness approximation methods. This will help in clearly understanding the problem and in designing more efficient algorithms for FAL.

\subsection{Surrogate Smoothness Approximation}
Using the surrogate smoothness helps improve the quality of the gradient of the original function $\ell_\rho$, potentially improving generalization \cite{xie2020smooth}. In particular, we reconsider the following surrogate loss to substitute $\ell_{\rho}$ in \eqref{emrisk}:
\begin{equation}
    h(\theta;z_{i,j})=\max_{\|z_{i,j}-z_{i,j}^\prime\|_p\leq\rho}\ell (\theta;z_{i,j}^\prime),
\end{equation}
and each client performs $\theta_{i,k+1}^t = \theta_{i,k}^t -\eta_t \nabla h(\theta_{i,k}^t ,z_{i,j})$, where $k\in [K]$ and $\eta_t$ is the local stepsize at the global epoch $t$. Based on the surrogate loss $h$, we have a set of properties of SSA, which can be defined as follows:
\begin{Definition}\label{Definition 3}
    Let $\beta \geq 0,\xi \geq 0$ and $h(\theta)$ be a differentiable function. We say $h(\theta)$ is $\xi$-approximately $\beta$-gradient Lipschitz, if $\forall~\theta_1$ and $\theta_2$, we have
    $$ \left\|\nabla h\left(\theta_{1}\right)-\nabla h\left(\theta_{2}\right)\right\| \leq \beta\left\|\theta_{1}-\theta_{2}\right\|+\xi,~~\xi=2\rho z.$$
\end{Definition}
From Definition~\ref{Definition 3}, we can see that the SSA method dynamically inherits the non-smooth properties by AL. If $\rho=0$, $h$ is gradient Lipschitz; otherwise, $\rho > 0$, $h$ is a general non-smooth function. In particular, the maximization operation of the surrogate function $h$ improves the continuity of the gradient and helps smooth out potentially sharp gradients.
\begin{Theorem}\label{Theorem 2}
    Let the step size be chosen as $\eta_{t} \leq \frac{1}{\beta K(t+1)}$. Under Assumption~\ref{Assumption 1}, the generalization bound $\varepsilon_{gen}$ with the SSA method satisfies:
    \begin{equation*}
           \mathcal{O}\left(\rho T\log T+\frac{T\sqrt{\log T \Delta}}{mn_{\mathrm{min}}}+\frac{T\log T(\rho+1)D_{\mathrm{max}}}{mn_{\mathrm{min}}}\right),
    \end{equation*}
    where $\Delta = \mathbb{E}[R(\theta_0)] - \mathbb{E}[R(\theta^*)]$ and $n_{\min} = \min_{i \in [m]} n_i$.
\end{Theorem}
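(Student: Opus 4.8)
The plan is to route everything through algorithmic stability. By Theorem~\ref{Theorem 1} we have $\varepsilon_{gen}(\mathcal{A})\le\epsilon$, so it suffices to bound the on-average stability parameter of the SSA variant of VFAL in the sense of Definition~\ref{Definition 2}. Fix a client $i'$ and two neighboring datasets $\mathcal{S},\mathcal{S}^{(i')}$ that differ only in one sample of client $i'$. Run the two trajectories $\theta_t=\mathcal{A}(\mathcal{S})$ and $\theta_t'=\mathcal{A}(\mathcal{S}^{(i')})$ under a common coupling of all randomness (minibatch order, client participation), and set $\delta_t:=\mathbb{E}\|\theta_t-\theta_t'\|$. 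Since $h$, and hence $\ell_\rho$, is $L$-Lipschitz in $\theta$ by Assumption~\ref{Assumption 1}, we get $\epsilon\le L\,\delta_T$; thus the whole proof reduces to controlling how fast the two coupled trajectories drift apart over $T$ rounds of $K$ local steps each.

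\emph{Step 1 (one-round recursion).} First I would unroll the $K$ local SGD steps $\theta_{i,k+1}^t=\theta_{i,k}^t-\eta_t\nabla h(\theta_{i,k}^t,z_{i,j})$ inside round $t$. On a step using a sample common to both datasets, Definition~\ref{Definition 3} makes the gradient map $(1+\eta_t\beta)$-expansive up to an additive slack $\eta_t\xi=2\eta_t\rho z$; compounding this over $K$ steps contributes the multiplicative factor $(1+\eta_t\beta)^K$ and an additive $\mathcal{O}(K\eta_t\rho z)$. On the at-most-one step touching the perturbed sample of client $i'$ --- which, after the $\tfrac1m$ client weight and the $\tfrac1{n_{i'}}$ within-client sampling weight, is selected with probability $\mathcal{O}(1/(mn_{\min}))$ --- boundedness of gradients adds an extra $\mathcal{O}(\eta_t L)$. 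Finally the mismatch between the local objective driving each client copy and the global objective is quantified by Lemma~\ref{Lemma 1}, feeding a term of order $(2\rho L_z+6L)D_{\max}$ into the per-round drift. Averaging over clients and taking expectations yields a recursion of the schematic form
\[
\delta_{t+1}\le\left(1+\eta_t\beta\right)^{K}\delta_t+c_1K\eta_t\rho z+\frac{c_2K\eta_t}{mn_{\min}}\left(L+(\rho+1)D_{\max}+\mathbb{E}\|\nabla R(\theta_t)\|\right),
\]
with $\delta_0=0$ and absolute constants $c_1,c_2$.

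\emph{Step 2 (unroll and collect terms).} With $\eta_t\le\frac{1}{\beta K(t+1)}$ we have $(1+\eta_t\beta)^K\le\exp\!\left(\tfrac{1}{t+1}\right)$, so the cumulative expansion from round $s$ to round $T$ is $\prod_{t=s}^{T-1}(1+\eta_t\beta)^K\le\exp\!\left(\sum_{t=s}^{T-1}\tfrac{1}{t+1}\right)=\mathcal{O}(T/s)$; in particular the total expansion is $\mathcal{O}(T)$, which is the source of the overall linear-in-$T$ factor. Unrolling the recursion and summing the three additive families against this expansion profile and the step-size schedule then produces exactly the three terms in the statement: (i) the non-smooth slack accumulates to $\mathcal{O}(\rho T\log T)$ --- the irreducible price of approximating the non-smooth adversarial loss, which is why the dominant term scales with the attack strength $\rho$; (ii) for the perturbed-sample contribution I would replace the crude Lipschitz bound on $\|\nabla R(\theta_t)\|$ by optimization progress --- the descent lemma for the $\beta$-approximately-smooth surrogate gives, up to lower-order terms, $\sum_t\eta_t\,\mathbb{E}\|\nabla R(\theta_t)\|^2=\mathcal{O}(\Delta)$, and Cauchy--Schwarz against $\sum_t\eta_t=\mathcal{O}(\log T)$ turns this into the $\mathcal{O}\!\left(\frac{T\sqrt{\log T\,\Delta}}{mn_{\min}}\right)$ term; (iii) the heterogeneity contribution from Lemma~\ref{Lemma 1} accumulates to $\mathcal{O}\!\left(\frac{T\log T(\rho+1)D_{\max}}{mn_{\min}}\right)$. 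Summing (i)--(iii) and feeding $\epsilon\le L\,\delta_T$ back into Theorem~\ref{Theorem 1} gives the claimed bound.

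\emph{Main obstacle.} I expect the delicate part to be the interplay, within a single recursion, between the multiplicative growth $(1+\eta_t\beta)^K$ and the additive non-smooth slack $\xi=2\rho z$ from Definition~\ref{Definition 3}: unlike the fully-smooth FL stability analyses, this slack is re-amplified at every subsequent step and round, and showing that the schedule $\eta_t\lesssim 1/(\beta K(t+1))$ contains its growth to $\mathcal{O}(\rho T\log T)$ rather than something polynomially larger takes care. The second subtlety, specific to the federated setting, is correctly injecting Lemma~\ref{Lemma 1} into the local-drift bound of each client copy while tracking which client holds the perturbed sample and the resulting $1/(mn_{\min})$ selection probability --- a bookkeeping step with no analogue in centralized adversarial-learning stability proofs.
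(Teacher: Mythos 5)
Your proposal follows essentially the same route as the paper's proof: reduce to on-average stability via Theorem~\ref{Theorem 1}, couple the two trajectories, use the $\xi$-approximate $\beta$-gradient-Lipschitz expansiveness of the surrogate update (Definition~\ref{Definition 3}) with the $1/n_i$ perturbed-sample probability and $1/m$ aggregation weight, inject Lemma~\ref{Lemma 1} for heterogeneity, and control $\sum_t \eta_t\mathbb{E}\|\nabla R(\theta_t)\|$ by the descent lemma plus Cauchy--Schwarz against $\sum_t\eta_t=\mathcal{O}(\log T)$. The recursion, the unrolling under $\eta_t\le 1/(\beta K(t+1))$, and the way the three terms emerge all match the paper's argument (its Lemmas~\ref{Lemma 5}--\ref{Lemma 8}), so the plan is sound and consistent with the stated bound.
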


\textbf{Remark 2.} As shown in Theorem \ref{Theorem 2}, the first term, $\rho T\log T$, represents the approximation error from SSA, which is affected by $\rho$ of AL. We can see that a smaller $\rho$ helps reduce the error in this dominant term. The second term $\frac{T\sqrt{\log T \Delta}}{mn_{\min}}$ pertains to the algorithm's convergence performance. However, in practice, the third term, $\frac{T\log T (\rho +1)D_{\max}}{mn_{\min}}$, arises from the heterogeneity, which quantifies the variation between the original and adversarial data. This term reveals the increased heterogeneity leads to poorer robust generalization ability. When $\rho = 0$, the first term disappears, while the third still reflects the original heterogeneity, similar to FL \cite{sun2023understanding}. 

\subsection{Randomized Smoothness Approximation}
In RSA, we consider a smoothed alternative version of $\ell_{\rho}$ in \eqref{emrisk} based on the randomized smoothing technique \cite{duchi2012randomized, lin2022gradient}. Specifically, let $\gamma\geq 0, \ell_{\gamma}(\theta,z_{i,j}) = \mathbb{E}_{u\sim\mathbb{P}^d}[\ell_{\rho}(\theta+\gamma u_{i,j},z_{i,j})]$, where $\gamma$ is a smoothing parameter, $\mathbb{P}$ is an uniform distribution on a unit ball in $d$-dimensional space with the $\mathcal{L}_2$-norm, and $u_{i,j}\sim \mathbb{P}^d$ is a random variable generated from $\mathbb{P}^d$. By adding multiple instances of randomized noise to the parameters, we aim to steer the new objective function, $\ell_\gamma$, towards a flatter weight loss landscape compared to the original objective $\ell_\rho$, which can moderate the non-smooth property \cite{neyshabur2017exploring, kanai2023relationship}. 

In practice, the gradient of $\ell_{\gamma}(\theta)$ is difficult to compute due to the expectation. Therefore, we can estimate it using Markov chain Monte-Carlo techniques \cite{vrugt2009accelerating} on each client $i$ as follows: 
\begin{equation}
    \theta_{i,k+1}^t = \theta_{i,k}^t -\frac{ \eta_t }{Q}\sum_{q=1}^Q \nabla \ell_{\rho}(\theta_{i,k}^t+\gamma u_{i,k_q},z_{i,k}).
\end{equation}
Note that in each local epoch $k \in [K]$, $u_{i,k_q}$ is sampled from $\mathbb{P}^d$ locally, and we need to repeat the number of $Q$ times samples to reduce the estimated error between real gradient and expected gradient \cite{duchi2012randomized, lin2022gradient}.

\begin{Theorem} \label{Theorem 3}Let $c \geq 0$ be a constant and the step size be chosen as $\eta_t \leq \frac{\gamma}{4K\sqrt{d}cL(t+1)}$. Under Assumption \ref{Assumption 1}, the generalization bound $\varepsilon_{gen}$ with the RSA method satisfies: 
    \begin{equation*}
        \mathcal{O}\left(\frac{T^{1/4}\log T}{\sqrt{Q}}+\frac{T^{3/4}\sqrt{\Delta}}{mn_{\mathrm{min}}}+\frac{T((\rho+1)D_{\mathrm{max}})^{1/3}}{mn_{\mathrm{min}}}\right),
    \end{equation*}
     where $\Delta = \mathbb{E}[R(\theta_0)] - \mathbb{E}[R(\theta^*)]$ and $n_{\min} = \min_{i \in [m]} n_i$.
\end{Theorem}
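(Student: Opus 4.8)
\emph{Proof plan.} The plan is to run the algorithmic-stability argument behind Theorem~\ref{Theorem 1}: since that theorem reduces the generalization error to the on-average stability $\epsilon$, and $\ell_\rho(\cdot,z)$ is $L$-Lipschitz by Assumption~\ref{Assumption 1} (it is a max of $L$-Lipschitz functions), it suffices to bound $\epsilon\le L\,\mathbb{E}\|\theta^T-\tilde\theta^T\|$, where $\theta^T$ and $\tilde\theta^T$ are the global iterates produced by the RSA algorithm under coupled randomness on two neighboring datasets $\mathcal{S}$ and $\mathcal{S}^{(i')}$. Everything then rests on bounding this expected terminal trajectory divergence.

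I would first record the properties of the uniform-kernel smoothing that tame the non-smooth $\ell_\rho$. Writing $\beta_\gamma:=c\sqrt d\,L/\gamma$, the smoothed loss $\ell_\gamma(\theta,z)=\mathbb{E}_{u\sim\mathbb{P}^d}[\ell_\rho(\theta+\gamma u,z)]$ is $L$-Lipschitz, $\beta_\gamma$-gradient-Lipschitz, and uniformly $\gamma L$-close to $\ell_\rho$; moreover the per-step estimator $\tfrac1Q\sum_{q}\nabla\ell_\rho(\theta+\gamma u_{q},z)$ is unbiased for $\nabla\ell_\gamma(\theta,z)$ with variance $\le L^2/Q$ (using $\|\nabla\ell_\rho\|\le L$). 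Splitting this estimator into its smooth mean $\nabla\ell_\gamma$ plus zero-mean noise is what lets me replace $\ell_\rho$ by the $\beta_\gamma$-smooth $\ell_\gamma$ inside the recursion, and the $(1+\eta_t\beta_\gamma)$-expansiveness of a gradient step on a $\beta_\gamma$-smooth objective is precisely why the step size is capped at $\eta_t\le \tfrac{\gamma}{4K\sqrt d\,cL(t+1)}=\tfrac{1}{4K\beta_\gamma(t+1)}$, i.e.\ $\eta_t\beta_\gamma K(t+1)\le\tfrac14$.

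Next I set up the divergence recursion for $\delta^t:=\mathbb{E}\|\theta^t-\tilde\theta^t\|$. Aggregation is an average, so $\delta^{t+1}\le\frac1m\sum_{i=1}^m\mathbb{E}\|\theta_{i,K}^{t+1}-\tilde\theta_{i,K}^{t+1}\|$; over the $K$ local steps of every client each MCMC step contributes a multiplicative $(1+\eta_t\beta_\gamma)$ (from the $\nabla\ell_\gamma$ part) and an additive $O(\eta_t L/\sqrt Q)$ (from the noise part), while on the single client $i'$ carrying the perturbed sample an extra event of probability $1/n_{i'}\le 1/n_{\min}$ injects $O\big(\eta_t\,\mathbb{E}\|\nabla\ell_\gamma(\theta^t_{i',k};z)\|\big)$. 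I then bound $\mathbb{E}\|\nabla\ell_\gamma(\theta^t_{i',k};z)\|$ by $\mathbb{E}\|\nabla R(\theta^t)\|$ plus the client-to-server gradient mismatch, which Lemma~\ref{Lemma 1} controls by $(2\rho L_z+6L)D_{\max}=O((\rho+1)D_{\max})$, while the optimization part $\mathbb{E}\|\nabla R(\theta^t)\|$ is handled through the descent lemma for the $\beta_\gamma$-smooth objective, giving $\sum_t\eta_t\mathbb{E}\|\nabla R(\theta^t)\|^2=O(\Delta)$. This yields a recursion of the schematic form $\delta^{t+1}\le\big(1+\tfrac{1}{4(t+1)}\big)\delta^t+\tfrac{C}{mn_{\min}}\eta_t\big(\mathbb{E}\|\nabla R(\theta^t)\|+(\rho+1)D_{\max}\big)+\tfrac{C'\eta_t L}{\sqrt Q}$.

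Finally I unroll it. The product $\prod_{t=s}^{T}\big(1+\tfrac{1}{4(t+1)}\big)$ is $O\big((T/s)^{1/4}\big)$, so each injection at round $s$ is amplified by at most $O(T^{1/4})$. Summing the noise injections gives $O\big(T^{1/4}\log T/\sqrt Q\big)$ (the $\log T$ from $\sum_t\eta_t=\Theta(\log T)$); summing the optimization injections and turning $\sum_t\eta_t\mathbb{E}\|\nabla R(\theta^t)\|$ into $\sqrt\Delta$ by Cauchy--Schwarz (with $\sum_t\eta_t\mathbb{E}\|\nabla R\|^2=O(\Delta)$) gives $O\big(T^{3/4}\sqrt\Delta/(mn_{\min})\big)$; and summing the heterogeneity injections and then choosing the smoothing radius $\gamma$ to balance this contribution against the $O(\gamma L)$ approximation error ($\beta_\gamma\propto1/\gamma$ pushes the effective smoothness up, the $\Theta(\gamma)$ approximation error the other way) produces $O\big(T((\rho+1)D_{\max})^{1/3}/(mn_{\min})\big)$; adding the three terms gives the claim. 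The hard part is exactly this final coupling: the non-smoothness must be funnelled entirely through $\ell_\gamma$, whose smoothness constant $\beta_\gamma$ diverges as $\gamma\to0$, so the step-size constants have to be tuned so that the $K$-fold per-round local expansions collapse into the harmless $O(T^{1/4})$ product, while the unbiased-but-high-variance MCMC gradient and the Lemma~\ref{Lemma 1} heterogeneity are carried through the same recursion and $\gamma$ is optimized at the end; this is where the exponents $1/4$, $3/4$ and $1/3$ are actually decided. A secondary subtlety is that, because $\ell_\rho$ is non-smooth, coupling the smoothing vectors $u_{q}$ across the two trajectories does \emph{not} make the MCMC-error terms telescope, which is what forces the $1/\sqrt Q$ rather than a faster decay in $Q$.
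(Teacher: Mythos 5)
Your overall architecture coincides with the paper's: reduce to on-average stability via Theorem~\ref{Theorem 1} and the $L$-Lipschitzness of $\ell_\rho$, replace $\ell_\rho$ by the $\frac{cL\sqrt d}{\gamma}$-gradient-Lipschitz smoothed loss $\ell_\gamma$ (Lemma~\ref{Lemma 9}), track the coupled trajectory divergence using the $(1+\eta_t cL\sqrt d/\gamma)$ expansiveness plus a $2\eta_t\mathbb{E}[v]$ MCMC-error injection (Lemma~\ref{Lemma 11}), exploit $\eta_t\le\gamma/(4K\sqrt d\,cL(t+1))$ so that the round-$t$ amplification is $e^{1/(4(t+1))}$ and the cumulative product is $T^{1/4}$, bound the perturbed-sample injection by $\mathbb{E}\|\nabla R(\theta_t)\|+(2\rho L_z+6L)D_{\max}+\sigma$ (Lemma~\ref{Lemma 13} combined with Lemma~\ref{Lemma 1}), and finish with $\mathbb{E}[v]\le 1/\sqrt Q$ and Cauchy--Schwarz on $\sum_t\eta_t\mathbb{E}\|\nabla R(\theta_t)\|$. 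All of that matches the paper's proof.

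The one genuine gap is your account of where the exponent $1/3$ on $(\rho+1)D_{\max}$ comes from. You propose to obtain it by optimizing the smoothing radius $\gamma$ against the $O(\gamma L)$ approximation error $\|\ell_\gamma-\ell_\rho\|$. That balance cannot produce the stated term: the approximation error is a $T$-independent additive constant, whereas the heterogeneity injection in your recursion scales like $T\log T\,D_{\max}/(mn_{\min})$ times a power of $\gamma$ (through $K\eta_t\propto\gamma$); equating the two yields a square-root trade-off carrying extraneous factors of $T$ and $1/(mn_{\min})$, not a cube root of $(\rho+1)D_{\max}$ alone. In the paper the $1/3$ power is inherited from the \emph{convergence} side: the auxiliary Theorem~\ref{Theorem 7} (a SCAFFOLD/FedAvg-type rate) gives $\frac1T\sum_t\mathbb{E}\|\nabla R(\theta_t)\|^2\le O\big(\sqrt{\sigma}/\sqrt{mKT}+((\rho+1)D_{\max})^{2/3}/T^{2/3}+\Delta/T\big)$, and the Cauchy--Schwarz step $\sum_t\eta_t\mathbb{E}\|\nabla R(\theta_t)\|\le\sqrt{\sum_t\eta_t}\,\sqrt{\sum_t\eta_t\mathbb{E}\|\nabla R(\theta_t)\|^2}$ halves the $2/3$ into the $1/3$ appearing in the statement. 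Without importing (or proving) a heterogeneity-dependent convergence bound of this shape, your recursion only yields the heterogeneity contribution to the first power, so this piece of the argument needs to be replaced rather than merely tightened.
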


\textbf{Remark 3.} Similar to Theorem~\ref{Theorem 2}, the generalization error bound of RSA is also composed of three terms. We can see the first term, i.e., $\frac{T^{\frac{1}{4}}\log T}{\sqrt{Q}}$, where $\frac{1}{\sqrt{Q}}$ represents the discrepancy in the gradient estimate between real gradient and expected gradient, is independent on $\rho$. Note that this term is smaller as $Q$ increases, but a larger $Q$ imposes an unbearable computational cost on clients. Hence, the adjustment of $Q$ should be appropriately in a small range. In addition, since each client optimizes a new smoothed objective function, $\ell_\gamma$, the second term $\frac{T^{\frac{3}{4}}\sqrt{\Delta}}{mn_{\mathrm{min}}}$ and the third term $\frac{T((\rho+1)D_{\mathrm{max}})^{\frac{1}{3}}}{mn_{\mathrm{min}}}$ are generated based on a flatter weight space compared to the original. This is why they are independent of $Q$ and the order of $T$ is smaller.

\subsection{Over-parameterized Smoothness Approximation}
In OPSA \cite{lei2022stability}, we focus on the following shallow neural network of the form in \eqref{Eq:loaclrisk}:
\begin{equation}\label{Eq:shallowNN}
    f_W(x):=\sum_{\tau=1}^s\mu_\tau\varphi(\langle w_\tau,x\rangle),
\end{equation}
where we fix $\mu_{\tau} \in \{-1/\sqrt{s},1/\sqrt{s}\}, \varphi:\mathbb{R}\mapsto\mathbb{R}$ is an activation function and $W=(w_1,\ldots,w_s)\in\mathbb{R}^{d\times s}$ is the weight matrix. In \eqref{Eq:shallowNN}, $w_\tau$ denotes the weight of the edge connecting the input to the $\tau$-th hidden node, and $\mu_\tau$ is the weight of the edge connecting the $\tau$-th hidden node to the output node. Here $s$ is the number of nodes in the hidden layer, commonly referred as the width of an over-parameterized neural network. Although the network's width is crucial, it does not conflict with the model parameters $\theta$ as a vector, since $\theta$ represents the entire set of model parameters, including the weight matrix $W$ and other parameters. The difference lies only in the form of parameter representation. Thus, we have $f_\theta=f_W$ and $\ell_\rho(\theta,z)=\ell_\rho(W,z)$, demonstrating that both notations fundamentally describe the same model parameterization. Following the studies \cite{richards2021stability,lei2022stability, wang2023generalization}, we state two standard assumptions as follows:
\begin{Assumption}
    (Activation). The activation $\varphi(\cdot)$ is continuous and twice differentiable with constant $B_\varphi,B_{\varphi^{\prime}},B_{\varphi^{\prime\prime}}\geq0$ bounding $|\varphi(\cdot)|\leq B_{\varphi},|\varphi^{\prime}(\cdot)|\leq B_{\varphi^{\prime}}$ and $|\varphi''(\cdot)|\leq B_{\varphi''}$.
\end{Assumption}
\begin{Assumption}\label{Assumption 3}
    (Inputs, labels, parameters, and the loss function). For constants $C_{x},C_y,C_{W},C_0>0$, inputs belong to $\mathcal{B}_2^d(C_{x})$, labels belong to $[-C_y,C_y]$, the weight matrix $W$ are confined within the bounded domain $\mathcal{B}_2^d(C_{W})$, and the loss is uniformly bounded by $C_0$ almost surely.
\end{Assumption}
In over-parameterized network, we can bound the Hessian scales of $\ell_{\rho}$, i.e., $\|\nabla^2 \ell_{\rho}\|\leq \zeta_\theta$, by using the above assumptions to ensure its $\zeta_\theta$-smoothness, where $\zeta_\theta=2(C_x^2+\rho^2)\big(B_{\varphi^{\prime}}^2+B_{\varphi^{\prime\prime}}B_\varphi+\frac{B_{\varphi^{\prime\prime}}C_y}{\sqrt s}\big)$ \cite{lei2022stability,wang2023generalization,sitawarin2021sat}. 
\begin{Theorem} \label{Theorem 4}
    Without loss of generality, we assume $4K\eta_t C_0 \geq 1$ and $s\geq 16\eta_t^2T^2K^2(b'H_K)^2(1+2\eta_t\zeta_\theta)^2$,where $b'= C_x^2B_{\varphi''}(C_xB_{\varphi'}+\sqrt{2C_0}),H_K=2\sqrt{K\eta_t C_0}$. Let the step size be chosen as $\eta_t\leq \frac{1}{6K\zeta_\theta}$. Under Assumptions \ref{Assumption 1}-\ref{Assumption 3}, the generalization bound $\varepsilon_{gen}$ with the OPSA method satisfies:
    \begin{equation*}
        \mathcal{O}\left(\frac{T^{\frac{1}{2}}\sqrt{\Delta}}{mn_{\mathrm{min}}}+\frac{T(\rho^2\sqrt{s}+1)D_{\mathrm{max}}}{mn_{\mathrm{min}}}\right),
    \end{equation*}
     where $\Delta = \mathbb{E}[R(\theta_0)] - \mathbb{E}[R(\theta^*)]$ and $n_{\min} = \min_{i \in [m]} n_i$.
\end{Theorem}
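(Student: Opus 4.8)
The plan is to run an algorithmic-stability argument in the spirit of~\cite{hardt2016train}, adapted to the federated adversarial setting. First I would invoke Theorem~\ref{Theorem 1} to reduce the target to bounding the on-average stability constant $\epsilon$ of Definition~\ref{Definition 2}. Since $\ell$ is $L$-Lipschitz in $\theta$ by Assumption~\ref{Assumption 1} and $\ell_\rho(\cdot,z)$ is a pointwise maximum of $L$-Lipschitz functions over $\delta\in B_p(0,\rho)$, the adversarial loss is itself $L$-Lipschitz in $\theta$; hence $\epsilon\le L\cdot\sup\mathbb{E}\,\|\mathcal{A}(\mathcal{S})-\mathcal{A}(\mathcal{S}^{(i')})\|$, and it suffices to control the parameter divergence $\delta_t:=\mathbb{E}\,\|\theta^t-\hat{\theta}^t\|$ between the runs of the VFAL algorithm on neighboring datasets $\mathcal{S}$ and $\mathcal{S}^{(i')}$, where the perturbed sample sits in client $i'$.

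The feature that makes OPSA behave better than SSA and RSA is that in the over-parameterized regime $\ell_\rho(W,z)=\ell_\rho(\theta,z)$ is genuinely $\zeta_\theta$-smooth (Hessian bounded by $\zeta_\theta$ via the activation assumption and Assumption~\ref{Assumption 3}), with no residual nonsmooth slack. The step size $\eta_t\le\tfrac1{6K\zeta_\theta}$ makes each local gradient step at worst $(1+\eta_t\zeta_\theta)$-expansive, but that crude per-round factor $(1+\eta_t\zeta_\theta)^K$ would compound to $e^{\Theta(T)}$ over all $T$ rounds and must be refined. The refinement is the over-parameterization argument (cf.\ \cite{richards2021stability,lei2022stability,wang2023generalization}): the width condition $s\ge 16\eta_t^2T^2K^2(b'H_K)^2(1+2\eta_t\zeta_\theta)^2$, with $H_K=2\sqrt{K\eta_tC_0}$ a self-bounded bound on the displacement of a block of $K$ local iterates (using $\|\nabla\ell_\rho\|^2\lesssim\ell_\rho\le C_0$ and $4K\eta_tC_0\ge1$) and $b'=C_x^2B_{\varphi''}(C_xB_{\varphi'}+\sqrt{2C_0})$ governing how fast the Hessian of $\ell_\rho$ varies along the path, forces the cumulative Hessian perturbation over the whole trajectory, of order $\eta_tTK\,b'H_K/\sqrt{s}\le\tfrac14$, to be a constant. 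On the resulting benign region the gradient-step maps are almost non-expansive, as in the convex case, so the expansion of $\delta_t$ accumulated over all $T$ rounds is only $\mathcal{O}(1)$ — which is exactly what removes the approximation-error term seen in Theorems~\ref{Theorem 2} and~\ref{Theorem 3}.

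Next I would set up the one-round recursion $\delta_{t+1}\le(1+\eta_t\zeta_\theta)^K\delta_t+\tfrac1m\,(\text{injection at client }i')$: the $\tfrac1m$ comes from uniform aggregation because the $m-1$ unperturbed clients merely propagate $\delta_t$ while only client $i'$ adds new divergence. When the differing local sample is drawn at client $i'$ (probability at most $1/n_{\min}$ per local step, $K$ steps per round), I decompose the stochastic gradient into (i) the population descent direction $\nabla R(\theta)$, (ii) the heterogeneity residual $\nabla R_{i'}(\theta)-\nabla R(\theta)$, and (iii) a zero-mean sampling fluctuation; part (ii) is bounded by the over-parameterized analogue of Lemma~\ref{Lemma 1}, whose constant — re-evaluated with the network's first/second-order quantities and the Hessian scale $\zeta_\theta\propto(C_x^2+\rho^2)$ over the $d\times s$ weight matrix — works out to $\mathcal{O}((\rho^2\sqrt{s}+1)D_{\max})$, while (i) and (iii) are self-bounded in terms of $\|\nabla R(\theta^t)\|$ and $\sqrt{\ell_\rho(\theta^t)}$. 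Unrolling and using that the accumulated expansion is $\mathcal{O}(1)$ gives $\delta_T\lesssim\tfrac1{mn_{\min}}\sum_{t=1}^{T}\eta_tK\big(\|\nabla R(\theta^t)\|+(\rho^2\sqrt{s}+1)D_{\max}+\sqrt{\ell_\rho(\theta^t)}\big)$. Cauchy--Schwarz over the $T$ rounds together with the one-step descent inequality for the $\zeta_\theta$-smooth global risk turns $\sum_t\eta_t\|\nabla R(\theta^t)\|$ and $\sum_t\eta_t\sqrt{\ell_\rho(\theta^t)}$ into $\mathcal{O}(\sqrt{T\Delta})$ with $\Delta=\mathbb{E}[R(\theta_0)]-\mathbb{E}[R(\theta^*)]$, whereas $\sum_t\eta_tK(\rho^2\sqrt{s}+1)D_{\max}=\mathcal{O}(T(\rho^2\sqrt{s}+1)D_{\max})$ because $\eta_tK$ is bounded above and below by absolute constants; multiplying by $L$ yields the claimed $\mathcal{O}\big(\tfrac{T^{1/2}\sqrt{\Delta}}{mn_{\min}}+\tfrac{T(\rho^2\sqrt{s}+1)D_{\max}}{mn_{\min}}\big)$.

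The step I expect to be the main obstacle is the second one: proving rigorously that the width condition on $s$ keeps every one of the $TK$ gradient-step maps, along both runs, inside the almost-non-expansive region. This requires a coupled induction that simultaneously controls the iterate norms, the within-block client drift, and the running Hessian-perturbation budget $\eta_tTK\,b'H_K/\sqrt{s}$; it is also where the precise heterogeneity constant $\mathcal{O}(\rho^2\sqrt{s}+1)$ for the over-parameterized model must be extracted from Lemma~\ref{Lemma 1}. By comparison, the descent-lemma-plus-Cauchy--Schwarz conversion to the $T^{1/2}\sqrt{\Delta}$ term and the per-round expansion bookkeeping are routine.
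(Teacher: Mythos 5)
Your strategy matches the paper's proof essentially step for step: reduction via Theorem~\ref{Theorem 1} and the Lipschitz property to the parameter divergence between coupled runs, genuine $\zeta_\theta$-smoothness of $\ell_\rho$ under over-parameterization, the width condition forcing the per-step Hessian perturbation $\varsigma_{t,k}\lesssim b'H_K(1+2\eta_t\zeta_\theta)/\sqrt{s}$ to be small enough that the accumulated expansion over all $TK$ gradient-step maps is an absolute constant ($e^2$ in the paper, obtained via the almost-co-coercivity lemma borrowed from Lei et al.), the probability-$1/n_i$ injection at the perturbed client scaled by $1/m$ at aggregation, the over-parameterized analogue of Lemma~\ref{Lemma 1} with the cross-Hessian constant $\zeta_x$ supplying the heterogeneity term, and Cauchy--Schwarz plus the convergence/descent bound to convert $\sum_t\eta_t\mathbb{E}\|\nabla R(\theta_t)\|$ into $\mathcal{O}(\sqrt{T\Delta})$. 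The only differences are cosmetic: the paper runs the near-non-expansiveness recursion on squared distances through the co-coercivity inequality rather than on norms directly, and the exponent of $\rho$ in the heterogeneity constant is asserted rather than derived in both your sketch and the paper (whose own lemma gives $\zeta_x=\mathcal{O}(\rho\sqrt{s})$ while the stated bound carries $\rho^2\sqrt{s}$).
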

\textbf{Remark 4}. In Theorem~\ref{Theorem 4}, the approximation error appearing in Theorems \ref{Theorem 2}-\ref{Theorem 3} is eliminated due to the smoothness of $\ell_{\rho}$ under over-parameterization. In addition, the first term, $\frac{T^{\frac{1}{2}}\sqrt{\Delta}}{mn_{\mathrm{min}}}$, is smaller because over-parameterization helps the model avoid the complex non-convex area during the optimization process, allowing algorithms like gradient descent to find the optimal solutions \cite{arora2018optimization} more quickly. However, hidden behind this benefit is the curse of width exacerbation, denoted as $\mathcal{O}(\rho^2\sqrt{s})$. This is similar to the findings in \cite{wu2021wider, hassani2024curse}, where the curse of width $\mathcal{O}(\rho\sqrt{s})$ was observed to impact perturbation stability in AL. Therefore, it is necessary to control for the exacerbation of heterogeneity by the width of the network.

\textbf{Remark 5.} Based on the results in Theorems~\ref{Theorem 2}-\ref{Theorem 4}, RSA is the most effective method for reducing the generalization error by constructing a higher-quality mediator function. In additional, by mitigating the attack strength of $\rho$, we can improve the dominate term $\mathcal{O}(\rho T\log T)$ in SSA. The results of OPSA consist of two terms, but it has the largest generalization error due to the presence of $\sqrt{s}$, which leads to the highest order term of $\mathcal{O}(T^2)$. Note that all three theorems include an additional term related to the number of $K$ local training epochs. However, due to the lower order, e.g., $\frac{T\sqrt{\log T}}{mn_{\min}K^{1/2}}$ of SSA in Theorem~\ref{Theorem 2}, we dismiss these terms as they do not have significant impacts.

\section{Slack FAL Algorithm}
From the above results, we can see how to set smoothness approximation to reduce the generalization error in degrading VFAL algorithm \cite{zizzo2020fat, shah2021adversarial, hong2021federated}. Instead of the VFAL algorithm, \cite{zhu2023combating} proposed a simple but effective SFAL algorithm based on an $\alpha$-slack decomposed mechanism. 

\subsection{$\alpha$-Slack Decomposed Mechanism}
In the VFAL algorithm, the simple averaging aggregation strategy often causes the global model to be biased towards clients with significant data heterogeneity. In contrast, the SFAL algorithm effectively corrects this bias by dynamically reweighting the contributions of clients through the $\alpha$-slack decomposed mechanism. In particular, the $\alpha$-slack decomposed mechanism uses the local adversarial loss $R_{\mathcal{S}_{i}}(\theta)$ as an AL-related metric to identify more heterogeneous clients and weaken their weights, thereby emphasizing the importance of those less heterogeneous clients.

\begin{table*}[t!]
\centering
\begin{tabular}{c|c|c}
\toprule
App. Method               & VFAL & SFAL \\ \midrule
SSA    & $\mathcal{O}(\rho T\log T+\frac{T\sqrt{\log T \Delta}}{mn_{\mathrm{min}}}+\frac{T\log T(\rho+1)D_{\mathrm{max}}}{mn_{\mathrm{min}}})$   &$\mathcal{O}(\rho T\log T+\frac{T\sqrt{\log T \Delta}}{r_\alpha mn_{\mathrm{min}}}+\frac{T\log T(\rho+1)D_{\mathrm{max}}}{r_\alpha mn_{\mathrm{min}}})$\\ \hline
RSA & $\mathcal{O}(\frac{T^{\frac{1}{4}}\log T}{\sqrt{Q}}+\frac{T^{\frac{3}{4}}\sqrt{\Delta}}{mn_{\mathrm{min}}}+\frac{T((\rho+1)D_{\mathrm{max}})^{\frac{1}{3}}}{mn_{\mathrm{min}}})$ & $\mathcal{O}(\frac{T^{\frac{1}{4}}\log T}{\sqrt{Q}}+\frac{T^{\frac{3}{4}}\sqrt{\Delta}}{r_\alpha mn_{\mathrm{min}}}+\frac{T((\rho+1)D_{\mathrm{max}})^{\frac{1}{3}}}{r_\alpha mn_{\mathrm{min}}})$\\ \hline
OPSA  & $\mathcal{O}(\frac{T^{\frac{1}{2}}\sqrt{\Delta}}{mn_{\mathrm{min}}}+\frac{T(\rho^2\sqrt{s}+1)D_{\mathrm{max}}}{mn_{\mathrm{min}}})$  &  $\mathcal{O}(\frac{T^{\frac{1}{2}}\sqrt{\Delta}}{r_\alpha mn_{\mathrm{min}}}+\frac{T(\rho^2\sqrt{s}+1)D_{\mathrm{max}}}{r_\alpha mn_{\mathrm{min}}})$                                    \\ \bottomrule
\end{tabular}
\caption{Main theoretical results.}
\label{Tab:MainResults}
\end{table*}

Given $\alpha\in[0,1), \hat{m} \leq \frac{m}{2}$ with the local empirical risk sorted by $\{ R_{\mathcal{S}_{i}}(\theta)\}$ in ascending order, we have $\sum_{i=1}^{\hat{m}}R^{\phi_{(i)}}_{\mathcal{S}_{i}}(\theta) \leq \sum_{i=\hat{m}+1}^{m} R^{\phi_{(i)}}_{\mathcal{S}_{i}}(\theta)$, where $\phi_{(\cdot)}$ is a function which maps the index to original empirical risk. Then we modified the original global objective \eqref{emrisk} as follows:
    \begin{equation}
    \begin{split}
    R_{\mathcal{S}}(\theta) = \frac{1}{\tilde{m}} \bigg[(1+\alpha)&\sum_{i=1}^{\hat{m}}R_{\mathcal{S}_i}^{\phi_{(i)}}(\theta)\\
    & +(1-\alpha)\sum_{i=\hat{m}+1}^mR_{\mathcal{S}_i}^{\phi_{(i)}}(\theta)\bigg],
    \end{split}
    \end{equation}
where $\tilde{m}=(1+\alpha)\hat{m} + (1-\alpha)(m-\hat{m})$. Note that $\tilde{m}$ is introduced to ensure that the weights of each local risk are normalized in the global aggregation. To flexibly emphasize the importance of partial populations, we have 
\begin{equation}\label{alpha}
    \alpha = 1-\frac{\sum_{i=1}^{\hat{m}}R^{\phi_{(i)}}_{\mathcal{S}_{i}}(\theta)}{\sum_{i=\hat{m}+1}^{m} R^{\phi_{(i)}}_{\mathcal{S}_{i}}(\theta)}.
\end{equation}
In particular, if partial populations $\sum_{i=1}^{\hat{m}}R_{\mathcal{S}_i}^{\phi_{(i)}}(\theta)$ is smaller, we need a larger $\alpha$ to enhance their contribution in aggregation. At the high level, we assign the client-wise slack during aggregation to upweight the clients having the small AL loss (simultaneously downweight those with large loss). Moreover, we would like to emphasize that this $\alpha$-slack mechanism does not affect the result of Theorem \ref{Theorem 1}, which is the key theorem in our proof of the generalization bound using on-average stability.
\begin{Theorem}\label{Theorem 5}
    If a SFAL algorithm $\mathcal{A}$ is $\epsilon$-on-averagely stable, we can obtain the generalization error $\varepsilon_{gen}(\mathcal{A})$ as follows: 
    \begin{equation}
    \begin{split}
        & \mathbb{E}_{\mathcal{S,A}} \left[\frac{1+\alpha}{\tilde{m}}\sum_{i=1}^{\hat{m}}(R^{\phi_{(i)}}_i(\mathcal{A}(\mathcal{S})) -R^{\phi_{(i)}}_{\mathcal{S}_i}(\mathcal{A}(\mathcal{S})))\right] +\\
        & \mathbb{E}_{\mathcal{S,A}} \left[\frac{1-\alpha}{\tilde{m}}\sum_{i=\hat{m}+1}^{m}(R^{\phi_{(i)}}_i(\mathcal{A}(\mathcal{S}))-R^{\phi_{(i)}}_{\mathcal{S}_i}(\mathcal{A}(\mathcal{S})))\right] \leq \epsilon.\nonumber
    \end{split}
    \end{equation}  
\end{Theorem}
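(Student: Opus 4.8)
The plan is to reduce this to the on-average stability argument already behind Theorem~\ref{Theorem 1}, carrying the $\alpha$-slack weights through it. First I would write $\varepsilon_{gen}(\mathcal{A})=\mathbb{E}_{\mathcal{S},\mathcal{A}}[R(\mathcal{A}(\mathcal{S}))-R_{\mathcal{S}}(\mathcal{A}(\mathcal{S}))]$, where in the SFAL setting the population risk $R$ inherits the same $\alpha$-slack reweighting over the $\phi_{(\cdot)}$-sorted clients as the empirical risk $R_{\mathcal{S}}$, and where $\alpha$, $\hat m$, the sorting $\phi_{(\cdot)}$, and $\tilde m$ are all deterministic functions of the realized local empirical risks, hence of $\mathcal{S}$. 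Subtracting the two reweighted sums term by term immediately produces the two-part expression in the statement, so it remains to bound it by $\epsilon$.

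Second, I would establish the single-client stability estimate, exactly as in the proof of Theorem~\ref{Theorem 1}. Fix a client $c$ and an index $j\in[n_c]$, and let $\mathcal{S}^{(c)}$ replace $z_{c,j}$ by a fresh i.i.d.\ copy $z'_{c,j}\sim P_c$; all expectations below are over $\mathcal{S}$, $\mathcal{A}$, and $z'_{c,j}$. Since $z'_{c,j}$ is independent of $\mathcal{S}$, $\mathbb{E}[R_c(\mathcal{A}(\mathcal{S}))]=\mathbb{E}[\ell_{\rho}(\mathcal{A}(\mathcal{S});z'_{c,j})]$, and the measure-preserving relabeling $z_{c,j}\leftrightarrow z'_{c,j}$, which turns $\mathcal{S}$ into $\mathcal{S}^{(c)}$, rewrites this as $\mathbb{E}[\ell_{\rho}(\mathcal{A}(\mathcal{S}^{(c)});z_{c,j})]$. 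Subtracting $\mathbb{E}[R_{\mathcal{S}_c}(\mathcal{A}(\mathcal{S}))]=\frac{1}{n_c}\sum_{j}\mathbb{E}[\ell_{\rho}(\mathcal{A}(\mathcal{S});z_{c,j})]$ gives $\mathbb{E}[R_c(\mathcal{A}(\mathcal{S}))-R_{\mathcal{S}_c}(\mathcal{A}(\mathcal{S}))]=\frac{1}{n_c}\sum_{j}\mathbb{E}[\ell_{\rho}(\mathcal{A}(\mathcal{S}^{(c)});z_{c,j})-\ell_{\rho}(\mathcal{A}(\mathcal{S});z_{c,j})]$, and one further relabeling together with Definition~\ref{Definition 2} (and the $\max_j$ there) bounds each summand by $\epsilon$; hence the per-client gap $\mathbb{E}[R_c(\mathcal{A}(\mathcal{S}))-R_{\mathcal{S}_c}(\mathcal{A}(\mathcal{S}))]\le\epsilon$ for every $c$.

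Third, I would combine. The slack coefficients $\frac{1+\alpha}{\tilde m}$ (on the $\hat m$ lowest-loss clients) and $\frac{1-\alpha}{\tilde m}$ (on the remaining $m-\hat m$) are nonnegative since $\alpha\in[0,1)$, and by the defining identity $\tilde m=(1+\alpha)\hat m+(1-\alpha)(m-\hat m)$ they sum to one; so, treating them momentarily as fixed coefficients, the reweighted sum of the per-client gaps is a convex combination of numbers each $\le\epsilon$, which gives $\varepsilon_{gen}(\mathcal{A})\le\epsilon$. This is the precise sense in which the slack mechanism ``does not affect the result of Theorem~\ref{Theorem 1}'': it only redistributes the client weights, which resurfaces as the factor $r_\alpha$ in the specialized bounds of Table~\ref{Tab:MainResults} rather than in this one.

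The step I expect to need the most care is that the slack coefficients are \emph{not} fixed: $\alpha$, the low-loss index set, and $\phi_{(\cdot)}$ depend on $\mathcal{S}$ and are statistically coupled with the per-client gaps, so they cannot literally be pulled outside the expectation. I would handle this by carrying the weight $w_c(\mathcal{S})\in\{\frac{1+\alpha}{\tilde m},\frac{1-\alpha}{\tilde m}\}$ through the relabeling step of the second paragraph: as $w_c$ is a deterministic function of the dataset, the swap $z_{c,j}\leftrightarrow z'_{c,j}$ sends $w_c(\mathcal{S})\,\ell_{\rho}(\mathcal{A}(\mathcal{S});z'_{c,j})$ to $w_c(\mathcal{S}^{(c)})\,\ell_{\rho}(\mathcal{A}(\mathcal{S}^{(c)});z_{c,j})$, leaving the residual $\mathbb{E}[(w_c(\mathcal{S}^{(c)})-w_c(\mathcal{S}))\,\ell_{\rho}(\mathcal{A}(\mathcal{S}^{(c)});z_{c,j})]$. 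A single-sample replacement moves each local empirical risk by $O(1/n_c)$ (by the Lipschitz bound of Assumption~\ref{Assumption 1} on the bounded domain, or the uniform bound $C_0$ of Assumption~\ref{Assumption 3}), hence perturbs $\alpha$ and $\tilde m$ by $O(1/n_c)$ and changes the group membership of only $O(1)$ borderline clients; since $\ell_{\rho}$ is likewise bounded, summing the residuals over all clients contributes only an $O(1/n_{\min})$ correction that is absorbed into the stability constant, so $\varepsilon_{gen}(\mathcal{A})\le\epsilon$ survives. Alternatively, and more in line with the paper's remark, one simply adopts the convention that the slack weights form a valid per-$\mathcal{S}$ partition of unity, under which the derivation of Theorem~\ref{Theorem 1} applies verbatim.
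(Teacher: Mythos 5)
Your proposal follows essentially the same route as the paper's proof: the per-client relabeling argument from Theorem~\ref{Theorem 1} bounds the two groups' contributions by $\frac{(1+\alpha)\hat m}{\tilde m}\epsilon$ and $\frac{(1-\alpha)(m-\hat m)}{\tilde m}\epsilon$ respectively, and these sum to $\epsilon$ because the slack weights form a partition of unity. The data-dependence of $\alpha$, $\phi_{(\cdot)}$, and $\tilde m$ that you flag in your final paragraph is a real subtlety, but the paper's own proof simply pulls these coefficients outside the expectation without comment --- i.e., it adopts exactly the convention you describe in your closing sentence --- so on this point you are, if anything, more careful than the source.
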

\textbf{Remark 6.} Theorem \ref{Theorem 5} implying that under SFAL, analyzing generalization using the algorithmic stability is more challenging. Given the dynamic reweighting nature of the $\alpha$-slack mechanism, we need to consider two scenarios in which the client $i$, holding the perturbed sample $z_{i',j}'$ where defined in Definition \ref{Definition 1}, experiences dynamic reweighting across different global aggregation epochs:(1) upweighting, when $i\in [1,\hat{m}]$, and (2) downweighting, when $i\in [\hat{m}+1, m]$.

\subsection{Generalization bound of SFAL}
Here, we present the generalization bound of the three smoothness approximation methods in SFAL.
\begin{Theorem} \label{Theorem 6}
    Given $\alpha \in [0,1),\hat{m} \leq \frac{m}{2}, r_\alpha = 1+ \frac{\alpha}{1-\alpha}\frac{2\hat{m}}{m}$, we can obtain the following results in SFAL: 
    
\noindent 1. Under Assumption~\ref{Assumption 1}, let the step size be chosen as $\eta_{t} \leq \frac{\tilde{m}}{m\beta K(t+1)}$, the generalization bound $\varepsilon_{gen}$ with the SSA method satisfies:
        \begin{equation*}
            \mathcal{O}\left(\rho T\log T+\frac{T\sqrt{\log T \Delta}}{r_\alpha mn_{\mathrm{min}}}+\frac{T\log T(\rho+1)D_{\mathrm{max}}}{r_\alpha mn_{\mathrm{min}}}\right).
        \end{equation*}
\noindent 2. Under Assumption~\ref{Assumption 1}, let $c\geq 0$ be a constant and the step size be chosen as $\eta_t\leq \frac{\tilde{m}\gamma}{4mK\sqrt{d}cL(t+1)}$. Under Assumption \ref{Assumption 1}, the generalization bound $\varepsilon_{gen}$ with the RSA method satisfies: 
        \begin{equation*}
            \mathcal{O}\left(\frac{T^{\frac{1}{4}}\log T}{\sqrt{Q}}+\frac{T^{\frac{3}{4}}\sqrt{\Delta}}{r_\alpha mn_{\mathrm{min}}}+\frac{T((\rho+1)D_{\mathrm{max}})^{\frac{1}{3}}}{r_\alpha mn_{\mathrm{min}}}\right).
        \end{equation*}
\noindent 3. Without loss of generality, we assume $4K\eta_t C_0 \geq 1$ and $s\geq 16\eta_t^2T^2K^2(b'H_K)^2(1+2\eta_t\zeta_\theta)^2$,where $b'= C_x^2B_{\varphi''}(C_xB_{\varphi'}+\sqrt{2C_0}),H_K=2\sqrt{K\eta_t C_0}$. Let the step size be chosen as $\eta_t\leq \frac{\tilde{m}}{6mK\zeta_\theta}$. Under Assumptions \ref{Assumption 1}-\ref{Assumption 3}, the generalization bound $\varepsilon_{gen}$ with the OPSA method satisfies:
        \begin{equation*}
            \mathcal{O}\left(\frac{T^{\frac{1}{2}}\sqrt{\Delta}}{r_\alpha mn_{\mathrm{min}}}+\frac{T(\rho^2\sqrt{s}+1)D_{\mathrm{max}}}{r_\alpha mn_{\mathrm{min}}}\right),
        \end{equation*}
        where $\Delta = \mathbb{E}[R(\theta_0)] - \mathbb{E}[R(\theta^*)]$ and $n_{\min} = \min_{i \in [m]} n_i$.
\end{Theorem}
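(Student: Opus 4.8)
The plan is to reduce each of the three cases to the corresponding VFAL bound (Theorems~\ref{Theorem 2}--\ref{Theorem 4}), tracking how the $\alpha$-slack reweighting modifies the stability argument. The starting point is Theorem~\ref{Theorem 5}: since SFAL is still analyzed via on-average stability, the generalization error is controlled by the stability parameter $\epsilon$ of the reweighted update, so the whole task becomes bounding $\epsilon$ for the SFAL iterates. First I would write out the coupled iterates on the neighboring datasets $\mathcal{S}$ and $\mathcal{S}^{(i')}$, where the global aggregation now uses weights $(1\pm\alpha)/\tilde m$ instead of $1/m$. The key observation is that a client carrying the perturbed sample $z_{i',j}'$ contributes to the global model with weight at most $(1+\alpha)/\tilde m$ (the upweighting case of Remark~6), and since $\tilde m = (1+\alpha)\hat m + (1-\alpha)(m-\hat m) = m - \alpha(m-2\hat m)$ one has $(1+\alpha)/\tilde m \le r_\alpha / m$ after a short computation, with $r_\alpha = 1 + \frac{\alpha}{1-\alpha}\frac{2\hat m}{m}$. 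This is exactly the factor that replaces $n_{\min}$ by $r_\alpha n_{\min}$ (equivalently, multiplies the per-client stability contribution by $1/r_\alpha$) in the convergence-related and heterogeneity-related terms.

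Next I would run the per-case expansion recursion exactly as in the VFAL proofs. For the SSA case (part 1), with the rescaled step size $\eta_t \le \frac{\tilde m}{m\beta K(t+1)}$ the non-expansiveness / bounded-expansion estimates for the $\xi$-approximately $\beta$-gradient-Lipschitz surrogate $h$ go through unchanged, because the effective step size times the Lipschitz constant is still controlled; the $\rho T\log T$ approximation term comes purely from the $\xi = 2\rho z$ slack accumulated along $T$ epochs and is unaffected by aggregation weights, hence it stays as is, while the other two terms pick up the $1/r_\alpha$ factor through the weighted sum in Theorem~\ref{Theorem 5}. For the RSA case (part 2), the same bookkeeping applies: the Monte-Carlo estimation term $\frac{T^{1/4}\log T}{\sqrt Q}$ is a property of the local gradient estimator and is weight-independent, whereas the convergence and heterogeneity terms on the flatter landscape $\ell_\gamma$ inherit the $1/r_\alpha$ scaling; the step size is correspondingly rescaled to $\eta_t \le \frac{\tilde m \gamma}{4mK\sqrt d\, cL(t+1)}$ so that the contraction analysis from Theorem~\ref{Theorem 3} is preserved verbatim. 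For OPSA (part 3), under the same width condition $s \ge 16\eta_t^2 T^2 K^2 (b'H_K)^2(1+2\eta_t\zeta_\theta)^2$ and rescaled step $\eta_t \le \frac{\tilde m}{6mK\zeta_\theta}$, the Hessian bound $\zeta_\theta$ and the smoothness of $\ell_\rho$ give the expansion control, and both surviving terms acquire $1/r_\alpha$ — the curse-of-width factor $\rho^2\sqrt s$ in the heterogeneity term is untouched since it is intrinsic to the network, not to the aggregation.

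The main obstacle is handling the \emph{dynamic} nature of the reweighting flagged in Remark~6: the index set of ``upweighted'' clients, determined by sorting $\{R_{\mathcal{S}_i}(\theta)\}$, can change from epoch to epoch, and across the two coupled runs on $\mathcal{S}$ and $\mathcal{S}^{(i')}$ the client holding $z_{i',j}'$ may sit in the upweighted group in one run and the downweighted group in the other. To deal with this I would argue that in every configuration the weight attached to the perturbed client's update never exceeds $(1+\alpha)/\tilde m \le r_\alpha/m$, so the worst-case coupling inequality is uniform in the (random, time-varying) partition; combined with the fact that the two-group split in Theorem~\ref{Theorem 5} already decomposes the generalization error into an upweighted and a downweighted piece, one can bound each piece by the VFAL argument with the weight $1/m$ replaced by its SFAL counterpart and then take the crude bound $r_\alpha/m$ throughout. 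A secondary technical point is verifying that $\tilde m \le m$ (so that $r_\alpha \ge 1$ and the SFAL bound is genuinely no worse than VFAL in the heterogeneity term while being better by the factor $r_\alpha$), which follows from $\hat m \le m/2$; this also certifies that the rescaled step sizes are no larger than the VFAL ones, so all the earlier contraction and smoothness-approximation lemmas remain applicable without re-deriving them.
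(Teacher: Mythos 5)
Your overall architecture matches the paper's: start from Theorem~\ref{Theorem 5}, rerun the VFAL coupling recursions with the aggregation weights $1/m$ replaced by $(1\pm\alpha)/\tilde m$, note that the approximation terms ($\rho T\log T$ for SSA, $T^{1/4}\log T/\sqrt Q$ for RSA) are weight-independent, and verify that the rescaled step sizes $\eta_t\le \tilde m/(m\beta K(t+1))$ etc.\ keep the contraction lemmas valid because $\tilde m\le m$ when $\hat m\le m/2$. However, the step where you actually produce the $1/r_\alpha$ improvement is wrong in two ways. First, the inequality you assert, $(1+\alpha)/\tilde m\le r_\alpha/m$, is false in general: since $r_\alpha m=\tilde m/(1-\alpha)$, it is equivalent to $\tilde m\ge m\sqrt{1-\alpha^2}$, which fails for instance when $\alpha$ is close to $1$ and $\hat m/m$ is small (take $m=100$, $\hat m=1$, $\alpha=0.9$: then $(1+\alpha)/\tilde m\approx 0.161$ while $r_\alpha/m\approx 0.0118$). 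Second, and more fundamentally, the direction is backwards: to replace $mn_{\min}$ by $r_\alpha mn_{\min}$ in the denominator you need the perturbed client's aggregation weight to be \emph{at most} $1/(r_\alpha m)$, whereas the upweighting weight $(1+\alpha)/\tilde m$ is always $\ge 1/m\ge 1/(r_\alpha m)$. A worst-case-over-configurations argument of the kind you propose ("the weight never exceeds $(1+\alpha)/\tilde m$, so bound every epoch by that") can only yield a bound that is \emph{no better} than VFAL — it would put $r_\alpha$-like factors in the numerator, contradicting the claimed improvement.

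The identity that actually drives the theorem is $(1-\alpha)/\tilde m=1/(r_\alpha m)$ exactly, i.e.\ the gain comes from the \emph{downweighting} branch of Remark~6, not the upweighting one. The paper's proof writes out both branches (its Eqs.\ (case~1) and (case~2)) and then combines them so that the coefficient multiplying the perturbed client's gradient term is $(1-\alpha)2K\eta_t e^{\beta K\eta_t}/(\tilde m n_i)=2K\eta_t e^{\beta K\eta_t}/(r_\alpha m n_i)$; the $\alpha$-slack mechanism is specifically designed so that the high-loss (more heterogeneous, hence less stable) clients receive the weight $(1-\alpha)/\tilde m$. To repair your argument you would need to justify why the client holding the perturbed sample can be charged the downweighted coefficient across the $T$ rounds (or how the two branches of Theorem~\ref{Theorem 5} average out to that coefficient), rather than taking a uniform worst case over the dynamic partition — the uniform worst case is precisely the step that destroys the $r_\alpha$ factor you are trying to establish.
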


\textbf{Remark 7.} In Theorem 6, as $\alpha \to 1$, $r_\alpha$ also becomes larger due to the corresponding increase in $\frac{\alpha}{1-\alpha}$, effectively mitigating the generalization error. When $\alpha = 0$, it yields the result of the VFAL algorithm. Moreover, \eqref{alpha} implies that there is a balance between $\alpha$ and $\hat{m}$. When $\hat{m}$ increase, $\sum_{i=1}^{\hat{m}}R^{\phi_{(i)}}_{\mathcal{S}_{i}}(\theta)$ also increases, which means we need to decrease  $\alpha$ to avoid overemphasizing the importance of partial populations. Combining the results of VFAL and SFAL, we obtain Table \ref{Tab:MainResults}. Note that the $\alpha$-slack mechanism operates on the server side, while the three smoothness approximation methods operate on the client side. We observe that both the term including $\Delta$ and the term including $D_{\max}$ in SFAL, which are generated from the aggregation process, are optimized compared to VFAL. Therefore, SFAL enhances the generalization of these methods without altering their strengths and weaknesses.

\textbf{Remark 8.} Combining the analysis of the above results, we gain some insights that help to design more efficient FAL algorithms. Motivated by SFAL, we propose designing a new contrastive loss for weight assignment to measure heterogeneity exacerbation. Based on Lemma \ref{Lemma 1}, the local data drift $D_i=d_{TV}(\tilde{P}_i, P_i)$ should be the dominant term in $D_{\mathrm{max}}$. Therefore, we consider a new dynamic aggregation strategy, i.e., $R_{\mathcal{S}}(\theta) = \sum_{i=1}^m\frac{d_{TV}(\tilde{P}_i, P_i)}{\sum_{i=1}^m{d_{TV}(\tilde{P}_i, P_i)}}R_{\mathcal{S}_i}(\theta)$. Concretely, at the client level, we can approximate $d_{TV}(\tilde{P}_i, P_i)$ in terms of the generated adversarial samples and original samples. We believe this aggregation strategy effectively combines FL and AL. \textcolor{red}{} For local training, we may design a new adversarial penalty, i.e., $\|\theta_{\mathrm{adv}}-\theta_{\mathrm{ori}}\|$, to address heterogeneity $D_{\mathrm{max}}$ caused by AL. Here, $\theta_{\mathrm{adv}}$ is trained on adversarial samples, and $\theta_{\mathrm{ori}}$ is trained on original samples. Compared to SFAL, our proposed algorithms integrate information from the original samples, which may not only improve the generalization ability for adversarial samples but also have the potential to enhance the ability for non-adversarial samples.

\begin{figure*}[t!]
    \includegraphics[width=0.33\linewidth, height=0.23\linewidth]{../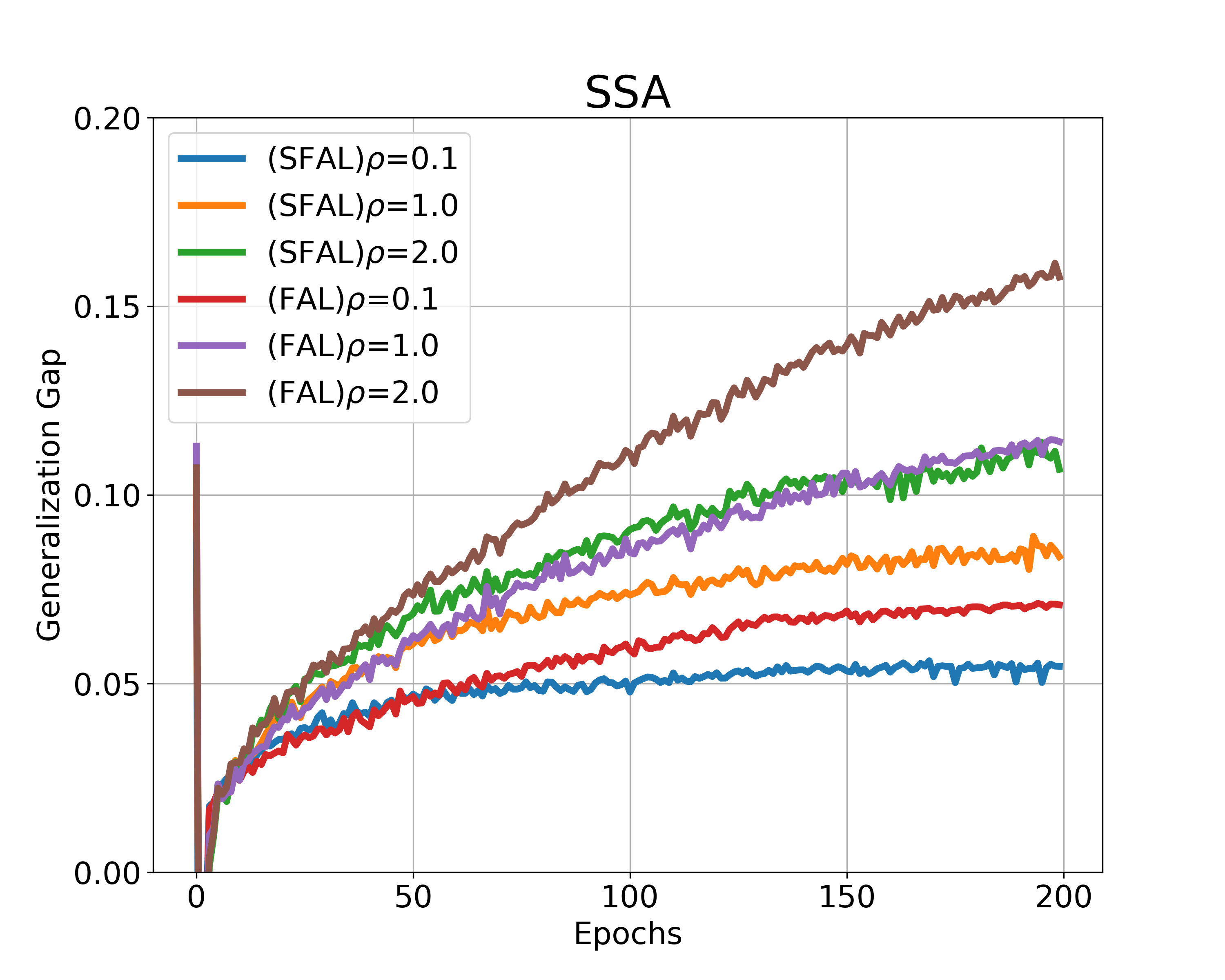}
    \includegraphics[width=0.33\linewidth, height=0.23\linewidth]{../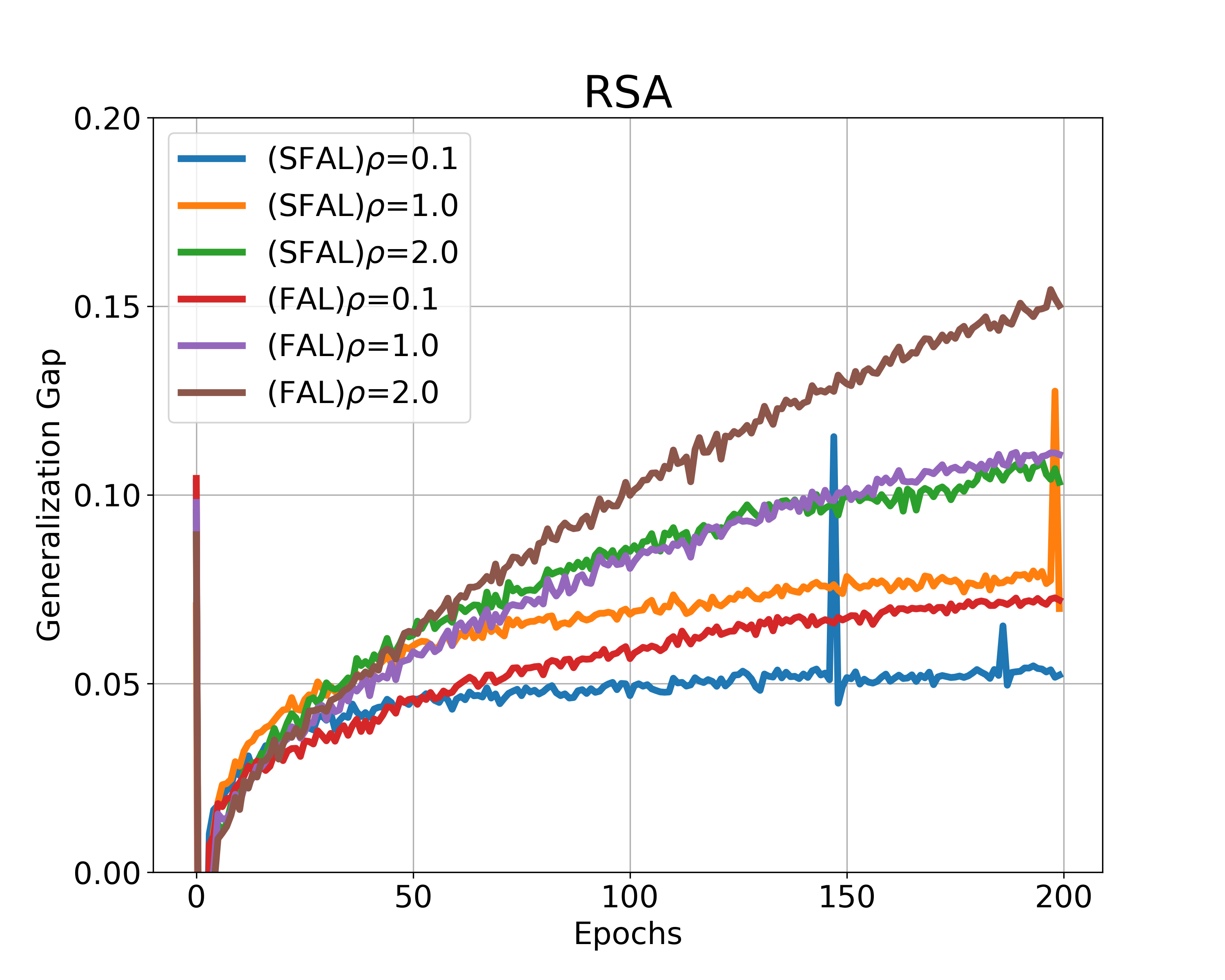}
    \includegraphics[width=0.33\linewidth, height=0.23\linewidth]{../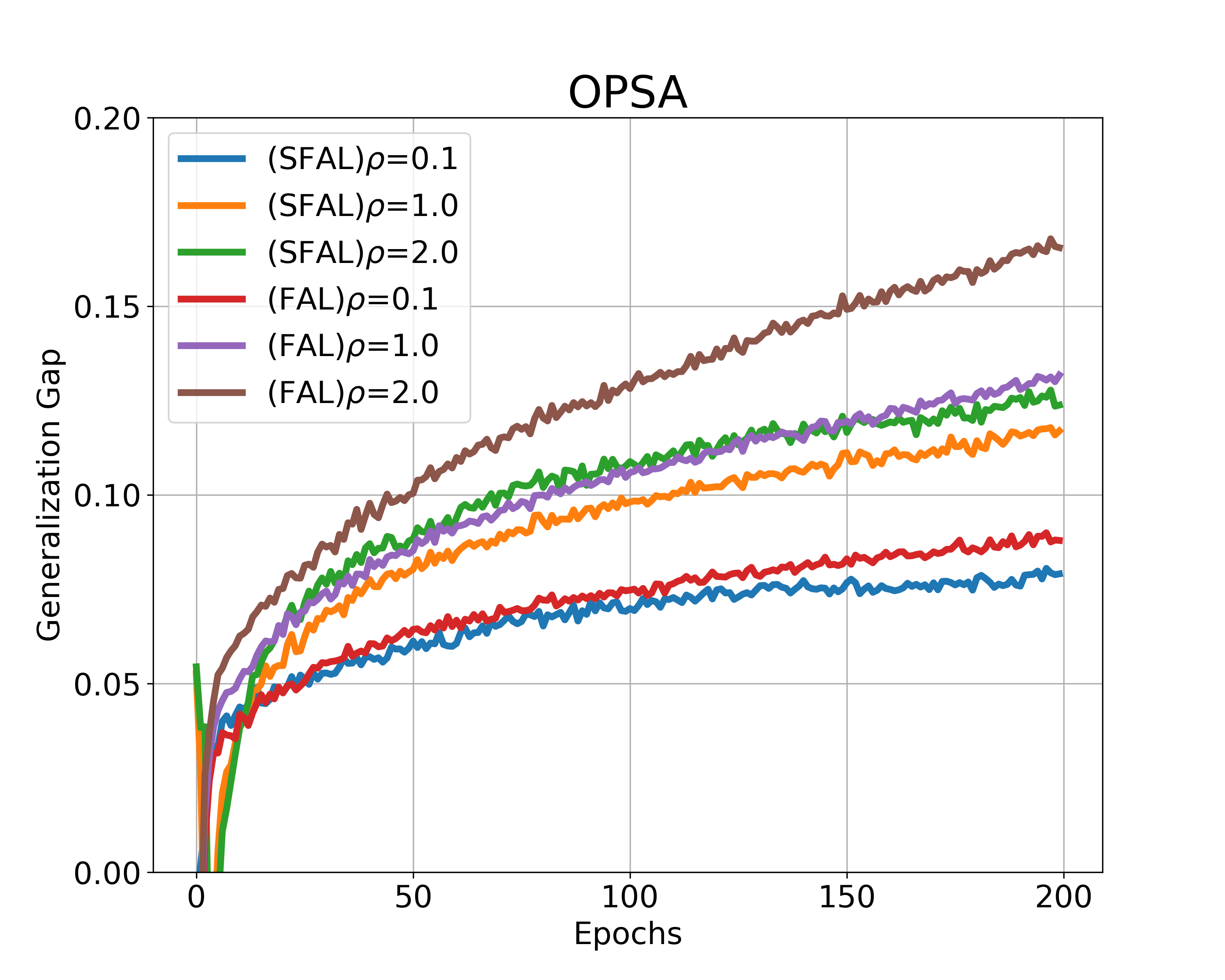}
    \caption{Generalization Gap of the attack strength $\rho$ on SVHN. ($m=40,a=2.0$)}
    \label{svhnfig:rho}
\end{figure*}
\begin{figure*}[t!]
    \includegraphics[width=0.33\linewidth, height=0.23\linewidth]{../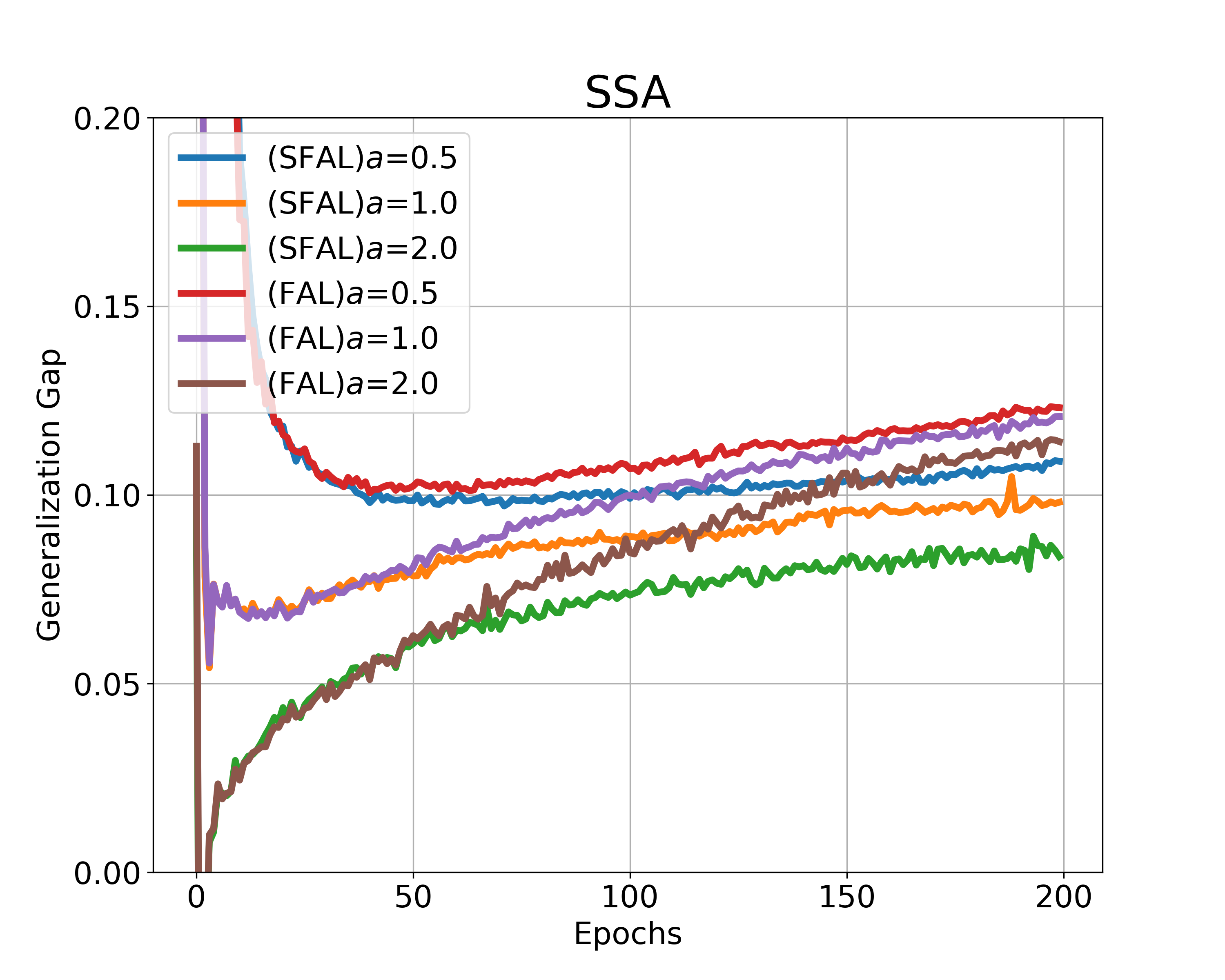}
    \includegraphics[width=0.33\linewidth, height=0.23\linewidth]{../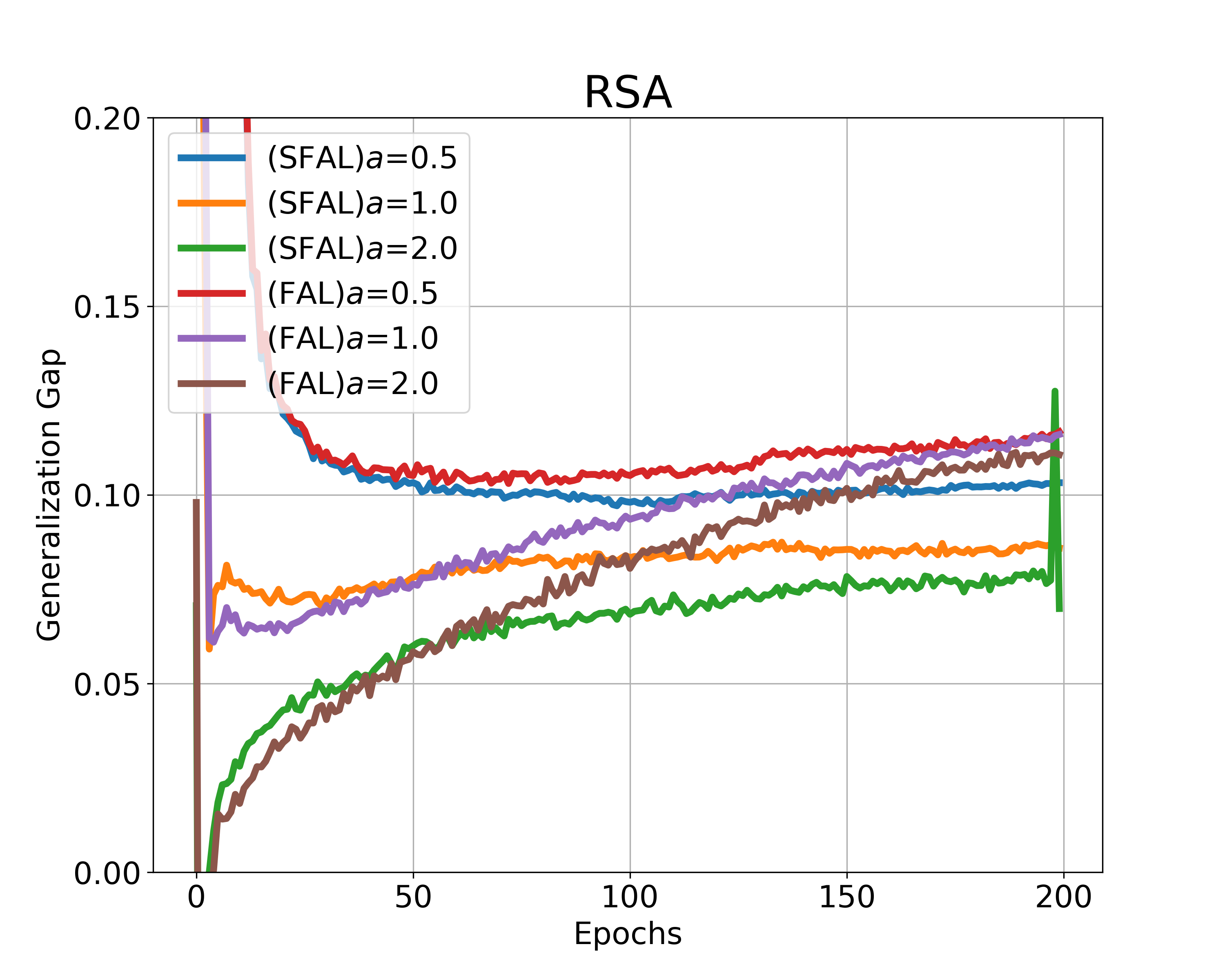}
    \includegraphics[width=0.33\linewidth, height=0.23\linewidth]{../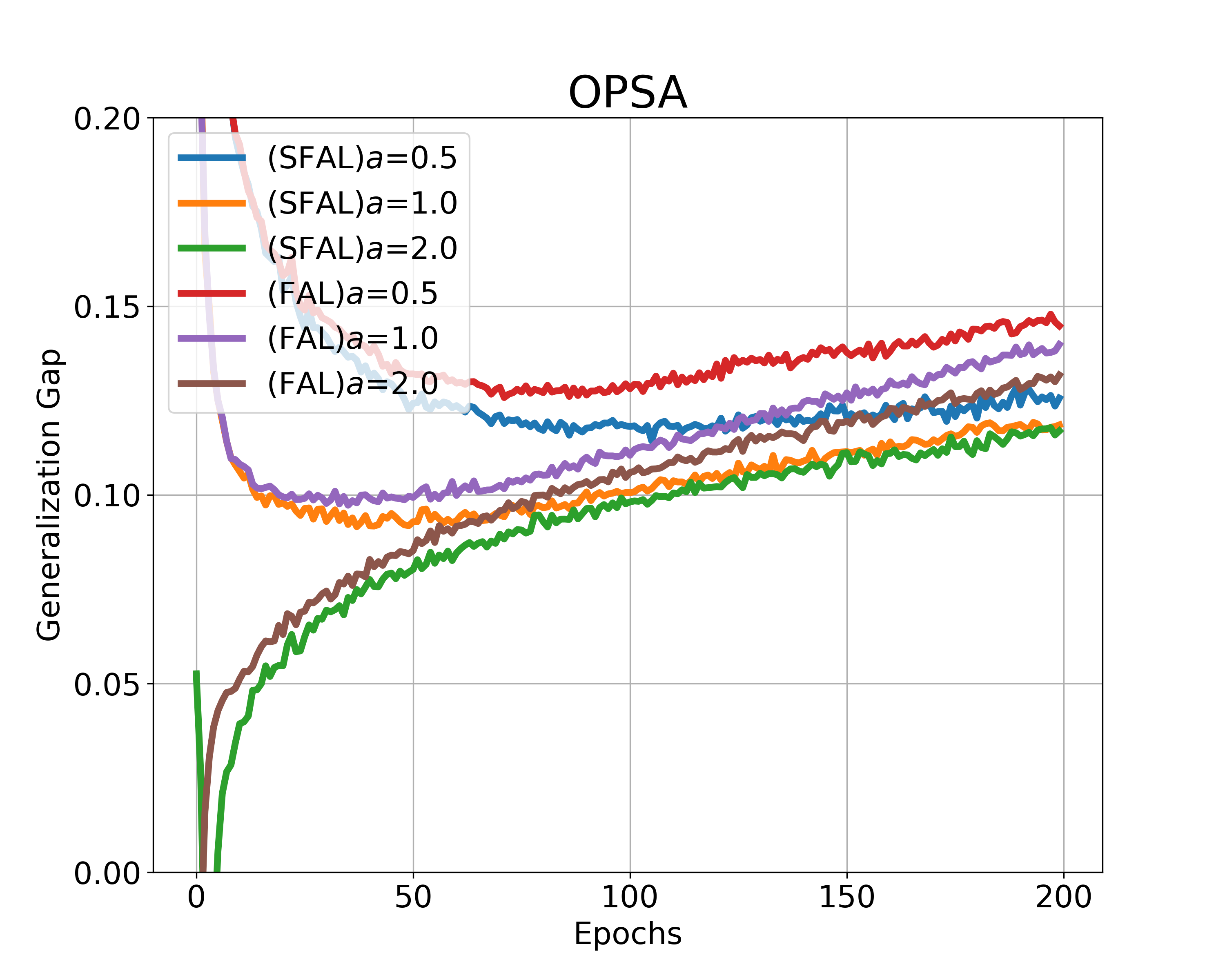}
    \caption{Generalization Gap of the skew parameter $a$ on SVHN. ($m=40,\rho=1.0$)}
    \label{svhnfig:skew}
\end{figure*}
\begin{figure*}[t!]
    \includegraphics[width=0.33\linewidth, height=0.23\linewidth]{../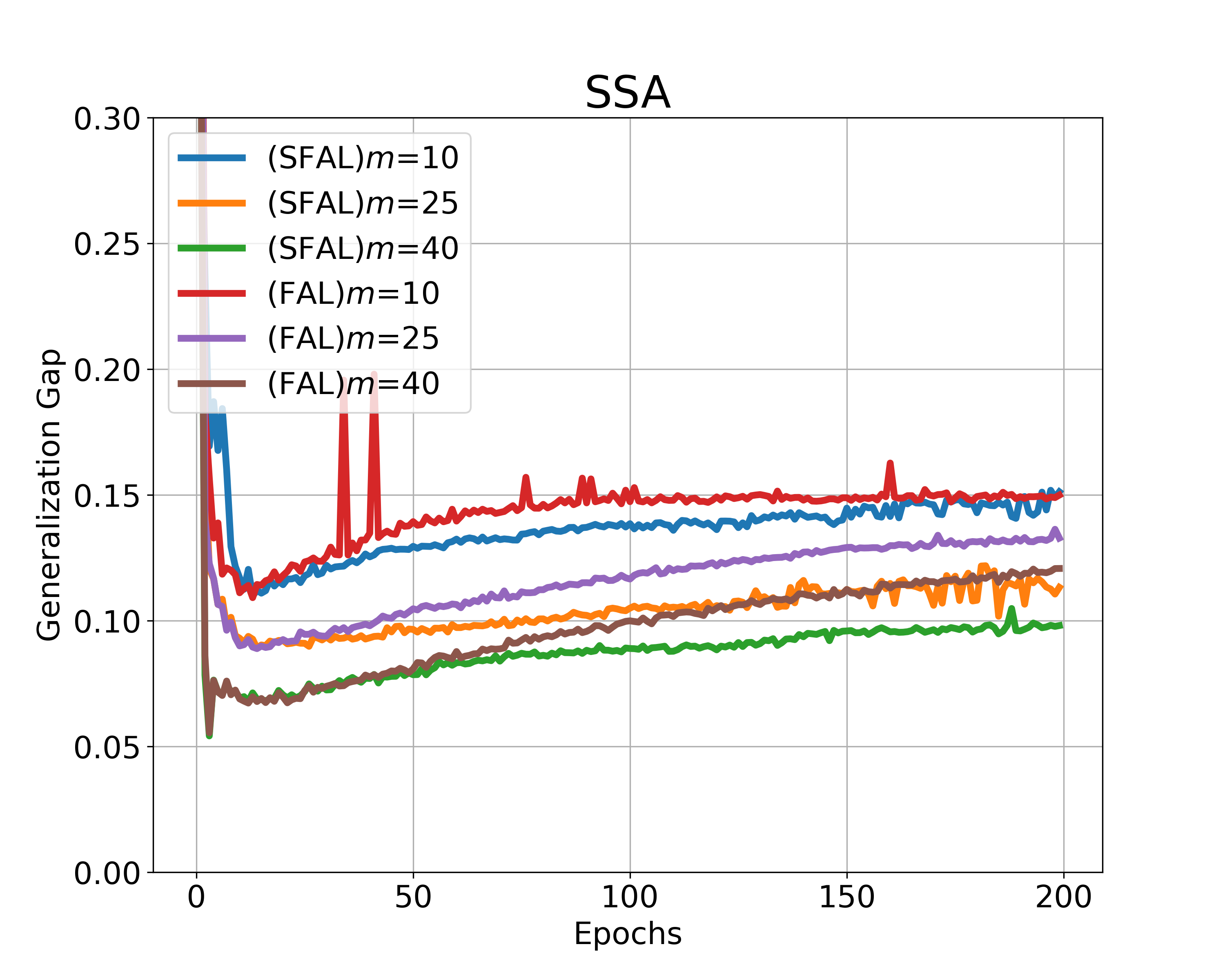}
    \includegraphics[width=0.33\linewidth, height=0.23\linewidth]{../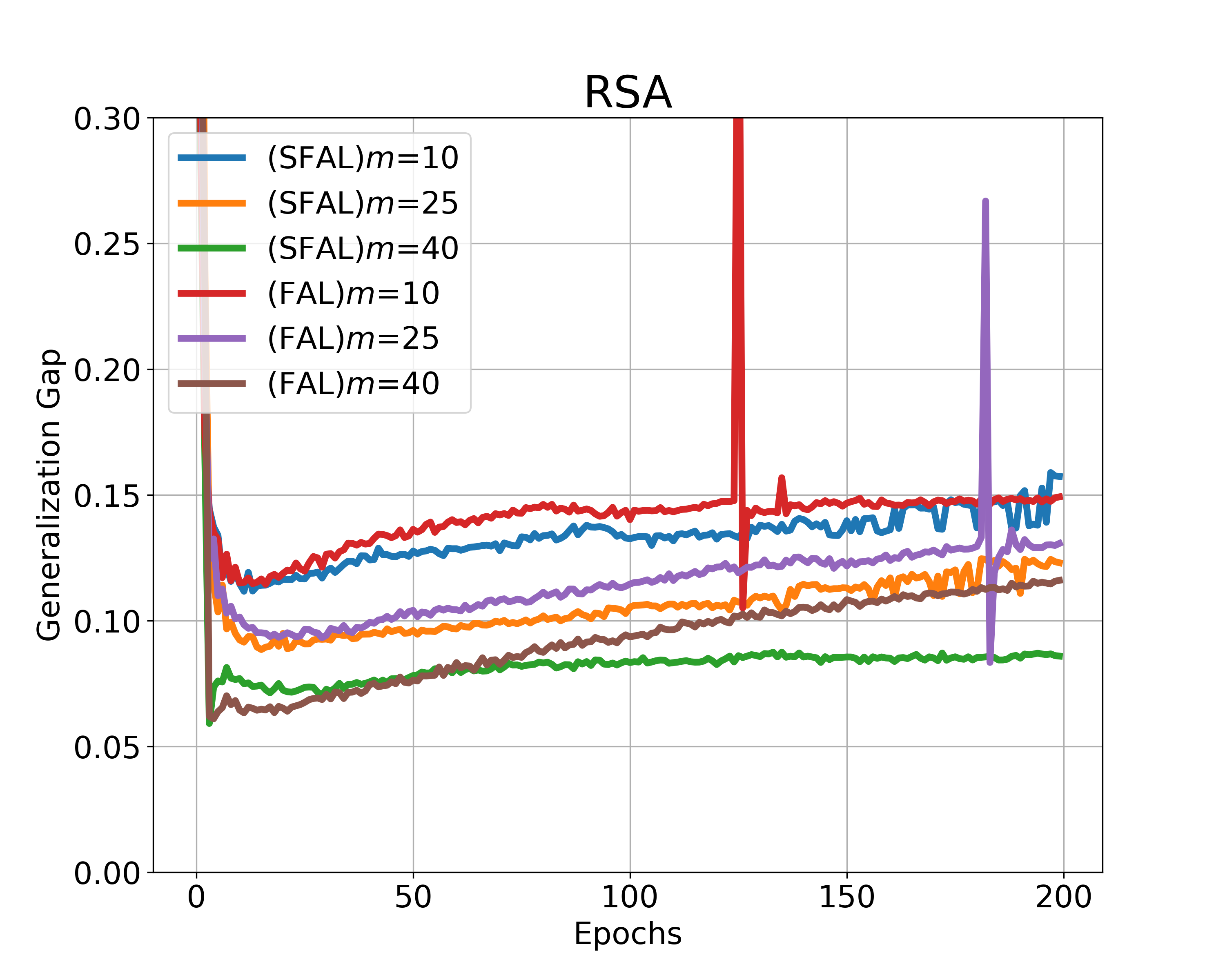}
    \includegraphics[width=0.33\linewidth, height=0.23\linewidth]{../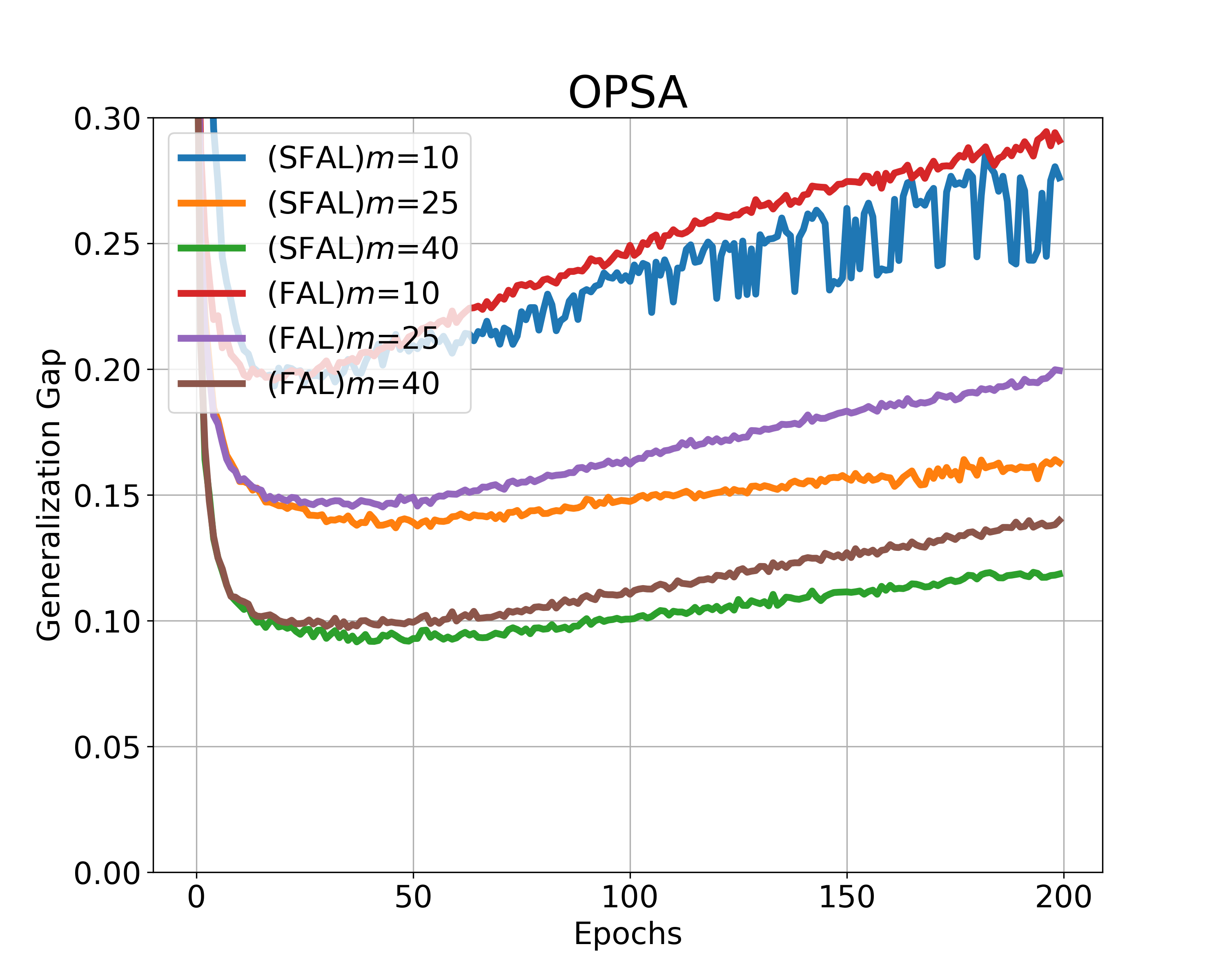}
    \caption{Generalization Gap of the number of client $m$ on SVHN. ($a=1.0,\rho=1.0$)}
    \label{svhnfig:numberclient}
\end{figure*}

\section{Experiments}
\textbf{Setups.} Here, we conduct the experiments on the SVHN dataset \cite{netzer2011reading} under heterogeneous scenarios. Following \cite{shah2021adversarial}, we partition the dataset based on labels using a skew parameter $a$. This distribution allows each of the $m$ clients to receive samples from certain classes. We define $\mathcal{Y}$ as the set of all classes, equally divided among $m$ clients to form $\mathcal{Y}_\varrho$. Each client thus holds $(100-(m-1)\times a)\%$ of its data from classes in $\mathcal{Y}_\varrho$ and $a\%$ from other classes. The generalization gap is measured as the difference between train and test accuracy. For the inner problems, we adopt the $\ell_\infty$ PGD adversarial training in \cite{madry2017towards}, the step size in the inner maximization is set to be $\rho/4$. In SFAL, we set $\hat{m}=m/5$. More detailed settings and other results are provided in supplementary.

\textbf{Impact on $\rho$.} Comparing different $\rho$ in Figure~\ref{svhnfig:rho}, we can see that the generalization gap increases as $\rho$ gets larger, which indicate that stronger attacks produce greater heterogeneity in our bound, i.e., $\rho D_{\mathrm{max}}$. 

\textbf{Impact on $a$.} Comparing different $a$ in Figure~\ref{svhnfig:skew}, we can find that the enhanced heterogeneity of FL itself similarly increases the generalization gap, which is consistent with Lemma \ref{Lemma 1}, i.e., $D_{i}=d_{TV}(P_i,P)$. 

\textbf{Impact on $m$.} Comparing different $m$ in Figure~\ref{svhnfig:numberclient}, we can infer that increasing the number of clients can effectively reduce the generalization gap since the effect of heterogeneity on generalization can be dispersed.

In summary, RSA achieves the minimum generalization error compared to OPSA and SSA. Additionally, under suitable $\alpha$, SFAL effectively reduces the generalization error compared to VFAL.

\section{Conclusion}
In this paper, using three different smoothness approximation methods: SSA, RSA, and OPSA, we provide generalization upper bounds for two popular FAL algorithms: VFAL and SFAL. Our bounds explicitly explain how to set proper parameters in these approximation methods and which one is more helpful in reducing the generalization error for FAL. In general, the RSA method can achieve the best generalization performance. We also find that SFAL always performs better than SFAL due to its re-weighted aggregation strategy. Based on our analysis, we give some useful insights into designing more efficient and dynamic global aggregation strategies to mitigate heterogeneity under the FAL context.


\bibliography{aaai25}

\onecolumn
\appendix
\section{VFAL Algorithms} 
\begin{algorithm}[htbp]
	\caption{VFAL of SSA}
	\textbf{Input}: initialization $\theta_0$, global communication round $T$, local updating iterations $K$, learning rate $\eta_t$\\
	\textbf{Output} $\theta^T$ given by the server
	\begin{algorithmic}[1] 
		\FOR{$t=0,1...,T-1$}
		\STATE          $\theta^{t+1}_{i,0}=\theta_t, \forall i=1,..,m$
		\FOR{$k=0,1,...,K-1$}
		\STATE              $\theta^{t+1}_{i,k+1}=\theta^{t+1}_{i,k}-\eta_t \nabla h(\theta^{t+1}_{i,k},z_{i,j})$
		\ENDFOR
		\STATE      $\theta_{t+1}=\frac{1}{m}\sum_{i=1}^m \theta^{t+1}_{i,K}$
		\ENDFOR
	\end{algorithmic}
\end{algorithm}
\begin{algorithm}[htbp]
	\caption{VFAL of RSA}
	\textbf{Input}: initialization $\theta_0$, global communication round $T$, local updating iterations $K$, learning rate $\eta_t$, smoothing parameter $\gamma$\\
	\textbf{Output} $\theta^T$ given by the server
	\begin{algorithmic}[1] 
		\FOR{$t=0,1...,T-1$}
		\STATE          $\theta^{t+1}_{i,0}=\theta_t, \forall i=1,..,m$
		\FOR{$k=0,1,...,K-1$}
		\STATE          Sample $u_{i,k_q}\in\mathbb{P}^d$ uniformly from a unit sphere in $\mathbb{P}^d$.
		\STATE          $\theta_{i,k+1}^t = \theta_{i,k}^t -\frac{ \eta_t }{Q}\sum_{q=1}^Q \nabla \ell_{\rho}(\theta_{i,k}^t+\gamma u_{i,k_q},z_{i,k}).$
		\ENDFOR
		\STATE      $\theta_{t+1}=\frac{1}{m}\sum_{i=1}^m \theta^{t+1}_{i,K}$
		\ENDFOR
	\end{algorithmic}
\end{algorithm}
\begin{algorithm}[htbp]
	\caption{VFAL of OPSA}
	\textbf{Input}: initialization $\theta_0$, global communication round $T$, local updating iterations $K$, learning rate $\eta_t$\\
	\textbf{Output} $\theta^T$ given by the server
	\begin{algorithmic}[1] 
		\FOR{$t=0,1...,T-1$}
		\STATE          $\theta^{t+1}_{i,0}=\theta_t, \forall i=1,..,m$
		\FOR{$k=0,1,...,K-1$}
		\STATE              $\theta^{t+1}_{i,k+1}=\theta^{t+1}_{i,k}-\eta_t \nabla \ell_\rho(\theta^{t+1}_{i,k},z_{i,j})$
		\ENDFOR
		\STATE      $\theta_{t+1}=\frac{1}{m}\sum_{i=1}^m \theta^{t+1}_{i,K}$
		\ENDFOR
	\end{algorithmic}
\end{algorithm}
\newpage
\section{SFAL Algorithms}
\begin{algorithm}[htbp]
	\caption{SFAL of SSA}
	\textbf{Input}: initialization $\theta_0$, global communication round $T$, local updating iterations $K$, learning rate $\eta_t$\\
	\textbf{Output} $\theta^T$ given by the server
	\begin{algorithmic}[1] 
		\FOR{$t=0,1...,T-1$}
		\STATE          $\theta^{t+1}_{i,0}=\theta_t, \forall i=1,..,m$
		\FOR{$k=0,1,...,K-1$}
		\STATE              $\theta^{t+1}_{i,k+1}=\theta^{t+1}_{i,k}-\eta_t \nabla h(\theta^{t+1}_{i,k},z_{i,j})$
		\STATE          Accumulate the loss $\mathcal{L}_{loss}^{i.t}$ in the last local epoch
		\ENDFOR
		\STATE      $\mathcal{L}_{all} \longleftarrow [\mathcal{L}_{loss}^{1,t}, \mathcal{L}_{loss}^{2,t}, \cdots,\mathcal{L}_{loss}^{i,t}], \mathcal{L}_{sorted} \longleftarrow  $ AscendingSort($\mathcal{L}_{all}$)
		\STATE      $\theta_{t+1}=\frac{1+\alpha}{\tilde m}\sum_{i=1}^{\hat{m}} \theta^{t+1}_{i,K} + \frac{1-\alpha}{\tilde m}\sum_{i=\hat{m}+1}^{m} \theta^{t+1}_{i,K}$
		\ENDFOR
	\end{algorithmic}
\end{algorithm}
\begin{algorithm}[htbp]
	\caption{SFAL of RSA}
	\textbf{Input}: initialization $\theta_0$, global communication round $T$, local updating iterations $K$, learning rate $\eta_t$, smoothing parameter $\gamma$\\
	\textbf{Output} $\theta^T$ given by the server
	\begin{algorithmic}[1] 
		\FOR{$t=0,1...,T-1$}
		\STATE          $\theta^{t+1}_{i,0}=\theta_t, \forall i=1,..,m$
		\FOR{$k=0,1,...,K-1$}
		\STATE          Sample $u_{i,k_q}\in\mathbb{P}^d$ uniformly from a unit sphere in $\mathbb{P}^d$.
		\STATE          $\theta_{i,k+1}^t = \theta_{i,k}^t -\frac{ \eta_t }{Q}\sum_{q=1}^Q \nabla \ell_{\rho}(\theta_{i,k}^t+\gamma u_{i,k_q},z_{i,k}).$
		\STATE          Accumulate the loss $\mathcal{L}_{loss}^{i.t}$ in the last local epoch
		\ENDFOR
		\STATE      $\mathcal{L}_{all} \longleftarrow [\mathcal{L}_{loss}^{1,t}, \mathcal{L}_{loss}^{2,t}, \cdots,\mathcal{L}_{loss}^{i,t}], \mathcal{L}_{sorted} \longleftarrow  $ AscendingSort($\mathcal{L}_{all}$)
		\STATE      $\theta_{t+1}=\frac{1+\alpha}{\tilde m}\sum_{i=1}^{\hat{m}} \theta^{t+1}_{i,K} + \frac{1-\alpha}{\tilde m}\sum_{i=\hat{m}+1}^{m} \theta^{t+1}_{i,K}$
		\ENDFOR
	\end{algorithmic}
\end{algorithm}
\begin{algorithm}[htbp]
	\caption{SFAL of OPSA.}
	\textbf{Input}: initialization $\theta_0$, global communication round $T$, local updating iterations $K$, learning rate $\eta_t$\\
	\textbf{Output} $\theta^T$ given by the server
	\begin{algorithmic}[1] 
		\FOR{$t=0,1...,T-1$}
		\STATE          $\theta^{t+1}_{i,0}=\theta_t, \forall i=1,..,m$
		\FOR{$k=0,1,...,K-1$}
		\STATE              $\theta^{t+1}_{i,k+1}=\theta^{t+1}_{i,k}-\eta_t \nabla \ell_\rho(\theta^{t+1}_{i,k},z_{i,j})$
		\STATE          Accumulate the loss $\mathcal{L}_{loss}^{i.t}$ in the last local epoch
		\ENDFOR
		\STATE      $\mathcal{L}_{all} \longleftarrow [\mathcal{L}_{loss}^{1,t}, \mathcal{L}_{loss}^{2,t}, \cdots,\mathcal{L}_{loss}^{i,t}], \mathcal{L}_{sorted} \longleftarrow  $ AscendingSort($\mathcal{L}_{all}$)
		\STATE      $\theta_{t+1}=\frac{1+\alpha}{\tilde m}\sum_{i=1}^{\hat{m}} \theta^{t+1}_{i,K} + \frac{1-\alpha}{\tilde m}\sum_{i=\hat{m}+1}^{m} \theta^{t+1}_{i,K}$
		\ENDFOR
	\end{algorithmic}
\end{algorithm}
\newpage
\section{Experiment Setup And Additional Experiments}
For SSA and RSA, we experimented by adversarial training SmallCNN (\cite{zhang2019theoretically}) with six convolutional layers and two linear layers. And for OPSA, we compress SmallCNN into a shallow neural network with three convolutional layers and two linear layers.\\
\textbf{Dataset.} To validate our results, we conduct the experiments on two benchmark datasets, i.e., SVHN \cite{netzer2011reading} and CIFAR10 \cite{krizhevsky2009learning}. Fot simulating the Non-IID scenario, we follow \cite{mcmahan2017communication} and \cite{shah2021adversarial} to distribute the training data based on their labels. To be specific, a skew parameter $a$ is  utilized in the data partition introduced by \cite{shah2021adversarial}, which enables $m$ clients to get a majority of the data samples from a subset of classes. We denote the set of all classes in a dataset as $\mathcal{Y}$ and create $\mathcal{Y}_{\varrho}$ by dividing all the classes labels equally among $m$ clients. Accordingly, we split the data acroos $m$ clients that each client has $(100-(m-1)\times a)\%$ of data for the class in $\mathcal{Y}_{\varrho}$ and $a\%$ of data in other split sets.\\
\textbf{Federated Adversarial training.} We use the cross entropy loss as the original loss $\ell$. We adopt the 10-step projection gradient descent(PGD-10)(\cite{madry2017towards}) to generate adversarial examples. The attack strength is $\rho$ and the step size is $\rho/4$ in $L_{\infty}$-norm. We report robust accuracy a s the ratio fo correctly classified adversarial examples generated by PGD-10, and the robust generalization gap as the gap between robust training and robust test accuracy. For each client, we set the local updated epoches $K=3$ and the batch sizes of data loaders are set to 128 for SVHN and 64 for CIFAR10, respectively. Additionally, the learning rate are fixed to be 0.01. For the training schedule, SGD is adopted with 0.9 momentum and the weight decay = 0.0001.

\begin{figure*}[htb]
	\centering
	\includegraphics[width=0.45\linewidth, height=0.33\linewidth]{../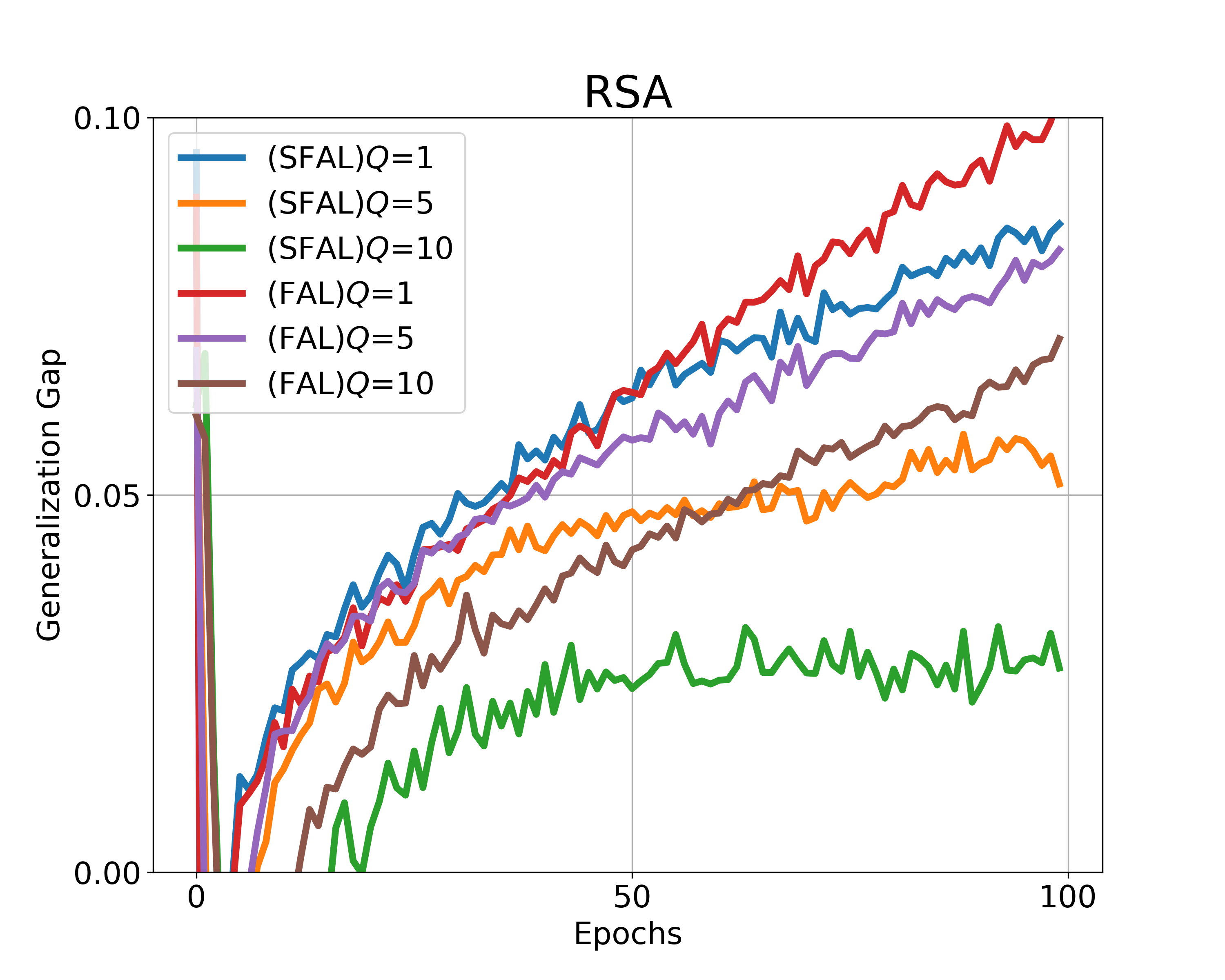}
	\caption{Generalization Gap of the number of noise $Q$ on SVHN. ($m=40,a=2.0,\rho=2.0$)}
	\label{fig:Q}
\end{figure*}
\textbf{Imapct on $Q$.} As shown in Figure \ref{fig:Q}, we can see that adjusting $Q$ significantly reduces the generalization error. This supports our view that more perturbations to the model bring the actual weight space closer to the ideal weight space. In practice, we recommend setting $Q \leq 5$ to reduce computational pressure on each client while still achieving good generalization error.
\subsection{More Experiments on CIFAR10}
\begin{figure*}[htb]
	\includegraphics[width=0.33\linewidth, height=0.23\linewidth]{../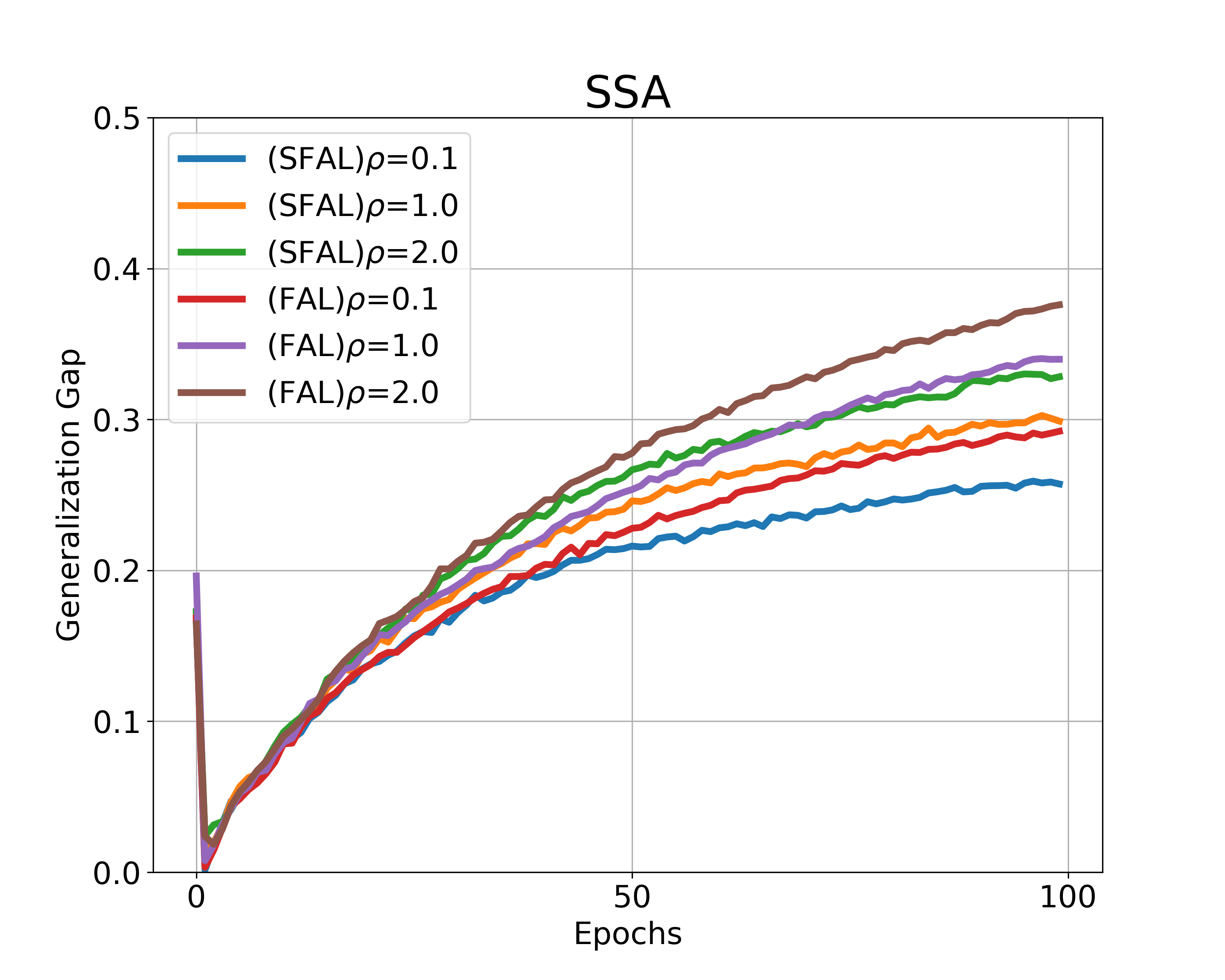}
	\includegraphics[width=0.33\linewidth, height=0.23\linewidth]{../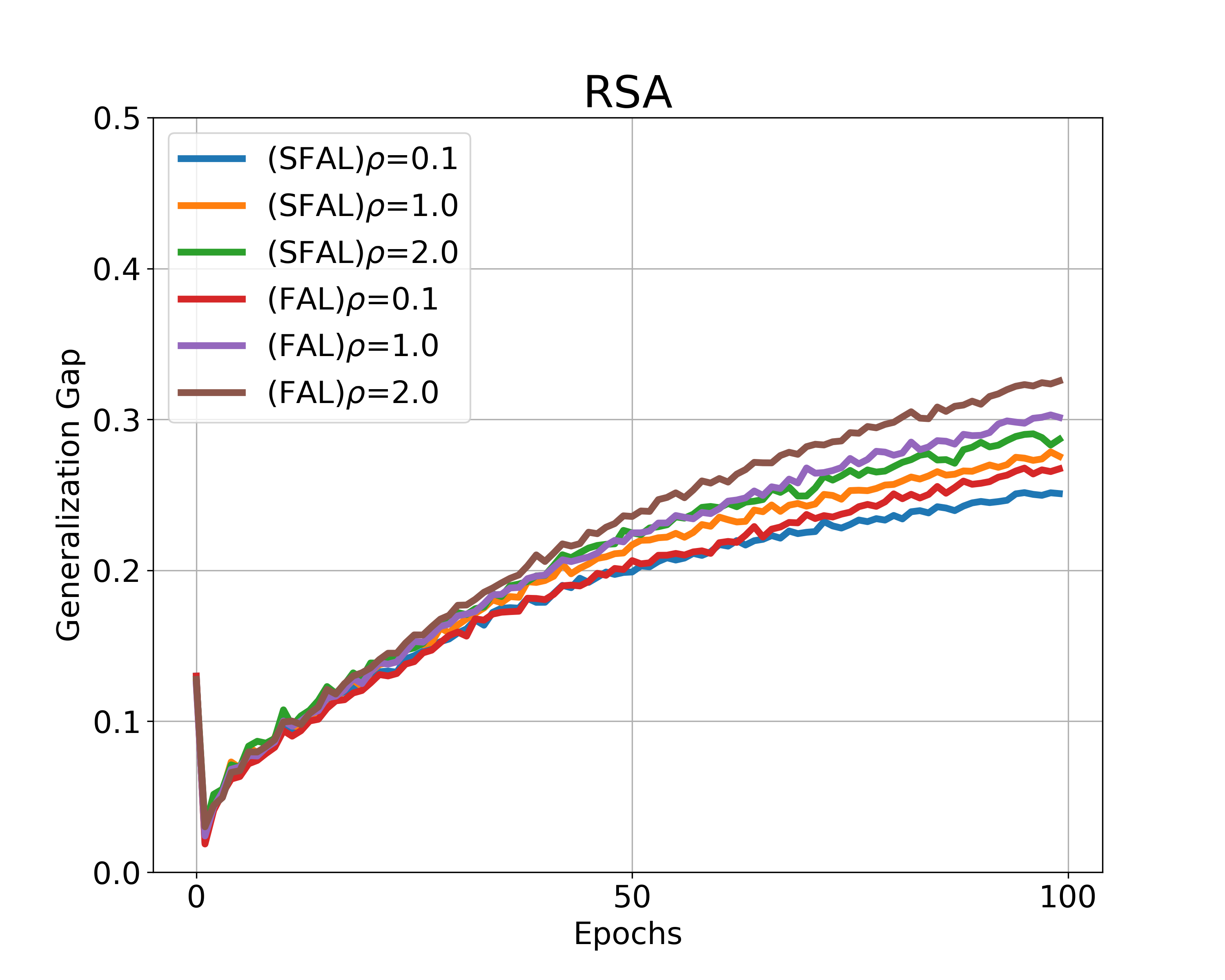}
	\includegraphics[width=0.33\linewidth, height=0.23\linewidth]{../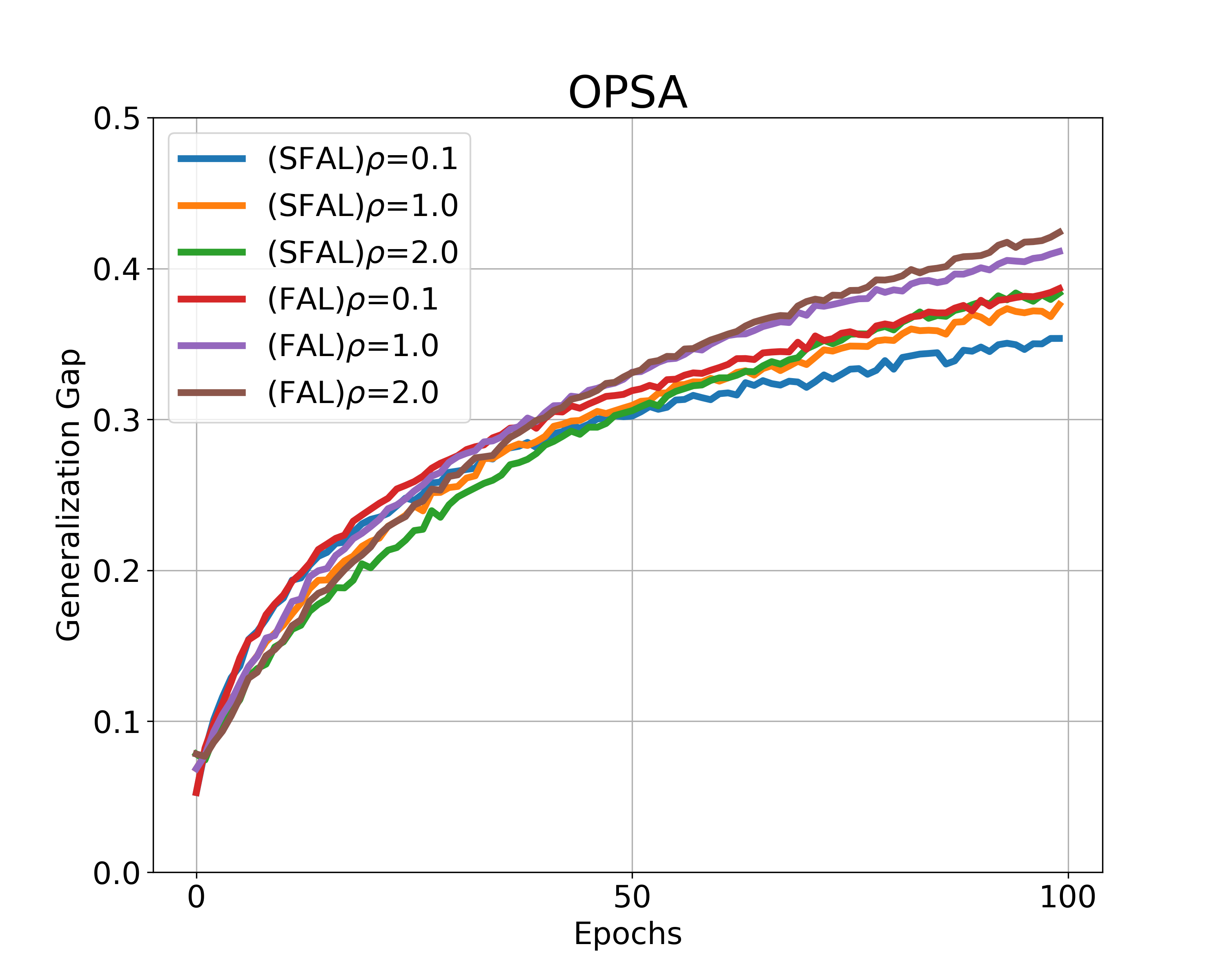}
	\caption{Generalization Gap of the attack strength $\rho$ on CIFAR10. ($m=40,a=2.0$)}
	\label{cifar10fig:rho}
\end{figure*}
\begin{figure*}[htb]
	\includegraphics[width=0.33\linewidth, height=0.23\linewidth]{../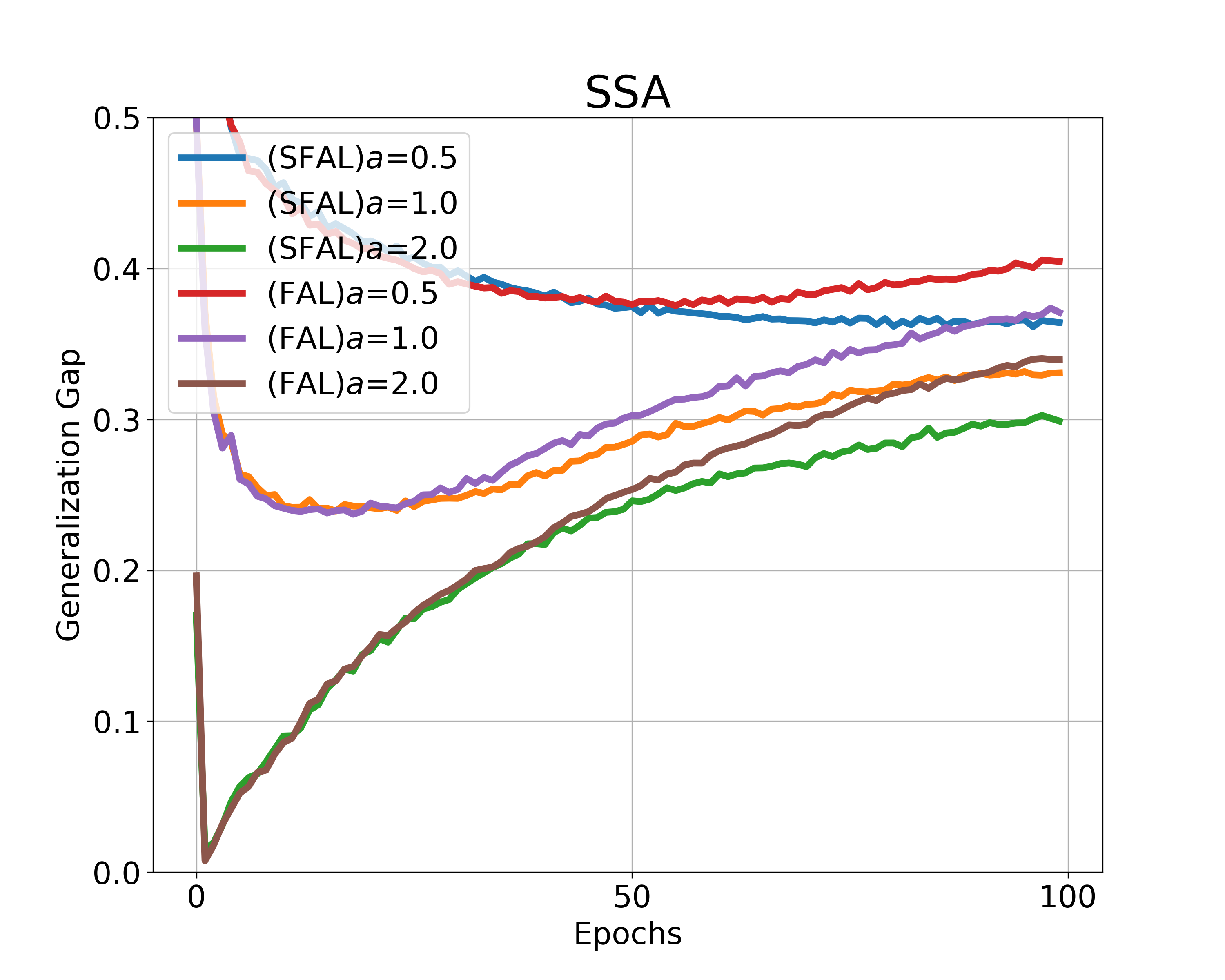}
	\includegraphics[width=0.33\linewidth, height=0.23\linewidth]{../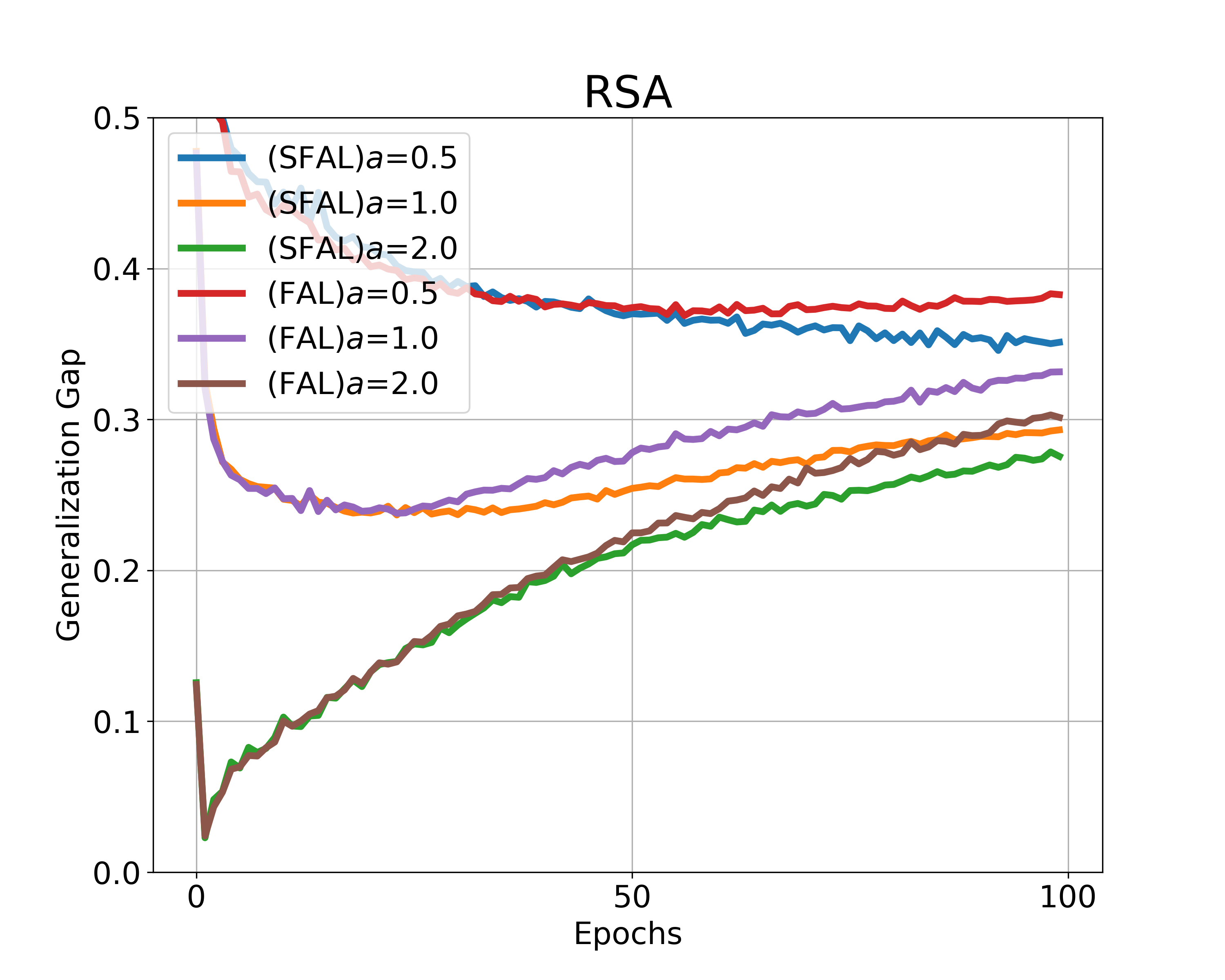}
	\includegraphics[width=0.33\linewidth, height=0.23\linewidth]{../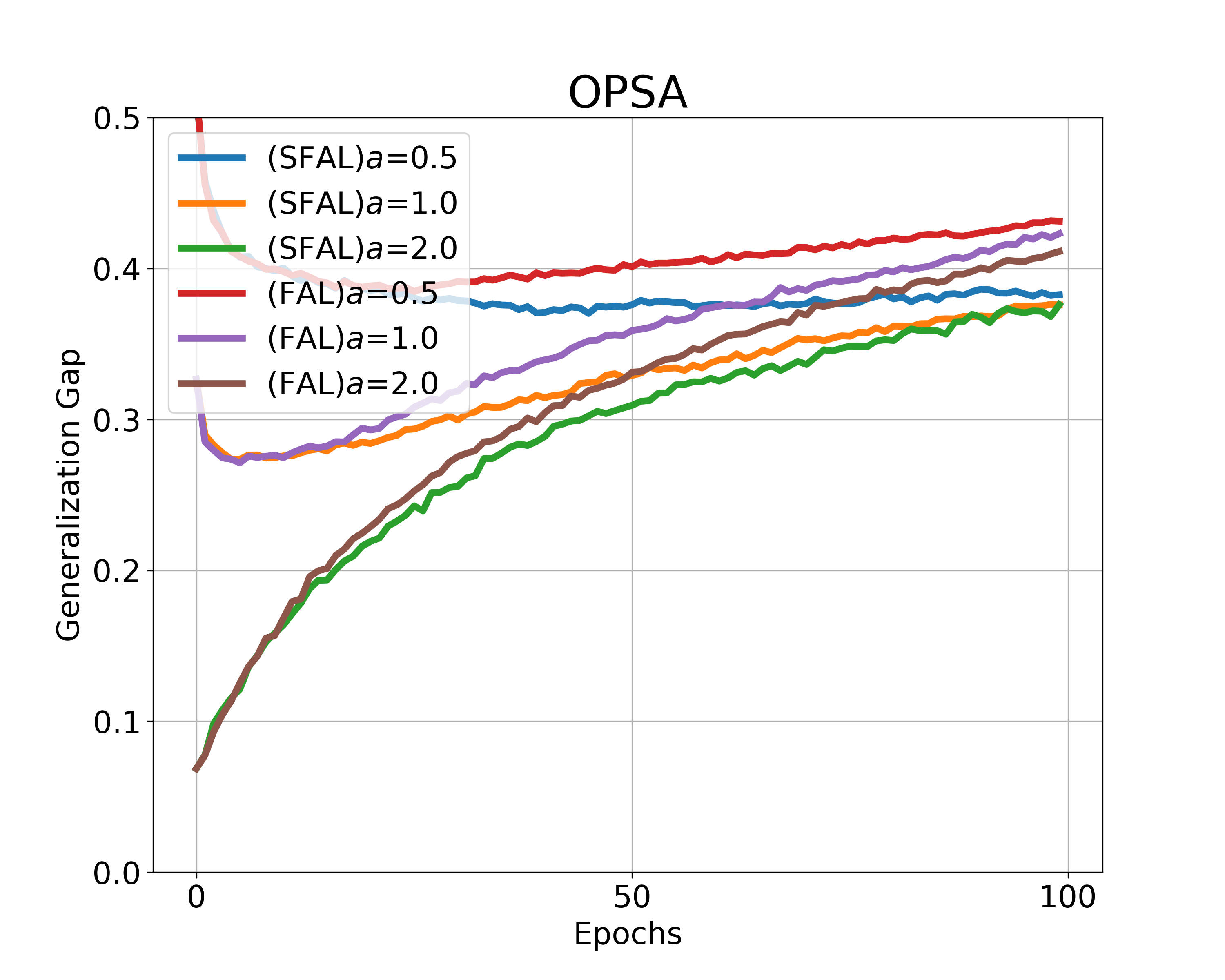}
	\caption{Generalization Gap of the skew parameter $a$ on CIFAR10. ($m=40,\rho=1.0$)}
	\label{cifar10fig:skew}
\end{figure*}
\begin{figure*}[htb]
	\includegraphics[width=0.33\linewidth, height=0.23\linewidth]{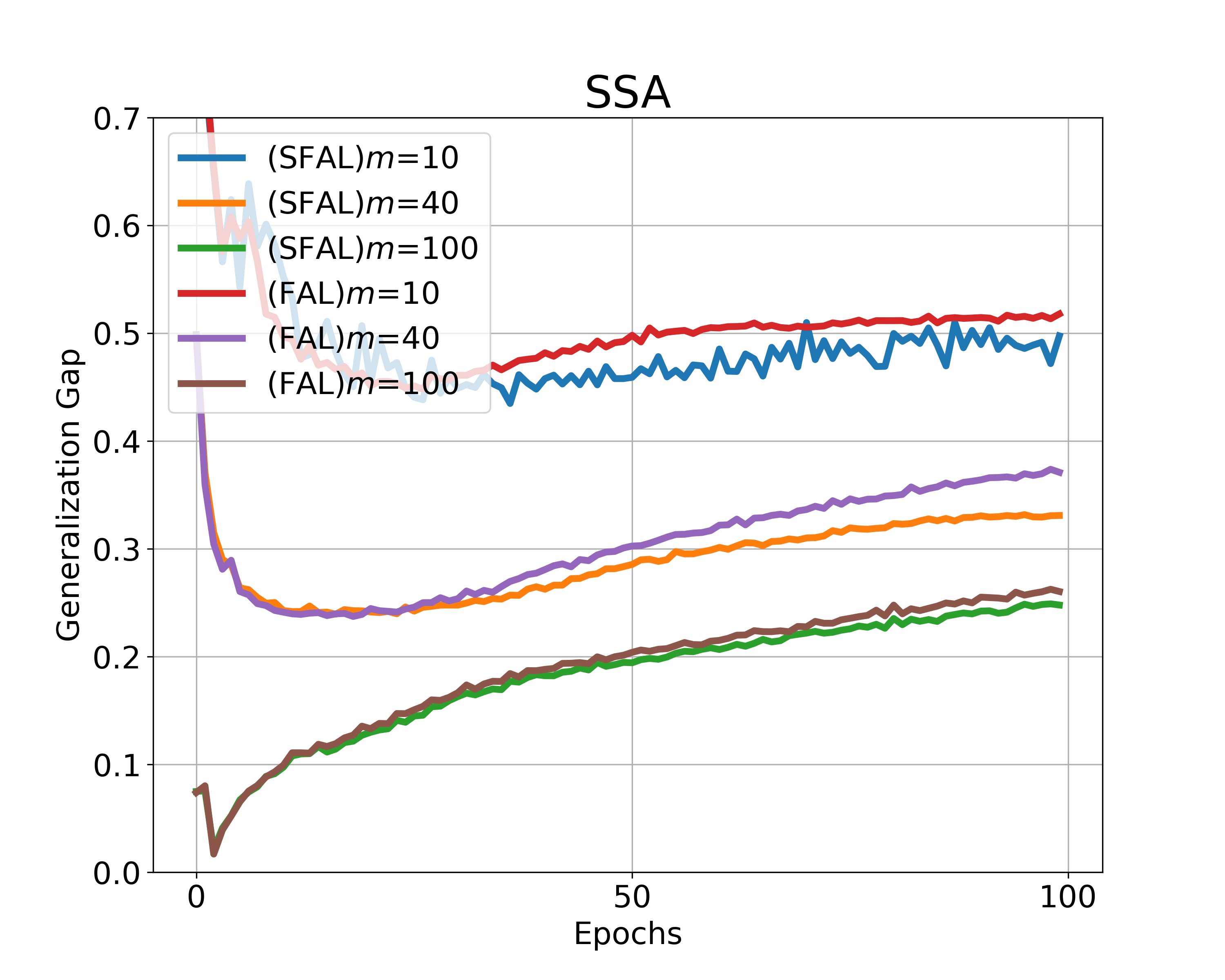}
	\includegraphics[width=0.33\linewidth, height=0.23\linewidth]{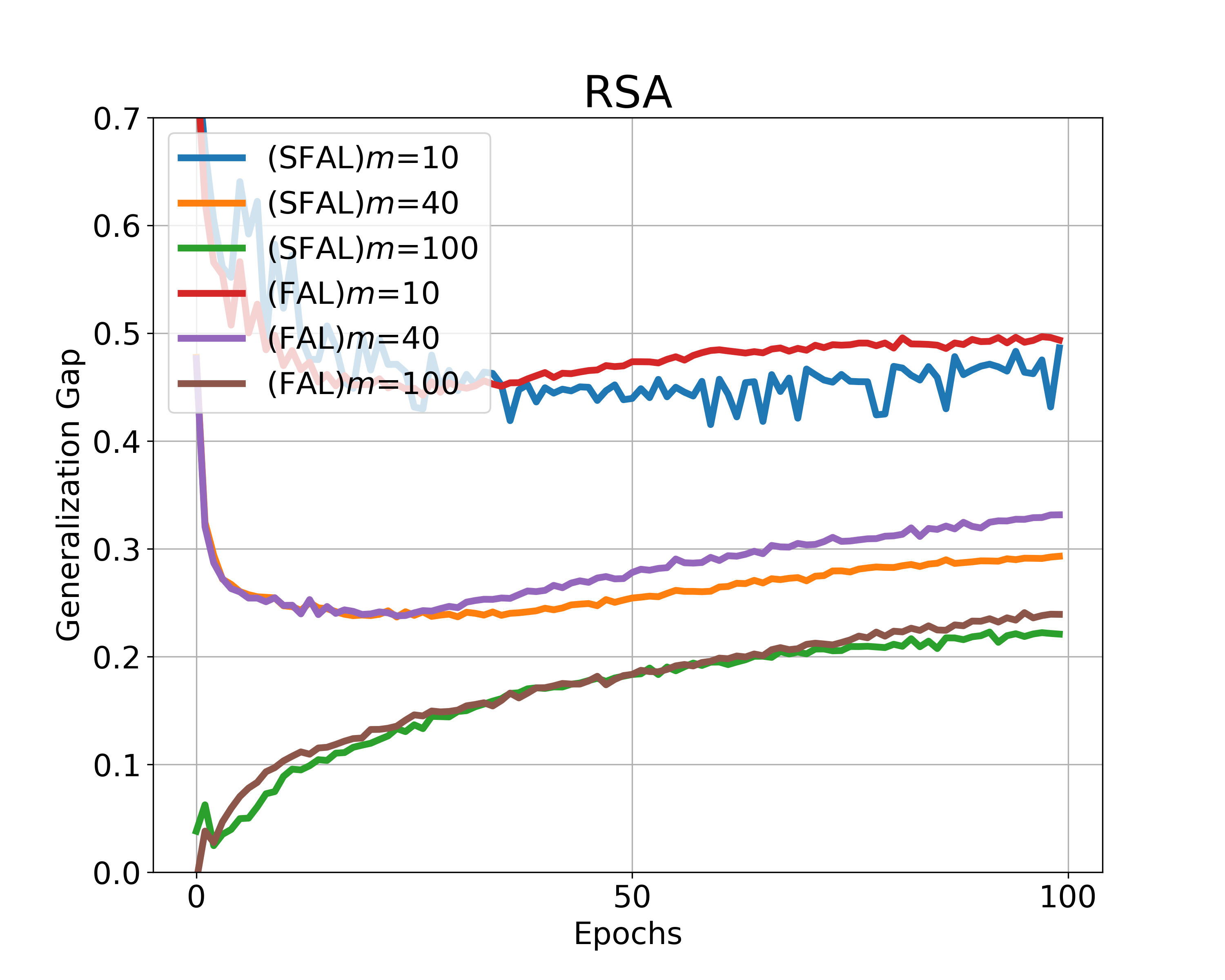}
	\includegraphics[width=0.33\linewidth, height=0.23\linewidth]{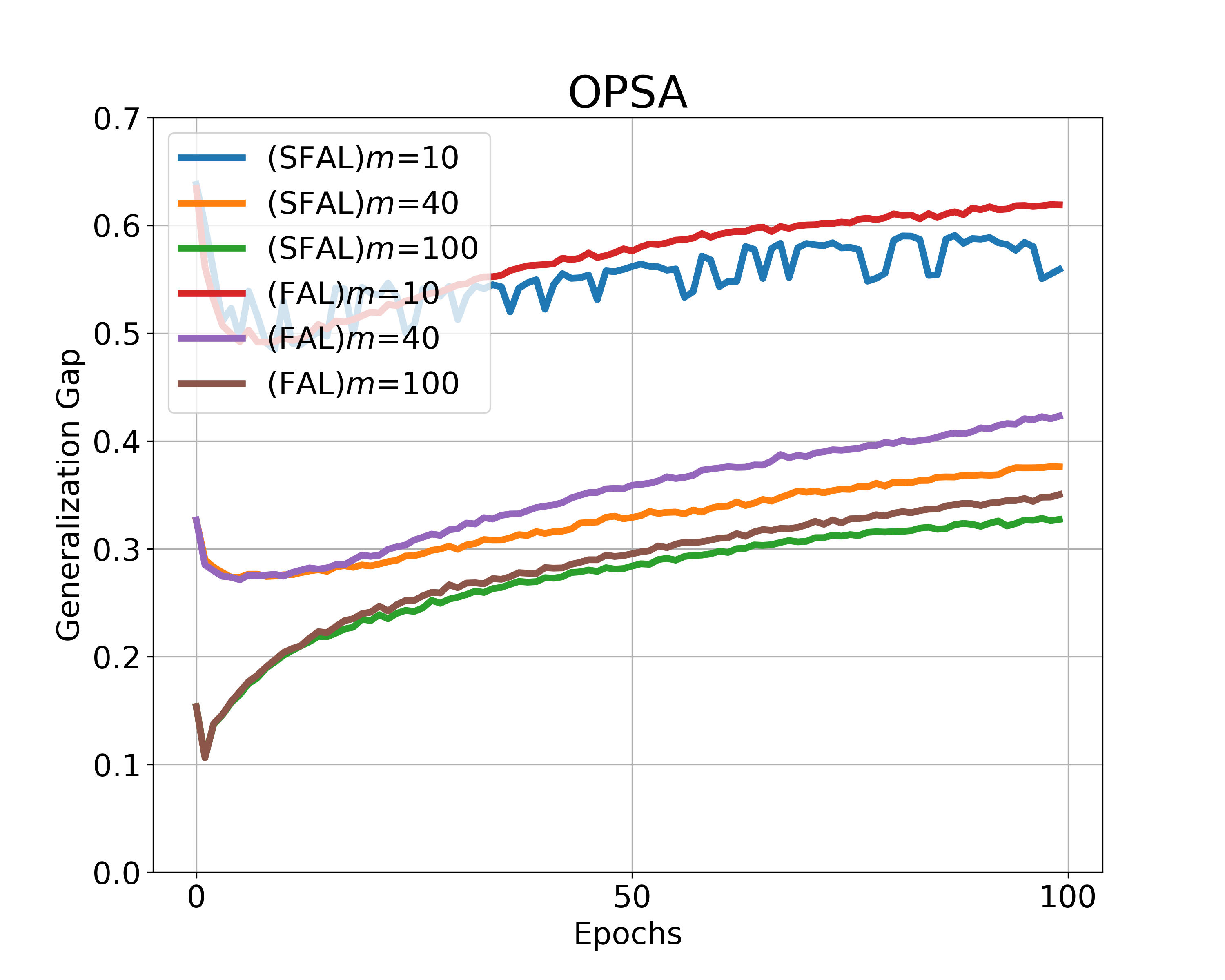}
	\caption{Generalization Gap of the number of client $m$ on CIFAR10. ($a=1.0,\rho=1.0$)}
	\label{cifar10fig:numberclient}
\end{figure*}
\newpage
\section{Useful Lemmas}\label{useful lemmas}
\textbf{Lemma 1}. \textit{Under Assumption \ref{Assumption 1} and given $i\in [m]$, for any $\theta$ we have}
$$\Vert\nabla R_i(\theta_{t})-\nabla R(\theta_{t})\Vert\leq (\xi+6L)D_i,$$
\textit{where} $D_i$ = $\max \{d_{TV}(\tilde{P}_i, P_i), d_{TV}(P_i, P), d_{TV}(\tilde{P},P)\}$. 
\begin{proof}
	For simplicity, we denote $\tilde{z} = \{x+A_{\rho},y\}$, where $A_{\rho}$ is defined in (\ref{attack}).
	\begin{equation*}
		\begin{aligned}
			&\Vert \nabla R_{i}(\theta)-\nabla R(\theta)\Vert \\  
			=       &     \Vert \nabla \int_{\mathcal{Z}_{i}} \ell_{\rho}(\theta ; z) d \tilde{P_{i}}-\nabla \int_{\mathcal{Z}} \ell_{\rho}(\theta ; z) d \tilde{P}\Vert \\
			=       &     \Vert\int_{\mathcal{Z}_{i} \cup \mathcal{Z}}\left(\nabla \ell_{\rho}(\theta ; z) d\tilde{P_{i}} -\nabla \ell_{\rho}(\theta ; z) d\tilde{P}\right)\Vert \\
			\leq    &     \int_{\mathcal{Z}_{i} \cup \mathcal{Z}}\Vert\nabla \ell_{\rho}(\theta ; z) d\tilde{P_{i}}-\nabla \ell_{\rho}(\theta ; z) d\tilde{P}\Vert \\
			=       &      \int_{\mathcal{Z}_{i} \cup \mathcal{Z}}\Vert\nabla \ell(\theta ; \tilde{z}) d\tilde{P_{i}} - \nabla \ell(\theta; \tilde{z}) d\tilde{P}\Vert\\
			\leq    &      \int_{\mathcal{Z}_{i} \cup \mathcal{Z}}\Vert\nabla \ell(\theta ; \tilde{z}) d\tilde{P_{i}} - \nabla \ell(\theta ; \tilde{z}) dP_{i} \Vert 
			+  \int_{\mathcal{Z}_{i} \cup \mathcal{Z}}\Vert\nabla \ell(\theta ; \tilde{z}) dP_{i}-\nabla \ell(\theta ; z) dP_{i} + \nabla\ell(\theta;z)dP - \nabla\ell(\theta;\tilde z)dP\Vert \\
			&   +  \int_{\mathcal{Z}_{i} \cup \mathcal{Z}}\Vert\nabla \ell(\theta ; z) dP_{i}-\nabla \ell(\theta ; z) dP \Vert 
			+  \int_{\mathcal{Z}_{i} \cup \mathcal{Z}}\Vert\nabla \ell(\theta ; z) dP-\nabla \ell(\theta ; z) d\tilde{P}  \Vert \\  
			\leq    &      \int_{\mathcal{Z}_{i} \cup \mathcal{Z}} L\Vert d\tilde{P_i} - P_i \Vert 
			+  \int_{\mathcal{Z}_{i} \cup \mathcal{Z}} L_z \Vert \tilde{z} - z \Vert \Vert dP_i - dP \Vert
			+  \int_{\mathcal{Z}_{i} \cup \mathcal{Z}} L \Vert P_i - P \Vert +  \int_{\mathcal{Z}_{i} \cup \mathcal{Z}} L \Vert \tilde{P} - P \Vert \\
			\leq    &      2Ld_{TV}(d\tilde{P}_i, dP_i)               
			+  (\xi + 2L) d_{TV}(dP_i, dP)
			+  2Ld_{TV}(d\tilde{P},dP) \\
			\leq    &   (\xi+6L)D_i  ,
		\end{aligned}
	\end{equation*} 
	where $D_i$ = $\max \{d_{TV}(\tilde{P}_i, P_i), d_{TV}(P_i, P), d_{TV}(\tilde{P},P)\}$.
	In the first inequality, we use Jensen's inequality. In the third inequality, we use Assumption \ref{Assumption 1}.
	By noting the definition of the total variation of two distributions $P$ and $Q$ is
	$$ d_{TV}(P,Q) = \dfrac{1}{2} \int |dP-dQ|.$$
	This completes the proof.
\end{proof}
\begin{Lemma}[\cite{xiao2022stability}]\label{Lemma 2}
	Let $h$ be the adversarial surrogate loss defined in \ref{adversiarl surrogate loss} and $\ell$ satisfies Assumption \ref{Assumption 1}. $\forall \theta_1, \theta_2$ and $\forall z\in \mathcal{Z}$, the following hold.  
	\begin{equation*}
		\begin{aligned}
			\Vert h(\theta_1,z)-h(\theta_2,z)\Vert \leq & L\Vert\theta_1 - \theta_2\Vert \\
			\Vert\nabla h(\theta_1,z)-\nabla h(\theta_2,z) \Vert  \leq &L_{\theta} \Vert \theta_1 - \theta_2\Vert + 2L_z\rho
		\end{aligned}
	\end{equation*}
\end{Lemma}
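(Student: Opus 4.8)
The plan is to treat both claims as consequences of a Danskin-type analysis of the inner maximization $h(\theta,z)=\max_{\|z-z'\|_p\le\rho}\ell(\theta;z')$. Write $z_1^\star$ and $z_2^\star$ for inner maximizers at $\theta_1$ and $\theta_2$, so that $h(\theta_i,z)=\ell(\theta_i;z_i^\star)$, and observe that $z_1^\star,z_2^\star$ both lie in the common feasible set $\{z':\|z-z'\|_p\le\rho\}$, a fact I use repeatedly.

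For the first inequality I would use feasibility of each maximizer in the other problem. Because $z_1^\star$ is feasible at $\theta_2$, optimality gives $\ell(\theta_2;z_2^\star)\ge\ell(\theta_2;z_1^\star)$, so
\begin{equation*}
h(\theta_1,z)-h(\theta_2,z)=\ell(\theta_1;z_1^\star)-\ell(\theta_2;z_2^\star)\le\ell(\theta_1;z_1^\star)-\ell(\theta_2;z_1^\star)\le L\|\theta_1-\theta_2\|,
\end{equation*}
where the final step applies the first Lipschitz bound of Assumption~\ref{Assumption 1} at the fixed data point $z_1^\star$. Swapping the roles of $\theta_1,\theta_2$ yields the matching lower bound, and hence $\|h(\theta_1,z)-h(\theta_2,z)\|\le L\|\theta_1-\theta_2\|$.

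For the second inequality I would invoke Danskin's theorem to write $\nabla h(\theta,z)=\nabla_\theta\ell(\theta;z^\star(\theta))$, so that $\nabla h(\theta_1,z)-\nabla h(\theta_2,z)=\nabla\ell(\theta_1;z_1^\star)-\nabla\ell(\theta_2;z_2^\star)$. The key step is to insert the mixed term $\nabla\ell(\theta_2;z_1^\star)$ and split into
\begin{equation*}
\big[\nabla\ell(\theta_1;z_1^\star)-\nabla\ell(\theta_2;z_1^\star)\big]+\big[\nabla\ell(\theta_2;z_1^\star)-\nabla\ell(\theta_2;z_2^\star)\big].
\end{equation*}
The first bracket is controlled by the $\theta$-gradient-Lipschitz bound $L_\theta\|\theta_1-\theta_2\|$ of Assumption~\ref{Assumption 1}; the second by the $z$-gradient-Lipschitz bound $L_z\|z_1^\star-z_2^\star\|_p$. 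It remains to bound $\|z_1^\star-z_2^\star\|_p$, which I would do by the triangle inequality through the center $z$: since both maximizers lie within distance $\rho$ of $z$, $\|z_1^\star-z_2^\star\|_p\le\|z_1^\star-z\|_p+\|z-z_2^\star\|_p\le2\rho$. Combining gives $\|\nabla h(\theta_1,z)-\nabla h(\theta_2,z)\|\le L_\theta\|\theta_1-\theta_2\|+2L_z\rho$, matching the stated approximate gradient-Lipschitz constant $\xi=2L_z\rho$.

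The main obstacle is the justification of the gradient formula $\nabla h(\theta,z)=\nabla_\theta\ell(\theta;z^\star(\theta))$: Danskin's theorem requires the envelope to be differentiable, which can fail when the inner maximizer is non-unique. I would resolve this either by assuming uniqueness of $z^\star(\theta)$, as is standard in this line of work and consistent with \cite{xiao2022stability}, or by passing to Clarke subdifferentials and noting that the decomposition and the diameter bound $\|z_1^\star-z_2^\star\|_p\le2\rho$ hold for \emph{every} selection of maximizer, so the stated bound is valid for all elements of the subdifferential. Everything else is routine triangle-inequality bookkeeping against the three Lipschitz conditions of Assumption~\ref{Assumption 1}.
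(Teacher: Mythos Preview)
Your proof is correct and follows the standard Danskin-type argument for this result. Note, however, that the paper does not supply its own proof of this lemma: it is stated in the appendix as a known result from \cite{xiao2022stability} and used without further justification. Your argument is precisely the one underlying that reference, so there is nothing to compare against in the present paper.
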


\begin{Lemma}\label{Lemma 3}(\citet{xiao2022stability})
	Assume that the function $h$ is $\xi$-approximately $\beta$-gradient Lipschitz, $\forall \theta_1, \theta_2$ and $\forall z\in \mathcal{Z}$
	$$h(\theta_1)-h(\theta_2) \leq \nabla h(\theta^{\prime})^T (\theta_1 - \theta_2) + \frac {\beta}{2} \Vert \theta_1 - \theta_2\Vert^2+\xi\Vert \theta_1 - \theta_2 \Vert.$$
	$$\Vert G_{\alpha,z}(\theta_1) - G_{\alpha,z}(\theta_2)\Vert\leq (1+\eta\beta)\Vert\theta-\theta'\Vert+\eta\xi.$$
\end{Lemma}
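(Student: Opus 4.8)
The plan is to prove the two displayed inequalities separately, both following directly from Definition~\ref{Definition 3} (that $h$ is $\xi$-approximately $\beta$-gradient Lipschitz), since $h$ is assumed differentiable. The first is an approximate descent lemma, and the second is the expansiveness bound for the gradient-update map; neither requires additional structure beyond the approximate gradient-Lipschitz condition.

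For the first inequality I would integrate along the segment joining $\theta_2$ to $\theta_1$. By the fundamental theorem of calculus,
$$h(\theta_1)-h(\theta_2)=\int_0^1 \nabla h\big(\theta_2+t(\theta_1-\theta_2)\big)^{T}(\theta_1-\theta_2)\,dt.$$
Adding and subtracting $\nabla h(\theta_2)$ (which is the point $\theta'=\theta_2$ at which the gradient in the statement is evaluated) splits this into the leading term $\nabla h(\theta_2)^{T}(\theta_1-\theta_2)$ plus a remainder that, by Cauchy--Schwarz, is at most $\int_0^1 \|\nabla h(\theta_2+t(\theta_1-\theta_2))-\nabla h(\theta_2)\|\,\|\theta_1-\theta_2\|\,dt$. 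Applying Definition~\ref{Definition 3} to the pair $(\theta_2+t(\theta_1-\theta_2),\,\theta_2)$ bounds the integrand by $(\beta t\|\theta_1-\theta_2\|+\xi)\|\theta_1-\theta_2\|$; integrating $t$ over $[0,1]$ yields exactly $\frac{\beta}{2}\|\theta_1-\theta_2\|^2+\xi\|\theta_1-\theta_2\|$, giving the claimed bound.

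For the second inequality, recall the gradient-update map $G_{\eta,z}(\theta)=\theta-\eta\nabla h(\theta,z)$. Subtracting the two updates gives
$$G_{\eta,z}(\theta_1)-G_{\eta,z}(\theta_2)=(\theta_1-\theta_2)-\eta\big(\nabla h(\theta_1)-\nabla h(\theta_2)\big).$$
Taking norms, the triangle inequality followed by Definition~\ref{Definition 3} gives $\|\theta_1-\theta_2\|+\eta\big(\beta\|\theta_1-\theta_2\|+\xi\big)=(1+\eta\beta)\|\theta_1-\theta_2\|+\eta\xi$, which is the stated expansiveness bound (here the right-hand side $\|\theta-\theta'\|$ is read as $\|\theta_1-\theta_2\|$).

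These computations are routine, so there is no substantive obstacle; the only points demanding care are tracking the factor $t$ in the remainder integral of the first part, which is what produces the coefficient $\tfrac{1}{2}$ in $\tfrac{\beta}{2}$, and reading the statement's $\theta'$ as the base point $\theta_2$. Conceptually, this is simply the approximate-smoothness analogue of the classical descent lemma and the Hardt--Recht--Singer expansiveness estimate, each modified to carry the additive slack $\xi$ coming from the approximate (rather than exact) gradient-Lipschitz property induced by adversarial training.
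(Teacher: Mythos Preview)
Your proof is correct and follows the standard route: the first inequality is the approximate descent lemma obtained by integrating along the segment and invoking Definition~\ref{Definition 3}, and the second is the expansiveness bound via the triangle inequality applied to the gradient-update map. Note, however, that the paper does not supply its own proof of Lemma~\ref{Lemma 3}; it simply cites the result from \citet{xiao2022stability} and uses it as a black box, so there is no paper-side argument to compare against. Your reading of $\theta'$ as the base point $\theta_2$ and of $G_{\alpha,z}$ as the gradient step with learning rate $\eta$ matches how the lemma is actually invoked later (e.g., in the proofs of Lemma~\ref{Lemma 5} and Theorem~\ref{Theorem 2}).
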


\begin{Lemma}\label{Lemma 4}(\citet{reddi2020adaptive})
	For random variables $A_1,\cdots,A_T$, we have
	\begin{equation*}
		\mathbb{E}[\Vert\sum\limits_{k=1}^{T}A_k\Vert^2]\leq k\sum\limits_{k=1}^T\mathbb{E}[\Vert A_k\Vert^2].
	\end{equation*}
	specially, if $A_1,\cdots,A_T$ are independent, mean $0$ random variables, then
	\begin{equation*}
		\mathbb{E}[\Vert\sum\limits_{k=1}^{T}A_k\Vert^2]= \sum\limits_{k=1}^T\mathbb{E}[\Vert A_k\Vert^2].
	\end{equation*}
\end{Lemma}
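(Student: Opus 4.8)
The plan is to establish both assertions by elementary means. For the first (sample-wise) inequality, I would apply the Cauchy--Schwarz inequality for each fixed realization of the vectors $A_1,\ldots,A_T$: writing the sum as an inner product against the all-ones vector of length $T$ gives the pointwise bound $\Vert\sum_{k=1}^{T}A_k\Vert^2 \le T\sum_{k=1}^{T}\Vert A_k\Vert^2$, where the leading constant is the number of summands (the index $k$ appearing on the right-hand side of the stated bound should be read as $T$). Taking expectations of both sides and using monotonicity and linearity of $\mathbb{E}[\cdot]$ then yields $\mathbb{E}[\Vert\sum_{k=1}^{T}A_k\Vert^2]\le T\sum_{k=1}^{T}\mathbb{E}[\Vert A_k\Vert^2]$, as claimed.

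For the second (equality) part, I would expand the squared norm by bilinearity of the inner product, obtaining $\Vert\sum_{k=1}^{T}A_k\Vert^2 = \sum_{k=1}^{T}\Vert A_k\Vert^2 + \sum_{k\ne l}\langle A_k,A_l\rangle$. Applying expectation and linearity, the diagonal contributes exactly $\sum_{k=1}^{T}\mathbb{E}[\Vert A_k\Vert^2]$. For each off-diagonal pair $k\ne l$, independence of $A_k$ and $A_l$ lets me factor the expectation of the inner product as $\mathbb{E}[\langle A_k,A_l\rangle]=\langle\mathbb{E}[A_k],\mathbb{E}[A_l]\rangle$, which is zero because each $A_k$ has mean zero. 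Hence every cross term vanishes and only the diagonal survives, giving the stated identity $\mathbb{E}[\Vert\sum_{k=1}^{T}A_k\Vert^2]=\sum_{k=1}^{T}\mathbb{E}[\Vert A_k\Vert^2]$.

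Both steps are standard, so I do not anticipate a substantive obstacle; the only point warranting care is the interchange of expectation and inner product in the cross terms. This is justified under the implicit integrability hypothesis $\mathbb{E}[\Vert A_k\Vert^2]<\infty$ for all $k$, which guarantees via Cauchy--Schwarz that each $\mathbb{E}[\langle A_k,A_l\rangle]$ is finite and that the factorization under independence is valid.
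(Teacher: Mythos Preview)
Your proof is correct and is the standard elementary argument. The paper itself does not prove this lemma at all: it simply states it as a cited result from \citet{reddi2020adaptive} and uses it as a tool, so there is nothing to compare against beyond noting that your Cauchy--Schwarz plus bilinearity-of-inner-product derivation is exactly the expected justification (and your reading of the stray $k$ on the right-hand side as $T$ is the intended one).
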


\section{Generalization Analyses of VFAL}
\textbf{Theorem 1}.\textit{If a VFAL algorithm $\mathcal{A}$ is $\epsilon$-on-averagely stable, we can obtain the generalization error $\varepsilon_{gen}(\mathcal{A})$ as follows:}
$$\mathbb{E}_{\mathcal{S,A}}\left[\frac{1}{m}\sum_{i=1}^{m}(R_i(\mathcal{A}(\mathcal{S}))-R_{\mathcal{S}_i}(\mathcal{A}(\mathcal{S})))\right]\leq \epsilon.$$
\begin{proof}
	Given $\mathcal{S}$ and $\mathcal{S}^{(i')}$ which are neighboring datasets defined in Definition \ref{Definition 1}
	\begin{equation*}
		\begin{aligned}
			\mathbb{E}_{\mathcal{S}}\left[R_{\mathcal{S}_{i}}(\mathcal{A}(\mathcal{S}))\right] 
			=       &     \mathbb{E}_{\mathcal{S}}\left[\frac{1}{n_{i}} \sum_{j=1}^{n_{i}} \ell_{\rho}\left(\mathcal{A}(\mathcal{S}) ; z_{i, j}\right)\right] \\
			=       &     \frac{1}{n_{i}} \sum_{j=1}^{n_{i}} \mathbb{E}_{\mathcal{S}}\left[\ell_{\rho}\left(\mathcal{A}(\mathcal{S}) ; z_{i, j}\right)\right] = \frac{1}{n_{i}} \sum_{j=1}^{n_{i}} \mathbb{E}_{\mathcal{S}, z_{i', j}^{\prime}}\left[\ell_{\rho}\left(\mathcal{A}\left(\mathcal{S}^{(i')}\right) ; z_{i', j}^{\prime}\right)\right]
		\end{aligned}
	\end{equation*}
	Moreover. we have
	$$ \mathbb{E}_{\mathcal{S}}\left[R_{i}(\mathcal{A}(\mathcal{S}))\right]=\frac{1}{n_{i}} \sum_{j=1}^{n_{i}} \mathbb{E}_{\mathcal{S}, z_{i', j}^{\prime}}\left[\ell_{\rho}\left(\mathcal{A}(\mathcal{S}) ; z_{i', j}^{\prime}\right)\right] $$ 
	since $z'_{i',j}$ and $\mathcal{S}$ are independent for any $j$. Thus,
	\begin{equation*}
		\begin{aligned}
			\varepsilon_{gen}(\mathcal{A})=& \mathbb{E}_{\mathcal{A}, \mathcal{S}}\left[ \frac{1}{m}\sum_{i=1}^{m}\left(R_{i}(\mathcal{A}(\mathcal{S}))-R_{\mathcal{S}_{i}}(\mathcal{A}(\mathcal{S}))\right)\right] \\
			=& \frac{1}{m}\sum_{i=1}^{m}\mathbb{E}_{\mathcal{A}}\left[\frac{1}{n_{i}} \sum_{j=1}^{n_{i}} \mathbb{E}_{\mathcal{S}, z_{i', j}^{\prime}}\left(\ell_{\rho}\left(\mathcal{A}(\mathcal{S}) ; z_{i', j}^{\prime}\right)-\ell_{\rho}\left(\mathcal{A}\left(\mathcal{S}^{(i)}\right) ; z_{i', j}^{\prime}\right)\right)\right] \\
			\leq& \epsilon,
		\end{aligned}
	\end{equation*}
	where the last inequality follows Definition \ref{Definition 2}, This completes the proof. 
\end{proof}
Next we state the assumption that bound the variance of stochastic gradient, which is common in stochastic optimization analysis\citep{bubeck2015convex}.
\begin{Assumption}\label{Assumption 4}
	There exists $\sigma>0$ such that for any $\theta$ and $i\in [m],\mathbb{E}_{z_i}[\nabla h(\theta,z_i)] = \nabla R_i(\theta)$ and $ \mathbb{E}\Vert \nabla h(\theta,z) - \nabla R_i(\theta) \Vert^2 \leq \sigma^2$    
\end{Assumption}
\subsection{Generalization analysis of VFAL under Surrogate Smoothness approximation}
For simplicity, we use $g(\theta)$ to denote stochastic gradient of loss function $h$.
\begin{Lemma}
	\label{Lemma 5}
	If the Assumptions \ref{Assumption 1} and \ref{Assumption 4} hold and  for any learning rate satisfying $\eta_t \leq \frac{1}{2\beta K}$, we can find the per-round recursion as
	\begin{equation}
		\label{Lemma 8.1}
		\begin{aligned}
			\mathbb{E}[R(\theta_{t+1})]-\mathbb{E}[R(\theta_{t})]
			\leq    &   - \frac{K\eta_t}{2}\mathbb{E}\Vert \nabla R(\theta_t)\Vert^2 
			+ \frac{\eta_t\beta^2}{m}\sum\limits_{i=1}^{m}\sum\limits_{k=0}^{K-1}\mathbb{E}\Vert\theta^{t+1}_{i,k}-\theta_t\Vert^2 
			+ K\eta_t (\xi^2 + \xi L)  
			+ \frac{\beta K \eta_t^2\sigma^2}{m}  
		\end{aligned}
	\end{equation}
\end{Lemma}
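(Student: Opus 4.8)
The plan is to establish the standard ``descent with error'' recursion for FedAvg-style updates, adapted to the surrogate loss $h$ whose gradient is only $\xi$-approximately $\beta$-Lipschitz (Definition~\ref{Definition 3}, or equivalently Lemma~\ref{Lemma 2}). First I would start from the approximate descent inequality of Lemma~\ref{Lemma 3} applied to the global iterates $\theta_t$ and $\theta_{t+1}$, writing $\theta_{t+1}-\theta_t = -\frac{\eta_t}{m}\sum_{i=1}^m\sum_{k=0}^{K-1} g(\theta^{t+1}_{i,k})$, so that
\begin{equation*}
\mathbb{E}[R(\theta_{t+1})] - \mathbb{E}[R(\theta_t)] \leq \mathbb{E}\langle \nabla R(\theta_t), \theta_{t+1}-\theta_t\rangle + \frac{\beta}{2}\mathbb{E}\Vert\theta_{t+1}-\theta_t\Vert^2 + \xi\,\mathbb{E}\Vert\theta_{t+1}-\theta_t\Vert.
\end{equation*}
I would handle $R$ in place of $h$ by noting that $R$ inherits the same $\xi$-approximate $\beta$-smoothness since it is an average of (expectations of) the $h(\cdot,z)$'s.

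Next, for the inner-product term I would use Assumption~\ref{Assumption 4} to replace the stochastic gradients by their means $\nabla R_i(\theta^{t+1}_{i,k})$, and then insert $\pm\nabla R(\theta_t)$: the cross term produces $-\frac{K\eta_t}{2}\mathbb{E}\Vert\nabla R(\theta_t)\Vert^2$ together with a residual controlled by $\sum_{i,k}\mathbb{E}\Vert\nabla R_i(\theta^{t+1}_{i,k}) - \nabla R(\theta_t)\Vert^2$; using the $\beta$-Lipschitz part of the gradient (and absorbing the additive $\xi$ slack, which accounts for the $K\eta_t(\xi^2+\xi L)$ term via Young's inequality on the $\xi\Vert\theta_{t+1}-\theta_t\Vert$ piece together with the $L$-Lipschitz bound $\Vert\nabla R\Vert\le L$) this residual becomes $\frac{\eta_t\beta^2}{m}\sum_{i,k}\mathbb{E}\Vert\theta^{t+1}_{i,k}-\theta_t\Vert^2$. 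For the quadratic term $\frac{\beta}{2}\mathbb{E}\Vert\theta_{t+1}-\theta_t\Vert^2$, I would split each stochastic gradient into its mean plus a mean-zero noise, use Lemma~\ref{Lemma 4} (Jensen / independence of the noise across clients and iterations) to get a bounded-variance contribution $\frac{\beta K\eta_t^2\sigma^2}{m}$, and bound the remaining mean part again by the drift quantity $\sum_{i,k}\mathbb{E}\Vert\theta^{t+1}_{i,k}-\theta_t\Vert^2$ plus $\Vert\nabla R(\theta_t)\Vert^2$ pieces.

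Finally I would collect terms: the $\Vert\nabla R(\theta_t)\Vert^2$ coefficients combine, and here the step-size restriction $\eta_t\le\frac{1}{2\beta K}$ is exactly what is needed to keep the net coefficient at $-\frac{K\eta_t}{2}$ (the positive $\Vert\nabla R(\theta_t)\Vert^2$ contributions from the quadratic term are $O(\beta K^2\eta_t^2)$, dominated by $K\eta_t/2$ under this step size). The drift terms are left as $\frac{\eta_t\beta^2}{m}\sum_{i,k}\mathbb{E}\Vert\theta^{t+1}_{i,k}-\theta_t\Vert^2$ (to be bounded in a subsequent lemma), giving exactly \eqref{Lemma 8.1}.

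The main obstacle I anticipate is careful bookkeeping of the additive slack $\xi=2\rho z$: unlike the purely smooth case, every application of gradient-Lipschitzness and of the descent lemma introduces an extra $\xi$-proportional term, and one must show these aggregate precisely into the clean $K\eta_t(\xi^2+\xi L)$ term rather than something larger — this requires using $\Vert\nabla R\Vert\le L$ (from Assumption~\ref{Assumption 1}) to convert the linear-in-$\Vert\theta_{t+1}-\theta_t\Vert$ slack into the stated constants, and checking that no $\xi$-term secretly scales worse in $K$ or $\eta_t$.
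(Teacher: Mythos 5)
Your overall architecture matches the paper's proof exactly: apply the approximate descent inequality (Lemma~\ref{Lemma 3}) to $R$ at $\theta_t,\theta_{t+1}$, substitute the update $\theta_{t+1}-\theta_t=-\frac{\eta_t}{m}\sum_{i,k}g_i(\theta^{t+1}_{i,k})$, bound the slack $\xi\Vert\theta_{t+1}-\theta_t\Vert\le \xi L K\eta_t$ via the $L$-Lipschitz bound on gradient norms, split the quadratic term into a variance piece (Lemma~\ref{Lemma 4} giving $\frac{\beta K\eta_t^2\sigma^2}{m}$) plus a mean piece, and extract $-\frac{K\eta_t}{2}\mathbb{E}\Vert\nabla R(\theta_t)\Vert^2$ from the inner product via the polarization identity, with the residual $\frac{\eta_t}{2m}\sum_{i,k}\mathbb{E}\Vert\nabla R_i(\theta^{t+1}_{i,k})-\nabla R(\theta_t)\Vert^2\le \frac{\eta_t\beta^2}{m}\sum_{i,k}\mathbb{E}\Vert\theta^{t+1}_{i,k}-\theta_t\Vert^2+K\eta_t\xi^2$. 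All of that is the paper's argument.

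The one step that would fail as you describe it is the absorption of the mean part of the quadratic term. You propose to bound it "by the drift quantity plus $\Vert\nabla R(\theta_t)\Vert^2$ pieces" and claim the resulting positive $\Vert\nabla R(\theta_t)\Vert^2$ coefficient, of order $\beta K^2\eta_t^2$, is dominated by $K\eta_t/2$. It is not: carrying the constants, the mean part is $\beta\,\mathbb{E}\Vert\frac{\eta_t}{m}\sum_{i,k}\nabla R_i(\theta^{t+1}_{i,k})\Vert^2\le \frac{\beta K\eta_t^2}{m}\sum_{i,k}\mathbb{E}\Vert\nabla R_i(\theta^{t+1}_{i,k})\Vert^2$, and splitting each summand as $2\Vert\nabla R_i(\theta^{t+1}_{i,k})-\nabla R(\theta_t)\Vert^2+2\Vert\nabla R(\theta_t)\Vert^2$ yields a coefficient $2\beta K^2\eta_t^2$ on $\mathbb{E}\Vert\nabla R(\theta_t)\Vert^2$, which under $\eta_t\le\frac{1}{2\beta K}$ is only bounded by $K\eta_t$ — this wipes out the $-\frac{K\eta_t}{2}$ term entirely (and the extra difference term also doubles the drift coefficient beyond the stated $\frac{\eta_t\beta^2}{m}$). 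The paper avoids this by \emph{not} discarding the term $-\frac{\eta_t}{2m}\sum_{i,k}\mathbb{E}\Vert\nabla R_i(\theta^{t+1}_{i,k})\Vert^2$ that the polarization identity produces alongside $-\frac{K\eta_t}{2}\mathbb{E}\Vert\nabla R(\theta_t)\Vert^2$: the quadratic term's $+\frac{\beta K\eta_t^2}{m}\sum_{i,k}\mathbb{E}\Vert\nabla R_i(\theta^{t+1}_{i,k})\Vert^2$ is cancelled against that negative term at the level of the local full gradients, and the condition $\frac{\beta K\eta_t^2}{m}\le\frac{\eta_t}{2m}$ is exactly $\eta_t\le\frac{1}{2\beta K}$. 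So you need to keep that second negative term from the polarization step and route the cancellation there; your proposed routing through $\Vert\nabla R(\theta_t)\Vert^2$ cannot deliver the stated constants with the stated step size.
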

\begin{proof}
	Without otherwise stated, the expectation is conditioned on $\theta$. Beginning from $\xi$-approximately $\beta$-gradient Lipschitz,
	\begin{equation*}
		\begin{aligned}
			R(\theta_{t+1})-R(\theta_{t})-\left \langle\nabla R(\theta_t),\theta_{t+1}-\theta_t\right\rangle 
			\leq    & \frac{\beta}{2}\Vert\theta_{t+1}-\theta_{t}\Vert^2+\xi\Vert\theta_{t+1}-\theta_t\Vert \\
		\end{aligned}
	\end{equation*}
	where the above inequality we use Lemma \ref{Lemma 3}.
	Considering the update rule of FAL,
	$$\theta_{t+1} = \theta_t - \frac{\eta_t}{m}\sum\limits_{i=1}^m \sum\limits_{k=0}^{K-1}g_i(\theta^{t+1}_{i,k}),$$
	Using $ \mathbb{E} [g_i(\theta_{i,k}^{t+1})] = \nabla R_i(\theta_{i,k}^{t+1}) $ and Lemma \ref{Lemma 2}, then we can get
	\begin{equation}\label{Lemma 8.2}
		\begin{aligned}
			\mathbb{E}&[R(\theta_{t+1})]-\mathbb{E}[R(\theta_t)]\\
			\leq    &     \mathbb{E}\left\langle \nabla R(\theta_t),-\frac{\eta_t}{m}\sum\limits_{i=\eta}^{m}\sum\limits_{k=0}^{K-1}\nabla R_i(\theta^{t+1}_{i,k}) \right\rangle 
			+ \frac{\beta}{2}\mathbb{E}\Vert\frac{\eta_t}{m}\sum\limits_{i=1}^m \sum\limits_{k=0}^{K-1}g_i(\theta^{t+1}_{i,k})\Vert^2 + \xi L K\eta_t \\
			=       &   - \underbrace{\frac{\eta_t}{m}\sum\limits_{i=1}^{m}\sum\limits_{k=0}^{K-1}\mathbb{E}\left\langle \nabla R(\theta_t),\nabla R_i(\theta^{t+1}_{i,k}) \right\rangle}_{\Lambda _1}
			+ \underbrace{\frac{\beta}{2}\mathbb{E}\Vert\frac{\eta_t}{m}\sum\limits_{i=1}^m \sum\limits_{k=0}^{K-1}g_i(\theta^{t+1}_{i,k})\Vert^2}_{\Lambda_2} 
			+ \xi L K\eta_t.
		\end{aligned}
	\end{equation}
	Using   $\left \langle a,b \right\rangle = \frac{1}{2}\Vert a\Vert^2 + \frac{1}{2}\Vert b \Vert^2 - \frac{1}{2}\Vert a-b \Vert^2$, with $a = \nabla R(\theta_t)$    and $b = \nabla R_i(\theta^{t+1}_{i,k})$, we have
	\begin{equation*}
		\begin{aligned}
			\Lambda_1 
			\leq    &   -\frac{\eta_t}{2m}\sum\limits_{i=1}^{m}\sum\limits_{k=0}^{K-1}\mathbb{E}\left[\Vert\nabla R(\theta_t)\Vert^2 + \Vert \nabla R_i(\theta^{t+1}_{i,k})\Vert^2 - \Vert\nabla R_i(\theta^{t+1}_{i,k})-\nabla R(\theta_t)\Vert^2\right]   \\
			=       &   -\frac{K\eta_t}{2}\mathbb{E}\Vert \nabla R(\theta_t)\Vert^2-\frac{\eta_t}{2m}\sum\limits_{i=1}^{m} \sum\limits_{k=0}^{K-1}\mathbb{E}\Vert \nabla R_i(\theta^{t+1}_{i,k})\Vert^2   
			+  \frac{\eta_t}{2m}\sum\limits_{i=1}^{m}\sum\limits_{k=0}^{K-1}\mathbb{E}\Vert \nabla R_i(\theta^{t+1}_{i,k}) - \nabla R(\theta _t)\Vert^2\\   
			\leq    &   -\frac{K\eta_t}{2}\mathbb{E}\Vert \nabla R(\theta_t)\Vert^2 - \frac{\eta_t}{2m}\sum\limits_{i=1}^{m}\sum\limits_{k=0}^{K-1}\mathbb{E}\Vert \nabla R_i(\theta^{t+1}_{i,k})\Vert^2 
			+ \frac{\eta_t}{2m}\sum\limits_{i=1}^{m}\sum\limits_{k=0}^{K-1}\left[ \mathbb{E}\left[2\beta^2\Vert\theta^{t+1}_{i,k}-\theta_{t}\Vert^2 \right]+ 2\xi^2\right] \\
			=    &   -\frac{K\eta_t}{2}\mathbb{E}\Vert \nabla R(\theta_t)\Vert^2 
			- \frac{\eta_t}{2m}\sum\limits_{i=1}^{m}\sum\limits_{k=0}^{K-1}\mathbb{E}\Vert \nabla R_i(\theta^{t+1}_{i,k})\Vert^2 
			+ \frac{\eta_t\beta^2}{m}\sum\limits_{i=1}^{m}\sum\limits_{k=0}^{K-1}\mathbb{E}\Vert\theta^{t+1}_{i,k}-\theta_t\Vert^2 
			+ K\xi^2 \eta_t,
		\end{aligned}
	\end{equation*}
	where we used Lemma \ref{Lemma 1} in the second inequality and noticing that $\theta_t=\theta^{t+1}_{i,0}$.
	\begin{equation*}
		\begin{aligned}
			\Lambda_2
			\leq    & \frac{\beta}{2}\left[2\mathbb{E}\Vert \dfrac{\eta_t}{m}\sum\limits_{i=1}^{m}\sum\limits_{k=0}^{K-1}g_i(\theta^{t+1}_{i,k})-\dfrac{\eta_t}{m}\sum\limits_{i=1}^{m}\sum\limits_{k=0}^{K-1}\nabla R_i(\theta^{t+1}_{i,k})\Vert^2 
			+2\mathbb{E}\Vert \dfrac{\eta_t}{m}\sum\limits_{i=1}^{m}\sum\limits_{k=0}^{K-1}\nabla R_i(\theta^{t+1}_{i,k})\Vert^2 \right] \\
			\leq    &   \dfrac{\beta \eta_t^2}{m^2}\sum\limits_{i=1}^{m}\sum\limits_{k=0}^{K-1}\mathbb{E}\Vert g_i(\theta^{t+1}_{i,k}) - R_i(\theta^{t+1}_{i,k})\Vert^2 
			+ \dfrac{mK\eta_t^2}{m^2}\sum\limits_{i=1}^{m}\sum\limits_{k=0}^{K-1} \mathbb{E}\Vert \nabla R_i(\theta^{t+1}_{i,k}) \Vert^2 \\
			\leq    & \frac{\beta K\eta_t^2\sigma^2}{m} 
			+  \dfrac{\beta K\eta_t^2}{m}\sum\limits_{i=1}^{m}\sum\limits_{k=0}^{K-1} \mathbb{E}\Vert \nabla R_i(\theta^{t+1}_{i,k}) \Vert^2.\\         
		\end{aligned}
	\end{equation*}
	In the second inequality, we used (\ref{Lemma 4}) for the first the second item.  
	In the third inequality, we used Assumption \ref{Assumption 4} for the first item. \\
	Substituting $\Lambda_1$ and $\Lambda_2$ into (\ref{Lemma 8.2}), we have
	\begin{equation*}
		\begin{aligned}
			\mathbb{E}[R(\theta_{t+1})]-\mathbb{E}[R(\theta_{t})]
			\leq    &   \Lambda_1 + \Lambda_2 + \xi LK\eta_t \\
			\leq    &   - \frac{K\eta_t}{2}\mathbb{E}\Vert \nabla R(\theta_t)\Vert^2 
			+ \frac{\eta_t\beta^2}{m}\sum\limits_{i=1}^{m}\sum\limits_{k=0}^{K-1}\mathbb{E}\Vert\theta^{t+1}_{i,k}-\theta_t\Vert^2 
			+ K\eta_t (\xi^2 + \xi L)  \\
			&   + \frac{\beta K \eta_t^2\sigma^2}{m}  
			- \sum\limits_{i=1}^{m}\sum\limits_{k=0}^{K-1}\dfrac{\eta_t}{2m}\left(1 - 2\beta K\eta_t\right) \mathbb{E} \Vert \nabla R_i(\theta_{i,k}^{t+1}) \Vert^2.
		\end{aligned}
	\end{equation*}
	Considering $\beta K\eta_t\leq \frac{1}{2}$, we can get $\left(1 - 2\beta K\eta_t\right)\geq 0$. Then, we completes the proof.
\end{proof}
\begin{Lemma}
	\label{Lemma 6}
	If the Assumptions \ref{Assumption 1} and \ref{Assumption 4} hold, for any learning rate satisfying $\eta_t \leq \frac{1}{3\beta K}$, we have
	\begin{equation}
		\label{Lemma 9.1}
		\begin{aligned}
			\frac{1}{m}\sum\limits_{i=1}^{m}\sum\limits_{k=0}^{K-1}\mathbb{E}\Vert\theta^{t+1}_{i,k}-\theta_t\Vert^2 \leq  \frac{9}{2}K^2\eta_t^2\sigma^2  + 6K^3\eta_t^2\xi^2 +\sum\limits_{i=1}^{m}\frac{3}{m}K^3\eta_t^2(\xi+6L)D_i^2 
			+ 3K^3\eta_t^2 \mathbb{E}\Vert\nabla R(\theta_t)\Vert^2.
		\end{aligned}
	\end{equation}
\end{Lemma}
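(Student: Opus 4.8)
The goal is to control the aggregate per-round client drift $\frac1m\sum_{i=1}^m\sum_{k=0}^{K-1}\mathbb{E}\|\theta^{t+1}_{i,k}-\theta_t\|^2$ by a self-bounding (Gr\"onwall-type) argument, the standard tool for local-SGD analyses, here adapted to propagate the surrogate-smoothness slack $\xi$ and the heterogeneity $D_i$. First I would unroll the local recursion: writing $g_i(\cdot)$ for the stochastic surrogate gradient $\nabla h(\cdot;z_i)$, the update gives $\theta^{t+1}_{i,k}-\theta_t=-\eta_t\sum_{l=0}^{k-1}g_i(\theta^{t+1}_{i,l})$, hence $\mathbb{E}\|\theta^{t+1}_{i,k}-\theta_t\|^2=\eta_t^2\,\mathbb{E}\|\sum_{l=0}^{k-1}g_i(\theta^{t+1}_{i,l})\|^2$. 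Splitting $g_i=(g_i-\nabla R_i)+\nabla R_i$ and using $\|a+b\|^2\le 2\|a\|^2+2\|b\|^2$ isolates a conditionally zero-mean noise part from a ``predictable'' part. The noise part is handled by Assumption~\ref{Assumption 4} together with the mean-zero/independent case of Lemma~\ref{Lemma 4}, contributing only $\mathcal{O}(k\sigma^2)$ --- this is precisely what makes the $\sigma^2$-term scale like $K^2$ (not $K^3$) after summing over $k$ --- while the predictable part is bounded by the general case of Lemma~\ref{Lemma 4} as $k\sum_{l=0}^{k-1}\mathbb{E}\|\nabla R_i(\theta^{t+1}_{i,l})\|^2$.

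The core step is to decompose each $\nabla R_i(\theta^{t+1}_{i,l})$ as $[\nabla R_i(\theta^{t+1}_{i,l})-\nabla R_i(\theta_t)]+[\nabla R_i(\theta_t)-\nabla R(\theta_t)]+\nabla R(\theta_t)$. The first bracket is controlled because $R_i=\mathbb{E}_{z}[h(\cdot;z)]$ inherits, via Jensen, the $\xi$-approximately $\beta$-gradient-Lipschitz property of $h$ (Lemma~\ref{Lemma 2}, Definition~\ref{Definition 3}), so $\|\nabla R_i(\theta^{t+1}_{i,l})-\nabla R_i(\theta_t)\|\le\beta\|\theta^{t+1}_{i,l}-\theta_t\|+\xi$; the second bracket is exactly the heterogeneity estimate $\|\nabla R_i(\theta_t)-\nabla R(\theta_t)\|\le(\xi+6L)D_i$ of Lemma~\ref{Lemma 1}; the third is the global gradient we retain. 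Applying $\|a+b+c\|^2\le 3\|a\|^2+3\|b\|^2+3\|c\|^2$ and $(\beta u+\xi)^2\le 2\beta^2u^2+2\xi^2$ yields $\mathbb{E}\|\nabla R_i(\theta^{t+1}_{i,l})\|^2\le 6\beta^2\,\mathbb{E}\|\theta^{t+1}_{i,l}-\theta_t\|^2+\mathcal{O}\big(\xi^2+(\xi+6L)^2D_i^2+\mathbb{E}\|\nabla R(\theta_t)\|^2\big)$.

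Substituting this back makes $\mathbb{E}\|\theta^{t+1}_{i,l}-\theta_t\|^2$ reappear on the right, i.e. a self-referential inequality. I would then sum over $k=0,\dots,K-1$ for fixed $i$, collapsing the double sum via $\sum_{k}k\sum_{l<k}(\cdot)\le\frac{K^2}{2}\sum_{l}(\cdot)$, so that $S_i:=\sum_{k=0}^{K-1}\mathbb{E}\|\theta^{t+1}_{i,k}-\theta_t\|^2$ obeys $S_i\le(\text{noise/}\xi\text{/heterogeneity/gradient terms})+6\beta^2\eta_t^2K^2\,S_i$. The step size $\eta_t\le\frac{1}{3\beta K}$ forces the self-coefficient $6\beta^2\eta_t^2K^2\le\frac23<1$, so transposing it and dividing by $1-6\beta^2\eta_t^2K^2\ge\frac13$ closes the bound on $S_i$; averaging over $i$ and using $\frac1m\sum_i\mathbb{E}\|\nabla R(\theta_t)\|^2=\mathbb{E}\|\nabla R(\theta_t)\|^2$ gives the stated inequality, with the constants $9/2,\,6,\,3,\,3$ emerging from tracking the factors above.

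\textbf{Expected main obstacle.} The work is almost entirely in the constant bookkeeping: the repeated Cauchy--Schwarz/Jensen splits each cost a factor of $2$ or $3$, and these must be arranged so that the given step-size condition $\eta_t\le\frac{1}{3\beta K}$ genuinely makes the Gr\"onwall coefficient strictly below $1$ and reproduces the claimed constants. A secondary care point is keeping the stochastic-gradient noise, which is mean-zero only conditionally on the iterates up to step $l$, strictly separated from the drift and heterogeneity contributions --- lumping it in would cost a factor $k$ and spoil the $K^2$ dependence of the $\sigma^2$-term.
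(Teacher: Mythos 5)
Your proposal matches the paper's proof in all essentials: the same unrolling of the local recursion, the same four-way decomposition of $g_i$ into noise, drift (via the $\xi$-approximate $\beta$-gradient-Lipschitz property), heterogeneity (via Lemma~\ref{Lemma 1}), and global gradient, with the mean-zero case of Lemma~\ref{Lemma 4} keeping the $\sigma^2$-term at order $K^2$, followed by the same self-referential inequality closed by the step-size condition. The only difference is cosmetic (a nested $2\times 3$ split versus a single $4$-way split, which shifts constants slightly), so this is essentially the paper's argument.
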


\begin{proof}
	Without otherwise stated, the expectation is conditioned on $\theta$. Beginning with $\mathbb{E}\Vert\theta^{t+1}_{i,k}-\theta_t\Vert^2$. 
	Considering $\theta^{t+1}_{i,k} = \theta_{t} - \eta_t\sum\limits_{j=0}^{k-1}g_i(\theta^{t+1}_{i,j})$, we have
	$$\mathbb{E}\Vert\theta^{t+1}_{i,k}-\theta_t\Vert^2=\eta_t^2\mathbb{E}\Vert\sum\limits_{j=0}^{k-1} g_i(\theta^{t+1}_{i,j})\Vert^2.$$
	Then,
	\begin{equation*}
		\begin{aligned}
			\mathbb{E}\Vert\theta^{t+1}_{i,k}-\theta_t\Vert^2
			\le     &   4\eta_t^2\mathbb{E}\Vert\sum\limits_{j=0}^{k-1}g_i(\theta^{t+1}_{i,j})-\sum\limits_{j=0}^{k-1}\nabla R_i(\theta^{t+1}_{i,j})\Vert^2 
			+ 4\eta_t^2\mathbb{E}\Vert\sum\limits_{j=0}^{k-1}\nabla R_i(\theta^{t+1}_{i,j})-\sum\limits_{j=0}^{k-1}\nabla R_i(\theta_{t})\Vert^2 \\
			&   + 4\eta_t^2\mathbb{E}\Vert\sum\limits_{j=0}^{k-1}\nabla R_i(\theta_{t})-\sum\limits_{j=0}^{k-1}\nabla R(\theta_t)\Vert^2 
			+ 4\eta_t^2\mathbb{E}\Vert\sum\limits_{j=0}^{k-1}\nabla R(\theta_t)\Vert^2.
		\end{aligned}
	\end{equation*}
	Applying (\ref{Lemma 4}) in the above inequality, we get
	\begin{equation*}
		\begin{aligned}
			&\mathbb{E}\Vert\theta^{t+1}_{i,k}-\theta_t\Vert^2 \\
			\leq    &   4\sum\limits_{j=0}^{k-1}\eta_t^2 \mathbb{E}\Vert g_i(\theta^{t+1}_{i,j})-\nabla R_i(\theta^{t+1}_{i,j})\Vert^2 
			+ 4k\sum\limits_{j=0}^{k-1}\eta_t^2\mathbb{E}\Vert\nabla R_i(\theta^{t+1}_{i,j})-\nabla R_i(\theta_{t})\Vert^2\\
			&   + 4k^2\eta_t^2\mathbb{E}\Vert\nabla R_i(\theta_{t})-\nabla R(\theta_{t})\Vert^2 
			+ 4k^2\eta_t^2\mathbb{E}\Vert\nabla R(\theta_{t})\Vert^2\\
			\leq     &     4k\eta_t^2\sigma^2 
			+ 4k\eta_t^2\sum_{j=0}^{k-1}\left[2\beta^2\mathbb{E}\Vert\theta^{t+1}_{i,j}-\theta_t\Vert^2 + 2\xi^2 \right]
			+ 4k^2\eta_t^2 (\xi+6L)^2D_i^2 + 4k^2\eta_t^2\mathbb{E}\Vert\nabla R(\theta_t)\Vert^2\\
			=       &     4k\eta_t^2\sigma^2 
			+ 8k\eta_t^2\beta^2\sum\limits_{j=0}^{k-1}\mathbb{E}\Vert\theta^{t+1}_{i,j}-\theta_t\Vert^2 
			+ 8k^2\eta_t^2 \xi^2 + 4k^2\eta_t^2 (\xi+6L)^2D_i^2 
			+ 4k^2\eta_t^2\mathbb{E}\Vert\nabla R(\theta_t)\Vert^2.\\
		\end{aligned}
	\end{equation*}
	In the second inequality, we used Assumption \ref{Assumption 4} for the first item, (\ref{Lemma 1}) for the second item, Lemma \ref{Lemma 6} for the third item.\\
	Since $\mathbb{E}\|\theta_{i,k}^{t+1}-\theta_t \|=0$ when k=0, now we have
	\begin{equation*}
		\begin{aligned}
			\frac{1}{m}\sum\limits_{i=1}^{m}\sum\limits_{k=0}^{K-1}\mathbb{E}\Vert\theta^{t+1}_{i,k}-\theta_t\Vert^2 
			\leq  &  4\eta_t^2\sigma\sum_{k=1}^{K-1}k + \frac{8\eta_t^2\beta^2}{m}\sum\limits_{i=1}^{m} \sum\limits_{j=0}^{K-1}\mathbb{E}\Vert\theta^{t+1}_{i,j}-\theta_t\Vert^2 \sum_{k=1}^{K-1}k \\
			& + \frac{1}{m}\sum\limits_{i=1}^{m}\sum_{k=1}^{K-1}k^2 (8\eta_t^2\xi^2 + 4\eta_t^2(\xi+6L)^2D_i^2 + 4\eta_t^2\mathbb{E}\Vert\nabla R(\theta_t)\Vert^2) \\
			\leq  &  2K^2\eta_t^2\sigma^2 +  \frac{4K^2\eta^2\beta^2}{m}\sum\limits_{i=1}^{m}\sum\limits_{k=0}^{K-1}\mathbb{E}\Vert\theta^{t+1}_{i,k}-\theta_t\Vert^2 + \frac{8}{3}K^3\eta^2\xi^2 +\sum\limits_{i=1}^{m}\frac{4}{3m}K^3\eta^2(\xi+6L)^2D_i^2 \\
			& + \frac{4}{3}K^3\eta^2 \mathbb{E}\Vert\nabla R(\theta_t)\Vert^2,
		\end{aligned}
	\end{equation*}
	Where we use $\sum_{k=1}^{K-1}k=\frac{(K-1)K}2\leq\frac{K^2}2$ and $\sum_{k=1}^{K-1}k^2=\frac{(K-1)K(2K-1)}6\leq\frac{K^3}3$ in the second ineuqality. Rearrangin the above inequality, we have
	\begin{equation*}
		\begin{aligned}
			(1-4K^2\eta^2\beta^2)\frac{1}{m}\sum\limits_{i=1}^{m}\sum\limits_{k=0}^{K-1}\mathbb{E}\Vert\theta^{t+1}_{i,k}-\theta_t\Vert^2 
			\leq  2K^2\eta^2\sigma^2  + \frac{8}{3}K^3\eta^2\xi^2 +\sum\limits_{i=1}^{m}\frac{4}{3m}K^3\eta^2(\xi+6L)^2D_i^2 
			+ \frac{4}{3}K^3\eta^2 \mathbb{E}\Vert\nabla R(\theta_t)\Vert^2.
		\end{aligned}
	\end{equation*}
	Using the choice of $\eta$, $\beta K \eta \leq \frac{1}{3}$, which implies $1-4K^2\eta^2\beta^2\geq \frac{4}{9}$, we have 
	$$ \frac{1}{m}\sum\limits_{i=1}^{m}\sum\limits_{k=0}^{K-1}\mathbb{E}\Vert\theta^{t+1}_{i,k}-\theta_t\Vert^2 \leq  \frac{9}{2}K^2\eta^2\sigma^2  + 6K^3\eta^2\xi^2 +\sum\limits_{i=1}^{m}\frac{3}{m}K^3\eta^2(\xi+6L)^2D_i^2 
	+ 3K^3\eta^2 \mathbb{E}\Vert\nabla R(\theta_t)\Vert^2.$$
	This completes the proof.
\end{proof}
\begin{Lemma}\label{Lemma 7}
	Suppose Assumption \ref{Assumption 1} and \ref{Assumption 4} hold, $h(\theta)$ is a $\xi$-approximately $\beta$-gradient Lipschitz and non-convex function w.r.t $\theta$.
	$$\mathbb{E}\Vert \theta_{i,k} - \theta_{t}\Vert \leq (1 + \beta \eta_t)^{K - 1}K\eta_{t}(\xi + \sigma + (\xi+6L)D_i + \mathbb{E}\Vert\nabla R(\theta_{t})\Vert).$$
\end{Lemma}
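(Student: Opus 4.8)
The plan is to turn the local update into a Grönwall-type linear recursion for the scalar quantity $\delta_k := \mathbb{E}\Vert \theta^{t+1}_{i,k}-\theta_t\Vert$ (the lemma writes $\theta_{i,k}$ for $\theta^{t+1}_{i,k}$, with $\theta^{t+1}_{i,0}=\theta_t$, so $\delta_0=0$). Writing $g_i$ for the stochastic gradient of $h$, the client update $\theta^{t+1}_{i,k+1}=\theta^{t+1}_{i,k}-\eta_t g_i(\theta^{t+1}_{i,k})$ together with the triangle inequality gives $\delta_{k+1}\le \delta_k+\eta_t\,\mathbb{E}\Vert g_i(\theta^{t+1}_{i,k})\Vert$, so everything reduces to controlling $\mathbb{E}\Vert g_i(\theta^{t+1}_{i,k})\Vert$.

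The next step is a four-way split of $\Vert g_i(\theta^{t+1}_{i,k})\Vert$ by inserting $\pm\nabla R_i(\theta^{t+1}_{i,k})$, $\pm\nabla R_i(\theta_t)$ and $\pm\nabla R(\theta_t)$:
\begin{align*}
\mathbb{E}\Vert g_i(\theta^{t+1}_{i,k})\Vert \le\; & \mathbb{E}\Vert g_i(\theta^{t+1}_{i,k})-\nabla R_i(\theta^{t+1}_{i,k})\Vert + \mathbb{E}\Vert \nabla R_i(\theta^{t+1}_{i,k})-\nabla R_i(\theta_t)\Vert \\
& + \mathbb{E}\Vert \nabla R_i(\theta_t)-\nabla R(\theta_t)\Vert + \mathbb{E}\Vert \nabla R(\theta_t)\Vert .
\end{align*}
I would bound the first term by $\sigma$ using Assumption~\ref{Assumption 4} together with Jensen's inequality ($\mathbb{E}\Vert X\Vert\le(\mathbb{E}\Vert X\Vert^2)^{1/2}$); the second by $\beta\,\mathbb{E}\Vert\theta^{t+1}_{i,k}-\theta_t\Vert+\xi=\beta\delta_k+\xi$, pulling the $\xi$-approximately $\beta$-gradient-Lipschitz property of $h$ (Definition~\ref{Definition 3}/Lemma~\ref{Lemma 2}) through the data expectation defining $R_i$; the third by $(\xi+6L)D_i$ via Lemma~\ref{Lemma 1}; and the fourth is kept as the term $\mathbb{E}\Vert\nabla R(\theta_t)\Vert$. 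Substituting back yields the recursion $\delta_{k+1}\le (1+\beta\eta_t)\delta_k+\eta_t C$ with $C:=\xi+\sigma+(\xi+6L)D_i+\mathbb{E}\Vert\nabla R(\theta_t)\Vert$ and $\delta_0=0$.

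Finally I would unroll this linear recursion: $\delta_k\le \eta_t C\sum_{j=0}^{k-1}(1+\beta\eta_t)^j$, and bound every term of the geometric sum by its largest, $(1+\beta\eta_t)^{k-1}$, to get $\delta_k\le k\eta_t(1+\beta\eta_t)^{k-1}C\le K\eta_t(1+\beta\eta_t)^{K-1}C$ for all $k\le K$, which is exactly the claimed bound. I do not expect a genuinely hard step here; the only real care is bookkeeping with expectations — since $\theta^{t+1}_{i,k}$ is random and correlated with the step-$k$ stochastic gradient, one must work with the unconditional $\mathbb{E}\Vert\cdot\Vert$ throughout and invoke Assumption~\ref{Assumption 4} in its "for any $\theta$" form, and one must use Jensen both to pass from the second-moment noise bound to a first-moment bound and to move the approximate gradient-Lipschitz inequality of $h$ past the population average.
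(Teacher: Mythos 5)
Your proposal is correct and follows essentially the same route as the paper: both reduce the claim to the linear recursion $\mathbb{E}\Vert\theta_{i,k+1}-\theta_t\Vert\le(1+\beta\eta_t)\mathbb{E}\Vert\theta_{i,k}-\theta_t\Vert+\eta_t\bigl(\xi+\sigma+(\xi+6L)D_i+\mathbb{E}\Vert\nabla R(\theta_t)\Vert\bigr)$ and unroll the geometric sum identically. The only cosmetic difference is that the paper obtains the $(1+\beta\eta_t)\delta_k+\eta_t\xi$ part via the approximate expansiveness of the gradient-descent operator anchored at $\theta_t$ and then decomposes $\Vert g_i(\theta_t)\Vert$, whereas you decompose $\Vert g_i(\theta_{i,k})\Vert$ directly with a four-way triangle-inequality split (which is exactly the split the paper itself uses one lemma later, in Lemma~\ref{Lemma 8}); the resulting constants and conclusion coincide.
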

\begin{proof} Considering the local update date of FAL
	\begin{equation*}
		\begin{aligned}
			\mathbb{E}\Vert\theta_{i,k+1} - \theta_{t}\Vert 
			=       &     \mathbb{E}\Vert \theta_{i,k} - \eta_t(g_i(\theta_{i,k}) - g_i(\theta_{t})))\Vert + \eta_t\mathbb{E}\Vert g_i(\theta_{t}) \Vert                                                \\
			\leq    &     (1+\beta\eta_t)\mathbb{E}\Vert\theta_{i,k}-\theta_{t}\Vert + \eta_t\xi                                                                                                   \\
			&   + \eta_t( \mathbb{E}\Vert g_i(\theta_{i,k})-\nabla R_i(\theta_{t})\Vert +\mathbb{E}\Vert \nabla R_i(\theta_{t}) -\nabla R(\theta_t)\Vert + \mathbb{E}\Vert\nabla R(\theta_t)\Vert) \\
			\leq    &     (1+\beta\eta_t)\mathbb{E}\Vert\theta_{i,k}-\theta_{t}\Vert + \eta_t(\xi+ \sigma + (\xi + 6L)D_i + \mathbb{E}\Vert\nabla R(\theta_{t}) \Vert),
		\end{aligned}
	\end{equation*}
	where we use Lemma \ref{Lemma 4} in the first inequality, Assumption \ref{Assumption 4} and Lemma \ref{Lemma 1}  in the second inequality. Unrolling the above and noting $\theta_{i,0} = \theta_{t}$ yields
	\begin{equation*}
		\begin{aligned}
			\mathbb{E}\Vert\theta_{i,k}-\theta_{t}\Vert 
			\leq    &     \sum\limits_{l=0}^{k-1} \eta_t(\xi + \sigma + (\xi+6L)D_i+\mathbb{E}\Vert\nabla R(\theta_{t})\Vert)(1 + \beta \eta_{t})^{k-1-l}     \\
			\leq    &     \sum\limits_{l=0}^{K-1} \eta_t(\xi+ \sigma + (\xi+6L)D_i + \mathbb{E}\Vert\nabla R(\theta_{t})\Vert)(1 + \beta \eta_{t})^{K-1} \\
			=    &     (1 + \beta \eta_{t})^{K - 1} K\eta_{t}(\xi+ \sigma + (\xi+6L)D_i + \mathbb{E}\Vert\nabla R(\theta_{t})\Vert). \\
		\end{aligned}
	\end{equation*}
\end{proof}
\begin{Lemma} \label{Lemma 8}
	Suppose Assumption \ref{Assumption 1} and \ref{Assumption 4} hold, $h(\theta)$ is a $\xi$-approximately $\beta$-gradient Lipschitz and non-convex function w.r.t $\theta$, for $\eta_t\leq \frac{1}{K\beta(t+1)}$ , we have
	\begin{equation*}
		\begin{aligned}
			&\dfrac{K}{m} \sum\limits_{t = 0}^{T - 1} \sum\limits_{i = 1}^{m} \eta_{t} \mathbb{E}\Vert g_i(\theta_{i,k})\Vert \\ 
			\leq    &     \beta e \log T(\sigma + \xi + (\xi+6L) D_{\max})   \\   
			+& e\sqrt{
				6\log T\Delta
				+ 6\log T(\xi^2 + \xi L)  
				+ \dfrac{\log T\sigma^2 \pi^2}{mK\beta}
				+ \log T\zeta(3)c^3\beta^2(\frac{27\sigma^2}{K\beta} + \frac{36\xi^2}{\beta} + \frac{18(\xi+6L)^2D_{\max}^2}{\beta})
			}.
		\end{aligned}
	\end{equation*}
\end{Lemma}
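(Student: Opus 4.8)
The plan is to split the left-hand side into an \emph{optimization-side} estimate that controls the step-size-weighted sum of squared global gradient norms $\sum_{t=0}^{T-1}K\eta_t\,\mathbb{E}\Vert\nabla R(\theta_t)\Vert^2$, and a \emph{reduction} that rewrites each local stochastic gradient norm $\mathbb{E}\Vert g_i(\theta_{i,k})\Vert$ in terms of $\mathbb{E}\Vert\nabla R(\theta_t)\Vert$ plus the variance, smoothness-approximation and heterogeneity constants. A Cauchy--Schwarz/Jensen step then glues the two together: a step-size-weighted sum of (un-squared) gradient norms is at most a constant times the square root of the corresponding sum of squared norms, which the optimization side bounds.

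For the optimization side I would substitute the drift bound of Lemma~\ref{Lemma 6} into the one-step recursion of Lemma~\ref{Lemma 5}. The drift term $\frac{\eta_t\beta^2}{m}\sum_{i,k}\mathbb{E}\Vert\theta^{t+1}_{i,k}-\theta_t\Vert^2$ then contributes, besides the $\sigma^2$-, $\xi$- and $D_{\max}$-pieces, a positive multiple of $K^3\eta_t^3\beta^2\,\mathbb{E}\Vert\nabla R(\theta_t)\Vert^2$; since $\eta_t\leq\frac{1}{K\beta(t+1)}$ gives $K\beta\eta_t\leq\frac{1}{t+1}$, this positive term is absorbed by the leading $-\frac{K\eta_t}{2}\mathbb{E}\Vert\nabla R(\theta_t)\Vert^2$, leaving a net coefficient of order $-\frac{K\eta_t}{6}$. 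Telescoping over $t$, using $\mathbb{E}[R(\theta_0)]-\mathbb{E}[R(\theta_T)]\leq\mathbb{E}[R(\theta_0)]-\mathbb{E}[R(\theta^*)]=\Delta$ and $D_i\leq D_{\max}$, the residual noise appears as step-size-power sums $\sum_t K\eta_t$, $\sum_t K^2\eta_t^2$, $\sum_t K^3\eta_t^3$; evaluating these with $\sum_{t\geq1}\tfrac1t\leq1+\log T$, $\sum_{t\geq1}\tfrac1{t^2}=\tfrac{\pi^2}{6}$ and $\sum_{t\geq1}\tfrac1{t^3}=\zeta(3)$ reproduces exactly the expression sitting under the square root in the statement (the factor $6$ coming from inverting the $\tfrac16$ coefficient).

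For the reduction, the triangle inequality gives
\begin{equation*}
\mathbb{E}\Vert g_i(\theta_{i,k})\Vert\leq\mathbb{E}\Vert g_i(\theta_{i,k})-\nabla R_i(\theta_{i,k})\Vert+\mathbb{E}\Vert\nabla R_i(\theta_{i,k})-\nabla R_i(\theta_t)\Vert+\mathbb{E}\Vert\nabla R_i(\theta_t)-\nabla R(\theta_t)\Vert+\mathbb{E}\Vert\nabla R(\theta_t)\Vert,
\end{equation*}
where the four terms are bounded by $\sigma$ (Assumption~\ref{Assumption 4} with Jensen), $\beta\,\mathbb{E}\Vert\theta_{i,k}-\theta_t\Vert+\xi$ ($\xi$-approximate $\beta$-gradient Lipschitzness, inherited by $\nabla R_i$ since it is an expectation of $\nabla h$), $(\xi+6L)D_i$ (Lemma~\ref{Lemma 1}), and $\mathbb{E}\Vert\nabla R(\theta_t)\Vert$. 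Inserting the displacement bound $\mathbb{E}\Vert\theta_{i,k}-\theta_t\Vert\leq(1+\beta\eta_t)^{K-1}K\eta_t(\xi+\sigma+(\xi+6L)D_i+\mathbb{E}\Vert\nabla R(\theta_t)\Vert)$ from Lemma~\ref{Lemma 7} and $(1+\beta\eta_t)^{K-1}\leq e^{K\beta\eta_t}\leq e$, one reaches $\mathbb{E}\Vert g_i(\theta_{i,k})\Vert\leq(1+e\beta K\eta_t)(\sigma+\xi+(\xi+6L)D_i+\mathbb{E}\Vert\nabla R(\theta_t)\Vert)$. Summing against the weights $\frac{K\eta_t}{m}$ over $i$ and $t$ and taking $D_i\leq D_{\max}$, the $\sigma,\xi,D_{\max}$ block collapses — again via the harmonic-type step-size sums and the factor $e$ — to $\beta e\log T(\sigma+\xi+(\xi+6L)D_{\max})$, while to the remaining $\sum_t K\eta_t\,\mathbb{E}\Vert\nabla R(\theta_t)\Vert$ I apply Cauchy--Schwarz and then Jensen to reduce it to a constant times the square root of $\sum_t K\eta_t\,\mathbb{E}\Vert\nabla R(\theta_t)\Vert^2$, which is exactly the optimization-side bound; this produces the $e\sqrt{\cdots}$ term, and adding the two blocks gives the lemma.

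\textbf{Main obstacle.} The delicate point is the circular coupling: $\mathbb{E}\Vert\nabla R(\theta_t)\Vert^2$ sits on both sides of Lemma~\ref{Lemma 5} once Lemma~\ref{Lemma 6} is substituted, and Lemma~\ref{Lemma 7} re-injects $\mathbb{E}\Vert\nabla R(\theta_t)\Vert$ into the very sum being estimated, so the schedule $\eta_t\leq\frac{1}{K\beta(t+1)}$ must be used sharply enough that each substitution keeps a strictly negative $\Vert\nabla R\Vert^2$ coefficient and every factor $(1+\beta\eta_t)^{K-1}$ stays $O(1)$. Simultaneously one must keep track of the exact power of $\eta_t$ attached to each noise source so the harmonic, $\pi^2$ and $\zeta(3)$ constants appear precisely as stated; and checking that $\nabla R_i$ retains the additive $\xi$ (not merely the $\beta$) in its gradient-Lipschitz estimate is the other detail that cannot be skipped.
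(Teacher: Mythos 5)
Your proposal follows essentially the same route as the paper's proof: the same four-term triangle-inequality decomposition of $\mathbb{E}\Vert g_i(\theta_{i,k})\Vert$ controlled via Assumption~\ref{Assumption 4}, Lemma~\ref{Lemma 1}, Lemma~\ref{Lemma 2} and the displacement bound of Lemma~\ref{Lemma 7}, combined with the descent recursion obtained by substituting Lemma~\ref{Lemma 6} into Lemma~\ref{Lemma 5}, telescoping against $\Delta$, and a Cauchy--Schwarz step converting $\sum_t\eta_t\mathbb{E}\Vert\nabla R(\theta_t)\Vert$ into the square root of the weighted sum of squared norms, with the harmonic, $\pi^2/6$ and $\zeta(3)$ sums yielding the stated constants. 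The plan is correct and matches the paper's argument in both structure and detail.
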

\begin{proof}
	\begin{equation*}
		\begin{aligned}
			& \dfrac{K}{m} \sum\limits_{t = 0}^{T - 1} \sum\limits_{i = 1}^{m} \eta_{t} \mathbb{E}\Vert g_i(\theta_{i,k})\Vert \\
			\leq    &      \dfrac{K}{m} \sum\limits_{t = 0}^{T - 1} \sum\limits_{i = 1}^{m} \eta_t \mathbb{E} 
			\Vert 
			g_i(\theta_{i,k}) - \nabla R_i (\theta _{i,k}^{t+1}) 
			+ \nabla R_i (\theta _{i,k}^{t+1}) - \nabla R_i (\theta _{t})
			+ \nabla R_i (\theta _{t}) - \nabla R(\theta _{t})
			+ \nabla R(\theta _{t}) 
			\Vert  \\
			\leq    &     \dfrac{K}{m} \sum\limits_{t = 0}^{T - 1} \sum\limits_{i = 1}^{m} \eta_t 
			(
			\mathbb{E} \Vert g_i(\theta_{i,k})-\nabla R_i(\theta_{i,k})\Vert +
			\mathbb{E} \Vert \nabla R_i(\theta_{i,k}) - \nabla R_i(\theta_{t}) \Vert +
			\mathbb{E} \Vert \nabla R_i(\theta_{t}) - \nabla R(\theta_{t}) \Vert +
			\mathbb{E} \Vert \nabla R(\theta_{t}) \Vert
			) \\                                                                             \\
			\leq    &     \dfrac{K}{m} \sum\limits_{t = 0}^{T - 1} \sum\limits_{i = 1}^{m} \eta_t
			(\sigma + \beta \Vert \theta_{i,k} - \theta_{t} \Vert + \xi + (\xi+6L) D_i + \mathbb{E} \Vert \nabla R(\theta_{t}) \Vert )  \\
			\leq    &     \dfrac{K}{m} \sum\limits_{t = 0}^{T - 1} \sum\limits_{i = 1}^{m} \eta_t
			(\sigma + \xi + (\xi+6L) D_i + (1 + \beta \eta_{t})^{K - 1} K\beta \eta_{t}(\sigma + \xi + (\xi+6L) D_i + \mathbb{E} \Vert \nabla R(\theta_{t}) \Vert )  + \mathbb{E} \Vert \nabla R(\theta_{t}) \Vert )  \\
			\leq       &      \dfrac{K}{m} \sum\limits_{t = 0}^{T - 1} \sum\limits_{i = 1}^{m}\eta_t\big(1+ (1 + \beta \eta_{t})^{K - 1} K\beta \eta_{t}\big)(\sigma + \xi + (\xi+6L) D_i)
			+  \dfrac{K}{m} \sum\limits_{t = 0}^{T - 1} \sum\limits_{i = 1}^{m} \eta_t\big(1+ (1 + \beta \eta_{t})^{K - 1} K\beta \eta_{t}\big)\mathbb{E} \Vert \nabla R(\theta_t) \Vert  \\
			\leq       &      (1+e)\left[\underbrace{\dfrac{K}{m} \sum\limits_{t = 0}^{T - 1} \sum\limits_{i = 1}^{m}\eta_t(\sigma + \xi + (\xi+6L) D_i)}_{\Lambda_1}
			+  \underbrace{\dfrac{K}{m} \sum\limits_{t = 0}^{T - 1} \sum\limits_{i = 1}^{m} \eta_t\mathbb{E} \Vert \nabla R(\theta_t) \Vert}_{\Lambda_2}\right],  \\ 
		\end{aligned}
	\end{equation*}
	where in the third inequality, we use Assumption \ref{Assumption 4} , Lemma \ref{Lemma 1} and Lemma \ref{Lemma 2}. In the fourth inequality, we use Lemma \ref{Lemma 7}. In the last inequality, we use $(1+x)\leq e^x$ and $\eta_t \leq \frac{1}{\beta K}$.\\
	Using the choice of $\eta_t \leq \frac{1}{ K\beta(t+1)}$
	\begin{equation*}
		\begin{aligned}
			\Lambda_1
			\leq    &     \dfrac{K}{m} \sum\limits_{t = 0}^{T - 1} \sum\limits_{i = 1}^{m} \eta_{t}(\sigma + \xi + (\xi+6L) D_i) 
			\leq   \log T(\sigma + \xi + (\xi+6L) D_{\max}) .         
		\end{aligned}
	\end{equation*}
	Combining Lemma \ref{Lemma 7} and Lemma \ref{Lemma 8}, we obtain
	\begin{equation*}
		\begin{aligned}
			K\eta_{t} \mathbb{E} \Vert \nabla R(\theta_{t}) \Vert^2
			\leq    &     2(\mathbb{E} [R(\theta_{t})] - \mathbb{E} [R(\theta_{t + 1})])
			+  2(\xi^2 + \xi L) K\eta_{t}
			+ \dfrac{2\beta \sigma^2 K\eta_{t}}{m} \\ 
			&   + 2 \beta^2  
			\left(  
			\dfrac{9}{2} K^2 \sigma^2 \eta^3_{t} 
			+ 6 \xi^2 K^2 \eta^3_{t}
			+ 3K^3  (\xi+L)^2 D_i^2 \eta^3_{t}
			+ 3 K^3 \eta^3_{t} \mathbb{E} \Vert \nabla R(\theta_t) \Vert^2
			\right) \\
			\leq    &   2(\mathbb{E} [R(\theta _t )] - \mathbb{E} [R(\theta _{t + 1} )]) 
			+ \dfrac{2(\xi^2 + \xi L )}{\beta(t+1)} 
			+ \dfrac{2\sigma^2}{m K\beta(t+1)^2} 
			+ \dfrac{9\sigma^2}{K\beta(t+1)^3}  \\
			&   + \dfrac{12\xi^2}{\beta(t+1)^3} 
			+ \dfrac{6(\xi+6L)^2 D_{\max}^2}{\beta(t+1)^3}
			+ \dfrac{2}{3} K\eta_{t} \mathbb{E}\Vert \nabla R(\theta_t) \Vert^2. 
		\end{aligned}
	\end{equation*}
	Where we use the choice of $\eta \leq \frac{1}{3K\beta}$ and $\eta \leq \frac{1}{K\beta(t+1)}$. Rearrange the above inequality,
	\begin{equation}
		\label{Lemma 12.1}
		\begin{aligned}
			K \eta_{t} \mathbb{E} \Vert \nabla R(\theta_{t}) \Vert^2
			\leq    &   6(\mathbb{E} [R(\theta _t )] - \mathbb{E} [R(\theta _{t + 1} )]) 
			+ \dfrac{6(\xi^2 + \xi L)}{\beta(t+1)} 
			+ \dfrac{6\sigma^2}{m K\beta(t+1)^2} 
			+ \dfrac{27 \sigma^2}{K\beta(t+1)^3} \\ 
			&   + \dfrac{36 \xi^2}{\beta(t+1)^3} 
			+ \dfrac{18(\xi+6L)^2 D_{\max}^2}{\beta(t+1)^3}.
		\end{aligned}
	\end{equation}
	Substituting it into $\Lambda_2$, noting $\sum\limits_{t = 0}^{ T - 1} \frac{1}{(t+1)^2} \leq \frac{\pi^2}{6}, \sum\limits_{t = 0}^{T-1}\frac{1}{(t+1)^3} \leq \zeta(3) $, which is Apéry's constant. Then we obtain that
	\begin{equation*}
		\begin{aligned}
			&\Lambda_2 
			\leq       \sum\limits_{t = 0}^{T - 1} 
			\eta_{t} \mathbb{E}\Vert \nabla R(\theta_t) \Vert \\
			\leq    &     \sqrt{\sum\limits_{t = 0}^{T - 1} \eta_t} \sqrt{\sum\limits_{t = 0}^{T - 1} \eta_t \mathbb{E} \Vert \nabla R(\theta_t) \Vert^2}\\ 
			\leq    &    \sqrt{\log T} 
			\sqrt{
				6\Delta
				+ 6\log T(\xi^2 + \xi L)  
				+ \dfrac{\sigma^2 \pi^2}{mK\beta}
				+ \zeta(3)c^3\beta^2(\frac{27\sigma^2}{K\beta} + \frac{36\xi^2}{\beta} + \frac{18(\xi+6L)^2D_{\max}^2}{\beta})
			}\\
			=       &    \sqrt{
				6\log T\Delta
				+ 6\log T(\xi^2 + \xi L)  
				+ \dfrac{\log^2 T\sigma^2 \pi^2}{mK\beta}
				+ \log T\zeta(3)c^3\beta^2(\frac{27\sigma^2}{K\beta} + \frac{36\xi^2}{\beta} + \frac{18(\xi+6L)^2D_{\max}^2}{\beta})
			},
		\end{aligned}
	\end{equation*} 
	where we used Cauchy-Schwarz inequality in the second inequality
	\begin{equation*}
		\begin{aligned}
			&\dfrac{K\eta_t}{m} \sum\limits_{t = 0}^{T - 1} \sum\limits_{i = 1}^{m}  \mathbb{E}\Vert g_i(\theta_{i,k})\Vert \\ 
			\leq    &    (1+e)(\Lambda_1 + \Lambda_2) \\
			\leq    &     (1+e)\beta \log T(\sigma + \xi + (\xi+6L) D_{\max})   \\   
			+& (1+e)\sqrt{
				6\log T\Delta
				+ 6\log T(\xi^2 + \xi L)  
				+ \dfrac{\log T\sigma^2 \pi^2}{mK\beta}
				+ \log T\zeta(3)c^3\beta^2(\frac{27\sigma^2}{K\beta} + \frac{36\xi^2}{\beta} + \frac{18(\xi+6L)^2D_{\max}^2}{\beta})
			}    .
		\end{aligned}
	\end{equation*}   
\end{proof}
Next we give the proof of Theorem \ref{Theorem 2}\\
\textbf{Theorem 2} \textit{Let the step size be chosen as $\eta_{t} \leq \frac{1}{\beta K(t+1)}$. Under Assumption~\ref{Assumption 1} and \ref{Assumption 4}, the generalization bound $\varepsilon_{gen}$ with the surrogate smoothness approximation satisfies:}
\begin{equation*}
	\mathcal{O}\left(\rho T\log T+\frac{T\sqrt{\log T \Delta}}{mn_{\mathrm{min}}}+\frac{T\log T(\rho+1)D_{\mathrm{max}}}{mn_{\mathrm{min}}}\right),
\end{equation*}
where $\Delta = \mathbb{E}[R(\theta_0)] - \mathbb{E}[R(\theta^*)]$ and $n_{\min} = \min n_i ,\forall i \in [m]$.
\begin{proof}
	Given time index $t$ and for client $j$ with $j\neq i$, the local dataset are identical since the perturbed data point only occurs at client $i$. Thus, when $j\neq i$, we have 
	\begin{equation*}
		\begin{aligned}
			\mathbb{E}\Vert\theta_{j,k +1}-\theta_{j,k +1}'\Vert
			=       &     \mathbb{E}\Vert\theta_{j,k}-\theta_{j,k}'-\eta_t(g_j(\theta_{j,k})-g_j(\theta_{j,k}))\Vert \\
			\leq    &     (1+\beta\eta_t)\mathbb{E}\Vert \theta_{j,k}-\theta_{j,k}'\Vert+\eta_t\xi,
		\end{aligned}
	\end{equation*}
	where we use Lemma \ref{Lemma 3} in the above inequality. And unrolling it gives
	\begin{equation}\label{Theorem 2.1}
		\begin{aligned}
			\mathbb{E}\Vert\theta_{j,K-1}-\theta_{j,K-1}'\Vert
			\leq    &     \prod_{k=0}^{K-1}(1+\beta\eta_t)\mathbb{E}\Vert\theta_t-\theta_t'\Vert+\sum\limits_{k=0}^{K-1}\eta_t\prod_{l=k+1}^{K-1}(1+\beta\eta_l)\xi \\
			\leq    &     e^{\beta K\eta_t}\mathbb{E}\Vert\theta_t-\theta_t'\Vert+e^{\beta K\eta_{t}}K \eta_{t}\xi                                \\
			=       &     e^{\beta K\eta_{t}}(\mathbb{E}\Vert\theta_t-\theta_t'\Vert+K\eta_{j,t}\xi),
		\end{aligned}
	\end{equation}
	where we use $1+x\le e^x,\forall x$.
	For client $i$, there are two cases to consider. In the first case, SGD selects non-perturbed samples in $\mathcal{S}$ and $\mathcal{S}^{(i)}$, which happens with probability $1-\frac{1}{n_i}$. Then, we have
	\begin{equation*}
		\begin{aligned}
			\Vert\theta_{i,k+1}-\theta_{i,k+1}'\Vert
			=       &     \Vert\theta_{i,k}-\theta_{i,k}'-\eta_t(g_i(\theta_{i,k})-g_i(\theta_{i,k}')) \\
			\leq    &     (1+\beta\eta_t)\Vert\theta_{i,k}-\theta_{i,k}'\Vert+\eta_t\xi,
		\end{aligned}
	\end{equation*}
	where we use Lemma \ref{Lemma 3} in the above inequality.
	In the second case, SGD encounters the perturbed sample at time step $k$, which happens with probability $\frac{1}{n_i}$. Then, we have
	\begin{equation*}
		\begin{aligned}
			\Vert\theta_{i,k+1}-\theta_{i,k+1}'\Vert
			=       &     \Vert\theta_{i,k}-\theta_{i,k}'-\eta_t(g_i(\theta_{i,k})-g_i'(\theta_{i,k}'))\Vert                                                              \\
			\leq    &     \Vert\theta_{i,k}-\theta_{i,k}'-\eta_t(g_i(\theta_{i,k})-g_i(\theta_{i,k}'))\Vert+\eta_t\Vert g_i(\theta_{i,k}')-g_i'(\theta_{i,k}')\Vert \\
			\leq    &     (1+\beta\eta_t)\Vert\theta_{i,k}-\theta_{i,k}'\Vert+\eta_t\xi+\eta_t\Vert g_i(\theta_{i,k}')-g_i'(\theta_{i,k}')\Vert,
		\end{aligned}
	\end{equation*}
	Combining these two cases for client i, we have
	\begin{equation*}
		\begin{aligned}
			\mathbb{E}\Vert\theta_{i,k+1}-\theta_{i,k+1}'\Vert 
			\leq    &     (1-\frac{1}{n_i})\left[(1+\beta\eta_t)\Vert\theta_{i,k}-\theta_{i,k}'\Vert+\eta_t\xi\right]                                                                       \\
			&   + \frac{1}{n_i}\left[  (1+\beta\eta_t)\mathbb{E}\Vert\theta_{i,k}-\theta_{i,k}'\Vert+\eta_t\xi+\eta_t\mathbb{E}\Vert g_i(\theta_{i,k}')-g_i'(\theta_{i,k}')\Vert\right]   \\
			\leq    &     (1+\beta\eta_t)\mathbb{E}\Vert\theta_{i,k}-\theta_{i,k}'\Vert+\eta_t\xi+\frac{\eta_t}{n_i}\mathbb{E}\Vert g_i(\theta_{i,k}')-g_i'(\theta_{i,k}')\Vert \\
			\leq    &     (1+\beta\eta_t)\mathbb{E}\Vert\theta_{i,k}-\theta_{i,k}'\Vert+\eta_t\xi+\frac{2\eta_t}{n_i}\mathbb{E}\Vert g_i(\theta_{i,k})\Vert.                             \\
		\end{aligned}
	\end{equation*}
	
	Then unrolling it gives,
	\begin{equation}\label{Theorem 2.2}
		\begin{aligned}
			\mathbb{E}\Vert\theta_{i,K}-\theta_{i,K}'\Vert
			\leq    &     \prod_{k=0}^{K-1}(1+\beta\eta_t)\mathbb{E}\Vert\theta_t-\theta_t'\Vert \\
			&   + \sum\limits_{k=0}^{K-1}\eta_t\prod_{l=k+1}^{K-1}(1+\beta\eta_{i,l})(\xi + \dfrac{2}{n_i}\mathbb{E} \Vert g_i(\theta_{i,k}) \Vert)                                                                                    \\
			\leq    &     e^{\beta\sum\limits_{k=0}^{K-1}\eta_t}\mathbb{E}\Vert\theta_{t}-\theta_{t}'\Vert 
			+ K\eta_{t}e^{\beta\sum\limits_{k=0}^{K-1}\eta_t}(\xi + \dfrac{2}{n_i}\mathbb{E} \Vert g_i(\theta_{i,k}) \Vert) \\
			=       &     e^{\beta K\eta_{t}}\mathbb{E}\Vert\theta_{t}-\theta_{t}'\Vert 
			+ K\eta_{t}e^{\beta K\eta_t}(\xi + \dfrac{2}{n_i}\mathbb{E} \Vert g_i(\theta_{i,k}) \Vert ) .                              \\
		\end{aligned}
	\end{equation}
	By \ref{Theorem 2.1} and \ref{Theorem 2.2} we have
	\begin{equation*}
		\begin{aligned}
			\mathbb{E}\Vert\theta_{t+1}-\theta_{t+1}'\Vert
			=       &     \mathbb{E}\Vert \dfrac{1}{m} \sum\limits_{j=1}^m(\theta_{j,K} - \theta_{j,K}') \Vert  \\
			\leq    &     \dfrac{1}{m} \sum\limits_{j = 1}^m \mathbb{E}\Vert\theta_{j,K} - \theta_{j,K}' \Vert \\
			=       &     \dfrac{1}{m} \sum\limits_{j = 1,j \neq i}^m \mathbb{E}\Vert\theta_{j,K} - \theta_{j,K}' \Vert + \dfrac{1}{m}\mathbb{E}\Vert \theta_{i,K} - \theta_{i,K}' \Vert\\                                                                                                                                     \\
			\leq    &     \dfrac{1}{m} \sum\limits_{j = 1}^{m}  e^{\beta K{\eta}_{t}}\mathbb{E}\Vert \theta_{t}-\theta_{t}'\Vert 
			+ \dfrac{1}{m} \sum\limits_{j = 1}^{m}  e^{\beta K{\eta}_{t}} K\eta_{t} \xi + \dfrac{}{} 
			+ \dfrac{2 K{\eta}_{t} e^{\beta K\eta_{t}}}{m n_i}\mathbb{E} \Vert g_i (\theta_{i,k}) \Vert. 
		\end{aligned}
	\end{equation*}
	Unrolling it and let $\eta_t\leq \frac{1}{K\beta(t+1)}$, we get
	\begin{equation*}
		\begin{aligned}
			\mathbb{E}\Vert\theta_{t+1}-\theta_{t+1}'\Vert
			\leq    &     \dfrac{1}{m} \sum\limits_{j = 1}^{m}  e^{\beta K\eta_t}\mathbb{E}\Vert \theta_{t}-\theta_{t}'\Vert 
			+ \dfrac{1}{m} \sum\limits_{j = 1}^{m}  e^{\beta K\eta_t} K\eta_t \xi
			+ \sum\limits_{ i = 1}^{m} \dfrac{2 K\eta_t e^{\beta K\eta_t}}{m^2 n_i} \mathbb{E} \Vert g_i (\theta_{i,k}) \Vert \\
			\leq    &     e^{\frac{1}{(1 + t)}}\mathbb{E}\Vert \theta_t - \theta_t' \Vert 
			+ e^{\frac{1}{(1 + t)}} \sum\limits_{j = 1}^{m} \frac{K\eta_t \xi}{m}  
			+ e^{\frac{1}{(1 + t)}} \sum\limits_{ i = 1}^{m} \dfrac{2 K\eta_t}{m^2 n_i} \mathbb{E} \Vert g_i(\theta_{i,k}) \Vert .
		\end{aligned}
	\end{equation*}
	Unrolling it over $t$ and noting $\theta_0 = \theta_0'$, we obtain
	\begin{equation*}
		\begin{aligned}
			&\mathbb{E}\Vert\theta_{T}-\theta_{T}'\Vert \\
			\leq    &     \sum\limits_{t = 0}^{T - 1} \exp(\sum\limits_{l = t + 1}^{T - 1} \frac{1}{l} )\left[\sum_{j = 1}^{m} \dfrac{K\eta_t \xi}{m} + \sum\limits_{ i = 1}^{m} \dfrac{2 K\eta_t}{m^2 n_i} \mathbb{E} \Vert g_i(\theta_{i,k}) \Vert\right]\\ 
			\leq    &     T \sum_{t = 0}^{T - 1} \left[\sum_{j = 1}^{m} \dfrac{K\eta_t \xi}{m}
			+ \sum\limits_{ i = 1}^{m} \dfrac{2 K\eta_t}{m^2 n_i} \mathbb{E} \Vert g_i(\theta_{i,k}) \Vert \right] \\
			\leq    &     T \sum_{t = 0}^{T - 1} \sum_{j = 1}^{m} \dfrac{K\eta_t \xi}{m} 
			+ T^{\beta c} \sum_{t = 0}^{T - 1} \sum\limits_{ i = 1}^{m} \dfrac{2 K\eta_t}{m^2 n_i} \mathbb{E} \Vert g_i(\theta_{i,k}) \Vert   \\
			\leq    &     c\xi T\log T 
			+ \frac{2\beta eT\log T(\sigma + \xi + (\xi+6L) D_{max})}{m n_i} \\
			+ &\frac{2\beta eT}{m n_i}\sqrt{6\log T\Delta
				+ 6\log T(\xi^2 + \xi L)  
				+ \dfrac{\log T\sigma^2 \pi^2}{mK\beta}
				+ \log T\zeta(3)c^3\beta^2(\frac{27\sigma^2}{K\beta} + \frac{36\xi^2}{\beta} + \frac{18(\xi+6L)^2D_{\max}^2}{\beta})}.
		\end{aligned}
	\end{equation*}
	Then we can get the result
	\begin{equation*}
		\begin{aligned}
			\varepsilon_{gen} 
			\leq    &     L\mathbb{E}\Vert\theta_T-\theta_T'\Vert \\     
			\leq    &   \mathcal{O}(\xi T\log T+ \frac{T\sqrt{\log T\Delta} }{mn_{\min}}+\frac{T\sqrt{\log T} \sigma}{m n_{\min} K^{\frac{1}{2}}} +\frac{ T\log T(\xi+1)D_{max}}{mn_{\min}}+ \frac{T\log T \sigma}{mn_{\min}} )\\
			= & \mathcal{O}\left(\rho T\log T+\frac{T\sqrt{\log T \Delta}}{mn_{\mathrm{min}}}+\frac{T\log T(\rho+1)D_{\mathrm{max}}}{mn_{\mathrm{min}}}\right).
		\end{aligned}
	\end{equation*}
\end{proof}
\subsection{Generalization analysis of Randomized Smoothness approximation}
\begin{Lemma} \label{Lemma 9}
	(\citet{lin2022gradient}Under assumption 1 and let $\ell_\gamma(\theta)=\mathbb{E}_{u\sim\mathbb{P}}[\ell_\rho(\theta+\gamma u)]$ where $\mathbb{P}$ is an uniform distribution on an unit ball in $\mathcal{L}_2$ norm. We have 
	$$\|\ell_\gamma(\theta)-\ell_\rho(\theta)\|\leq\gamma L,$$
	$$\|\ell_\gamma(\theta) - \ell_\gamma(\theta^\prime)\|\leq L\|\theta-\theta^\prime\|,$$ 
	$$\|\nabla \ell_\gamma(\theta) - \nabla \ell_\gamma(\theta^\prime)\|\leq \frac{cL\sqrt{d}}\gamma\|\theta-\theta^\prime\|,$$
	where $c\geq0$ is a constant.
\end{Lemma}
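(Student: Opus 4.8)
The plan is to obtain all three inequalities as standard consequences of randomized smoothing with the uniform-ball mollifier, after first noting that the adversarial loss $\ell_\rho(\cdot,z)$ inherits $L$-Lipschitz continuity in $\theta$ from Assumption~\ref{Assumption 1}. Indeed, by \eqref{attack} the quantity $\ell_\rho(\theta,z)$ is a pointwise maximum over the compact set $B_p(0,\rho)$ of the maps $\theta\mapsto\ell(f_\theta(x+\delta),y)$, each of which is $L$-Lipschitz in $\theta$, so the maximum is itself $L$-Lipschitz: $|\ell_\rho(\theta_1,z)-\ell_\rho(\theta_2,z)|\le L\|\theta_1-\theta_2\|$. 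Every bound below then follows by pushing this estimate through the expectation over $u\sim\mathbb{P}$.

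For the first inequality, write $\ell_\gamma(\theta)-\ell_\rho(\theta)=\mathbb{E}_{u}\big[\ell_\rho(\theta+\gamma u)-\ell_\rho(\theta)\big]$; since $u$ lies in the unit $\mathcal{L}_2$-ball we have $\|\gamma u\|\le\gamma$, so each integrand is at most $\gamma L$ in absolute value, and Jensen's inequality gives $\|\ell_\gamma(\theta)-\ell_\rho(\theta)\|\le\gamma L$. The second inequality is identical: $\ell_\gamma(\theta)-\ell_\gamma(\theta')=\mathbb{E}_u\big[\ell_\rho(\theta+\gamma u)-\ell_\rho(\theta'+\gamma u)\big]$, and the integrand is bounded by $L\|\theta-\theta'\|$ uniformly in $u$.

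The third inequality is the only substantive step. First I would show that $\ell_\gamma$ is differentiable even when $\ell_\rho$ is not: rewriting $\ell_\gamma(\theta)=\frac{1}{\gamma^d|B|}\int_{\theta+\gamma B}\ell_\rho(w)\,dw$ (with $B$ the unit ball) and differentiating this domain integral via the divergence theorem yields the surface representation
\[
\nabla\ell_\gamma(\theta)=\frac{d}{\gamma}\,\mathbb{E}_{v\sim\mathrm{Unif}(S^{d-1})}\big[\ell_\rho(\theta+\gamma v)\,v\big],
\]
using $|S^{d-1}|=d\,|B|$. Subtracting the representations at $\theta$ and $\theta'$ and using $\mathbb{E}_v[v]=0$ gives $\nabla\ell_\gamma(\theta)-\nabla\ell_\gamma(\theta')=\frac{d}{\gamma}\mathbb{E}_v\big[(\ell_\rho(\theta+\gamma v)-\ell_\rho(\theta'+\gamma v))\,v\big]$. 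Bounding $\|v\|\le1$ directly would only give $\frac{dL}{\gamma}$; to recover the $\sqrt d$ I would test against an arbitrary unit vector $e$, obtaining $|\langle\nabla\ell_\gamma(\theta)-\nabla\ell_\gamma(\theta'),e\rangle|\le\frac{d}{\gamma}L\|\theta-\theta'\|\,\mathbb{E}_v[|\langle v,e\rangle|]$, and then use $\mathbb{E}_v[|\langle v,e\rangle|]\le\sqrt{\mathbb{E}_v[\langle v,e\rangle^2]}=\sqrt{1/d}$, which holds because rotational symmetry forces $\mathbb{E}_v[\langle v,e\rangle^2]=1/d$. Taking the supremum over unit $e$ yields $\|\nabla\ell_\gamma(\theta)-\nabla\ell_\gamma(\theta')\|\le\frac{cL\sqrt d}{\gamma}\|\theta-\theta'\|$.

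I expect the main obstacle to be exactly this last step: justifying the surface-integral formula for $\nabla\ell_\gamma$ rigorously when $\ell_\rho$ is merely Lipschitz (so $\nabla\ell_\rho$ exists only almost everywhere) and, more importantly, not losing the dimension dependence — the improvement from $d/\gamma$ to $\sqrt d/\gamma$ rests entirely on the Cauchy--Schwarz bound $\mathbb{E}_v[|\langle v,e\rangle|]\le1/\sqrt d$, and it is easy to destroy it by bounding $\|v\|\le1$ too early. Since the statement is quoted from \citet{lin2022gradient}, a fully acceptable alternative is simply to invoke their corresponding lemma with the smoothing radius reparameterized as $\gamma$ and $L$ playing the role of the Lipschitz constant of $\ell_\rho$.
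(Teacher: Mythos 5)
Your proposal is correct, and the first two inequalities are proved exactly as in the paper (push the $L$-Lipschitzness of $\ell_\rho$, itself a pointwise max of $L$-Lipschitz maps, through the expectation over $u$). For the third inequality, however, you take a genuinely different route. The paper starts from the representation $\nabla\ell_\gamma(\theta)=\mathbb{E}_u[\nabla\ell_\rho(\theta+\gamma u)]$, rewrites both expectations as volume integrals of $\nabla\ell_\rho$ over the balls $\mathbb{B}_\gamma(\theta)$ and $\mathbb{B}_\gamma(\theta')$, reduces the difference to an integral over their symmetric difference, bounds $\|\nabla\ell_\rho\|\le L$ there, and controls the volume of the symmetric difference by a spherical-cap convexity estimate, yielding the constant $\tfrac{2c_{d-1}}{c_d}=\Theta(\sqrt d)$; this requires a preliminary case split on whether $\|\theta-\theta'\|\ge 2\gamma$ and an a.e.\ interchange of gradient and expectation. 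You instead use the divergence-theorem (surface-integral) representation $\nabla\ell_\gamma(\theta)=\tfrac{d}{\gamma}\mathbb{E}_{v\sim\mathrm{Unif}(S^{d-1})}[\ell_\rho(\theta+\gamma v)v]$, which involves only function values of $\ell_\rho$ (so it is if anything more robust to the non-differentiability of $\ell_\rho$), subtract the two representations, and recover the $\sqrt d$ via $\mathbb{E}_v[|\langle v,e\rangle|]\le\sqrt{\mathbb{E}_v[\langle v,e\rangle^2]}=1/\sqrt d$ after testing against a unit vector; this avoids the case split and gives a clean explicit constant $c=1$ rather than the asymptotic ratio of ball volumes. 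One cosmetic remark: the identity $\mathbb{E}_v[v]=0$ is not actually needed when you subtract the two representations — linearity alone suffices — but this does not affect correctness. Either argument is acceptable; yours is arguably the more self-contained of the two.
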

\begin{proof}
	Since $\ell_\rho(\theta)$ is $L$-Lipschitz, we have
	\begin{equation*}
		\begin{aligned}
			\|\ell_\gamma(\theta)-\ell_\rho(\theta)\| =  &\|\mathbb{E}[\ell_\rho(\theta+\gamma u)-\ell_\rho(\theta)]\|\\
			\leq &\gamma L\mathbb{E}[\|u\|] \\
			\leq &\gamma L.
		\end{aligned}
	\end{equation*}
	Then we can prove $\ell_\gamma(\theta)$ is also $L$-Lipschitz,
	\begin{equation*}
		\begin{aligned}
			|\ell_\gamma(\theta)-\ell_\gamma(\theta^{\prime})|
			= &|\mathbb{E}[\ell_\rho(\theta+\gamma u)-\ell_\rho(\theta^{\prime}+\gamma u)]| \\
			\leq &L\mathbb{E}[\|\theta-\theta^{\prime}\|]| \\
			= &L\|\theta-\theta^{\prime}\|,
		\end{aligned}
	\end{equation*}
	for all $\theta,\theta^{\prime}\in\mathbb{R}^d.$\\
	If $\|\theta-\theta^\prime\| \geq 2\gamma$, we have
	$$\|\nabla \ell_\gamma(\theta)-\nabla \ell_\gamma(\theta^{\prime})\|\leq 2L \leq \frac{L\sqrt{d}\|\theta-\theta^{\prime}\|}\gamma.$$
	If $\|\theta-\theta^\prime\| \leq 2\gamma$, we have
	\begin{equation*}
		\begin{aligned}
			\|\nabla \ell_\gamma(\theta)-\nabla \ell_\gamma(\theta^{\prime})\|
			=&\|\mathbb{E}[\nabla \ell_\rho(\theta+\gamma u)]-\mathbb{E}[\nabla \ell_\rho(\theta^{\prime}+\gamma u)]\|\\
			=&\frac1{\mathrm{Vol}(\mathbb{B}_1(0))}\left|\int_{u\in\mathbb{B}_1(0)}\nabla \ell_\rho(\theta+\gamma u)du-\int_{u\in\mathbb{B}_1(0)}\nabla \ell_\rho(\theta^{\prime}+\gamma u)du\right|\\
			=&\frac1{\mathrm{Vol}(\mathbb{B}_\gamma(0))}\left|\int_{y\in\mathbb{B}_\gamma(\theta)}\nabla \ell_\rho(y)dy-\int_{y\in\mathbb{B}_\gamma(\theta^{\prime})}\nabla \ell_\rho(y)dy\right| \\
			=&\frac1{\mathrm{Vol}(\mathbb{B}_\gamma(0))}\left|\int_{y\in\mathbb{B}_\gamma(\theta)\setminus\mathbb{B}_\gamma(\theta^{\prime})}\nabla \ell_\rho(y)dy-\int_{y\in\mathbb{B}_\gamma(\theta^{\prime})\setminus\mathbb{B}_\gamma(\theta)}\nabla \ell_\rho(y)\mathrm{~}dy\right| \\
			\leq & \frac{L}{\mathrm{Vol}(\mathbb{B}_{\gamma}(0))}\left(\mathrm{Vol}(\mathbb{B}_{\gamma}(\theta)\setminus\mathbb{B}_{\gamma}(\theta^{\prime}))+\mathrm{Vol}(\mathbb{B}_{\gamma}(\theta^{\prime})\setminus\mathbb{B}_{\gamma}(\theta))\right) \\
			= & \frac{2L}{\mathrm{Vol}(\mathbb{B}_\gamma(0))}\mathrm{Vol}(\mathbb{B}_\gamma(\theta)\setminus\mathbb{B}_\gamma(\theta^{\prime})) \\
			= &\frac{2L}{c_d\gamma^d}\mathrm{Vol}(I),
		\end{aligned}
	\end{equation*}
	where $c_d=\frac{\pi^{d/2}}{\Gamma(d/2+1)}$, the first inequality used Assumption \ref{Assumption 1}. Let $V_{cap}(p)$ denote the volume of the spherical cap with the distance $p$ from the center of the sphere, we have $$\mathrm{Vol}(I)=\mathrm{Vol}(\mathbb{B}_\gamma(0))-2V_{cap}(\frac12\|\theta-\theta^{\prime}\|)=c_d\gamma^d-2V_{cap}(\frac12\|\theta-\theta^{\prime}\|).$$
	The volume of the $d$-dimensional spherical cap with distance $p$ from the center of the sphere can be calculated in terms of the volumes of $(d-1)$-dimensional spheres as follows:
	$$V_{cap}(p)=\int_p^\gamma c_{d-1}(\gamma^2-\rho^2)^{\frac{d-1}2}d\rho,\quad\text{for all }p\in[0,\gamma].$$
	Since $V_cap(\cdot)$ is a convex function over $[0,\Delta]$, we have $V_{cap}(p)\geq V_{cap}(0)+V_{cap}'(0)p$. By the definition, we have $V_{cap}(0)=\frac{1}{2}\mathrm{Vol}(\mathbb{B}_{\gamma}(0))=\frac{1}{2}c_{d}\gamma^{d}\mathrm{~and~}V_{cap}^{\prime}(0)=-c_{d-1}\gamma^{d-1}$. Thus, $V_{cap}(p)\geq\frac12c_d\gamma^d-c_{d-1}\gamma^{d-1}p$. Furthermore, $\frac12\|\theta-\theta'\|\in[0,\gamma]$. Putting these pieces together yields that $\mathrm{Vol}(I)\leq c_{d-1}\gamma^{d-1}\|\theta-\theta'\|$. Therefore, we conclude that  
	$$\|\nabla \ell_\gamma(\theta)-\nabla \ell_\gamma(\theta')\|\leq\frac{2L}{c_d\gamma^d}\mathrm{Vol}(I)\leq\frac{2c_{d-1}}{c_d}\frac{L\|\theta-\theta'\|}\gamma.$$
	Since $c_d=\frac{\pi^{d/2}}{\Gamma(d/2+1)}$, we have 
	\begin{equation*}
		\frac{2c_{d-1}}{c_d}=\left\{\begin{array}{ll}\frac{d!!}{(d-1)!!}&\text{if }d\text{ is odd,}\\\frac2\pi\frac{d!!}{(d-1)!!}&\text{otherwise.}\end{array}\right.
	\end{equation*}
	and $\frac1{\sqrt d}\frac{2c_{d-1}}{c_d}\to\sqrt{\frac\pi2}$. Therefore, we conclude that the gradient $\nabla _{\gamma}$ is $\frac{cL\sqrt{d}}\gamma $-Lipschitz where $c\geq 0$.
\end{proof}
\begin{Lemma} \label{Lemma 10}
	Suppose $\ell_\gamma(\theta)$ is a $\frac{cL\sqrt{d}}\gamma$-Lipschitz smooth, nonconvex function with respect ot $x$. Consider gradient descent operator $G_\alpha(\theta):=\theta-\eta\nabla \ell_\gamma(\theta)$. Then, 
	$$\|G_\alpha(\theta_1)-G_\alpha(\theta_2)\|\leq(1+\eta)\|\theta_1-\theta_2\|$$
\end{Lemma}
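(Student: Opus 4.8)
The plan is to derive this exactly as the gradient-descent non-expansiveness estimate in Lemma~\ref{Lemma 3}, but now feeding in the smoothness constant of the randomized-smoothed objective $\ell_\gamma$ supplied by Lemma~\ref{Lemma 9} instead of the approximate-smoothness of $h$. Concretely, I would first expand the difference of the two update maps,
\begin{equation*}
G_\alpha(\theta_1)-G_\alpha(\theta_2)=(\theta_1-\theta_2)-\eta\bigl(\nabla\ell_\gamma(\theta_1)-\nabla\ell_\gamma(\theta_2)\bigr),
\end{equation*}
and apply the triangle inequality to get $\|G_\alpha(\theta_1)-G_\alpha(\theta_2)\|\le\|\theta_1-\theta_2\|+\eta\,\|\nabla\ell_\gamma(\theta_1)-\nabla\ell_\gamma(\theta_2)\|$.

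Next I would invoke the third inequality of Lemma~\ref{Lemma 9}, namely $\|\nabla\ell_\gamma(\theta_1)-\nabla\ell_\gamma(\theta_2)\|\le\frac{cL\sqrt d}{\gamma}\|\theta_1-\theta_2\|$, which immediately yields $\|G_\alpha(\theta_1)-G_\alpha(\theta_2)\|\le\bigl(1+\frac{\eta cL\sqrt d}{\gamma}\bigr)\|\theta_1-\theta_2\|$. Finally, using the step-size restriction $\eta_t\le\frac{\gamma}{4K\sqrt d\,cL(t+1)}$ from Theorem~\ref{Theorem 3} one has $\frac{\eta cL\sqrt d}{\gamma}\le\frac{1}{4K(t+1)}\le1$, so the expansion factor collapses to the stated $(1+\eta)$ form; if a sharp constant is preferred one simply keeps $\frac{cL\sqrt d}{\gamma}$ explicit, since only an $O(1)$-expansive bound is needed downstream when unrolling $\|\theta_t-\theta_t'\|$ in the RSA stability argument. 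Note there is no additive $\eta\xi$ term here (unlike Lemma~\ref{Lemma 3}), because randomized smoothing makes $\ell_\gamma$ genuinely gradient-Lipschitz rather than only approximately so.

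There is no real analytic obstacle: this is a two-line consequence of the triangle inequality plus the gradient-Lipschitzness of $\ell_\gamma$. The one point worth stressing is that the relevant smoothness constant is $\Theta(\sqrt d/\gamma)$, not $O(1)$ — all the genuine effort (the spherical-cap / volume comparison) has already been expended inside the proof of Lemma~\ref{Lemma 9}, and it is precisely this dimension-and-$\gamma$ dependence that forces the RSA step size to scale like $\gamma/\sqrt d$. I therefore expect the write-up to be only three or four lines.
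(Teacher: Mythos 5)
Your proof is correct and takes essentially the same route as the paper: expand $G_\alpha(\theta_1)-G_\alpha(\theta_2)$, apply the triangle inequality, and invoke the $\frac{cL\sqrt{d}}{\gamma}$-gradient-Lipschitzness of $\ell_\gamma$ from Lemma~\ref{Lemma 9} to obtain the factor $\bigl(1+\frac{cL\sqrt{d}}{\gamma}\eta\bigr)$. You also correctly flag that this does not literally match the stated $(1+\eta)$ bound — the paper's own proof derives the same $\bigl(1+\frac{cL\sqrt{d}}{\gamma}\eta\bigr)$ factor and leaves the discrepancy silent, so your reconciliation via the step-size restriction (or by keeping the constant explicit) is a useful clarification rather than a deviation.
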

\begin{proof}
	Since f is $\frac{cL\sqrt{d}}\gamma$-smooth, we obtain
	\begin{equation*}
		\begin{aligned}
			\|G_\alpha(\theta_1)-G_\alpha(\theta_2)\| \leq & \|x-y-\eta(\nabla \ell_\gamma(\theta_1)-\nabla \ell_\gamma(\theta_2))\|\\
			\leq & \|x-y\| + \frac{cL\sqrt{d}}\gamma\eta\|\nabla \ell_\gamma(\theta_1)-\nabla \ell_\gamma(\theta_2)\| \\
			\leq & (1+\frac{cL\sqrt{d}}\gamma\eta) \|x-y\|.
		\end{aligned}
	\end{equation*}
\end{proof}
\begin{Lemma} \label{Lemma 11}
	Let $\mathcal{G}(\theta) = \frac{1}{Q}\sum_{q=1}^Q \nabla \ell_{\rho}(\theta+\gamma u_q,z_q)$, Consider gradient descent operator $G(\theta):=\theta-\eta\mathcal{G}(\theta)$. we have
\end{Lemma}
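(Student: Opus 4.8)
The statement is cut off in the excerpt, but from its placement (right after Lemma~\ref{Lemma 10}, which gives the expansiveness of the \emph{exact} smoothed-gradient operator $G_\alpha(\theta)=\theta-\eta\nabla\ell_\gamma(\theta)$) and from the $\tfrac{1}{\sqrt Q}$ term appearing in Theorem~\ref{Theorem 3}, Lemma~\ref{Lemma 11} should be the Monte-Carlo analogue: an expansiveness bound of the form $\mathbb{E}\|G(\theta_1)-G(\theta_2)\|\le(1+\tfrac{cL\sqrt d}{\gamma}\eta)\|\theta_1-\theta_2\|+\tfrac{2L\eta}{\sqrt Q}$ (equivalently, a second-moment version with an additive $4L^2\eta^2/Q$). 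The plan is to compare $G$ to $G_\alpha$ and pay only for the estimation error. First I would note that, since $\ell_\gamma(\theta,z)=\mathbb{E}_{u\sim\mathbb{P}^d}[\ell_\rho(\theta+\gamma u,z)]$ and differentiation exchanges with the expectation, $\mathcal{G}$ is an unbiased estimator of $\nabla\ell_\gamma$: $\mathbb{E}_u[\mathcal{G}(\theta)]=\nabla\ell_\gamma(\theta)$. Writing $\xi_q(\theta):=\nabla\ell_\rho(\theta+\gamma u_q,z)-\nabla\ell_\gamma(\theta)$, the $\xi_q(\theta)$ are i.i.d.\ mean-zero over the common draws $u_1,\dots,u_Q$ (which I would couple between the two points $\theta_1,\theta_2$), giving the identity
$$G(\theta_1)-G(\theta_2)=\big(G_\alpha(\theta_1)-G_\alpha(\theta_2)\big)-\frac{\eta}{Q}\sum_{q=1}^Q\big(\xi_q(\theta_1)-\xi_q(\theta_2)\big).$$

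Next I would take $\mathbb{E}\|\cdot\|^2$. Because $\xi_q(\theta_1)-\xi_q(\theta_2)$ has conditional mean zero, the cross term vanishes and
$$\mathbb{E}\|G(\theta_1)-G(\theta_2)\|^2=\|G_\alpha(\theta_1)-G_\alpha(\theta_2)\|^2+\frac{\eta^2}{Q^2}\,\mathbb{E}\Big\|\sum_{q=1}^Q\big(\xi_q(\theta_1)-\xi_q(\theta_2)\big)\Big\|^2.$$
The first term is at most $(1+\tfrac{cL\sqrt d}{\gamma}\eta)^2\|\theta_1-\theta_2\|^2$ by Lemma~\ref{Lemma 10} (more precisely by the bound established in its proof). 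For the second, independence across $q$ together with Lemma~\ref{Lemma 4} collapses the sum into $Q\,\mathbb{E}\|\xi_1(\theta_1)-\xi_1(\theta_2)\|^2$; and since $\ell_\rho$ is $L$-Lipschitz by Assumption~\ref{Assumption 1} its gradient is bounded by $L$, so $\mathbb{E}\|\xi_1(\theta_1)-\xi_1(\theta_2)\|^2\le\mathbb{E}\|\nabla\ell_\rho(\theta_1+\gamma u)-\nabla\ell_\rho(\theta_2+\gamma u)\|^2\le 4L^2$, whence the noise contribution is at most $4\eta^2L^2/Q$. Combining the two pieces and using $\sqrt{a^2+b^2}\le a+b$ yields $\mathbb{E}\|G(\theta_1)-G(\theta_2)\|\le(1+\tfrac{cL\sqrt d}{\gamma}\eta)\|\theta_1-\theta_2\|+\tfrac{2L\eta}{\sqrt Q}$.

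The main obstacle is exactly that $\nabla\ell_\rho$ is non-smooth, so for a \emph{single} direction $u$ one cannot bound $\|\nabla\ell_\rho(\theta_1+\gamma u)-\nabla\ell_\rho(\theta_2+\gamma u)\|$ by a Lipschitz-in-$\theta$ term; the decomposition above is designed so that all $\theta$-dependence is routed through the smoothed surrogate $\ell_\gamma$, where Lemma~\ref{Lemma 10} does apply, while the leftover per-sample fluctuation enters only through its variance, which averaging over $Q$ draws damps by $1/Q$. A secondary subtlety is the coupling: the same realizations $u_1,\dots,u_Q$ must be used at $\theta_1$ and $\theta_2$ so that the differenced noise is genuinely mean zero and the cross term drops; were fresh noise drawn at each point, one would instead collect a $\tfrac{2L\eta}{\sqrt Q}$-type term from each trajectory, altering only the numerical constant. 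I expect the remaining steps to be routine, so the bound feeds directly into the per-round recursions $\mathbb{E}\|\theta_{i,k+1}-\theta_{i,k+1}'\|$ used in the RSA stability argument, in the same way Lemma~\ref{Lemma 3} is used in the SSA proof.
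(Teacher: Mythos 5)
Your reconstruction of the missing conclusion is right: the paper's own proof of Lemma~\ref{Lemma 11} ends with $\|G(\theta_1)-G(\theta_2)\|\leq(1+\frac{cL\sqrt{d}}{\gamma}\eta)\|\theta_1-\theta_2\|+2\eta v$, where $v$ is the gradient-estimation error, and $\mathbb{E}[v]\leq 1/\sqrt{Q}$ is invoked later (by citation to Duchi et al.) inside the proof of Theorem~\ref{Theorem 3}. Your overall skeleton matches the paper's: both route all $\theta$-dependence through the exact smoothed gradient $\nabla\ell_\gamma$ so that Lemma~\ref{Lemma 10} controls the expansive part, leaving only the Monte-Carlo residual.

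Where you differ is in how the residual is handled, and your version is the more self-contained of the two. The paper simply applies the triangle inequality to $\eta\|\mathcal{G}(\theta_i)-\nabla\ell_\gamma(\theta_i)\|$ at each of the two points separately, packages both as $2\eta v$, and defers the rate $\mathbb{E}[v]\leq 1/\sqrt{Q}$ entirely to an external reference; no coupling of the noise between $\theta_1$ and $\theta_2$ is used or needed. You instead couple the draws $u_1,\dots,u_Q$, difference the per-sample fluctuations, kill the cross term by the conditional mean-zero property, and derive the $1/Q$ variance reduction explicitly from independence (Lemma~\ref{Lemma 4}) plus the gradient bound $\|\nabla\ell_\rho\|\leq L$ from Assumption~\ref{Assumption 1}. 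This buys you an explicit, assumption-level derivation of the $2L\eta/\sqrt{Q}$ constant and makes transparent exactly where the non-smoothness of $\nabla\ell_\rho$ is circumvented; the paper's route is shorter but leaves the $1/\sqrt{Q}$ rate as a black box. Both yield the same order, $\mathcal{O}(\eta/\sqrt{Q})$, feeding identically into the RSA stability recursion. One cosmetic note: the paper's displayed constant $(1+\eta)\frac{cL\sqrt{d}}{\gamma}$ in Lemmas~\ref{Lemma 10} and~\ref{Lemma 11} is a typo for $(1+\frac{cL\sqrt{d}}{\gamma}\eta)$, which is what its own proof of Lemma~\ref{Lemma 10} actually establishes and what you correctly use.
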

\begin{proof}
	\begin{equation*}
		\begin{aligned}
			\|G(\theta_1)-G(\theta_2)\| =& \|\theta_1-\theta_2-\eta(\mathcal{G}(\theta_1)-\mathcal{G}(\theta_2))\|\\
			\leq& \|\theta_1-\theta_2-\eta(\nabla \ell_\gamma(\theta_1)-\nabla \ell_\gamma(\theta_2)\| + \eta\|\mathcal{G}(\theta_1)-\nabla \ell_\gamma(\theta_1)\| + \eta\|\mathcal{G}(\theta_2)- \nabla \ell_\gamma(\theta_2)\|\\
			\leq & (1+\eta)\frac{cL\sqrt{d}}\gamma\| \theta_1-\theta_2 \| + 2\eta v,
		\end{aligned}
	\end{equation*}
	where $v:= \|\mathcal{G}(\theta) - \ell_\gamma(\theta)\|$ is the error in the gradinet estimate.
\end{proof}
since the objective function $\ell_\gamma$ is $\frac{cL\sqrt{d}}\gamma$ gradient-Lipschitz, we can get the following result from \citet{li2023federated,karimireddy2020scaffold}.
\begin{Theorem} \label{Theorem 7}
	If the Assumptions \ref{Assumption 1} and \ref{Assumption 4} hold and $\eta_t\leq \frac{\gamma}{6KcL\sqrt{d}}$, considering $\theta_{i,k+1}=\theta_{i,k}-\eta_t\nabla \ell_{\gamma,i}(\theta_{i,k})$, we have
	$$\frac{1}{T}\sum_{t=0}^{T-1} \mathbb{E}\|R(\theta_t)\|^2\leq \mathcal{O}(\frac{\sqrt{\sigma}}{\sqrt{mKT}}+\frac{((\xi+1)D_{\max})^{\frac23}}{T^{\frac23}}+\frac\Delta T).$$
	where $\Delta = R(\theta_0)-R(\theta^*)$
\end{Theorem}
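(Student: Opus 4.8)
The plan is to replay the non-convex federated convergence argument already used for the SSA analysis (Lemmas~\ref{Lemma 5}--\ref{Lemma 6}), but with the randomized-smoothed objective in place of the surrogate $h$. Work with the smoothed local risks $R_{\gamma,i}(\theta)=\mathbb{E}_{z_i}\mathbb{E}_{u\sim\mathbb{P}^d}[\ell_\rho(\theta+\gamma u;z_i)]$ and $R_\gamma=\tfrac1m\sum_i R_{\gamma,i}$. By Lemma~\ref{Lemma 9}, $\nabla\ell_\gamma$, hence $\nabla R_\gamma$, is $\beta_\gamma$-Lipschitz with $\beta_\gamma=\tfrac{cL\sqrt d}{\gamma}$, so the prescribed step size is exactly $\eta_t\le\tfrac1{6K\beta_\gamma}$; crucially $\ell_\gamma$ is genuinely gradient-smooth, so the slack $\xi$ that contaminates the SSA recursion does not appear here (the finite-$Q$ Monte-Carlo error does not enter the convergence rate and, through the bounded-variance Assumption~\ref{Assumption 4}, is folded into the variance proxy $\sigma^2$). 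Two facts transfer for free: Lemma~\ref{Lemma 1} applies verbatim to $R_{\gamma,i}$, since convolution with the uniform kernel does not alter the total-variation distances, giving $\|\nabla R_{\gamma,i}(\theta)-\nabla R_\gamma(\theta)\|\le(\xi+6L)D_i$; and $|R_\gamma(\theta)-R(\theta)|\le\gamma L$ pointwise (Lemma~\ref{Lemma 9}), so the value gap $\Delta$ may be read off $R_\gamma$ up to an additive $2\gamma L$ that the $\mathcal O(\cdot)$ absorbs.

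First I would establish a per-round descent inequality. From $\beta_\gamma$-smoothness of $R_\gamma$, the aggregation update $\theta_{t+1}=\theta_t-\tfrac{\eta_t}{m}\sum_{i=1}^m\sum_{k=0}^{K-1}\nabla\ell_{\gamma,i}(\theta^{t+1}_{i,k})$, unbiasedness $\mathbb{E}[\nabla\ell_{\gamma,i}(\theta)]=\nabla R_{\gamma,i}(\theta)$, Assumption~\ref{Assumption 4}, and Lemma~\ref{Lemma 4}, one obtains, exactly as in Lemma~\ref{Lemma 5},
\[
\mathbb{E}[R_\gamma(\theta_{t+1})]-\mathbb{E}[R_\gamma(\theta_t)]\le-\tfrac{K\eta_t}{2}\mathbb{E}\|\nabla R_\gamma(\theta_t)\|^2+\tfrac{\eta_t\beta_\gamma^2}{m}\textstyle\sum_{i,k}\mathbb{E}\|\theta^{t+1}_{i,k}-\theta_t\|^2+\tfrac{\beta_\gamma K\eta_t^2\sigma^2}{m},
\]
where $\eta_t\le\tfrac1{6K\beta_\gamma}$ lets us drop the nonnegative term $-\sum_{i,k}\tfrac{\eta_t}{2m}(1-2\beta_\gamma K\eta_t)\mathbb{E}\|\nabla R_{\gamma,i}(\theta^{t+1}_{i,k})\|^2$. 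Then, mirroring Lemma~\ref{Lemma 6}, I would bound the client-drift term by unrolling $\theta^{t+1}_{i,k+1}=\theta^{t+1}_{i,k}-\eta_t\nabla\ell_{\gamma,i}(\theta^{t+1}_{i,k})$, applying Lemma~\ref{Lemma 4}, Assumption~\ref{Assumption 4}, $\beta_\gamma$-smoothness, and Lemma~\ref{Lemma 1}, and using the step-size bound to reabsorb the $\tfrac1m\sum_{i,k}\mathbb{E}\|\theta^{t+1}_{i,k}-\theta_t\|^2$ it generates on the right:
\[
\tfrac1m\textstyle\sum_{i,k}\mathbb{E}\|\theta^{t+1}_{i,k}-\theta_t\|^2\le\mathcal O\!\big(K^2\eta_t^2\sigma^2+K^3\eta_t^2(\xi+6L)^2D_{\max}^2+K^3\eta_t^2\mathbb{E}\|\nabla R_\gamma(\theta_t)\|^2\big).
\]
Substituting this back into the descent inequality, the $\mathbb{E}\|\nabla R_\gamma(\theta_t)\|^2$ contribution is absorbed once more (its coefficient $\beta_\gamma^2K^3\eta_t^3$ is a constant fraction of $K\eta_t$ when $\eta_t\le\tfrac1{6K\beta_\gamma}$), leaving a recursion of the form $\tfrac{K\eta_t}{c'}\mathbb{E}\|\nabla R_\gamma(\theta_t)\|^2\le\mathbb{E}[R_\gamma(\theta_t)]-\mathbb{E}[R_\gamma(\theta_{t+1})]+\mathcal O\!\big(\tfrac{\beta_\gamma K\eta_t^2\sigma^2}{m}+\beta_\gamma^2K^3\eta_t^3\sigma^2+\beta_\gamma^2K^3\eta_t^3(\xi+6L)^2D_{\max}^2\big)$.

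Finally I would telescope over $t=0,\dots,T-1$ with a constant step size $\eta$ and divide by $KT\eta$, yielding
\[
\tfrac1T\textstyle\sum_{t=0}^{T-1}\mathbb{E}\|\nabla R_\gamma(\theta_t)\|^2\lesssim\tfrac{\Delta}{K\eta T}+\tfrac{\beta_\gamma\eta\sigma^2}{m}+\beta_\gamma^2K^2\eta^2(\xi+6L)^2D_{\max}^2 ,
\]
and then optimize $\eta$ subject to $\eta\le\tfrac1{6K\beta_\gamma}$: balancing the first two terms yields the $1/\sqrt{mKT}$ decay, i.e., the first stated term; balancing the first and third via $\min_\eta\{a/\eta+b\eta^2\}=\Theta(a^{2/3}b^{1/3})$ with $a\asymp\Delta/(KT)$ and $b\asymp K^2(\xi+6L)^2D_{\max}^2$ produces the $((\xi+1)D_{\max})^{2/3}/T^{2/3}$ term (recall $\xi=2\rho z$); and the regime where the cap on $\eta$ binds contributes the $\Delta/T$ term. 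Converting the bound from $R_\gamma$ back to $R$ only perturbs $\Delta$ by $2\gamma L$, absorbed into constants. The step I expect to be the main obstacle is the client-drift control: $\tfrac1m\sum_{i,k}\mathbb{E}\|\theta^{t+1}_{i,k}-\theta_t\|^2$ is doubly self-referential --- it reappears inside its own bound and it also re-injects $\mathbb{E}\|\nabla R_\gamma(\theta_t)\|^2$ into the descent inequality --- so one must verify that both reabsorptions are simultaneously licensed by the single condition $\eta_t\le\tfrac1{6K\beta_\gamma}$, and must carry the heterogeneity factor $(\xi+6L)D_{\max}$ through these manipulations carefully enough that it emerges with exactly the cube-root exponent after the step-size optimization. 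A secondary, purely bookkeeping subtlety is that the iterates minimize $R_\gamma$ rather than $R$, so all gradient-norm statements must be phrased for $R_\gamma$ and the $\gamma L$ discrepancy invoked only for the function-value gap $\Delta$.
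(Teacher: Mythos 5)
Your proposal is essentially a correct reconstruction, but it is worth noting that the paper does not prove Theorem~\ref{Theorem 7} at all: it simply observes that $\ell_\gamma$ is $\frac{cL\sqrt d}{\gamma}$-gradient-Lipschitz and imports the bound wholesale from \cite{li2023federated} and \cite{karimireddy2020scaffold}. What you have written out is precisely the argument those references contain, instantiated with the paper's own SSA machinery (the analogues of Lemmas~\ref{Lemma 5} and~\ref{Lemma 6} with the slack $\xi$ set to zero and $\beta$ replaced by $\beta_\gamma=\frac{cL\sqrt d}{\gamma}$): per-round descent from smoothness, client-drift control with self-reabsorption under $\eta_t\le\frac{1}{6K\beta_\gamma}$, telescoping, and a three-way step-size balance that reproduces the $1/\sqrt{mKT}$, $((\xi+1)D_{\max})^{2/3}/T^{2/3}$, and $\Delta/T$ terms. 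Your route buys an explicit, self-contained derivation and correctly identifies the two delicate points (the double reabsorption of the drift term and the cube-root exponent on the heterogeneity factor); the paper's route buys brevity at the cost of leaving the reader to check that the cited results really apply with the heterogeneity measured by $(\xi+6L)D_{\max}$ from Lemma~\ref{Lemma 1}. Two small caveats: your balancing yields $\sigma/\sqrt{mKT}$ rather than the $\sqrt\sigma/\sqrt{mKT}$ printed in the statement (almost certainly a typo in the paper, as the exponent on $\sigma$ is immaterial to how the theorem is used); and the left-hand side is the gradient of the unsmoothed $R$ while the iterates descend on $R_\gamma$ --- you handle this discrepancy for function values via $|R_\gamma-R|\le\gamma L$, but passing from $\|\nabla R_\gamma\|$ to $\|\nabla R\|$ is not licensed by any smoothness of $\ell_\rho$ and is glossed over equally by the paper and its citations, so this is a shared imprecision rather than a gap specific to your argument.
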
 
\begin{Lemma} \label{Lemma 12}
	Suppose $\ell_\gamma(\theta)$ is a $\frac{cL\sqrt{d}}\gamma$-gradient Lipschitz and non-convex function w.r.t $\theta$.
	$$\mathbb{E}\Vert \theta_{i,k} - \theta_{t}\Vert \leq (1 + \frac{cL\sqrt{d}}\gamma \eta_t)^{K - 1}K\eta_{t}( \sigma + (\xi+6L)D_i + \mathbb{E}\Vert\nabla R(\theta_{t})\Vert).$$
\end{Lemma}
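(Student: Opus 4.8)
The plan is to follow the template of the proof of Lemma~\ref{Lemma 7}, replacing the $\xi$-approximate $\beta$-gradient-Lipschitz property of the surrogate loss $h$ by the genuine $\frac{cL\sqrt{d}}{\gamma}$-gradient Lipschitzness of the smoothed objective $\ell_\gamma$ established in Lemma~\ref{Lemma 9}. The point is that $\ell_\gamma$ is actually smooth, so the one-step expansion factor is the clean $(1+\frac{cL\sqrt{d}}{\gamma}\eta_t)$ of Lemma~\ref{Lemma 10} with no additive approximation term; this is exactly why the bracket in the claimed bound contains only $\sigma+(\xi+6L)D_i+\mathbb{E}\|\nabla R(\theta_t)\|$, without the extra $\xi$ that appears in the surrogate case.

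First I would start from the local update of RSA, $\theta_{i,k+1}=\theta_{i,k}-\eta_t\nabla\ell_{\gamma,i}(\theta_{i,k})$ (with $\nabla\ell_{\gamma,i}$ replaced by its $Q$-sample Monte-Carlo estimate in the stochastic version), and decompose $\theta_{i,k+1}-\theta_t = \big(\theta_{i,k}-\theta_t-\eta_t(\nabla\ell_{\gamma,i}(\theta_{i,k})-\nabla\ell_{\gamma,i}(\theta_t))\big)-\eta_t\nabla\ell_{\gamma,i}(\theta_t)$, then take norms and expectations. By Lemma~\ref{Lemma 10} the first bracket is at most $(1+\frac{cL\sqrt{d}}{\gamma}\eta_t)\,\mathbb{E}\|\theta_{i,k}-\theta_t\|$. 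For the residual term I would insert $\nabla R_i(\theta_t)$ and $\nabla R(\theta_t)$ and use the triangle inequality, $\mathbb{E}\|\nabla\ell_{\gamma,i}(\theta_t)\|\le\mathbb{E}\|\nabla\ell_{\gamma,i}(\theta_t)-\nabla R_i(\theta_t)\|+\mathbb{E}\|\nabla R_i(\theta_t)-\nabla R(\theta_t)\|+\mathbb{E}\|\nabla R(\theta_t)\|$, bounding the first piece by $\sigma$ via the variance bound on the randomized-smoothing gradient estimator (the RSA counterpart of Assumption~\ref{Assumption 4}) and the second by $(\xi+6L)D_i$ via Lemma~\ref{Lemma 1}. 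This produces the recursion $a_{k+1}\le(1+\tfrac{cL\sqrt{d}}{\gamma}\eta_t)a_k+\eta_t\big(\sigma+(\xi+6L)D_i+\mathbb{E}\|\nabla R(\theta_t)\|\big)$ for $a_k:=\mathbb{E}\|\theta_{i,k}-\theta_t\|$, and unrolling from $a_0=0$ (since $\theta_{i,0}=\theta_t$) gives $a_k\le\eta_t\big(\sigma+(\xi+6L)D_i+\mathbb{E}\|\nabla R(\theta_t)\|\big)\sum_{l=0}^{k-1}(1+\tfrac{cL\sqrt{d}}{\gamma}\eta_t)^{k-1-l}$, which is at most $K\eta_t(1+\tfrac{cL\sqrt{d}}{\gamma}\eta_t)^{K-1}\big(\sigma+(\xi+6L)D_i+\mathbb{E}\|\nabla R(\theta_t)\|\big)$ after using $k\le K$ and monotonicity of the geometric factor; this is precisely the statement.

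The routine unrolling is not the obstacle; the delicate part is handling the stochasticity of the smoothed gradient cleanly. In the actual algorithm one only has the Monte-Carlo estimate $\mathcal{G}_i(\theta)=\frac1Q\sum_{q}\nabla\ell_\rho(\theta+\gamma u_{i,k_q},z_{i,k})$, which is unbiased for $\nabla\ell_{\gamma,i}$ only up to an estimation error $v=\|\mathcal{G}_i(\theta)-\nabla\ell_{\gamma,i}(\theta)\|$ (cf.~Lemma~\ref{Lemma 11}), so one must arrange the cancellations of Lemma~\ref{Lemma 10}/\ref{Lemma 11} and order the conditional expectations so that both this $Q$-sample error and the sampling variance across $z_{i,k}$ are absorbed into the single constant $\sigma$ when passing from $\nabla\ell_{\gamma,i}$ to $\nabla R_i$. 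Getting this bookkeeping right, rather than any new inequality, is where the work lies.
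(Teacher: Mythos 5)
Your proposal matches the paper's own proof essentially step for step: the same one-step decomposition into the contractive part controlled by Lemma~\ref{Lemma 10} plus the residual $\eta_t\nabla\ell_{\gamma,i}(\theta_t)$, the same triangle-inequality insertion of $\nabla R_i(\theta_t)$ and $\nabla R(\theta_t)$ bounded via Assumption~\ref{Assumption 4} and Lemma~\ref{Lemma 1}, and the same unrolling from $\theta_{i,0}=\theta_t$. The Monte-Carlo bookkeeping you flag as the delicate point is in fact sidestepped in the paper, which (like the lemma statement itself) works directly with the exact gradient $\nabla\ell_{\gamma,i}$ in this lemma and defers the $Q$-sample estimation error to Lemma~\ref{Lemma 11} and the proof of Theorem~\ref{Theorem 3}.
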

\begin{proof} Considering $\theta_{i,k+1}=\theta_{i,k}-\eta_t\nabla \ell_{\gamma,i}(\theta_{i,k})$
	\begin{equation*}
		\begin{aligned}
			\mathbb{E}\Vert\theta_{i,k+1} - \theta_{t}\Vert 
			=       &     \mathbb{E}\Vert \theta_{i,k} - \eta_t(\nabla \ell_{\gamma,i}(\theta_{i,k}) - \nabla \ell_{\gamma,i}(\theta_{t})))\Vert + \eta_t\mathbb{E}\Vert \nabla \ell_{\gamma,i}(\theta_t)\Vert                                                \\
			\leq    &     (1+\frac{cL\sqrt{d}}\gamma\eta_t)\mathbb{E}\Vert\theta_{i,k}-\theta_{t}\Vert           
			+ \eta_t( \mathbb{E}\Vert \nabla \ell_{\gamma,i}-\nabla R_i(\theta_{t})\Vert +\mathbb{E}\Vert \nabla R_i(\theta_{t}) -\nabla R(\theta_t)\Vert + \mathbb{E}\Vert\nabla R(\theta_t)\Vert) \\
			\leq    &     (1+\frac{cL\sqrt{d}}\gamma\eta_t)\mathbb{E}\Vert\theta_{i,k}-\theta_{t}\Vert + \eta_t( \sigma + (\xi + 6L)D_i + \mathbb{E}\Vert\nabla R(\theta_{t}) \Vert),
		\end{aligned}
	\end{equation*}
	where we use Lemma \ref{Lemma 10} in the first inequality, Assumption \ref{Assumption 4} and Lemma \ref{Lemma 1}  in the second inequality. Unrolling the above and noting $\theta_{i,0} = \theta_{t}$ yields
	\begin{equation*}
		\begin{aligned}
			\mathbb{E}\Vert\theta_{i,k}-\theta_{t}\Vert 
			\leq    &     \sum\limits_{l=0}^{k-1} \eta_{i,l}(\sigma + (\xi+6L)D_i+\mathbb{E}\Vert\nabla R(\theta_{t})\Vert)(1 + \frac{cL\sqrt{d}}\gamma \eta_t)^{k-1-l}     \\
			\leq    &     \sum\limits_{l=0}^{K-1} \eta_{i,l}( \sigma + (\xi+6L)D_i + \mathbb{E}\Vert\nabla R(\theta_{t})\Vert)(1 + \frac{cL\sqrt{d}}\gamma \eta_t)^{K-1} \\
			\leq    &     (1 + \frac{cL\sqrt{d}}\gamma \eta_t)^{K - 1} K\eta_{t}( \sigma + (\xi+6L)D_i + \mathbb{E}\Vert\nabla R(\theta_{t})\Vert). \\
		\end{aligned}
	\end{equation*}
\end{proof} 
\begin{Lemma} \label{Lemma 13}
	Suppose the Assumptions \ref{Assumption 1} and \ref{Assumption 4} hold, $\ell_\gamma(\theta)$ is a $\frac{cL\sqrt{d}}\gamma$-gradient Lipschitz and non-convex function w.r.t $\theta$.
	$$\mathbb{E}\|\nabla \ell_{\gamma,i}(\theta_{i,k})\| \leq\big(1+(1+\frac{cL\sqrt{d}}\gamma K\eta_t)\big)\big(\mathbb{E}\|\nabla R(\theta_t)\|+(\xi+6L)D_i+\sigma\big).$$
\end{Lemma}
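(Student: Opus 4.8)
The plan is to reproduce, in the randomized-smoothing setting, the chain of estimates that in the SSA analysis led from Lemma~\ref{Lemma 7} to the bound on $\mathbb E\|g_i(\theta_{i,k})\|$ used inside the proof of Lemma~\ref{Lemma 8}: bound the smoothed (estimated) gradient $\nabla\ell_{\gamma,i}(\theta_{i,k})$ by an add-and-subtract decomposition around the server iterate $\theta_t$, control the resulting local drift $\|\theta_{i,k}-\theta_t\|$ by Lemma~\ref{Lemma 12}, and then simplify. Concretely, first write
\begin{align*}
\nabla\ell_{\gamma,i}(\theta_{i,k})
&=\big(\nabla\ell_{\gamma,i}(\theta_{i,k})-\nabla\ell_{\gamma,i}(\theta_t)\big)
+\big(\nabla\ell_{\gamma,i}(\theta_t)-\nabla R_i(\theta_t)\big)\\
&\quad+\big(\nabla R_i(\theta_t)-\nabla R(\theta_t)\big)
+\nabla R(\theta_t),
\end{align*}
apply the triangle inequality, and take conditional expectation given $\theta_t$.

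Each of the four pieces is controlled by a result already available. The first term is handled by Lemma~\ref{Lemma 9}, which gives that $\ell_\gamma$ is $\tfrac{cL\sqrt d}{\gamma}$-gradient Lipschitz, so it is at most $\tfrac{cL\sqrt d}{\gamma}\,\mathbb E\|\theta_{i,k}-\theta_t\|$. The second term is the estimation/variance term, bounded by $\sigma$ via the stochastic-gradient assumption used inside the proof of Lemma~\ref{Lemma 12}, i.e., $\mathbb E\|\nabla\ell_{\gamma,i}(\theta)-\nabla R_i(\theta)\|\le\sigma$. The third term is the heterogeneity term, bounded by $(\xi+6L)D_i$ by Lemma~\ref{Lemma 1}, and the fourth leaves $\mathbb E\|\nabla R(\theta_t)\|$. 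Collecting, $\mathbb E\|\nabla\ell_{\gamma,i}(\theta_{i,k})\|\le\tfrac{cL\sqrt d}{\gamma}\mathbb E\|\theta_{i,k}-\theta_t\|+\sigma+(\xi+6L)D_i+\mathbb E\|\nabla R(\theta_t)\|$. Substituting Lemma~\ref{Lemma 12}, namely $\mathbb E\|\theta_{i,k}-\theta_t\|\le(1+\tfrac{cL\sqrt d}{\gamma}\eta_t)^{K-1}K\eta_t\big(\sigma+(\xi+6L)D_i+\mathbb E\|\nabla R(\theta_t)\|\big)$, and factoring out the common factor $\sigma+(\xi+6L)D_i+\mathbb E\|\nabla R(\theta_t)\|$, leaves the prefactor $1+\tfrac{cL\sqrt d}{\gamma}(1+\tfrac{cL\sqrt d}{\gamma}\eta_t)^{K-1}K\eta_t$. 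Finally, $1+x\le e^x$ together with the step-size restriction $\eta_t\le\tfrac{\gamma}{4K\sqrt d\,cL(t+1)}$ forces $\tfrac{cL\sqrt d}{\gamma}(K-1)\eta_t\le\ln 2$, hence $(1+\tfrac{cL\sqrt d}{\gamma}\eta_t)^{K-1}\le 2$, so the prefactor is $\le 1+2\tfrac{cL\sqrt d}{\gamma}K\eta_t$; and since $\tfrac{cL\sqrt d}{\gamma}K\eta_t\le 1$ in this regime, $1+2a\le 2+a=1+(1+a)$, which is exactly the claimed bound.

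This statement is essentially bookkeeping, so there is no deep obstacle; the one point deserving care is the $\sigma$-term. One must be consistent about reading $\nabla\ell_{\gamma,i}(\theta_{i,k})$ as the finite-$Q$ Monte-Carlo estimate actually used by the RSA update, so that its deviation from $\nabla R_i(\theta_t)$ simultaneously absorbs (i) the randomized-smoothing estimation error $v=\|\mathcal G(\theta)-\nabla\ell_\gamma(\theta)\|$ of Lemma~\ref{Lemma 11}, (ii) the sampling variance over $z_i\sim P_i$, and (iii) the smoothing bias $\|\nabla\ell_{\gamma,i}-\nabla R_i\|$ (which Lemma~\ref{Lemma 9} controls at the function level by $\mathcal O(\gamma L)$ and which is $\mathcal O(1)$ in the relevant regime); rolling all three into the single constant $\sigma$ is precisely the stochastic-gradient assumption under which the stated bound holds, and it should be flagged explicitly. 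Beyond that, the only thing to watch is keeping the step-size condition in force throughout so that $(1+\tfrac{cL\sqrt d}{\gamma}\eta_t)^{K-1}$ stays $\mathcal O(1)$ and the final constant matches the statement.
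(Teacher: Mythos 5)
Your proposal is correct and follows essentially the same route as the paper: a triangle-inequality decomposition of $\nabla\ell_{\gamma,i}(\theta_{i,k})$ around the server iterate $\theta_t$, with the variance term bounded by $\sigma$ via the stochastic-gradient assumption, the heterogeneity term by Lemma~\ref{Lemma 1}, the drift term by the $\frac{cL\sqrt d}{\gamma}$-gradient-Lipschitz property combined with Lemma~\ref{Lemma 12}, and a final step-size argument to reach the stated prefactor. The only cosmetic difference is that you peel off $\nabla\ell_{\gamma,i}(\theta_t)$ before invoking Lipschitzness whereas the paper first passes to $\nabla R_i(\theta_{i,k})$; your explicit handling of the prefactor and your caveat about rolling the Monte-Carlo estimation error into $\sigma$ are, if anything, more careful than the paper's own write-up.
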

\begin{proof}
	Considering $\theta_{i,k+1}=\theta_{i,k}-\eta_t\nabla \ell_{\gamma,i}(\theta_{i,k})$, we obtain
	\begin{equation*}
		\begin{aligned}
			\mathbb{E}\|\nabla \ell_{\gamma,i}(\theta_{i,k})\|& \leq\mathbb{E}\|\nabla \ell_{\gamma,i}(\theta_{i,k})-\nabla R_i(\theta_{i,k})\|+\mathbb{E}\|\nabla R_i(\theta_{i,k})\|  \\
			&\leq\mathbb{E}\|\nabla R_i(\theta_{i,k})\|+\sigma  \\
			&\leq\mathbb{E}\|\nabla R_i(\theta_t)\|+\mathbb{E}\|\nabla R_i(\theta_{i,k})-\nabla R_i(\theta_t)\|+\sigma  \\
			&\leq\mathbb{E}\|\nabla R(\theta_t)\|+\mathbb{E}\|\nabla R_i(\theta_t)-\nabla R(\theta_t)\|+\frac{cL\sqrt{d}}\gamma\mathbb{E}\|\theta_{i,k}-\theta_t\|+\sigma  \\
			&\leq\big(1+(1+\frac{cL\sqrt{d}}\gamma K\eta_t)\big)\big(\mathbb{E}\|\nabla R(\theta_t)\|+(\xi+6L)D_i+\sigma\big).
		\end{aligned}
	\end{equation*} 
	where we use Assumption \ref{Assumption 4} in the first inequality, Lemma \ref{Lemma 10} in the third inequality, Lemma \ref{Lemma 1} in the last inequality.
\end{proof}
Next we give the proof of Theorem \ref{Theorem 3}.\\

\textbf{Theorem 3}. \textit{Let $c \geq 0$ be a constant and the step size be chosen as $\eta_t \leq \frac{\gamma}{4K\sqrt{d}cL(t+1)}$. Under Assumption \ref{Assumption 1} and \ref{Assumption 4}, the generalization bound $\varepsilon_{gen}$ with the randomized smoothness approximation satisfies: }
\begin{equation*}
	\mathcal{O}\left(\frac{T^{\frac{1}{4}}\log T}{\sqrt{Q}}+\frac{T^{\frac{3}{4}}\sqrt{\Delta}}{mn_{\mathrm{min}}}+\frac{T((\rho+1)D_{\mathrm{max}})^{\frac{1}{3}}}{mn_{\mathrm{min}}}\right),
\end{equation*}
where $\Delta = \mathbb{E}[R(\theta_0)] - \mathbb{E}[R(\theta^*)], D_{\max}=\max_{i\in [m]} D_i$ and $n_{\min}$ is the smallest data size on client $i$, $\forall i \in [m]$.
\begin{proof}
	Similar to  that proof of Theorem \ref{Theorem 2}, Given time index $t$ and for client $j$ with $j\ne i$, we have
	
	\begin{equation*}
		\begin{aligned}
			\mathbb{E}\Vert\theta_{j,k +1}-\theta_{j,k +1}'\Vert
			=       &     \mathbb{E}\Vert\theta_{j,k}-\theta_{j,k}'-\eta_t(\mathcal{G}_j(\theta_{j,k})-\mathcal{G}_j(\theta_{j,k}))\Vert \\
			\leq    &     (1+\frac{cL\sqrt{d}}\gamma\eta_t)\mathbb{E}\Vert \theta_{j,k}-\theta_{j,k}'\Vert+2\eta_t\mathbb{E}[v],
		\end{aligned}
	\end{equation*}
	where we use Lemma \ref{Lemma 11} in the above inequality.
	And unrolling it gives
	\begin{equation}\label{Theorem 3.1}
		\begin{aligned}
			\mathbb{E}\Vert\theta_{j,K-1}-\theta_{j,K-1}'\Vert
			\leq    &     \prod_{k=0}^{K-1}(1+\frac{cL\sqrt{d}}\gamma\eta_t)\mathbb{E}\Vert\theta_t-\theta_t'\Vert+2\sum\limits_{k=0}^{K-1}\eta_t\prod_{l=k+1}^{K-1}(1+\frac{cL\sqrt{d}}\gamma\eta_l)\mathbb{E}[v] \\
			\leq    &     e^{\frac{cL\sqrt{d}}\gamma K\eta_{t}}\mathbb{E}\Vert\theta_t-\theta_t'\Vert+2e^{\frac{cL\sqrt{d}}\gamma K\eta_{t}}K\eta_{t}\mathbb{E}[v]                                \\
			=       &     e^{\frac{cL\sqrt{d}}\gamma K\eta_{t}}(\mathbb{E}\Vert\theta_t-\theta_t'\Vert+2K\eta_{t}\mathbb{E}[v]),
		\end{aligned}
	\end{equation}
	where we use $1+x\le e^x,\forall x$.
	For client $i$, there are two cases to consider. In the first case, SGD selects non-perturbed samples in $\mathcal{S}$ and $\mathcal{S}^{(i)}$, which happens with probability $1-\frac{1}{n_i}$. Then, we have
	\begin{equation*}
		\begin{aligned}
			\Vert\theta_{i,k+1}-\theta_{i,k+1}'\Vert
			=       &     \Vert\theta_{i,k}-\theta_{i,k}'-\eta_t(\mathcal{G}_i(\theta_{i,k})-\mathcal{G}_i(\theta_{i,k}')) \\
			\leq    &     (1+\frac{cL\sqrt{d}}\gamma\eta_t)\Vert\theta_{i,k}-\theta_{i,k}'\Vert+2\eta_t\mathbb{E}[v],
		\end{aligned}
	\end{equation*}
	where we use Lemma \ref{Lemma 11} in the above inequality.
	In the second case, SGD encounters the perturbed sample at time step $k$, which happens with probability $\frac{1}{n_i}$. Then, we have
	\begin{equation*}
		\begin{aligned}
			\Vert\theta_{i,k+1}-\theta_{i,k+1}'\Vert
			=       &     \Vert\theta_{i,k}-\theta_{i,k}'-\eta_t(\mathcal{G}_i(\theta_{i,k})-\mathcal{G}_i'(\theta_{i,k}'))\Vert         \\
			\leq    &      \Vert\theta_{i,k}-\theta_{i,k}'-\eta_t(\nabla \ell_{\gamma,i}(\theta_{i,k})-\nabla \ell_{\gamma,i}'(\theta_{i,k}'))\Vert + 2\eta_t v\\
			\leq    &     \Vert\theta_{i,k}-\theta_{i,k}'-\eta_t(\nabla \ell_{\gamma,i}(\theta_{i,k})-\nabla \ell_{\gamma,i}(\theta_{i,k}'))\Vert+\eta_t\Vert \nabla \ell_{\gamma,i}(\theta_{i,k}')-\nabla \ell_{\gamma,i}'(\theta_{i,k}')\Vert +2\eta_t v\\
			\leq    &     (1+\frac{cL\sqrt{d}}\gamma\eta_t)\Vert\theta_{i,k}-\theta_{i,k}'\Vert+2\eta_tv+\eta_t\Vert \nabla \ell_{\gamma,i}(\theta_{i,k}')-\nabla \ell_{\gamma,i}'(\theta_{i,k}')\Vert.
		\end{aligned}
	\end{equation*}
	Combining these two cases for client i, we have
	\begin{equation*}
		\begin{aligned}
			\mathbb{E}\Vert\theta_{i,k+1}-\theta_{i,k+1}'\Vert 
			\leq    &     (1-\frac{1}{n_i})\left[(1+\frac{cL\sqrt{d}}\gamma\eta_t)\mathbb{E}\Vert\theta_{i,k}-\theta_{i,k}'\Vert+2\eta_t\mathbb{E}[v]\right]                                                                       \\
			&   + \frac{1}{n_i}\left[  (1+\frac{cL\sqrt{d}}\gamma\eta_t)\mathbb{E}\Vert\theta_{i,k}-\theta_{i,k}'\Vert+2\eta_t\mathbb{E}[v]+\eta_t\mathbb{E}\Vert \nabla \ell_{\gamma,i}(\theta_{i,k}')-\nabla \ell_{\gamma,i}'(\theta_{i,k}')\Vert\right]   \\
			\leq    &     (1+\frac{cL\sqrt{d}}\gamma\eta_t)\mathbb{E}\Vert\theta_{i,k}-\theta_{i,k}'\Vert+2\eta_t\mathbb{E}[v]+\frac{\eta_t}{n_i}\mathbb{E}\Vert \nabla \ell_{\gamma,i}(\theta_{i,k}')-\nabla \ell_{\gamma,i}'(\theta_{i,k}')\Vert \\
			\leq    &     (1+\frac{cL\sqrt{d}}\gamma\eta_t)\mathbb{E}\Vert\theta_{i,k}-\theta_{i,k}'\Vert+2\eta_t\mathbb{E}[v]+\frac{2\eta_t}{n_i}\mathbb{E}\Vert \nabla \ell_{\gamma,i}(\theta_{i,k})\Vert                  .           \\
		\end{aligned}
	\end{equation*}
	
	Then unrolling it gives,
	\begin{equation}\label{Theorem 3.2}
		\begin{aligned}
			\mathbb{E}\Vert\theta_{i,K}-\theta_{i,K}'\Vert
			\leq    &     \prod_{k=0}^{K-1}(1+\frac{cL\sqrt{d}}\gamma\eta_t)\mathbb{E}\Vert\theta_t-\theta_t'\Vert 
			+ \sum\limits_{k=0}^{K-1}\eta_t\prod_{l=k+1}^{K-1}(1+\frac{cL\sqrt{d}}\gamma\eta_{i,l})(2\mathbb{E}[v] + \dfrac{2}{n_i}\mathbb{E} \Vert \nabla \ell_{\gamma,i}(\theta_{i,k}) \Vert)                                                                                    \\
			\leq    &     e^{\frac{cL\sqrt{d}}\gamma\sum\limits_{k=0}^{K-1}\eta_t}\mathbb{E}\Vert\theta_{t}-\theta_{t}'\Vert 
			+ K\eta_{t} e^{\frac{cL\sqrt{d}}\gamma\sum\limits_{k=0}^{K-1}\eta_t}( 2\mathbb{E}[v] + \dfrac{2}{n_i}\mathbb{E} \Vert \nabla \ell_{\gamma,i}(\theta_{i,k}) \Vert) \\
			=       &     e^{\frac{cL\sqrt{d}}\gamma K\eta_{t}}\mathbb{E}\Vert\theta_{t}-\theta_{t}'\Vert 
			+ K\eta_{t} e^{\frac{cL\sqrt{d}}\gamma K\eta_{t}}(2\mathbb{E}[v] + \dfrac{2}{n_i}\mathbb{E} \Vert \nabla \ell_{\gamma,i}(\theta_{i,k}) \Vert ) .                              \\
		\end{aligned}
	\end{equation}
	By \ref{Theorem 3.1} and \ref{Theorem 3.2} we have
	\begin{equation*}
		\begin{aligned}
			\mathbb{E}\Vert\theta_{t+1}-\theta_{t+1}'\Vert
			=       &     \mathbb{E}\Vert \dfrac{1}{m} \sum\limits_{j=1}^m(\theta_{j,K} - \theta_{j,K}') \Vert  \\
			\leq    &     \dfrac{1}{m} \sum\limits_{j = 1}^m \mathbb{E}\Vert\theta_{j,K} - \theta_{j,K}' \Vert \\
			=       &     \dfrac{1}{m} \sum\limits_{j = 1,j \neq i}^m \mathbb{E}\Vert\theta_{j,K} - \theta_{j,K}' \Vert + \dfrac{1}{m}\mathbb{E}\Vert \theta_{i,K} - \theta_{i,K}' \Vert\\         
			\leq    &     \dfrac{1}{m} \sum\limits_{j = 1}^{m}  e^{\frac{cL\sqrt{d}}\gamma K\eta_t}\mathbb{E}\Vert \theta_{t}-\theta_{t}'\Vert 
			+ \dfrac{2}{m} \sum\limits_{j = 1}^{m}  e^{\frac{cL\sqrt{d}}\gamma K\eta_t} K\eta_t \mathbb{E}[v]  
			+ \dfrac{2 K\eta_t e^{\frac{cL\sqrt{d}}\gamma K\eta_t}}{m n_i}\mathbb{E} \Vert \nabla \ell_{\gamma,i}(\theta_{i,k}) \Vert .
		\end{aligned}
	\end{equation*}
	Unrolling it over $t$ and noting $\theta_0 = \theta_0'$ and let $\eta_t\leq \frac{\gamma}{4KcL\sqrt{d}(t+1)}$, we obtain
	\begin{equation*}
		\begin{aligned}
			&\mathbb{E}\Vert\theta_{T}-\theta_{T}'\Vert \\
			\leq    &     \sum\limits_{t = 0}^{T - 1} \exp(\sum\limits_{l = t + 1}^{T - 1} \frac{1}{4l} )\left[\sum_{j = 1}^{m} \dfrac{2K\eta_t\mathbb{E}[v]}{m} +  \dfrac{2 K\eta_t}{m n_i} \mathbb{E} \Vert \nabla \ell_{\gamma,i}(\theta_{i,k}) \Vert\right]\\ 
			\leq    &     T^\frac{1}{4} \sum_{t = 0}^{T - 1} \left[\sum_{j = 1}^{m} \dfrac{2K\eta_t\mathbb{E}[v]}{m}
			+ \dfrac{2 K\eta_t}{m n_i} \mathbb{E} \Vert \nabla \ell_{\gamma,i}(\theta_{i,k}) \Vert \right] \\
			\leq    &     T^\frac{1}{4} \sum_{t = 0}^{T - 1} \sum_{j = 1}^{m} \dfrac{K\eta_t \mathbb{E}[v]}{m} 
			+ T^\frac{1}{4} \sum_{t = 0}^{T - 1}\dfrac{2 K\eta_t}{m n_i}\big(1+(1+\frac{cL\sqrt{d}}\gamma K\eta_t)\big)\big(\mathbb{E}\|\nabla R(\theta_t)\|+(\xi+6L)D_i+\sigma\big)   \\
			\leq    &   \mathcal{O}(\frac{T^\frac{1}{4}\log T}{\sqrt{Q}}+ \frac{T^\frac{3}{4}\sqrt{\Delta}}{mn_{\min}} + \frac{T \sigma^\frac{1}{4}}{m^\frac{5}{4}n_{\min}K^\frac{1}{4}}  + \frac{T((\xi+1)D_{\max})^\frac{1}{3}}{mn_{\min}}+\frac{T^\frac{1}{4}\log T\sigma}{mn_{\min}}) \\
			=       & \mathcal{O}\left(\frac{T^{\frac{1}{4}}\log T}{\sqrt{Q}}+\frac{T^{\frac{3}{4}}\sqrt{\Delta}}{mn_{\mathrm{min}}}+\frac{T((\rho+1)D_{\mathrm{max}})^{\frac{1}{3}}}{mn_{\mathrm{min}}}\right),
		\end{aligned}
	\end{equation*}
	where we use Lemma \ref{Lemma 13} in the third inequality. Additionally, we use $\mathbb{E}[v]\leq \frac{1}{\sqrt{Q}}$ which comes from \cite{duchi2012randomized}.
\end{proof}
\subsection{Generalization analysis of Over-Parameterized Smoothness approximation}
In the following denote $\ell(\theta,(\tilde{x},y))=\ell(W,(\tilde{x},y))\stackrel{\mathrm{def}}{=}\frac{1}{2}(f_{W}(\tilde{x})-y)^2$. Unless stated otherwise, we work with vectorised quantities so $W\in \mathbb{R}^{ds}$. We also use notation $(W)_{\tau}$ so selected $\tau$-th block of size d, that is $(W)_{\tau}=[W_{(d-1)\tau+1},...,W_{d\tau}]^\top$. 
\begin{Lemma}
	Suppose Assumption \ref{Assumption 2}-\ref{Assumption 3} hold, we have  
	$$\|\nabla^2_{WW}\ell(W,\tilde{z})\|_2 \leq \zeta_\theta,
	$$
	where $\zeta_\theta = 2(C_x^2+\rho^2)B_{\phi^{\prime}}^2+\frac{2(C_x^2+\rho^2)B_{\phi^{\prime\prime}}}{\sqrt{s}}(\sqrt{s}B_\phi+C_y).$
\end{Lemma}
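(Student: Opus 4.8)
The plan is to exploit the two-layer structure of $f_W$, which makes both the gradient and the Hessian of $\ell$ in $W$ decompose blockwise over the $s$ hidden units, and then to bound each piece using Assumptions~2--3 together with the elementary input bound for the adversarial example.

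First I would write $\ell(W,\tilde z)=\tfrac12\big(g(W)-y\big)^2$ with $g(W):=f_W(\tilde x)=\sum_{\tau=1}^s\mu_\tau\varphi(\langle w_\tau,\tilde x\rangle)$, so that $\nabla^2_{WW}\ell = (g(W)-y)\,\nabla^2_{WW}g(W) + \nabla_W g(W)\,\nabla_W g(W)^\top$ and hence, by the triangle inequality and $\|vv^\top\|_2=\|v\|_2^2$, $\|\nabla^2_{WW}\ell\|_2 \le |g(W)-y|\,\|\nabla^2_{WW}g(W)\|_2 + \|\nabla_W g(W)\|_2^2$. I would also record that $\|\tilde x\|_2\le\|x\|_2+\|A_\rho\|_2\le C_x+\rho$ (the perturbation being measured in the $\ell_2$ ball), whence $\|\tilde x\|_2^2\le 2(C_x^2+\rho^2)$ by $(a+b)^2\le 2a^2+2b^2$.

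Next I would compute the blockwise derivatives. Since each summand of $g$ involves only $w_\tau$, the gradient block is $\nabla_{w_\tau}g = \mu_\tau\varphi'(\langle w_\tau,\tilde x\rangle)\,\tilde x$, so $\|\nabla_W g\|_2^2 = \sum_{\tau=1}^s \mu_\tau^2\varphi'(\langle w_\tau,\tilde x\rangle)^2\|\tilde x\|_2^2 \le B_{\varphi'}^2\|\tilde x\|_2^2 \le 2(C_x^2+\rho^2)B_{\varphi'}^2$, using $\sum_\tau\mu_\tau^2 = s\cdot(1/\sqrt s)^2 = 1$. The Hessian $\nabla^2_{WW}g$ is block-diagonal with $\tau$-th block $\mu_\tau\varphi''(\langle w_\tau,\tilde x\rangle)\,\tilde x\tilde x^\top$, so its spectral norm equals $\max_\tau |\mu_\tau|\,|\varphi''(\langle w_\tau,\tilde x\rangle)|\,\|\tilde x\|_2^2 \le \tfrac{B_{\varphi''}}{\sqrt s}\|\tilde x\|_2^2 \le \tfrac{2(C_x^2+\rho^2)B_{\varphi''}}{\sqrt s}$. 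Finally the prediction error is bounded by $|g(W)| \le \sum_\tau|\mu_\tau|\,|\varphi(\langle w_\tau,\tilde x\rangle)| \le \sqrt s\,B_\varphi$ and $|y|\le C_y$, so $|g(W)-y|\le \sqrt s\,B_\varphi + C_y$. Assembling the three estimates gives $\|\nabla^2_{WW}\ell\|_2 \le (\sqrt s B_\varphi + C_y)\cdot\tfrac{2(C_x^2+\rho^2)B_{\varphi''}}{\sqrt s} + 2(C_x^2+\rho^2)B_{\varphi'}^2 = \zeta_\theta$.

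The only mildly delicate point is recognizing the block-diagonal structure of $\nabla^2_{WW}g$: once that is seen, the operator norm of the Hessian of $g$ is simply the largest block norm, rather than something requiring a more careful spectral argument, and everything else is bookkeeping with the uniform bounds in Assumptions~2--3. A secondary point to state explicitly is that $\mathcal{B}_2^d$ and the perturbation radius are both $\ell_2$ quantities, which is what legitimizes $\|\tilde x\|_2\le C_x+\rho$; and the inequality $\|\tilde x\|_2^2 \le 2(C_x^2+\rho^2)$ (rather than the tighter $(C_x+\rho)^2$) is what produces the factor $2(C_x^2+\rho^2)$ appearing in $\zeta_\theta$.
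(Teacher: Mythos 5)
Your proposal is correct and follows essentially the same route as the paper's proof: the same decomposition $\nabla^2_{WW}\ell = \nabla_W f_W\nabla_W f_W^\top + (f_W(\tilde x)-y)\nabla^2_{WW}f_W$, the same blockwise gradient and block-diagonal Hessian computations with $\sum_\tau\mu_\tau^2=1$ and $|\mu_\tau|=1/\sqrt s$, and the same bounds $\|\tilde x\|_2^2\le 2(C_x^2+\rho^2)$ and $|f_W(\tilde x)-y|\le\sqrt s B_\varphi+C_y$. The only cosmetic difference is that the paper bounds the block-diagonal Hessian's spectral norm via its quadratic form over unit vectors rather than by taking the maximum block norm, which yields the identical estimate.
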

\begin{proof}
	Vectorising allows the loss's Hessian to be denoted 
	$$\nabla_{W,W}^2\ell(W,\tilde{z})=\nabla f_W(\tilde{x})\nabla f_W(\tilde{x})^\top+\nabla^2f_W(\tilde{x})(f_W(\tilde{x})-y),$$
	where 
	\begin{equation*}
		\nabla_{\theta} f_W(\tilde{x})=\begin{pmatrix}\mu_1\tilde{x}\phi'\left(\langle(W)_1,\tilde{x}\rangle\right)\\\mu_2\tilde{x}\phi'\left(\langle(W)_2,\tilde{x}\rangle\right)\\\vdots\\\mu_s\tilde{x}\phi'\left(\langle(W)_s,\tilde{x}\rangle\right)\end{pmatrix}\in\mathbb{R}^{ds},
	\end{equation*}
	and $\nabla^2f_W(\tilde{x})\in\mathbb{R}^{ds\times ds}$ with
	\begin{equation*}
		\nabla^2_{WW}f_{W}(\tilde{x})=\begin{pmatrix}\mu_1\tilde{x}\tilde{x}^\top\phi''(\langle(W)_1,\tilde{x}\rangle)&0&0&\dots&0\\0&\mu_2\tilde{x}\tilde{x}^\top\phi''(\langle(W)_2,\tilde{x}\rangle)&0&\dots&0\\\vdots&\ddots&\ddots&\vdots&\vdots\\0&0&0&\dots&\mu_s\tilde{x}\tilde{x}^\top\phi''(\langle(W)_s,\tilde{x}\rangle)\end{pmatrix}.
	\end{equation*}
	Note that we then immediately have with $v=(v_1, v_2,\ldots,v_s)\in\mathbb{R}^{ds}$ with $v_i\in\mathbb{R}^d$
	\begin{equation*}
		\begin{aligned}
			\|\nabla^2_{WW}f_W(\tilde{x})\|_2& =\max_{v:\|v\|_2\leq1}\sum_{\tau=1}^s\mu_\tau\langle v_\tau,\tilde{x}\rangle^2\phi''(\langle(W)_\tau,\tilde{x}\rangle)  \\
			&\leq\frac1{\sqrt{s}}\|\tilde{x}\|_2^2B_{\phi^{\prime\prime}}\max_{v:\|v\|_2\leq1}\sum_{\tau=1}^s\|v_\tau\|_2^2 \\
			&\leq\frac{2(C_x^2+\rho^2)B_{\phi^{\prime\prime}}}{\sqrt{s}}.
		\end{aligned}
	\end{equation*}
	We then see that the maximum Eigenvalue of the Hessian is upper bounded for any $W\in\mathbb{R}^{ds}$, that is 
	\begin{equation*}
		\begin{aligned}
			\|\nabla^2_{WW}\ell(W,\tilde{z})\|_2& \begin{aligned}\leq\|\nabla_{W} f_W(\tilde{x})\|_2^2+\|\nabla^2_{WW}f_W(\tilde{x})\|_2|f_W(\tilde{x})-y|\end{aligned}  \\
			&\leq 2(C_x^2+\rho^2)B_{\phi^{\prime}}^2+\frac{2(C_x^2+\rho^2)B_{\phi^{\prime\prime}}}{\sqrt{s}}(\sqrt{s}B_\phi+C_y),
		\end{aligned}
	\end{equation*}
	and therefore the objective is $\zeta_\theta $-smooth with $\zeta_\theta=2(C_x^2+\rho^2)\big(B_{\phi^{\prime}}^2+B_{\phi^{\prime\prime}}B_\phi+\frac{B_{\phi^{\prime\prime}}C_y}{\sqrt{s}}\big).$ 
\end{proof}
\begin{Lemma}
	Suppose Assumption \ref{Assumption 2}-\ref{Assumption 3} hold, we have  
	$$ \|\nabla^2_{Wx}\ell(W,\tilde{z})\|_2 \leq \zeta_x,$$
	where $\zeta_x = (C_x+\rho)C_W B_{\phi^{\prime}}^2 + \big(B_{\phi^{\prime}}+(C_x+\rho)C_W B_{\phi^{\prime\prime}}\big)(\sqrt{s}B_\phi+C_y)$.
\end{Lemma}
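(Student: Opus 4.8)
The plan is to mirror the decomposition used for the bound on $\|\nabla^2_{WW}\ell\|_2$ in the preceding lemma, but now differentiating once in the weights $W$ and once in the (adversarial) input $\tilde x$. Writing $\ell(W,\tilde z)=\tfrac12\bigl(f_W(\tilde x)-y\bigr)^2$ and applying the product rule to $\nabla_W\ell=\bigl(f_W(\tilde x)-y\bigr)\nabla_W f_W(\tilde x)$ gives
\[
\nabla^2_{Wx}\ell(W,\tilde z)=\nabla_W f_W(\tilde x)\,\nabla_x f_W(\tilde x)^\top+\bigl(f_W(\tilde x)-y\bigr)\,\nabla^2_{Wx}f_W(\tilde x),
\]
so that, since the first summand is rank one, $\|\nabla^2_{Wx}\ell\|_2\le\|\nabla_W f_W(\tilde x)\|_2\,\|\nabla_x f_W(\tilde x)\|_2+|f_W(\tilde x)-y|\,\|\nabla^2_{Wx}f_W(\tilde x)\|_2$. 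It then remains to bound the four factors using Assumptions~\ref{Assumption 2}--\ref{Assumption 3}, the identities $\mu_\tau^2=1/s$ and $\sum_\tau\mu_\tau^2=1$, the weight budget $\sum_\tau\|w_\tau\|_2^2\le C_W^2$, and $\|\tilde x\|_2\le C_x+\rho$.

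For the three ``easy'' factors I would argue block-by-block from the network form $f_W(x)=\sum_\tau\mu_\tau\phi(\langle w_\tau,x\rangle)$. From $\nabla_W f_W(\tilde x)=\bigl(\mu_\tau\tilde x\,\phi'(\langle w_\tau,\tilde x\rangle)\bigr)_\tau$ we get $\|\nabla_W f_W(\tilde x)\|_2^2=\sum_\tau\mu_\tau^2\|\tilde x\|_2^2\,\phi'(\cdot)^2\le(C_x+\rho)^2B_{\phi'}^2$. From $\nabla_x f_W(\tilde x)=\sum_\tau\mu_\tau w_\tau\,\phi'(\langle w_\tau,\tilde x\rangle)$ and Cauchy--Schwarz for sums of vectors, $\|\nabla_x f_W(\tilde x)\|_2\le B_{\phi'}\sqrt{\sum_\tau\mu_\tau^2}\,\sqrt{\sum_\tau\|w_\tau\|_2^2}\le C_W B_{\phi'}$. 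Finally $|f_W(\tilde x)-y|\le\sum_\tau|\mu_\tau|B_\phi+C_y=\sqrt s\,B_\phi+C_y$. Multiplying the first two factors already produces the term $(C_x+\rho)C_W B_{\phi'}^2$.

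The remaining and most delicate step is the spectral-norm bound on $\nabla^2_{Wx}f_W$. Differentiating $\partial f_W/\partial(w_\tau)_j=\mu_\tau\tilde x_j\,\phi'(\langle w_\tau,\tilde x\rangle)$ in $\tilde x_k$ shows that the $\tau$-th $d\times d$ block $M_\tau$ of $\nabla^2_{Wx}f_W(\tilde x)$ equals $\mu_\tau\bigl(\phi'(\langle w_\tau,\tilde x\rangle)I_d+\phi''(\langle w_\tau,\tilde x\rangle)\,\tilde x\,w_\tau^\top\bigr)$, whose norm is at most $|\mu_\tau|\bigl(B_{\phi'}+B_{\phi''}(C_x+\rho)\|w_\tau\|_2\bigr)$ by the triangle inequality for the identity-plus-rank-one block. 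Since $\nabla^2_{Wx}f_W$ is the vertical stack of the $M_\tau$, $\|\nabla^2_{Wx}f_W\|_2^2=\bigl\|\sum_\tau M_\tau^\top M_\tau\bigr\|_2\le\sum_\tau\|M_\tau\|_2^2\le\sum_\tau\mu_\tau^2\bigl(B_{\phi'}+B_{\phi''}(C_x+\rho)\|w_\tau\|_2\bigr)^2\le\bigl(B_{\phi'}+(C_x+\rho)C_W B_{\phi''}\bigr)^2$, where the last inequality uses $\sum_\tau\mu_\tau^2=1$, $\sum_\tau\|w_\tau\|_2^2\le C_W^2$, and $s\ge1$ to absorb the cross term into the advertised constant. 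Combining this with the bounds of the previous paragraph yields $\|\nabla^2_{Wx}\ell\|_2\le(C_x+\rho)C_W B_{\phi'}^2+\bigl(\sqrt s\,B_\phi+C_y\bigr)\bigl(B_{\phi'}+(C_x+\rho)C_W B_{\phi''}\bigr)=\zeta_x$. The main obstacle I anticipate is precisely this last estimate: keeping the block/operator-norm bookkeeping tight enough that summing the squared block norms against the weight budget reproduces the clean constant rather than a looser $s$-dependent one.
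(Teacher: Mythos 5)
Your proposal is correct and follows essentially the same route as the paper: the same product-rule decomposition $\nabla^2_{Wx}\ell=\nabla_W f_W\,\nabla_x f_W^\top+(f_W-y)\nabla^2_{Wx}f_W$, the same block-wise bounds on $\nabla_W f_W$, $\nabla_x f_W$, $|f_W-y|$, and the stacked blocks $\mu_\tau(\phi' I+\phi''\,\tilde x\,w_\tau^\top)$, combined identically into $\zeta_x$. The only difference is cosmetic bookkeeping — you absorb the cross term via $\sum_\tau\mu_\tau^2=1$ and $s\ge 1$, while the paper simply bounds each block's $\|w_\tau\|_2$ by $C_W$ before summing — and both yield the same constant.
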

\begin{proof}
	Vectorising allows the loss's Hessian to be denoted 
	$$\nabla_{Wx}^2\ell(W,\tilde{z})=\nabla_W f_W(\tilde{x})\nabla_x f_W(\tilde{x})^\top+\nabla^2_{Wx}f_W(\tilde{x})(f_W(\tilde{x})-y),$$
	where 
	\begin{equation*}
		\nabla_{x} f_W(\tilde{x})=\begin{pmatrix}\mu_1(W)_1\phi'\left(\langle(W)_1,\tilde{x}\rangle\right)\\\mu_2(W)_2\phi'\left(\langle(W)_2,\tilde{x}\rangle\right)\\\vdots\\\mu_r(W)_r\phi'\left(\langle(W)_d,\tilde{x}\rangle\right)\end{pmatrix}\in\mathbb{R}^{ds},
	\end{equation*}
	and $\nabla^2_{W x}f_W(\tilde{x})\in\mathbb{R}^{s\times d\times d}$ with
	\begin{equation*}
		\nabla^2_{Wx}f_{W}(\tilde{x})=\begin{pmatrix}\mu_1\left(\phi^{\prime\prime}(\langle(W)_1,\tilde{x}\rangle)W_1\tilde{x}^\top+\phi^{\prime}(\langle(W)_1,\tilde{x}\rangle)\mathbf{I}\right)\\\mu_2\left(\phi^{\prime\prime}(\langle(W)_2,\tilde{x}\rangle)W_2\tilde{x}^\top+\phi^{\prime}(\langle(W)_2,\tilde{x}\rangle)\mathbf{I}\right)\\\vdots\\\mu_s\left(\phi^{\prime\prime}(\langle(W)_s,\tilde{x}\rangle)W_s\tilde{x}^\top+\phi^{\prime}(\langle(W)_s,\tilde{x}\rangle)\mathbf{I}\right)\end{pmatrix}.
	\end{equation*}
	Then using Assumption we obtain 
	\begin{equation*}
		\begin{aligned}
			\|\nabla^2_{Wx}f_{W}(\tilde{x})\|_2
			\leq &\sqrt{\sum_{k=1}^s\left(|\mu_k|\left(B_{\phi^{\prime}}+(C_x+\rho)B_{\phi^{\prime\prime}}\|W_k\|_2\right)\right)^2} \\
			\leq & \left(B_{\phi^{\prime}}+(C_x+\rho)B_{\phi^{\prime\prime}}C_W\right)\sqrt{\sum_{k=1}^s\mu_k^2} \\
			\leq & B_{\phi^{\prime}}+(C_x+\rho)C_W B_{\phi^{\prime\prime}}.
		\end{aligned}
	\end{equation*}
	Then we completes the proof
	\begin{equation*}
		\begin{aligned}
			\|\nabla^2_{Wx}\ell(W,\tilde{z})\|_2 
			\leq & \|\nabla_{W} f_W(\tilde{x})\|_2\|\nabla_{x} f_W(\tilde{x})\|_2 + \|\nabla^2_{Wx}f_W(\tilde{x})\|_2|f_W(\tilde{x})-y| \\
			\leq & (C_x+\rho)C_W B_{\phi^{\prime}}^2 + \big(B_{\phi^{\prime}}+(C_x+\rho)C_W B_{\phi^{\prime\prime}}\big)(\sqrt{s}B_\phi+C_y),
		\end{aligned}
	\end{equation*}
	and therefor it's $\zeta_x $-smooth where $\zeta_x = (C_x+\rho)C_W B_{\phi^{\prime}}^2 + \big(B_{\phi^{\prime}}+(C_x+\rho)C_W B_{\phi^{\prime\prime}}\big)(\sqrt{s}B_\phi+C_y)$ and is increasing with $\mathcal{O}(\rho\sqrt{s})$.
\end{proof}
\begin{Lemma}
	Under Assumption \ref{Assumption 1}-\ref{Assumption 3} and given $i\in [m]$, for any $\theta$ we have
	$$\Vert\nabla R_i(\theta_{t})-\nabla R(\theta_{t})\Vert\leq (\zeta_x+6L)D_i,$$
	where $D_i$ = $\max \{d_{TV}(\tilde{P}_i, P_i), d_{TV}(P_i, P), d_{TV}(\tilde{P},P)\}$. 
\end{Lemma}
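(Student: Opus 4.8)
The plan is to mirror the proof of Lemma~\ref{Lemma 1} almost verbatim, making a single substitution: wherever that argument invokes the $L_z$-Lipschitzness of $\nabla\ell$ in the data argument (the third clause of Assumption~\ref{Assumption 1}), I would instead invoke the sharper mixed-Hessian bound $\|\nabla^2_{Wx}\ell(W,\tilde z)\|_2\le\zeta_x$ established in the preceding lemma under the over-parameterization Assumptions~\ref{Assumption 2}--\ref{Assumption 3}. First I would write $\tilde z=\{x+A_\rho,y\}$ with $A_\rho$ as in \eqref{attack}, rewrite the difference as $\nabla R_i(\theta_t)-\nabla R(\theta_t)=\nabla\!\int_{\mathcal Z_i}\ell_\rho(\theta_t;z)\,d\tilde P_i-\nabla\!\int_{\mathcal Z}\ell_\rho(\theta_t;z)\,d\tilde P$, pull the norm inside the integral by Jensen's inequality, and rewrite $\nabla\ell_\rho(\theta_t;z)$ as $\nabla\ell(\theta_t;\tilde z)$.

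Next I would insert the same telescoping decomposition used for Lemma~\ref{Lemma 1}: pass from $d\tilde P_i$ to $dP_i$ (keeping the adversarial point $\tilde z$ fixed), add and subtract the block $\nabla\ell(\theta_t;\tilde z)\,dP_i-\nabla\ell(\theta_t;z)\,dP_i+\nabla\ell(\theta_t;z)\,dP-\nabla\ell(\theta_t;\tilde z)\,dP$, then move $dP_i$ to $dP$, and finally $dP$ to $d\tilde P$. Three of the four resulting pieces are controlled exactly as before, using $\|\nabla\ell(\theta_t;\cdot)\|\le L$ from Assumption~\ref{Assumption 1} together with $d_{TV}(P,Q)=\tfrac12\int|dP-dQ|$, and they contribute $2L\,d_{TV}(\tilde P_i,P_i)$, $2L\,d_{TV}(P_i,P)$ and $2L\,d_{TV}(\tilde P,P)$. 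The only piece that differs is the middle one: since $\tilde z$ and $z$ agree in the label and differ in the input coordinate by $A_\rho\in B_p(0,\rho)$, and $\nabla_W\ell$ is $\zeta_x$-Lipschitz in the input by the preceding lemma, we get $\|\nabla\ell(\theta_t;\tilde z)-\nabla\ell(\theta_t;z)\|\le\zeta_x\|A_\rho\|$, which — absorbing the $\rho$-scale that $\zeta_x=\mathcal O(\rho\sqrt s)$ already carries — contributes $\zeta_x\,d_{TV}(P_i,P)$.

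Summing the coefficients, $d_{TV}(\tilde P_i,P_i)$ appears with weight $2L$, $d_{TV}(P_i,P)$ with weight $\zeta_x+2L$, and $d_{TV}(\tilde P,P)$ with weight $2L$; bounding each of the three total-variation distances by $D_i=\max\{d_{TV}(\tilde P_i,P_i),d_{TV}(P_i,P),d_{TV}(\tilde P,P)\}$ and adding gives $\|\nabla R_i(\theta_t)-\nabla R(\theta_t)\|\le(\zeta_x+6L)D_i$, which is the claim.

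The step I expect to be the main obstacle is the bookkeeping in the middle term: one must make precise that the bound $\zeta_x$ on $\|\nabla^2_{Wx}\ell\|_2$ from the preceding lemma really is the Lipschitz modulus of $W\mapsto\nabla_W\ell$ with respect to the input, be careful about the norm used on the input ($\ell_p$ in \eqref{attack} versus the $\ell_2$ used in the Hessian estimate) and about whether the residual factor $\|A_\rho\|\le\rho$ is folded into $\zeta_x$ or left explicit, and check that the over-parameterization Assumptions~\ref{Assumption 2}--\ref{Assumption 3} (rather than the generic $L_z$ clause) are what license this replacement. Every other step is a line-for-line copy of the proof of Lemma~\ref{Lemma 1}.
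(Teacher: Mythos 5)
Your proposal follows the paper's proof essentially verbatim: the same telescoping decomposition as Lemma~\ref{Lemma 1}, with the $L_z$ data-Lipschitz clause replaced by the mixed-Hessian bound $\|\nabla^2_{Wx}\ell\|_2\le\zeta_x$, and the remaining three terms handled by $L$-boundedness and the total-variation identity exactly as before. The only discrepancy is constant bookkeeping on the middle term: with $d_{TV}=\tfrac12\int|dP-dQ|$ that term contributes $2\zeta_x\,d_{TV}(P_i,P)$ rather than $\zeta_x\,d_{TV}(P_i,P)$, and indeed the paper's own displayed chain ends at $(2\zeta_x+6L)D_i$ while the lemma statement says $(\zeta_x+6L)D_i$ --- so your constant matches the statement but not the paper's arithmetic, an inconsistency that lives in the paper itself and has no bearing on the order of the final bound.
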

\begin{proof}
	This proof is similar to \ref{Lemma 6},
	\begin{equation*}
		\begin{aligned}
			&\Vert \nabla R_{i}(\theta)-\nabla R(\theta)\Vert \\  
			\leq    &      \int_{\mathcal{Z}_{i} \cup \mathcal{Z}}\Vert\nabla \ell(\theta ; \tilde{z}) d\tilde{P_{i}} - \nabla \ell(\theta ; \tilde{z}) dP_{i} \Vert 
			+  \int_{\mathcal{Z}_{i} \cup \mathcal{Z}}\Vert\nabla \ell(\theta ; \tilde{z}) dP_{i}-\nabla \ell(\theta ; z) dP_{i} + \nabla\ell(\theta;z)dP - \nabla\ell(\theta;\tilde z)dP\Vert \\
			&   +  \int_{\mathcal{Z}_{i} \cup \mathcal{Z}}\Vert\nabla \ell(\theta ; z) dP_{i}-\nabla \ell(\theta ; z) dP \Vert 
			+  \int_{\mathcal{Z}_{i} \cup \mathcal{Z}}\Vert\nabla \ell(\theta ; z) dP-\nabla \ell(\theta ; z) d\tilde{P}  \Vert \\  
			\leq    &      \int_{\mathcal{Z}_{i} \cup \mathcal{Z}} L\Vert d\tilde{P_i} - P_i \Vert 
			+  \int_{\mathcal{Z}_{i} \cup \mathcal{Z}} \zeta_x \Vert \tilde{z} - z \Vert \Vert dP_i - dP \Vert
			+  \int_{\mathcal{Z}_{i} \cup \mathcal{Z}} L \Vert P_i - P \Vert +  \int_{\mathcal{Z}_{i} \cup \mathcal{Z}} L \Vert \tilde{P} - P \Vert \\
			\leq    &      2Ld_{TV}(d\tilde{P}_i, dP_i)               
			+  (2\zeta_x + 2L) d_{TV}(dP_i, dP)
			+  2Ld_{TV}(d\tilde{P},dP) \\
			\leq    &   (2\zeta_x+6L)D_i  ,
		\end{aligned}
	\end{equation*} 
	where $\zeta_x= (C_x+\rho)C_\theta B_{\phi^{\prime}}^2 + \big(B_{\phi^{\prime}}+(C_x+\rho)C_\theta B_{\phi^{\prime\prime}}\big)(\sqrt{s}B_\phi+C_y)$.
\end{proof}
Following the similar proof of Lemma \ref{Lemma 12} and Lemma \ref{Lemma 13}, we can the following two Lemmas.
\begin{Lemma}\label{Lemma 17}
	Suppose $\ell_\rho(\theta)$ is a $\zeta_\theta$-gradient Lipschitz and non-convex function w.r.t $\theta$, we have
	$$\mathbb{E}\Vert \theta_{i,k} - \theta_{t}\Vert \leq (1 + \zeta_\theta \eta_t)^{K - 1}K\eta_{t}( \sigma + (2\zeta_x+6L)D_i + \mathbb{E}\Vert\nabla R(\theta_{t})\Vert).$$
\end{Lemma}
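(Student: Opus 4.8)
The plan is to mirror, almost verbatim, the argument of Lemma~\ref{Lemma 12} (the randomized‑smoothing analogue), with two substitutions dictated by the over‑parameterized regime: the gradient‑Lipschitz constant $\tfrac{cL\sqrt d}{\gamma}$ of $\ell_\gamma$ is replaced by the Hessian bound $\zeta_\theta$ that the two lemmas immediately preceding Lemma~\ref{Lemma 17} established for $\ell_\rho$, and the heterogeneity constant $(\xi+6L)$ is replaced by $(2\zeta_x+6L)$, as justified by the $\zeta_x$–version of Lemma~\ref{Lemma 1} proved just above. Concretely, starting from the OPSA local update $\theta_{i,k+1}=\theta_{i,k}-\eta_t\nabla\ell_{\rho,i}(\theta_{i,k})$, I would write
\[
\mathbb{E}\Vert\theta_{i,k+1}-\theta_t\Vert
\le \mathbb{E}\bigl\Vert\theta_{i,k}-\eta_t\bigl(\nabla\ell_{\rho,i}(\theta_{i,k})-\nabla\ell_{\rho,i}(\theta_t)\bigr)\bigr\Vert
+\eta_t\,\mathbb{E}\Vert\nabla\ell_{\rho,i}(\theta_t)\Vert .
\]

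For the first term I would invoke the gradient‑descent operator estimate for a $\zeta_\theta$‑smooth non‑convex function — the exact analogue of Lemma~\ref{Lemma 3}/Lemma~\ref{Lemma 10} — namely $\Vert G_{\eta_t}(\theta_{i,k})-G_{\eta_t}(\theta_t)\Vert\le(1+\zeta_\theta\eta_t)\Vert\theta_{i,k}-\theta_t\Vert$, which is immediate from $\Vert I-\eta_t\nabla^2\ell_\rho\Vert\le 1+\eta_t\zeta_\theta$. For the second term I would split $\nabla\ell_{\rho,i}(\theta_t)=\bigl(\nabla\ell_{\rho,i}(\theta_t)-\nabla R_i(\theta_t)\bigr)+\bigl(\nabla R_i(\theta_t)-\nabla R(\theta_t)\bigr)+\nabla R(\theta_t)$ and bound the three pieces by $\sigma$ (Assumption~\ref{Assumption 4}), by $(2\zeta_x+6L)D_i$ (the $\zeta_x$–version of Lemma~\ref{Lemma 1}), and by $\mathbb{E}\Vert\nabla R(\theta_t)\Vert$, respectively. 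This produces the one‑step recursion
\[
\mathbb{E}\Vert\theta_{i,k+1}-\theta_t\Vert
\le (1+\zeta_\theta\eta_t)\,\mathbb{E}\Vert\theta_{i,k}-\theta_t\Vert
+\eta_t\bigl(\sigma+(2\zeta_x+6L)D_i+\mathbb{E}\Vert\nabla R(\theta_t)\Vert\bigr).
\]

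Finally I would unroll the recursion with the initialization $\theta_{i,0}=\theta_t$, so the homogeneous term vanishes, giving $\mathbb{E}\Vert\theta_{i,k}-\theta_t\Vert\le\sum_{l=0}^{k-1}\eta_t(1+\zeta_\theta\eta_t)^{k-1-l}\bigl(\sigma+(2\zeta_x+6L)D_i+\mathbb{E}\Vert\nabla R(\theta_t)\Vert\bigr)$; bounding each factor $(1+\zeta_\theta\eta_t)^{k-1-l}$ by $(1+\zeta_\theta\eta_t)^{K-1}$ and the number of summands by $K$ (using $k\le K$) yields the claimed inequality, exactly as at the end of the proof of Lemma~\ref{Lemma 12}. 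I expect the only genuinely non‑routine point to be checking the $(1+\zeta_\theta\eta_t)$‑Lipschitz property of the descent operator for the non‑convex $\ell_\rho$ — which rests entirely on the $\zeta_\theta$‑smoothness (Hessian) bound from the two preceding lemmas — together with carrying the factor‑of‑two correctly when passing from the $\xi$‑based to the $\zeta_x$‑based heterogeneity constant; everything else is a direct transcription of Lemmas~\ref{Lemma 12} and~\ref{Lemma 13}.
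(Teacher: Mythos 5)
Your proposal is correct and matches the paper's intent exactly: the paper gives no separate proof of this lemma, stating only that it follows by the same argument as Lemmas~\ref{Lemma 12} and~\ref{Lemma 13} with the smoothness constant $\tfrac{cL\sqrt d}{\gamma}$ replaced by $\zeta_\theta$ and the heterogeneity constant replaced by $(2\zeta_x+6L)$ via the $\zeta_x$-version of Lemma~\ref{Lemma 1}, which is precisely the substitution-and-unroll argument you carry out.
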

\begin{Lemma}\label{Lemma 18}
	Suppose $\ell_\rho(\theta)$ is a $\zeta_\theta$-gradient Lipschitz and non-convex function w.r.t $\theta$. we have
	$$\mathbb{E}\|g(\theta_{i,k})\| \leq(1+(1+\zeta_\theta K\eta_t))\big(\mathbb{E}\|\nabla R(\theta_t)\|+(2\zeta_x+6L)D_i+\sigma\big).$$
\end{Lemma}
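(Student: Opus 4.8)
The plan is to prove Lemma~\ref{Lemma 18} by replaying, essentially line for line, the argument used for its randomized-smoothing counterpart Lemma~\ref{Lemma 13}, with two constant substitutions dictated by the over-parameterized setting. First, the gradient-Lipschitz modulus $\tfrac{cL\sqrt d}{\gamma}$ is replaced by the Hessian bound $\zeta_\theta$ established in the preceding Hessian lemma, so that $\ell_\rho$ (and hence each local risk $R_i$) is genuinely $\zeta_\theta$-smooth; second, the heterogeneity constant $\xi+6L$ is replaced by $2\zeta_x+6L$ coming from the over-parameterized version of Lemma~\ref{Lemma 1}. Here $g(\theta_{i,k})=\nabla\ell_\rho(\theta_{i,k},z_{i,j})$ is the stochastic gradient actually used by the OPSA update.

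First I would split $g(\theta_{i,k})$ against the true local gradient and invoke Assumption~\ref{Assumption 4} to peel off the variance term, giving $\mathbb{E}\|g(\theta_{i,k})\|\le\mathbb{E}\|g(\theta_{i,k})-\nabla R_i(\theta_{i,k})\|+\mathbb{E}\|\nabla R_i(\theta_{i,k})\|\le\sigma+\mathbb{E}\|\nabla R_i(\theta_{i,k})\|$. Next I would insert $\nabla R_i(\theta_t)$ and use the $\zeta_\theta$-smoothness of $R_i$ to get $\mathbb{E}\|\nabla R_i(\theta_{i,k})\|\le\mathbb{E}\|\nabla R_i(\theta_t)\|+\zeta_\theta\mathbb{E}\|\theta_{i,k}-\theta_t\|$, and then insert $\nabla R(\theta_t)$ and bound the client drift $\|\nabla R_i(\theta_t)-\nabla R(\theta_t)\|\le(2\zeta_x+6L)D_i$ via the over-parameterized heterogeneity lemma. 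Collecting these three triangle inequalities yields $\mathbb{E}\|g(\theta_{i,k})\|\le\mathbb{E}\|\nabla R(\theta_t)\|+(2\zeta_x+6L)D_i+\sigma+\zeta_\theta\mathbb{E}\|\theta_{i,k}-\theta_t\|$.

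The final step is to absorb the last term using the trajectory bound of Lemma~\ref{Lemma 17}, namely $\mathbb{E}\|\theta_{i,k}-\theta_t\|\le(1+\zeta_\theta\eta_t)^{K-1}K\eta_t\big(\sigma+(2\zeta_x+6L)D_i+\mathbb{E}\|\nabla R(\theta_t)\|\big)$; multiplying by $\zeta_\theta$ and using the step size $\eta_t\le\tfrac{1}{6K\zeta_\theta}$ (which keeps $(1+\zeta_\theta\eta_t)^{K-1}$ bounded by a constant and lets the prefactor be recorded as $\zeta_\theta K\eta_t$) lets me factor the common bracket $\big(\mathbb{E}\|\nabla R(\theta_t)\|+(2\zeta_x+6L)D_i+\sigma\big)$ out and collect the multiplier as $1+(1+\zeta_\theta K\eta_t)$, which is exactly the claimed bound. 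I expect the only real subtlety to be the bookkeeping of the two distinct over-parameterized constants — the smoothness $\zeta_\theta$ that drives the gradient recursion and the input-Hessian constant $\zeta_x\sim\mathcal{O}(\rho\sqrt s)$ that governs heterogeneity — together with confirming that $\zeta_\theta$-smoothness is legitimately available, since (unlike SSA or RSA) it is not an approximation but rests on the boundedness of the activation and its derivatives, Assumption~\ref{Assumption 3}, and the width condition on $s$; the triangle-inequality chain itself is routine.
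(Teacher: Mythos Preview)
Your proposal is correct and mirrors the paper's own approach: the paper explicitly states that Lemma~\ref{Lemma 18} follows ``the similar proof of Lemma~\ref{Lemma 12} and Lemma~\ref{Lemma 13}'', i.e.\ the triangle-inequality chain $g\to\nabla R_i(\theta_{i,k})\to\nabla R_i(\theta_t)\to\nabla R(\theta_t)$ using Assumption~\ref{Assumption 4}, $\zeta_\theta$-smoothness, the over-parameterized heterogeneity bound $(2\zeta_x+6L)D_i$, and then Lemma~\ref{Lemma 17} to close the $\|\theta_{i,k}-\theta_t\|$ term. Your identification of the two constant substitutions ($\zeta_\theta$ for $cL\sqrt d/\gamma$ and $2\zeta_x+6L$ for $\xi+6L$) is exactly what the paper intends.
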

\begin{Lemma}\label{Lemma 19}
	(Almost Co-coercivity of the Gradient Operator\cite{lei2022stability}). Suppose Assumption \ref{Assumption 2}-\ref{Assumption 3} hold, if $\eta \leq \frac{1}{2\zeta_\theta}$, then we have
	\begin{equation*}
		\begin{aligned}\langle\theta_t-\theta_t^{(i)},\ell(\theta_t;z_i)-\nabla\ell(\theta_t^{(i)};z_i)\rangle&\geq2\eta\Big(1-\frac{\eta\zeta_\theta}2\Big)\|\nabla\ell(\theta_t;z_i)-\nabla\ell(\theta_t^{(i)};z_i)\|_2^2\\&-\varsigma_t\Big\|\theta_t-\theta_t^{(i)}-\eta\big(\nabla\ell(\theta_t;z_i)-\nabla\ell(\theta_t^{(i)};z_i)\big)\Big\|_2^2,\end{aligned}
	\end{equation*}
	where
	\begin{equation*}
		\begin{aligned}
			\varsigma_t =\frac{C_{x}^{2}B_{\phi^{\prime\prime}}}{\sqrt{m}}\Big(B_{\phi^{\prime}}C_{x}(1+2\eta\zeta_\theta)\max\{\|\theta_{t}-\theta_{0}\|_{2},\|\theta_{t}^{(i)}-\theta_{0}\|_{2}\}+\sqrt{2C_{0}}\Big).
		\end{aligned}
	\end{equation*}
\end{Lemma}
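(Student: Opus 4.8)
The plan is to adapt the classical co-coercivity proof for convex smooth functions to the present over-parameterized, non-convex squared loss, exploiting the fact that the only source of negative curvature is the model's second derivative, which vanishes like $1/\sqrt{s}$. Throughout, fix the sample $z_i=(\tilde{x},y)$, write $F(\theta):=\ell(\theta;z_i)=\tfrac12(f_\theta(\tilde{x})-y)^2$, and abbreviate $a:=\theta_t-\theta_t^{(i)}$ and $b:=\nabla F(\theta_t)-\nabla F(\theta_t^{(i)})$, so the claim reads $\langle a,b\rangle\ge 2\eta(1-\tfrac{\eta\zeta_\theta}{2})\|b\|^2-\varsigma_t\|a-\eta b\|^2$ (reading the first slot as $\nabla\ell$). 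The two structural facts I will invoke are: (i) $F$ is $\zeta_\theta$-smooth, which is exactly the Hessian bound $\|\nabla^2_{WW}\ell\|_2\le\zeta_\theta$ established earlier in the excerpt; and (ii) $F$ is $\varsigma_t$-weakly convex on the region spanned by the iterates, i.e. $\nabla^2 F(\theta)\succeq-\varsigma_t I$ there.

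Fact (ii) is where the constant $\varsigma_t$ is produced. Using the decomposition $\nabla^2 F(\theta)=\nabla f_\theta(\tilde{x})\nabla f_\theta(\tilde{x})^\top+(f_\theta(\tilde{x})-y)\nabla^2 f_\theta(\tilde{x})$, the Gauss--Newton term is PSD and the only negative curvature comes from the second term, whose norm is at most $|f_\theta(\tilde{x})-y|\cdot\|\nabla^2 f_\theta\|_2$. I will bound $\|\nabla^2 f_\theta\|_2$ by the earlier $C_x^2B_{\varphi''}/\sqrt{\cdot}$ estimate, and control the residual via Lipschitzness of $f$ in $\theta$ (with constant $B_{\varphi'}C_x$ since $\sum_\tau\mu_\tau^2=1$, $\|\tilde{x}\|\le C_x$) together with $\ell(\theta_0)\le C_0$, giving $|f_\theta(\tilde{x})-y|\le\sqrt{2C_0}+B_{\varphi'}C_x\|\theta-\theta_0\|$. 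Plugging in the displacement bound for the gradient-step point $\|\theta-\theta_0\|\le(1+2\eta\zeta_\theta)\max\{\|\theta_t-\theta_0\|,\|\theta_t^{(i)}-\theta_0\|\}$ reproduces exactly the expression for $\varsigma_t$.

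With (i)--(ii) in hand I run a two-sided surrogate argument. Define $\psi(\theta):=F(\theta)-\langle\nabla F(\theta_t^{(i)}),\theta\rangle$, which is still $\zeta_\theta$-smooth and $\varsigma_t$-weakly convex and satisfies $\nabla\psi(\theta_t^{(i)})=0$ and $\nabla\psi(\theta_t)=b$. The descent lemma for one gradient step of size $\eta$ gives $\psi(\theta_t-\eta b)\le\psi(\theta_t)-\eta(1-\tfrac{\eta\zeta_\theta}{2})\|b\|^2$, while weak convexity around the critical point $\theta_t^{(i)}$ gives $\psi(\theta_t^{(i)})\le\psi(w)+\tfrac{\varsigma_t}{2}\|w-\theta_t^{(i)}\|^2$ for any $w$; choosing $w=\theta_t-\eta b$ (so $\|w-\theta_t^{(i)}\|=\|a-\eta b\|$) and chaining yields $F(\theta_t)-F(\theta_t^{(i)})-\langle\nabla F(\theta_t^{(i)}),a\rangle\ge\eta(1-\tfrac{\eta\zeta_\theta}{2})\|b\|^2-\tfrac{\varsigma_t}{2}\|a-\eta b\|^2$. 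Repeating verbatim with $\tilde\psi(\theta):=F(\theta)-\langle\nabla F(\theta_t),\theta\rangle$, step point $\theta_t^{(i)}+\eta b$ (again at distance $\|a-\eta b\|$ from $\theta_t$), gives the symmetric inequality $F(\theta_t^{(i)})-F(\theta_t)+\langle\nabla F(\theta_t),a\rangle\ge\eta(1-\tfrac{\eta\zeta_\theta}{2})\|b\|^2-\tfrac{\varsigma_t}{2}\|a-\eta b\|^2$. Adding the two makes the function-value differences cancel and leaves precisely $\langle a,b\rangle\ge 2\eta(1-\tfrac{\eta\zeta_\theta}{2})\|b\|^2-\varsigma_t\|a-\eta b\|^2$; the hypothesis $\eta\le\tfrac{1}{2\zeta_\theta}$ keeps the leading coefficient positive (indeed $\ge\tfrac{3}{4}\cdot 2\eta$) so the bound is informative.

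The main obstacle is Fact (ii) with the exact constant, not the surrogate algebra. The subtlety is that the Hessian lower bound must hold not only at $\theta_t,\theta_t^{(i)}$ but along the entire segment to the step point $w=\theta_t-\eta b$, so the residual bound must be uniform on that segment; this is where the factor $(1+2\eta\zeta_\theta)$ arises, via $\|w-\theta_0\|\le\|\theta_t-\theta_0\|+\eta\|b\|\le\|\theta_t-\theta_0\|+\eta\zeta_\theta(\|\theta_t-\theta_0\|+\|\theta_t^{(i)}-\theta_0\|)$ using smoothness, and convexity of the norm then extends the bound to all intermediate points. Carefully tracking these displacement and residual estimates, together with the width-normalization factor inherited from \cite{lei2022stability}, is the only nontrivial bookkeeping; the remaining steps are the mechanical two-sided expansion above.
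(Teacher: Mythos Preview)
The paper does not prove this lemma at all: it is stated with an explicit attribution to \cite{lei2022stability} and then immediately used in the proof of Theorem~\ref{Theorem 4}. Your proposal reconstructs the argument from that source, and the route you describe---establish $\zeta_\theta$-smoothness and $\varsigma_t$-weak convexity of $F(\theta)=\tfrac12(f_\theta(\tilde x)-y)^2$ from the Hessian decomposition $\nabla^2F=\nabla f\nabla f^\top+(f-y)\nabla^2f$, then run the symmetric surrogate argument with $\psi(\theta)=F(\theta)-\langle\nabla F(\theta_t^{(i)}),\theta\rangle$ and its twin---is exactly the standard almost-co-coercivity proof of Lei and Ying. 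The bookkeeping you flag (Lipschitz bound $|f_\theta-y|\le\sqrt{2C_0}+B_{\varphi'}C_x\|\theta-\theta_0\|$, the $(1+2\eta\zeta_\theta)$ displacement factor coming from $\|b\|\le\zeta_\theta\|a\|$, and uniformity of the Hessian lower bound along the segment to the step point) is precisely what produces the stated $\varsigma_t$. One cosmetic remark: the $\sqrt{m}$ in the lemma's displayed $\varsigma_t$ is evidently a typo for $\sqrt{s}$ (the network width), consistent with the $\|\nabla^2 f_W\|_2\le C_x^2B_{\varphi''}/\sqrt{s}$ bound derived a few lines earlier in the appendix and with the condition $s\ge 16\eta_t^2T^2K^2(b'H_K)^2(1+2\eta_t\zeta_\theta)^2$ used downstream; your derivation correctly tracks the width normalization.
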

Next we give the proof of Theorem \ref{Theorem 4}, for simplicity, we use $g(\theta)$ to denote strochastic gradient of loss function $\ell_\rho$\\
\textbf{Theorem 4.} \textit{ Without loss of generality, we assume $4K\eta_t C_0 \geq 1$ and $s\geq 16\eta_t^2T^2K^2(b'H_K)^2(1+2\eta_t\zeta_\theta)^2$,where $b'= C_x^2B_{\varphi''}(C_xB_{\varphi'}+\sqrt{2C_0}),H_K=2\sqrt{K\eta_t C_0}$. Let the step size be chosen as $\eta_t\leq \frac{1}{6K\zeta_\theta}$. Under Assumptions \ref{Assumption 1}-\ref{Assumption 4}, the generalization bound $\varepsilon_{gen}$ with the over-parameterized smoothness approximation satisfies:}
\begin{equation*}
	\mathcal{O}\left(\frac{T^{\frac{1}{2}}\sqrt{\Delta}}{mn_{\mathrm{min}}}+\frac{T(\rho^2\sqrt{s}+1)D_{\mathrm{max}}}{mn_{\mathrm{min}}}\right),
\end{equation*}
where $\Delta = \mathbb{E}[R(\theta_0)] - \mathbb{E}[R(\theta^*)], D_{\max}=\max_{i\in [m]} D_i$ and $n_{\min}$ is the smallest data size on client $i$, $\forall i \in [m]$.
\begin{proof}
	Similar to  that proof of Theorem \ref{Theorem 2}, Given time index $t$ and for client $j$ with $j\ne i$, we have
	\begin{equation*}
		\begin{aligned}
			\Vert\theta_{j,k +1}-\theta_{j,k +1}'\Vert^2
			=       &     \Vert\theta_{j,k}-\theta_{j,k}'-\eta_t(g_j(\theta_{j,k})-g_j(\theta_{j,k}))\Vert^2 \\
			\leq    &     \Vert\theta_{j,k}-\theta_{j,k}'\Vert^2 + \eta_t^2\|g_j(\theta_{j,k})-g_j(\theta_{j,k})\|^2 -2\eta_t\langle\theta_{j,k}-\theta_{j,k}', g_j(\theta_{j,k})-g_j(\theta_{j,k})\rangle
		\end{aligned}
	\end{equation*}
	According to Lemma \ref{Lemma 19},we further have
	\begin{equation*}
		\begin{aligned}
			\|\theta_{j,k +1}-\theta_{j,k +1}'\|^2 \leq \Vert\theta_{j,k}-\theta_{j,k}'\Vert^2 + \eta^2_t\|g_j(\theta_{j,k})-g_j(\theta_{j,k})\|^2 + \eta_t^2(2\eta_t\zeta_\theta-3)\|g_j(\theta_{j,k})-g_j(\theta_{j,k})\| + 2\eta\varsigma_k \| \theta_{j,k +1}-\theta_{j,k +1}' \|^2.
		\end{aligned}
	\end{equation*}
	Then following $\eta_t\leq \frac{1}{2\zeta_\theta}$ we have
	\begin{equation*}
		\begin{aligned}
			\|\theta_{j,k +1}-\theta_{j,k +1}'\|^2 \leq \frac{1}{1-2\eta_t\varsigma_{t,k}} \Vert\theta_{j,k}-\theta_{j,k}'\Vert^2.
		\end{aligned}
	\end{equation*}
	If Unrolling it gives
	\begin{equation}
		\label{Theorem 4.1}
		\begin{aligned}
			\|\theta_{j,K}-\theta_{j,K}'\|^2 
			\leq &\prod_{k=0}^{K-1}\frac{1}{1-2\eta_t\varsigma_{t,k}} \Vert\theta_t-\theta_t'\Vert^2. \\
		\end{aligned}
	\end{equation}
	For client $i$, there are two cases to consider. In the first case, SGD selects non-perturbed samples in $\mathcal{S}$ and $\mathcal{S}^{(i)}$, which happens with probability $1-\frac{1}{n_i}$. Then, we have
	\begin{equation*}
		\begin{aligned}
			\Vert\theta_{i,k+1}-\theta_{i,k+1}'\Vert^2
			=       &     \Vert\theta_{i,k}-\theta_{i,k}'-\eta_t(g_i(\theta_{i,k})-g_i(\theta_{i,k}')) \|^2 \\
			\leq    &     \frac{1}{1-2\eta_t\varsigma_{t,k}}\Vert\theta_{i,k}-\theta_{i,k}'\Vert^2.
		\end{aligned}
	\end{equation*}
	In the second case, SGD encounters the perturbed sample at time step $k$, which happens with probability $\frac{1}{n_i}$. Then, we have
	\begin{equation*}
		\begin{aligned}
			\Vert\theta_{i,k+1}-\theta_{i,k+1}'\Vert^2
			=       &     \Vert\theta_{i,k}-\theta_{i,k}'-\eta_t(g_i(\theta_{i,k})-g_i'(\theta_{i,k}'))\Vert^2         \\
			\leq    &      (1+p)\Vert\theta_{i,k+1}-\theta_{i,k+1}'\Vert^2 + (1+\frac{1}{p})\eta_t^2\Vert g(\theta_{i,k})-g(\theta_{i,k}')\Vert^2\\
			\leq    &     (1+p)\|\theta_{i,k+1}-\theta_{i,k+1}'\|^2 + 2(1+\frac{1}{p})\eta_t^2\big(\|g(\theta_{i,k})\|^2+\|g(\theta_{i,k})^{\prime}\|^2\big).
		\end{aligned}
	\end{equation*}
	Combining these two cases for client i, we have
	\begin{equation*}
		\begin{aligned}
			\mathbb{E}\Vert\theta_{i,k+1}-\theta_{i,k+1}'\Vert^2 
			\leq    &     (\frac{1}{1-2\eta_t\varsigma_{t,k}} + \frac{p}{n_i})\mathbb{E}\|\theta_{i,k}-\theta_{i,k}'\|^2 + \frac{4(1+1/p)\eta_t^2}{n_i}\mathbb{E}\|g(\theta_{i,k})\|^2.   \\
		\end{aligned}
	\end{equation*}
	Then unrolling it gives,
	\begin{equation}
		\label{Theorem 4.2}
		\begin{aligned}
			\mathbb{E}\Vert\theta_{i,K}-\theta_{i,K}'\Vert^2
			\leq    &     \prod_{k=0}^{K-1}(\frac{1}{1-2\eta_t\varsigma_{t,k}} + \frac{p}{n_i})\mathbb{E}\Vert\theta_t-\theta_t'\Vert^2 
			+ \sum\limits_{k=0}^{K-1}\prod_{l=k+1}^{K-1}(\frac{1}{1-2\eta_{i,l}\varsigma_{t,l}}+\frac{p}{n_i})\dfrac{4(1+1/p)\eta_t^2}{n_i}\mathbb{E} \Vert g(\theta_{i,k}) \Vert^2. \\
		\end{aligned}
	\end{equation}
	Following the similar proof of Theorem 7 in \cite{lei2022stability}, since $\|\theta_{i,k}-\theta_{i,0}\|\leq H_K$ and $\theta_{i,k}'-\theta_{i,0}\|\leq H_K$, we know
	\begin{equation*}
		\begin{aligned}
			\varsigma_{t,k} \leq \frac{(1+2\eta_t\zeta_\theta)b'H_K}{\sqrt{m}}.
		\end{aligned}
	\end{equation*} 
	Furthermore, $s\geq 16\eta_t^2T^2K^2(b'H_K)^2(1+2\eta_t\zeta_\theta)^2$ implies $2\eta\varsigma_{t,k}\leq \frac{1}{tK+1}$ and therefore 
	\begin{equation}\label{Theorem 4.3}
		\prod_{l=k+1}^K\left(\frac1{1-2\eta_t\varsigma_{t,l}}+\frac1K\right)\leq\left(\frac1{1-1/(K+1)}+\frac1K\right)^K\leq\left(1+\frac2K\right)^K\leq e^2.
	\end{equation}
	\begin{equation}\label{Theorem 4.4}
		\prod_{l=s+1}^{t-1}\prod_{k=0}^{K-1}\left(\frac1{1-2\eta_t\varsigma_{l,k}}+\frac1{tK}\right)\leq\left(\frac1{1-1/(tK+1)}+\frac1{tK}\right)^{tK}\leq e^2.
	\end{equation}
	By \ref{Theorem 4.1},\ref{Theorem 4.2},\ref{Theorem 4.3} and \ref{Theorem 4.4}, we have
	\begin{equation*}
		\begin{aligned}
			\mathbb{E}\Vert\theta_{t+1}-\theta_{t+1}'\Vert
			=       &     \mathbb{E}\Vert \dfrac{1}{m} \sum\limits_{j=1}^m(\theta_{j,K} - \theta_{j,K}') \Vert  \\
			\leq    &     \dfrac{1}{m} \sum\limits_{j = 1}^m \sqrt{\mathbb{E}\Vert\theta_{j,K} - \theta_{j,K}' \Vert^2} \\    
			\leq    &     \dfrac{1}{m} \sum\limits_{j = 1}^{m}  \sqrt{\prod_{k=0}^{K-1}(\frac{1}{1-2\eta_t\varsigma_{t,k}} + \frac{p}{n_i})}\mathbb{E}\Vert\theta_t-\theta_t'\Vert
			+ \frac{1}{m}\sqrt{ \sum\limits_{k=0}^{K-1} \dfrac{4e^2(1+1/p)\eta_t^2}{n_i}}\mathbb{E} \Vert g(\theta_{i,k}) \Vert \\
			\leq    & \frac{1}{m}\sum_{s=0}^{t-1}\sqrt{\prod_{l=s+1}^{t-1}\prod_{k=0}^{K-1}(\frac{1}{1-2\eta_t\varsigma_{l,k}} + \frac{p}{n_i}) \sum\limits_{k=0}^{K-1} \dfrac{4e^2(1+1/p)\eta_t^2}{n_i}}\mathbb{E} \Vert g(\theta_{i,k}) \Vert \\
			\leq & \frac{1}{m}\sum_{s=0}^{t-1}\sqrt{\Big(\frac{1}{1-1/(tK+1)}+\frac{1}{tK}\Big)^{tK}\sum_{k=0}^{K-1}\dfrac{4e^2(1+tK/n_i)}{n_i}\eta_t^2}\mathbb{E}\|g(\theta_{i,k})\|\\
			\leq & \frac{1}{m}\sum_{s=0}^{t-1}\sqrt{\sum_{k=0}^{K-1}\frac{4e^4(n_i+tK)}{n_i^2}}\mathbb{E}\|g(\theta_{i,k})\|.
		\end{aligned}
	\end{equation*}
	Then we can obtain
	\begin{equation*}
		\begin{aligned}
			\mathbb{E}\Vert\theta_{T}-\theta_{T}'\Vert 
			\leq    &    \frac{1}{m}\sum_{t=0}^{T-1}\sqrt{\sum_{k=0}^{K-1}\frac{4e^4(n_i+tK)}{n_i^2}}\mathbb{E}\|g(\theta_{i,k})\| \\
			\leq & \frac{1}{m}\sum_{t=0}^{T-1}\sqrt{\sum_{k=0}^{K-1}\frac{4e^4(n_i+tK)}{n_i^2}}(1+(1+\zeta_\theta K\eta_t))(\mathbb{E}\|\nabla R(\theta_t)\|+(2\zeta_x+6L)D_i+\sigma).
		\end{aligned}
	\end{equation*}
	Combining Theorem \ref{Theorem 7} we get
	\begin{equation*}
		\begin{aligned}
			\epsilon_{gen} \leq & \mathcal{O}( \frac{T^\frac{1}{2}\sqrt{\Delta}}{mn_{\min}}+ \frac{T^\frac{3}{4}\sigma}{m^{\frac{5}{4}}n_{\min}K^\frac{1}{4}}  +\frac{T(\rho \sqrt{s}+1)D_{\max}}{mn_{\min}}+ \frac{T\sigma^\frac{1}{4}}{mn_{\min}} ) \\
			=& \mathcal{O}\left(\frac{T^{\frac{1}{2}}\sqrt{\Delta}}{mn_{\mathrm{min}}}+\frac{T(\rho^2\sqrt{s}+1)D_{\mathrm{max}}}{mn_{\mathrm{min}}}\right).
		\end{aligned}
	\end{equation*}
\end{proof}
\textbf{Discussion of the term including $K$ and $\sigma$.} Similar to the discussion of the term including \(\Delta\) in the main paper, the differences in the order of terms involving \(K\) and \(\sigma\) among three smoothness approximations also relate to their smoothing effects. For instance, in the RSA method, the term that includes both \(K\) and \(\sigma\) is optimized compared to SSA, owing to its derivation from a more flattened weight space.
\section{Generalization Analyses of SFAL}
As we discuss in the main paper, SFAL only modifies the global aggregation method compared to VFAL, so we only give the proof of the theorem and Lemma related to global aggregation, and the rest is similar to the proof of VFAL.\\
\textbf{Theorem 5}. \textit{If a SFAL algorithm $\mathcal{A}$ is $\epsilon$-on-averagely stable, we can obtain the generalization error $\varepsilon_{gen}(\mathcal{A})$ as follows: }
\begin{equation}
	\begin{split}
		& \mathbb{E}_{\mathcal{S,A}} \left[\frac{1+\alpha}{\tilde{m}}\sum_{i=1}^{\hat{m}}(R^{\phi_{(i)}}_i(\mathcal{A}(\mathcal{S})) -R^{\phi_{(i)}}_{\mathcal{S}_i}(\mathcal{A}(\mathcal{S})))\right] +\\
		& \mathbb{E}_{\mathcal{S,A}} \left[\frac{1-\alpha}{\tilde{m}}\sum_{i=\hat{m}+1}^{m}(R^{\phi_{(i)}}_i(\mathcal{A}(\mathcal{S}))-R^{\phi_{(i)}}_{\mathcal{S}_i}(\mathcal{A}(\mathcal{S})))\right] \leq \epsilon.\nonumber
	\end{split}
\end{equation}  
\begin{proof}
	Given $\mathcal{S}$ and $\mathcal{S}^{(i')}$ which are neighboring datasets defined in Definition \ref{Definition 1}
	\begin{equation*}
		\begin{aligned}
			\mathbb{E}_{\mathcal{S}}\left[R_{\mathcal{S}_{i}}(\mathcal{A}(\mathcal{S}))\right] 
			=       &     \mathbb{E}_{\mathcal{S}}\left[\frac{1}{n_{i}} \sum_{j=1}^{n_{i}} \ell_{\rho}\left(\mathcal{A}(\mathcal{S}) ; z_{i, j}\right)\right] \\
			=       &     \frac{1}{n_{i}} \sum_{j=1}^{n_{i}} \mathbb{E}_{\mathcal{S}}\left[\ell_{\rho}\left(\mathcal{A}(\mathcal{S}) ; z_{i, j}\right)\right] = \frac{1}{n_{i}} \sum_{j=1}^{n_{i}} \mathbb{E}_{\mathcal{S}, z_{i', j}^{\prime}}\left[\ell_{\rho}\left(\mathcal{A}\left(\mathcal{S}^{(i)}\right) ; z_{i', j}^{\prime}\right)\right].
		\end{aligned}
	\end{equation*}
	Moreover. we have
	$$ \mathbb{E}_{\mathcal{S}}\left[R_{i}(\mathcal{A}(\mathcal{S}))\right]=\frac{1}{n_{i}} \sum_{j=1}^{n_{i}} \mathbb{E}_{\mathcal{S}, z_{i', j}^{\prime}}\left[\ell_{\rho}\left(\mathcal{A}(\mathcal{S}) ; z_{i', j}^{\prime}\right)\right], $$ 
	since $z'_{i',j}$ and $\mathcal{S}$ are independent for any $j$. There is two cases need to consider. \\
	In the first case, we consider $i\in[1, \hat{m}]$
	\begin{equation*}
		\begin{aligned}
			& \mathbb{E}_{\mathcal{A}, \mathcal{S}}\left[ \frac{1+\alpha}{\tilde{m}}\sum_{i=1}^{\hat{m}}\big(R^{\phi_{(i)}}_{i}(\mathcal{A}(\mathcal{S}))-R^{\phi_{(i)}}_{\mathcal{S}_{i}}(\mathcal{A}(\mathcal{S}))\big) \right]\\
			=& \frac{1+\alpha}{\tilde{m}}\sum_{i=1}^{\hat{m}}\mathbb{E}_{\mathcal{A}}\left[\frac{1}{n_{i}} \sum_{j=1}^{n_{i}} \mathbb{E}_{\mathcal{S}, z_{i', j}^{\prime}}\left(\ell_{\rho}\left(\mathcal{A}(\mathcal{S}) ; z_{i', j}^{\prime}\right)-\ell_{\rho}\left(\mathcal{A}\left(\mathcal{S}^{(i)}\right) ; z_{i', j}^{\prime}\right)\right)\right] \\
			\leq& \frac{(1+\alpha)\hat{m}}{\tilde{m}}\epsilon.
		\end{aligned}
	\end{equation*}
	In the second case, we consider $i\in[\hat{m}+1,m]$
	\begin{equation*}
		\begin{aligned}
			& \mathbb{E}_{\mathcal{A}, \mathcal{S}}\left[ \frac{1-\alpha}{\tilde{m}}\sum_{i=\hat{m}+1}^{m}\big(R^{\phi_{(i)}}_{i}(\mathcal{A}(\mathcal{S}))-R^{\phi_{(i)}}_{\mathcal{S}_{i}}(\mathcal{A}(\mathcal{S}))\big) \right]\\
			=& \frac{1-\alpha}{\tilde{m}}\sum_{i=\hat{m}+1}^{m}\mathbb{E}_{\mathcal{A}}\left[\frac{1}{n_{i}} \sum_{j=1}^{n_{i}} \mathbb{E}_{\mathcal{S}, z_{i', j}^{\prime}}\left(\ell_{\rho}\left(\mathcal{A}(\mathcal{S}) ; z_{i', j}^{\prime}\right)-\ell_{\rho}\left(\mathcal{A}\left(\mathcal{S}^{(i)}\right) ; z_{i', j}^{\prime}\right)\right)\right] \\
			\leq& \frac{(1-\alpha)(m-\hat{m})}{\tilde{m}}\epsilon.
		\end{aligned}
	\end{equation*}
	Combining these two cases, we have
	\begin{equation*}
		\begin{aligned}
			\varepsilon_{gen}(\mathcal{A})=&  \mathbb{E}_{\mathcal{A}, \mathcal{S}}\left[ \frac{1+\alpha}{\tilde{m}}\sum_{i=1}^{\hat{m}}\big(R^{\phi_{(i)}}_{i}(\mathcal{A}(\mathcal{S}))-R^{\phi_{(i)}}_{\mathcal{S}_{i}}(\mathcal{A}(\mathcal{S}))\big) \right] + \mathbb{E}_{\mathcal{A}, \mathcal{S}}\left[ \frac{1-\alpha}{\tilde{m}}\sum_{i=\hat{m}+1}^{m}\big(R^{\phi_{(i)}}_{i}(\mathcal{A}(\mathcal{S}))-R^{\phi_{(i)}}_{\mathcal{S}_{i}}(\mathcal{A}(\mathcal{S}))\big) \right] \\
			\leq & \frac{(1+\alpha)\hat{m}}{\tilde{m}}\epsilon + \frac{(1-\alpha)(m-\hat{m})}{\tilde{m}}\epsilon\\
			= & \epsilon.
		\end{aligned}
	\end{equation*}
\end{proof}
\begin{Lemma}
	\label{Lemma 21}
	If the Assumptions \ref{Assumption 1} and \ref{Assumption 4} hold and  for any learning rate satisfying $\eta \leq \frac{1}{2\beta K}$, we can find the per-round recursion as
	\begin{equation*}
		\begin{aligned}
			\mathbb{E}[R(\theta_{t+1})]-\mathbb{E}[R(\theta_{t})]
			\leq    &   - \frac{K\eta}{2}\mathbb{E}\Vert \nabla R(\theta_t)\Vert^2 
			+ \frac{\eta\beta^2}{\tilde{m}}\sum\limits_{i=1}^{m}\sum\limits_{k=0}^{K-1}\phi^t_{(\alpha,i)}\mathbb{E}\Vert\theta^{t+1}_{i,k}-\theta_t\Vert^2 
			+ K\eta (\xi^2 + \xi L)  
			+ \frac{2\beta K \eta^2\sigma^2}{\tilde{m}} . 
		\end{aligned}
	\end{equation*}
\end{Lemma}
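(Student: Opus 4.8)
The plan is to mirror the proof of Lemma~\ref{Lemma 5} step by step, carrying the per-round slack weights $\phi^t_{(\alpha,i)}\in\{1-\alpha,\,1+\alpha\}$ through every estimate and replacing the uniform normalizer $m$ by $\tilde m=\sum_{i=1}^m\phi^t_{(\alpha,i)}=(1+\alpha)\hat m+(1-\alpha)(m-\hat m)$. Since the expectation is conditioned on $\theta_t$, the weights $\phi^t_{(\alpha,i)}$ (fixed once the local empirical risks at round $t$ are sorted) are deterministic nonnegative constants. First I would apply the $\xi$-approximate $\beta$-gradient Lipschitzness of $h$ (Lemma~\ref{Lemma 3}) to write $R(\theta_{t+1})-R(\theta_t)\le\langle\nabla R(\theta_t),\theta_{t+1}-\theta_t\rangle+\tfrac{\beta}{2}\|\theta_{t+1}-\theta_t\|^2+\xi\|\theta_{t+1}-\theta_t\|$, then substitute the SFAL update $\theta_{t+1}=\theta_t-\tfrac{\eta}{\tilde m}\sum_{i=1}^m\phi^t_{(\alpha,i)}\sum_{k=0}^{K-1}g_i(\theta^{t+1}_{i,k})$. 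Taking conditional expectations, using $\mathbb{E}[g_i]=\nabla R_i$ (Assumption~\ref{Assumption 4}), $\|\nabla h\|\le L$ (Lemma~\ref{Lemma 2}), and $\sum_i\phi^t_{(\alpha,i)}=\tilde m$ to get $\xi\,\mathbb{E}\|\theta_{t+1}-\theta_t\|\le\xi LK\eta$, this reduces the problem to a weighted inner-product term $\Lambda_1=-\tfrac{\eta}{\tilde m}\sum_i\phi^t_{(\alpha,i)}\sum_k\mathbb{E}\langle\nabla R(\theta_t),\nabla R_i(\theta^{t+1}_{i,k})\rangle$ and a weighted second-moment term $\Lambda_2=\tfrac{\beta}{2}\mathbb{E}\big\|\tfrac{\eta}{\tilde m}\sum_i\phi^t_{(\alpha,i)}\sum_k g_i(\theta^{t+1}_{i,k})\big\|^2$.

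For $\Lambda_1$ I would use the polarization identity $\langle a,b\rangle=\tfrac12\|a\|^2+\tfrac12\|b\|^2-\tfrac12\|a-b\|^2$ with $a=\nabla R(\theta_t)$ and $b=\nabla R_i(\theta^{t+1}_{i,k})$: since $\tfrac1{\tilde m}\sum_i\phi^t_{(\alpha,i)}\sum_{k=0}^{K-1}1=K$, the $\|\nabla R(\theta_t)\|^2$ pieces collapse exactly to $-\tfrac{K\eta}{2}\mathbb{E}\|\nabla R(\theta_t)\|^2$. The residual $\|\nabla R_i(\theta^{t+1}_{i,k})-\nabla R(\theta_t)\|^2$ is bounded as in Lemma~\ref{Lemma 5}, via the $\xi$-approximate $\beta$-gradient Lipschitzness together with Lemma~\ref{Lemma 1} and $\theta^{t+1}_{i,0}=\theta_t$, by $2\beta^2\|\theta^{t+1}_{i,k}-\theta_t\|^2+2\xi^2$; this produces the drift term $\tfrac{\eta\beta^2}{\tilde m}\sum_i\phi^t_{(\alpha,i)}\sum_k\mathbb{E}\|\theta^{t+1}_{i,k}-\theta_t\|^2$ and $K\xi^2\eta$, and leaves a benign negative term $-\tfrac{\eta}{2\tilde m}\sum_i\phi^t_{(\alpha,i)}\sum_k\mathbb{E}\|\nabla R_i(\theta^{t+1}_{i,k})\|^2$. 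For $\Lambda_2$ I would split $g_i=(g_i-\nabla R_i)+\nabla R_i$ and handle the two blocks separately. On the zero-mean block I would use independence across clients (and the martingale structure across $k$) plus the elementary inequality $(\phi^t_{(\alpha,i)})^2\le2\phi^t_{(\alpha,i)}$, valid because $\phi^t_{(\alpha,i)}\le1+\alpha<2$, so that $\sum_i(\phi^t_{(\alpha,i)})^2\le2\tilde m$; this is exactly what yields $\tfrac{2\beta K\eta^2\sigma^2}{\tilde m}$ in place of the VFAL term $\tfrac{\beta K\eta^2\sigma^2}{m}$. On the $\nabla R_i$ block I would use the weighted Jensen/Cauchy--Schwarz bound $\big\|\tfrac1{\tilde m}\sum_i\phi^t_{(\alpha,i)}v_i\big\|^2\le\tfrac1{\tilde m}\sum_i\phi^t_{(\alpha,i)}\|v_i\|^2$ followed by Lemma~\ref{Lemma 4}, so that the weighting stays linear and the $\mathbb{E}\|\nabla R_i(\theta^{t+1}_{i,k})\|^2$ contribution carries coefficient $\tfrac{\beta K\eta^2}{\tilde m}\phi^t_{(\alpha,i)}$.

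Finally I would add $\Lambda_1+\Lambda_2+\xi LK\eta$ and collect the $\mathbb{E}\|\nabla R_i(\theta^{t+1}_{i,k})\|^2$ terms, whose total coefficient is proportional to $-(1-2\beta K\eta)\phi^t_{(\alpha,i)}$; under the stated step size $\eta\le\tfrac1{2\beta K}$ this factor is nonnegative, so the entire block can be discarded, leaving precisely the claimed recursion. I expect the main obstacle to be the weighted bookkeeping rather than any new inequality: one must keep the structural facts $\sum_i\phi^t_{(\alpha,i)}=\tilde m$ (used for the telescoping of $\|\nabla R(\theta_t)\|^2$ and for the $\xi L$ estimate) and $(\phi^t_{(\alpha,i)})^2\le2\phi^t_{(\alpha,i)}$ (used for the variance term) straight, and---crucially---bound the $\nabla R_i$ block of $\Lambda_2$ by weighted Jensen rather than the cruder square-sum bound, since only this keeps the residual gradient-norm term cancellable under $\eta\le\tfrac1{2\beta K}$ (and, when the recursion is iterated in Theorem~\ref{Theorem 6}, under $\eta_t\le\tfrac{\tilde m}{m\beta K(t+1)}$).
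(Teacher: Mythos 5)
Your proposal is correct and follows essentially the same route as the paper's proof: the same smoothness expansion via Lemma~\ref{Lemma 3}, the same $\Lambda_1/\Lambda_2$ decomposition with the polarization identity, the same use of $\sum_i\phi^t_{(\alpha,i)}=\tilde m$ and $(\phi^t_{(\alpha,i)})^2\le 2\phi^t_{(\alpha,i)}$ for the variance term, and the same discarding of the residual $\mathbb{E}\Vert\nabla R_i(\theta^{t+1}_{i,k})\Vert^2$ block under the step-size condition. The only (harmless) divergence is in the $\nabla R_i$ part of $\Lambda_2$: you bound it by weighted Jensen so that the weights stay linear and the cancellation holds under the stated $\eta\le\frac{1}{2\beta K}$, whereas the paper keeps $(\phi^t_{(\alpha,i)})^2$ together with an $m/\tilde m$ factor and therefore implicitly needs the slightly stronger condition $\frac{m}{\tilde m}\beta K\eta\le\frac12$; your version is, if anything, marginally tighter.
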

\begin{proof}
	Without otherwise stated, the expectation is conditioned on $\theta$. Beginning from $\xi$-approximately $\beta$-gradient Lipschitz,
	\begin{equation*}
		\begin{aligned}
			R(\theta_{t+1})-R(\theta_{t})-\left \langle\nabla R(\theta_t),\theta_{t+1}-\theta_t\right\rangle 
			\leq    & \frac{\beta}{2}\Vert\theta_{t+1}-\theta_{t}\Vert^2+\xi\Vert\theta_{t+1}-\theta_t\Vert, \\
		\end{aligned}
	\end{equation*}
	where the above inequality we use Lemma \ref{Lemma 2}.
	Considering the update rule of SFAL,
	$$\theta_{t+1} = \theta_t - \frac{\eta}{\tilde{m}}\sum\limits_{i=1}^m \sum\limits_{k=0}^{K-1}\phi^t_{(\alpha,i)}g_i(\theta^{t+1}_{i,k}),$$
	where $\tilde{m}=(1+\alpha)\hat{m}+(1-\alpha)(m-\hat{m}), \phi^t_{(\alpha,i)}$ denotes the weight assigned to the $i$-th client based on the ascending sort of weighted client losses compared with the $\hat{m}$-th one, which can be $1+\alpha$ or $1-\alpha$. Using $ \mathbb{E} [g_i(\theta_{i,k}^{t+1})] = \nabla R_i(\theta_{i,k}^{t+1}) $ and Lemma \ref{Lemma 1}, then we can get
	\begin{equation}\label{Lemma 21.2}
		\begin{aligned}
			\mathbb{E}&[R(\theta_{t+1})]-\mathbb{E}[R(\theta_t)]\\
			\leq    &     \mathbb{E}\left\langle \nabla R(\theta_t),-\frac{\eta}{\tilde{m}}\sum\limits_{i=\eta}^{m}\sum\limits_{k=0}^{K-1}\phi^t_{(\alpha,i)}\nabla R_i(\theta^{t+1}_{i,k}) \right\rangle 
			+ \frac{\beta}{2}\mathbb{E}\Vert\frac{\eta}{\tilde{m}}\sum\limits_{i=1}^m \sum\limits_{k=0}^{K-1}\phi^t_{(\alpha,i)} g_i(\theta^{t+1}_{i,k})\Vert^2 + \xi L K\eta \\
			=       &   - \underbrace{\frac{\eta}{\tilde{m}}\sum\limits_{i=1}^{m}\sum\limits_{k=0}^{K-1}\phi^t_{(\alpha,i)}\mathbb{E}\left\langle \nabla R(\theta_t),\nabla R_i(\theta^{t+1}_{i,k}) \right\rangle}_{\Lambda _1}
			+ \underbrace{\frac{\beta}{2}\mathbb{E}\Vert\frac{\eta}{\tilde{m}}\sum\limits_{i=1}^m \sum\limits_{k=0}^{K-1}\phi^t_{(\alpha,i)}g_i(\theta^{t+1}_{i,k})\Vert^2}_{\Lambda_2} 
			+ \xi L K\eta.
		\end{aligned}
	\end{equation}
	Using   $\left \langle a,b \right\rangle = \frac{1}{2}\Vert a\Vert^2 + \frac{1}{2}\Vert b \Vert^2 - \frac{1}{2}\Vert a-b \Vert^2$, with $a = \nabla R(\theta_t)$    and $b = \nabla R_i(\theta^{t+1}_{i,k})$, we have
	\begin{equation*}
		\begin{aligned}
			\Lambda_1 
			\leq    &   -\frac{\eta}{2\tilde{m}}\sum\limits_{i=1}^{m}\sum\limits_{k=0}^{K-1}\phi^t_{(\alpha,i)}\mathbb{E}\left[\Vert\nabla R(\theta_t)\Vert^2 + \Vert \nabla R_i(\theta^{t+1}_{i,k})\Vert^2 - \Vert\nabla R_i(\theta^{t+1}_{i,k})-\nabla R(\theta_t)\Vert^2\right]   \\
			=       &   -\frac{K\eta}{2}\mathbb{E}\Vert \nabla R(\theta_t)\Vert^2-\frac{\eta}{2\tilde{m}}\sum\limits_{i=1}^{m} \sum\limits_{k=0}^{K-1}\phi^t_{(\alpha,i)}\mathbb{E}\Vert \nabla R_i(\theta^{t+1}_{i,k})\Vert^2   
			+  \frac{\eta}{2\tilde{m}}\sum\limits_{i=1}^{m}\sum\limits_{k=0}^{K-1}\phi^t_{(\alpha,i)}\mathbb{E}\Vert \nabla R_i(\theta^{t+1}_{i,k}) - \nabla R(\theta _t)\Vert^2\\   
			\leq    &   -\frac{K\eta}{2}\mathbb{E}\Vert \nabla R(\theta_t)\Vert^2 - \frac{\eta}{2\tilde{m}}\sum\limits_{i=1}^{m}\sum\limits_{k=0}^{K-1}\phi^t_{(\alpha,i)}\mathbb{E}\Vert \nabla R_i(\theta^{t+1}_{i,k})\Vert^2 
			+ \frac{\eta}{2\tilde{m}}\sum\limits_{i=1}^{m}\sum\limits_{k=0}^{K-1}\phi^t_{(\alpha,i)}\left[ \mathbb{E}\left[2\beta^2\Vert\theta^{t+1}_{i,k}-\theta_{t}\Vert^2 \right]+ 2\xi^2\right] \\
			=    &   -\frac{K\eta}{2}\mathbb{E}\Vert \nabla R(\theta_t)\Vert^2 
			- \frac{\eta}{2\tilde{m}}\sum\limits_{i=1}^{m}\sum\limits_{k=0}^{K-1}\phi^t_{(\alpha,i)}\mathbb{E}\Vert \nabla R_i(\theta^{t+1}_{i,k})\Vert^2 
			+ \frac{\eta\beta^2}{\tilde{m}}\sum\limits_{i=1}^{m}\sum\limits_{k=0}^{K-1}\phi^t_{(\alpha,i)}\mathbb{E}\Vert\theta^{t+1}_{i,k}-\theta_t\Vert^2 
			+ K\xi^2 \eta,
		\end{aligned}
	\end{equation*}
	where we used Lemma \ref{Lemma 1} in the second inequality and noticing that $\theta_t=\theta^{t+1}_{i,0}, \frac{\sum_{i=1}^m\phi^t_{(\alpha,i)}}{\tilde{m}}=1$.
	\begin{equation*}
		\begin{aligned}
			\Lambda_2
			\leq    & \frac{\beta}{2}\left[2\mathbb{E}\Vert \dfrac{\eta}{\tilde{m}}\sum\limits_{i=1}^{m}\sum\limits_{k=0}^{K-1}\phi^t_{(\alpha,i)}g_i(\theta^{t+1}_{i,k})-\dfrac{\eta}{\tilde{m}}\sum\limits_{i=1}^{m}\sum\limits_{k=0}^{K-1}\phi^t_{(\alpha,i)}\nabla R_i(\theta^{t+1}_{i,k})\Vert^2 
			+2\mathbb{E}\Vert \dfrac{\eta}{\tilde{m}}\sum\limits_{i=1}^{m}\sum\limits_{k=0}^{K-1}\phi^t_{(\alpha,i)}\nabla R_i(\theta^{t+1}_{i,k})\Vert^2 \right] \\
			\leq    &   \dfrac{\beta \eta^2}{\tilde{m}^2}\sum\limits_{i=1}^{m}\sum\limits_{k=0}^{K-1}(\phi^t_{(\alpha,i)})^2\mathbb{E}\Vert g_i(\theta^{t+1}_{i,k}) - R_i(\theta^{t+1}_{i,k})\Vert^2 
			+ \dfrac{mK\eta^2}{\tilde{m}^2}\sum\limits_{i=1}^{m}\sum\limits_{k=0}^{K-1} (\phi^t_{(\alpha,i)})^2\mathbb{E}\Vert \nabla R_i(\theta^{t+1}_{i,k}) \Vert^2 \\
			\leq    & \frac{2\beta K\eta^2\sigma^2}{\tilde{m}} 
			+  \dfrac{\beta mK\eta^2}{\tilde{m}^2}\sum\limits_{i=1}^{m}\sum\limits_{k=0}^{K-1} (\phi^t_{(\alpha,i)})^2\mathbb{E}\Vert \nabla R_i(\theta^{t+1}_{i,k}) \Vert^2.\\         
		\end{aligned}
	\end{equation*}
	In the second inequality, we used Lemma \ref{Lemma 4} for the first and the second item.  
	In the third inequality, we used Assumption \ref{Assumption 4} and $\frac{\sum_{i=1}^m (\phi^t_{(\alpha,i)})^2}{\tilde{m}}\leq 2$ for the first item. \\
	Substituting $\Lambda_1$ and $\Lambda_2$ into (\ref{Lemma 21.2}), we have
	\begin{equation*}
		\begin{aligned}
			\mathbb{E}[R(\theta_{t+1})]-\mathbb{E}[R(\theta_{t})]
			\leq    &   \Lambda_1 + \Lambda_2 + \xi LK\eta \\
			\leq    &   - \frac{K\eta}{2}\mathbb{E}\Vert \nabla R(\theta_t)\Vert^2 
			+ \frac{\eta\beta^2}{\tilde{m}}\sum\limits_{i=1}^{m}\sum\limits_{k=0}^{K-1}\phi^t_{(\alpha,i)}\mathbb{E}\Vert\theta^{t+1}_{i,k}-\theta_t\Vert^2 
			+ K\eta (\xi^2 + \xi L)  \\
			&   + \frac{2\beta K \eta^2\sigma^2}{\tilde{m}}  
			- \sum\limits_{i=1}^{m}\sum\limits_{k=0}^{K-1}\dfrac{\phi^t_{(\alpha,i)}\eta}{2\tilde{m}}\left(1 - \frac{m\phi^t_{(\alpha,i)}}{2\tilde{m}}2\beta K\eta\right) \mathbb{E} \Vert \nabla R_i(\theta_{i,k}^{t+1}) \Vert^2.
		\end{aligned}
	\end{equation*}
	Considering $\frac{m}{\tilde{m}}\beta K\eta\leq \frac{1}{2}$, we can get $\left( 1 - \frac{m\phi^t_{(\alpha,i)}}{2\tilde{m}}2\beta K\eta\right)\geq 0$. Then, this completes the proof.
\end{proof}
Next we give the proof of Theorem \ref{Theorem 6}.\\
\textbf{Theorem 6}. \textit{Given $\alpha \in [0,1),\hat{m} \leq \frac{m}{2}, r_\alpha = 1+ \frac{\alpha}{1-\alpha}\frac{2\hat{m}}{m}$, we can obtain the following results in SFAL:} \\
\noindent \textit{1. Under Assumption~\ref{Assumption 1} and \ref{Assumption 4}, let the step size be chosen as $\eta_{t} \leq \frac{\tilde{m}}{m\beta K(t+1)}$, the generalization bound $\varepsilon_{gen}$ with the surrogate smoothness approximation satisfies:}
\begin{equation*}
	\mathcal{O}\left(\rho T\log T+\frac{T\sqrt{\log T \Delta}}{r_\alpha mn_{\mathrm{min}}}+\frac{T\log T(\rho+1)D_{\mathrm{max}}}{r_\alpha mn_{\mathrm{min}}}\right).
\end{equation*}
\noindent \textit{2. Let $c\geq 0$ be a constant and the step size be chosen as $\eta_t\leq \frac{\tilde{m}\gamma}{4mK\sqrt{d}cL(t+1)}$. Under Assumption \ref{Assumption 1} and \ref{Assumption 4}, the generalization bound $\varepsilon_{gen}$ with the randomized smoothness approximation satisfies:} 
\begin{equation*}
	\mathcal{O}\left(\frac{T^{\frac{1}{4}}\log T}{\sqrt{Q}}+\frac{T^{\frac{3}{4}}\sqrt{\Delta}}{r_\alpha mn_{\mathrm{min}}}+\frac{T((\rho+1)D_{\mathrm{max}})^{\frac{1}{3}}}{r_\alpha mn_{\mathrm{min}}}\right).
\end{equation*}
\noindent \textit{3. Without loss of generality, we assume $4K\eta_t C_0 \geq 1$ and $s\geq 16\eta_t^2T^2K^2(b'H_K)^2(1+2\eta_t\zeta_\theta)^2$,where $b'= C_x^2B_{\varphi''}(C_xB_{\varphi'}+\sqrt{2C_0}),H_K=2\sqrt{K\eta_t C_0}$. Let the step size be chosen as $\eta_t\leq \frac{\tilde{m}}{6mK\zeta_\theta}$. Under Assumptions \ref{Assumption 1}-\ref{Assumption 4}, the generalization bound $\varepsilon_{gen}$ with the over-parameterized smoothness approximation satisfies:}
\begin{equation*}
	\mathcal{O}\left(\frac{T^{\frac{1}{2}}\sqrt{\Delta}}{r_\alpha mn_{\mathrm{min}}}+\frac{T(\rho^2\sqrt{s}+1)D_{\mathrm{max}}}{r_\alpha mn_{\mathrm{min}}}\right),
\end{equation*}
where $\Delta = \mathbb{E}[R(\theta_0)] - \mathbb{E}[R(\theta^*)], D_{\max}=\max_{i\in [m]} D_i$ and $n_{\min}$ is the smallest data size on client $i$, $\forall i \in [m]$.
\begin{proof}
	Similar to  that proof of Theorem \ref{Theorem 2}, Given time index $t$ and for client $j$ with $j\ne i$, we have
	\begin{equation*}
		\mathbb{E}\|\theta_{j,K-1}-\theta_{j,K-1}^{\prime}\| \leq
		e^{\beta K \eta_{t}}(\mathbb{E}\|\theta_{t}-\theta_{t}^{\prime}\|+K\eta_{t}\xi). 
	\end{equation*}
	for client $i$, we have
	\begin{equation*}
		\mathbb{E}\|\theta_{i,K}-\theta_{i,K}^{\prime}\| \leq e^{\beta K\eta_{t}}\mathbb{E}\|\theta_{t}-\theta_{t}^{\prime}\|+K\eta_{t}e^{\beta K\eta_{t}}(\xi+\frac{2}{n_{i}}\mathbb{E}\|g_{i}(\theta_{i,k})\|).
	\end{equation*}
	Noting that we need to consider both cases where client $i$ is in up-weighting and down-weighting at round $t$. In the first case, client $i$ is in up-weighting, then we have
	\begin{equation}
		\label{case 1}
		\begin{aligned}
			\mathbb{E}\Vert\theta_{t+1}-\theta_{t+1}'\Vert
			=       &     \mathbb{E}\Vert \dfrac{\phi^t_{(\alpha,j)}}{\tilde{m}} \sum\limits_{j=1}^m(\theta_{j,K} - \theta_{j,K}') \Vert  \\
			\leq    &     \dfrac{\phi^t_{(\alpha,j)}}{\tilde{m}} \sum\limits_{j = 1}^m \mathbb{E}\Vert\theta_{j,K} - \theta_{j,K}' \Vert \\
			=       &     \dfrac{\phi^t_{(\alpha,j)}}{\tilde{m}} \sum\limits_{j = 1,j \neq i}^m \mathbb{E}\Vert\theta_{j,K} - \theta_{j,K}' \Vert + \dfrac{(1+\alpha)}{\tilde{m}}\mathbb{E}\Vert \theta_{i,K} - \theta_{i,K}' \Vert\\                       
			\leq    &      e^{\beta K\eta_{t}}\mathbb{E}\Vert \theta_{t}-\theta_{t}'\Vert 
			+  e^{\beta K\eta_{t}} K\eta_{t} \xi + \dfrac{(1+\alpha)2 K\eta_{t} e^{\beta K\eta_{t}}}{\tilde{m} n_i}\mathbb{E} \Vert g_i (\theta_{i,k}) \Vert, 
		\end{aligned}
	\end{equation}
	where $\tilde{m}=(1+\alpha)\hat{m}+(1-\alpha)(m-\hat{m})$. In the second case, client $i$ is in down-weighting, then we have
	\begin{equation}
		\label{case 2}
		\begin{aligned}
			\mathbb{E}\Vert\theta_{t+1}-\theta_{t+1}'\Vert
			=       &     \mathbb{E}\Vert \dfrac{\phi^t_{(\alpha,j)}}{\tilde{m}} \sum\limits_{j=1}^m(\theta_{j,K} - \theta_{j,K}') \Vert  \\
			\leq    &     \dfrac{\phi^t_{(\alpha,j)}}{\tilde{m}} \sum\limits_{j = 1}^m \mathbb{E}\Vert\theta_{j,K} - \theta_{j,K}' \Vert \\
			=       &     \dfrac{\phi^t_{(\alpha,j)}}{\tilde{m}} \sum\limits_{j = 1,j \neq i}^m \mathbb{E}\Vert\theta_{j,K} - \theta_{j,K}' \Vert + \dfrac{(1-\alpha)}{\tilde{m}}\mathbb{E}\Vert \theta_{i,K} - \theta_{i,K}' \Vert\\                       
			\leq    &     e^{\beta K\eta_{t}}\mathbb{E}\Vert \theta_{t}-\theta_{t}'\Vert 
			+ e^{\beta K\eta_{t}} K\eta_{t} \xi + \dfrac{(1-\alpha)2 K\eta_{t} e^{\beta K\eta_{t}}}{\tilde{m} n_i}\mathbb{E} \Vert g_i (\theta_{i,k}) \Vert \\
			\leq    & e^{\beta K\eta_{t}}\mathbb{E}\Vert \theta_{t}-\theta_{t}'\Vert 
			+ e^{\beta K\eta_{t}} K\eta_{t} \xi + \dfrac{2 K\eta_{t} e^{\beta K\eta_{t}}}{\tilde{m} n_i}\mathbb{E} \Vert g_i (\theta_{i,k}) \Vert .
		\end{aligned}
	\end{equation}
	Using (\ref{case 2}) $\times 2 - $(\ref{case 1}), we obtain
	\begin{equation*}
		\begin{aligned}
			\mathbb{E}\Vert\theta_{t+1}-\theta_{t+1}'\Vert 
			\leq    &     e^{\beta K\eta_{t}}\mathbb{E}\Vert \theta_{t}-\theta_{t}'\Vert 
			+  e^{\beta K\eta_{t}} K\eta_{t} \xi + \dfrac{(1-\alpha)2 K\eta_{t} e^{\beta K\eta_{t}}}{\tilde{m} n_i}\mathbb{E} \Vert g_i (\theta_{i,k}) \Vert \\
			=       &   e^{\beta K\eta_{t}}\mathbb{E}\Vert \theta_{t}-\theta_{t}'\Vert 
			+e^{\beta K\eta_{t}} K\eta_{t} \xi + \dfrac{2 K\eta_{t} e^{\beta K\eta_{t}}}{(1+\frac{\alpha}{1-\alpha}\frac{2\hat{m}}{m}) mn_i}\mathbb{E} \Vert g_i (\theta_{i,k}) \Vert \\
			=       &  e^{\beta K\eta_{t}}\mathbb{E}\Vert \theta_{t}-\theta_{t}'\Vert 
			+  e^{\beta K\eta_{t}} K\eta_{t} \xi + \dfrac{2 K\eta_{t} e^{\beta K\eta_{t}}}{(1+\frac{\alpha}{1-\alpha}r) mn_i}\mathbb{E} \Vert g_i (\theta_{i,k}) \Vert,
		\end{aligned}
	\end{equation*}
	where $r=\frac{2\hat{m}}{m} \in (0,1]$.\\
	Unrolling it and let $\eta_t\leq \frac{1}{K\beta(t+1)}$, we get the result of Surrogate smoothness approximation
	\begin{equation*}
		\begin{aligned}
			\varepsilon_{gen} 
			\leq    &     L\mathbb{E}\Vert\theta_T-\theta_T'\Vert \\     
			\leq    &   \mathcal{O}(\xi T\log T+ \frac{T\sqrt{\log T\Delta} }{(1+\frac{\alpha}{1-\alpha}r)mn_{\min}}+\frac{T\sqrt{\log T}\sigma}{(1+\frac{\alpha}{1-\alpha}r)m n_{\min} K^{\frac{1}{2}}} +\frac{ T\log T(\xi+1)D_{max}}{(1+\frac{\alpha}{1-\alpha}r)mn_{\min}}+ \frac{T\log T \sigma}{(1+\frac{\alpha}{1-\alpha}r)mn_{\min}} )\\
			= & \mathcal{O}\left(\rho T\log T+\frac{T\sqrt{\log T \Delta}}{r_\alpha mn_{\mathrm{min}}}+\frac{T\log T(\rho+1)D_{\mathrm{max}}}{r_\alpha mn_{\mathrm{min}}}\right).
		\end{aligned}
	\end{equation*}
	Following similar steps, we can get the result of Randomized smoothness approximation and Over-parameterized smoothness approximation.
\end{proof}

\end{document}